\documentclass{article} %

\usepackage[margin=1.4in]{geometry}
\geometry{
 a4paper,
 left=25mm,
 right=25mm,
 top=25mm,
}
\setlength{\marginparwidth}{2cm}

\setlength{\parindent}{0em}
\setlength{\parskip}{9pt}
\makeatletter
\renewcommand{\paragraph}{%
  \@startsection{paragraph}{4}%
  {\z@}{5pt \@plus 1ex \@minus .2ex}{-1em}%
  {\normalfont\normalsize\bfseries}%
}
\makeatother

\usepackage[utf8]{inputenc} %
\usepackage[T1]{fontenc}    %
\usepackage{hyperref}       %
\usepackage{url}            %
\usepackage{booktabs}       %
\usepackage{amsfonts}       %
\usepackage{nicefrac}       %
\usepackage{microtype}      %
\usepackage[table]{xcolor}         %
\usepackage{soul}
\usepackage{booktabs,tabularx,array,makecell}

\usepackage[bottom]{footmisc}

\usepackage{authblk}
\usepackage{natbib}
\usepackage{mathpazo}

\usepackage[toc,page,header]{appendix}
\usepackage{minitoc}
\doparttoc

\setcounter{tocdepth}{3}
\mtcsetdepth{parttoc}{3}

\usepackage{amsmath,amsfonts,amssymb,amsthm}

\usepackage{hyperref}

\usepackage[capitalize]{cleveref}

\usepackage{xcolor}
\usepackage{subcaption}
\usepackage[textsize=footnotesize]{todonotes}

\usepackage{tikz}
\usepackage{pgfplots}
\pgfplotsset{compat=1.17}
\usetikzlibrary{calc}

\usepackage{comment}

\usepackage{nicefrac,xfrac}

\usepackage{algorithm}
\usepackage{algorithmic}
\usepackage{microtype}
\usepackage{mdframed}
\usepackage{enumitem}
\usepackage{wrapfig}

\usepackage{lipsum}

\usepackage{mathtools}

\usepackage[toc,page,header]{appendix}

\definecolor{tobiComment}{RGB}{150, 10, 150}
\definecolor{fannyComment}{RGB}{150, 10, 250}

\usepackage{xcolor}
\definecolor{color1}{HTML}{105e8a}
\definecolor{color2}{HTML}{e99926}
\definecolor{color3}{HTML}{b82a0c}
\definecolor{color4}{HTML}{3e8a10}
\definecolor{color5}{HTML}{80037e}
\definecolor{color6}{HTML}{070707}

\definecolor{tabblue}{rgb}{0.12156862745098039, 0.4666666666666667, 0.7058823529411765}
\definecolor{taborange}{rgb}{1.0, 0.4980392156862745, 0.054901960784313725}
\definecolor{tabgreen}{rgb}{0.17254901960784313, 0.6274509803921569, 0.17254901960784313}
\definecolor{tabred}{rgb}{0.8392156862745098, 0.15294117647058825, 0.1568627450980392}
\definecolor{tabpurple}{rgb}{0.5803921568627451, 0.403921568627451, 0.7411764705882353}

\usepackage{hyperref}
\hypersetup{
    colorlinks,
    linkcolor={color1},
    citecolor={color5},
    urlcolor={color2}
}

\newtheorem{theorem}{Theorem}
\newtheorem{lemma}{Lemma}
\newtheorem{proposition}{Proposition}
\newtheorem{corollary}{Corollary}

\theoremstyle{definition}
\newtheorem{definition}{Definition}
\newtheorem{assumption}{Assumption}
\crefname{assumption}{Assumption}{Assumptions}

\crefname{example}{Example}{Examples}

\theoremstyle{remark}
\newtheorem{remark}{Remark}

\DeclareMathOperator*{\argmin}{arg\,min}

\newcommand{\pr}[1]{\left(#1\right)}
\newcommand{\br}[1]{\left[#1\right]}
\newcommand{\crl}[1]{\left\{#1\right\}}
\newcommand{\nrm}[1]{\left\|#1\right\|}
\newcommand{\abs}[1]{\left|#1\right|}
\newcommand{\inner}[2]{\left\langle #1, #2 \right\rangle}

\newcommand{\EE}[0]{\mathbb{E}}

\newcommand{\NN}[0]{\mathbb{N}}
\newcommand{\PP}[0]{\mathbb{P}}
\newcommand{\RR}[0]{\mathbb{R}}

\newcommand{\eps}[0]{\varepsilon}

\newcommand{\f}[0]{\boldsymbol{f}}
\newcommand{\bv}[0]{\boldsymbol{v}}
\newcommand{\bw}[0]{\boldsymbol{w}}
\newcommand{\h}[0]{\boldsymbol{h}}
\newcommand{\e}[0]{\mathbf{e}}

\newcommand{\cA}{\mathcal{A}}
\newcommand{\cB}{\mathcal{B}}

\newcommand{\cD}{\mathcal{D}}
\newcommand{\cE}{\mathcal{E}}
\newcommand{\cF}{\mathcal{F}}
\newcommand{\cG}{\mathcal{G}}
\newcommand{\cH}{\mathcal{H}}

\newcommand{\cM}{\mathcal{M}}
\newcommand{\cN}{\mathcal{N}}

\newcommand{\cP}{\mathcal{P}}
\newcommand{\cQ}{\mathcal{Q}}
\newcommand{\cR}{\mathcal{R}}
\newcommand{\cS}{\mathcal{S}}
\newcommand{\cT}{\mathcal{T}}

\newcommand{\cX}{\mathcal{X}}
\newcommand{\cY}{\mathcal{Y}}

\newcommand{\I}{\mathbf{I}}

\newcommand{\weight}[0]{\lambda}
\newcommand{\weights}[0]{{\boldsymbol{\weight}}}

\newcommand{\bsigma}{{\boldsymbol{\sigma}}}

\newcommand{\Xtilde}[0]{\widetilde{X}}

\newcommand{\hhat}[0]{\widehat{h}}

\newcommand{\bhhat}[0]{\widehat{\boldsymbol{h}}}

\newcommand{\fstar}[0]{f^\star}
\newcommand{\bfstar}[0]{\boldsymbol{f}^\star}

\newcommand{\ghat}[0]{\widehat{g}}

\DeclareMathOperator{\allOp}{all}
\newcommand{\Fall}[0]{\cF_{{\allOp}}}

\newcommand{\thetahat}[0]{\widehat{\theta}}
\newcommand{\bthetahat}[0]{\boldsymbol{\thetahat}}

\newcommand{\ones}[0]{\mathbf{1}}

\newcommand{\expect}[0]{\mathop{\mathbb{E}}}
\newcommand{\probab}[0]{\mathop{\mathbb{P}}}
\DeclareMathOperator{\varianceOp}{Var}
\newcommand{\variance}[0]{\mathop{\varianceOp}}

\DeclareMathOperator{\diam}{diam}

\DeclareMathOperator{\Rad}{Rad}

\newcommand{\radComp}[0]{\mathfrak{R}}
\newcommand{\front}[0]{\mathfrak{F}}

\DeclareMathOperator{\esssup}{ess\,sup}

\newcommand{\dG}{d_{\cG}}
\newcommand{\loss}[0]{\ell}
\newcommand{\Bregman}[1]{D_{#1}}
\newcommand{\risk}[0]{\mathcal{R}}
\newcommand{\empRisk}[0]{\widehat{\mathcal{R}}}
\newcommand{\excessRisk}[0]{\mathcal{E}}
\newcommand{\sTradeoff}[0]{\Tradeoff_s} 
\newcommand{\Tradeoff}[0]{\mathcal{T}}
\newcommand{\riskDiscrepancy}[3]{d_{#1}(#2;#3)}
\newcommand{\riskDiscrepancyZeroOne}[3]{d_{#1}^{\,0/1}(#2;#3)}
\newcommand{\empRiskDiscrepancy}[3]{\widehat{d}_{#1}(#2;#3)}
\newcommand{\empRiskDiscrepancyZeroOne}[3]{\widehat{d}_{#1}^{\ 0/1}(#2;#3)}
\newcommand{\scalarizedDiscrepancy}[2]{d_s(#1;#2)}
\newcommand{\empScalarizedDiscrepancy}[2]{\widehat{d}_s(#1;#2)}

\DeclareMathOperator{\linearOp}{lin}
\newcommand{\linearScalarization}[0]{s_{\weights}^{\linearOp}}
\newcommand{\TchebycheffScalarization}[0]{s_\weights^{\max}}

\newcommand{\Slin}[0]{\cS_{\linearOp}}
\newcommand{\STchebycheff}[0]{\cS_{\max}}

\DeclareMathOperator{\BernoulliOp}{Ber}
\newcommand{\Bernoulli}[0]{{\BernoulliOp}}

\newcommand{\smol}{\ref*{eqn:smol}}

\newcommand{\yhat}[0]{\widehat{y}}

\newcommand{\eigenvalue}[0]{\mu}

\newcommand{\phat}[0]{\widehat{p}}

\newcommand{\Tlab}[0]{T^{\operatorname{lab}}_s}
\newcommand{\Tunlab}[0]{T^{\operatorname{un}}_s}

\newcommand{\Csmooth}[0]{C^{\operatorname{sm}}}
\newcommand{\Cstab}[0]{C^{\operatorname{st}}}
\newcommand{\Cadd}[0]{C^{\mathrm{add}}}
\newcommand{\Cfin}[0]{C_k}

\newcommand{\rHsymbol}[0]{\mathfrak{l}}
\newcommand{\rGsymbol}[0]{\mathfrak{u}}

\newcommand{\rH}[1][]{\rHsymbol_{k}^{#1}} %
\newcommand{\rHnok}[1][]{\rHsymbol_{#1}}
\newcommand{\rHmax}[1][]{\rHsymbol_{\cS}^{#1}}  %
\newcommand{\rG}[1][]{\rGsymbol_{k}^{#1}}  %
\newcommand{\rGnok}[1][]{\rGsymbol_{#1}}
\newcommand{\rGrand}[1][]{\widetilde{\rGsymbol}_{k}^{#1}}  %
\newcommand{\rGrandnok}[1][]{\widetilde{\rGsymbol}_{#1}}  %
\newcommand{\rGsup}[1][]{\bar{\rGsymbol}_{k}^{#1}}  %

\newcommand{\eventlab}[1]{E^{\operatorname{lab}}_{#1}}
\newcommand{\eventunlab}[1]{E^{\operatorname{un}}_{#1}}

\newcommand{\eventlabtwo}[1]{W^{\operatorname{lab}}_{#1}}
\newcommand{\eventunlabtwo}[1]{W^{\operatorname{un}}_{#1}}

\newcommand{\eventlabthree}[2]{Q^{\operatorname{lab}}_{#1}(#2)}
\newcommand{\eventunlabthree}[2]{Q^{\operatorname{un}}_{#1}(#2)}

\title{On the sample complexity of semi-supervised \\ multi-objective learning}

\author[1]{Tobias Wegel}%
\author[2]{Geelon So}
\author[1]{Junhyung Park}
\author[1]{Fanny Yang}
\affil[1]{Department of Computer Science, ETH Zurich}
\affil[2]{Department of Computer Science and Engineering, UC San Diego}

\begin{document}
\faketableofcontents %

\maketitle

\begin{abstract}
    In multi-objective learning (MOL), several possibly competing prediction tasks must be solved jointly by a single model. Achieving good trade-offs may require a model class $\cG$ with larger capacity than what is necessary for solving the individual tasks. This, in turn, increases the statistical cost, as reflected in known MOL bounds that depend on the complexity of $\cG$. We show that this cost is unavoidable for some losses, even in an idealized semi-supervised setting, where the learner has access to the Bayes-optimal solutions for the individual tasks as well as the marginal distributions over the covariates. On the other hand, for objectives defined with Bregman losses, we prove that the complexity of $\cG$ may come into play only in terms of \emph{unlabeled} data. Concretely, we establish sample complexity upper bounds, showing precisely when and how unlabeled data can significantly alleviate the need for labeled data. These rates are achieved by a simple, semi-supervised algorithm via pseudo-labeling.

\end{abstract}

\section{Introduction}
The \emph{multi-objective learning} (MOL) paradigm has recently emerged to extend the classical problem of risk minimization from statistical learning to settings with multiple notions of risk \cite{jin2007multi,cortes2020agnostic,sukenik2024generalization,haghtalab2022demand}. Multi-objective learning problems are ubiquitous in practice,
as it often matters how our models behave with respect to multiple metrics and across different populations.
For example, consider designing a policy for a self-driving car: the risks could measure different notions of safety (e.g., safety of passengers or pedestrians), or safety under various conditions (e.g., different locations and weather conditions). 

More formally, we study the MOL setting with $K$ population risk functionals $\risk_1,\ldots, \risk_K$, each quantifying an average, possibly different, loss $\loss_k$ incurred by a prediction model $g$  over the data distribution $P^k$.
The aim is to learn models from a class $\cG$ that minimize all $K$ \emph{excess risks} \smash{$\excessRisk_k(g) := \risk_k(g) - \inf \risk_k$}
jointly, using only finite-sample access to the distributions. 
Here, \smash{$\inf \risk_k$} is the 
Bayes risk of the $k$th task, which is the 
smallest achievable risk over all measurable functions.
Specifically, we study
the sample complexity of learning the set of Pareto optimal models in $\cG$. Recall that a model is \emph{Pareto optimal} in $\cG$ if any alternative model in $\cG$ that reduces one risk necessarily increases another (see \cref{def:pareto-optimality});
we often simply say that such a model makes an optimal trade-off. 
Under mild conditions, the set of Pareto optimal models can be recovered by minimizing a family of \emph{scalarized} objectives $\sTradeoff$ that we call the \emph{$s$-trade-offs}:\footnote{We scalarize the excess risks $\excessRisk_k$ because they each capture the \emph{suboptimality} with respect to what is theoretically attainable when optimizing $\risk_k$ without consideration for other risks. Notice however that the Pareto set of the excess risks $\excessRisk_k$ is the same as the Pareto set of the risks $\risk_k$, as they are equal up to the constants \(\inf\risk_k\). We further motivate this in \cref{subsec:multi-objective-learning} and \cref{subfig:scalarize-risk}.}
\begin{equation} \label{eqn:sopt}
    \phantom{s \in S}\min_{g \in \cG}\hspace{4pt} \underbracket[0.5pt][2pt]{\sTradeoff(g) := s\big(\excessRisk_1(g),\dots,\excessRisk_K(g)\big)\vphantom{\Big|}}_{\textrm{the $s$-trade-off achieved by $g$}},\qquad s\in \cS
\end{equation}
where the map $s :\RR^K\to \RR$ is from some family $\cS$ of \emph{scalarization} functions that aggregates the excess risks into a single statistic. 
Notice that if the excess risks were known, the problem in \cref{eqn:sopt} would reduce to a family of classical (multi-objective) optimization problems \cite{miettinen1999nonlinear,ruchte2021scalable,lin2022pareto}. However, because the objectives in \cref{eqn:sopt} depend on distributional quantities that are unknown, we need to learn the solutions from data. Specifically, we study the sample complexity of achieving \cref{eqn:sopt} up to errors $\eps_s>0$, a problem we call  \emph{$\cS$-Multi-Objective Learning}, or \smol\ for short. This problem is formalized in \cref{def:smol}. 

Two main lines of work have studied the sample complexity of MOL, both predominantly in the \emph{supervised} framework. 
In the multi-distribution learning (MDL) literature \cite{haghtalab2022demand,awasthi2023sample,peng2024sample, zhang2024optimal}, the goal of the learner is to recover a solution to \cref{eqn:sopt} for one specific $s$-trade-off induced by the scalarization $\smash{s(\bv)=\max_{k\in[K]} v_k}$. This yields the familiar min-max formulation of MOL.\footnote{Contrary to \cref{eqn:sopt}, in the MDL literature, the risks are usually directly scalarized.}
And in the literature for the general \smol\@ setting \cite{cortes2020agnostic,sukenik2024generalization}, the learner aims to solve \cref{eqn:sopt} for multiple scalarizations. Both lines establish sample complexity bounds in terms of capacity measures of $\cG$, which can be shown to be tight in the worst case. 

However, solutions with good trade-offs may only be found in a complex model class $\cG$
even when individual tasks are easy to solve in smaller classes $\cH_k$. In such cases, previous worst-case results do not address whether it is really necessary to pay the full, supervised statistical cost of \smol\@ over $\cG$.
This motivates our consideration of \emph{semi-supervised} multi-objective learning, in which the learner has access to both \emph{labeled} and (cheaper) \emph{unlabeled} data for each of the $K$ tasks. In the single-objective setting, it is well-known that access to unlabeled data can, at times, significantly 
reduce the amount of labeled data required
\cite{chapelle2006,zhang2019semi}. But for multi-objective learning, the sample complexity in a semi-supervised setting is largely unexplored, with only a few exceptions \cite{awasthi2024semi,wegel2025learning} that rely on additional assumptions for unlabeled data to be helpful (cf. \cref{sec:related-works}).
Thus, the more holistic question we aim to address in this paper is 
\begin{quote}
    \centering
    \emph{Given that each task $k\in[K]$ is solvable in a hypothesis class $\cH_k$, how much labeled and\\ unlabeled data is needed to achieve trade-offs available in a larger function class $\cG$?}
\end{quote}

The authors in  \cite{wegel2025learning} recently established the first sample complexity bounds in a similar semi-supervised multi-objective setting, but under rather stringent assumptions (see \cref{sec:related-works} for a more in-depth discussion of their results). In contrast, in this paper, we give a holistic characterization of the conditions when unlabeled data can help and by how much.
In terms of sample complexity upper bounds, we show that for a large class of losses, the capacity of $\cG$ comes into play only in the amount of unlabeled data required, while the amount of labeled data merely depends on that of $\cH_1,\dots,\cH_K$. Moreover, we show that these rates are achieved by a simple, pseudo-labeling-based algorithm. 

Concretely, our contributions are as follows:
\begin{itemize}[leftmargin=*,itemsep=2pt,topsep=2pt]
    \item We first show hardness of \smol\@ under \emph{uninformative losses} via a minimax sample complexity lower bound that holds even when the learner knows the Bayes-optimal models for each task and has access to the marginal distributions over unlabeled data, i.e., infinite unlabeled data (\cref{subsec:trade-off-much-harder}).
    \item We then prove that risks induced by \emph{Bregman divergence losses}\textemdash which include the square and cross-entropy losses---effectively disentangle the multi-objective learning problem. For Bregman losses, information about individual risk minimizers can significantly reduce labeled sample complexity in the semi-supervised setting via a simple \emph{pseudo-labeling} algorithm (\cref{subsec:Bregman-losses}).
    \item Specifically, for \smol\@ with Bregman losses, we first provide a uniform bound over the excess \mbox{$s$-trade}-offs of the pseudo-labeling algorithm for bounded, Lipschitz losses via uniform convergence (\cref{subsec:uniform-bound}). Our major technical contribution then lies in proving \emph{localized rates} that are distribution-specific under stronger assumptions (\cref{subsec:main-result}). Crucially, the labeled sample complexity in both bounds only depends on the classes $\{\cH_k\}_{k=1}^K$, while $\cG$ only appears in the unlabeled sample complexity. 
\end{itemize}

Our analysis reveals an interesting insight: even though 
the pseudo-labeling algorithm is reminiscent of single-objective semi-supervised learning procedures, 
the reason behind the benefits of unlabeled data turns out to be fundamentally different. In single-objective learning, unlabeled data can only help if, roughly speaking, the marginal carries information about the labels  \cite{ben2008does,gopfert2019can,tifrea2023can}. Our results, in contrast, hold without any such assumptions. In multi-objective learning, unlabeled data helps the learner determine the relative importance of each test instance to each task: if the likelihood of an input is higher under one task than another, a model can accordingly prioritize the more relevant risk to achieve better trade-offs. This information may be completely independent of the labels assigned by each task.

\section{Semi-supervised multi-objective learning} \label{sec:formal-setting}

In this section, we  formally introduce the semi-supervised $\cS$-multi-objective learning problem. 
For ease of reference, an overview of notation is provided in
\cref{tab:notation} of \cref{sec:notations}.

\subsection{Preliminaries and the individual tasks}\label{sec:prelims}
Let \(\cX\) be the \emph{feature space}, and \(\cY\subseteq\RR^q\) the 
\emph{label space}. 
We are interested in \(K\) prediction tasks, indexed by $k\in[K]:=\crl{1,\dots,K}$, over joint distributions \(P^k\) of \((X^k,Y^k)\) on the product space \(\cX\times\cY\). We denote the underlying joint probability measure by \(\PP\). 
From each task, we observe $n_k$ i.i.d.\ \emph{labeled samples} \(\{(X^k_i,Y^k_i)\}^{n_k}_{i=1}\) from \(P^k\), and \(N_k\) i.i.d.\ \emph{unlabeled samples} \(\{\Xtilde^k_i\}_{i=1}^{N_k}\) from the marginal of \(P^k\) on \(\cX\), denoted $P^k_X$. Let $\cD$ denote the combined dataset of both labeled and unlabeled data. 
For each task \(k\in[K]\), we define the population and empirical risks of a function $f:\cX\to\cY$ as
\begin{equation}\label{eqn:risks}
    \risk_k(f):=\EE\left[\loss_k(Y^k,f(X^k))\right] \qquad \text{and} \qquad \empRisk_k(f):=\frac{1}{n_k}\sum^{n_k}_{i=1}\loss_k(Y^k_i,f(X^k_i)),
\end{equation}
where \(\loss_k:\cY\times\cY\to\RR\) is a (not necessarily symmetric) loss function with \(\loss_k(y,\yhat)\) being the loss incurred by predicting \(\yhat\) when the true label is \(y\). 
Further, we write \(\Fall\) for the set of all functions \(f:\cX\to\cY\) for which all integrals in this paper are well-defined.\footnote{Formally, we assume $\Fall$ to be a subset of the $K$ Bochner spaces $L^2(P^k_X), k\in[K]$ of functions $\cX\to (\RR^q,\|\cdot\|_2)$.} 
For each $k \in [K]$, we assume access to a function class
\(\cH_k\subseteq\Fall\) that contains a population risk minimizer of $\risk_k$, that is, there exists a $\fstar_k\in \Fall$ such that
\begin{equation}
\label{eqn:func-ass}
    \fstar_k \in \argmin_{f\in\Fall}\risk_k(f) \qquad \text{and} \qquad \fstar_k\in\cH_k.
\end{equation}
The risk that any model \(f:\cX\to\cY\) incurs is at least $\risk_k(\fstar_k)$, so we focus our attention on achieving small \emph{excess risk} with respect to the Bayes optimal predictor, defined as
\begin{equation}\label{eqn:excess-risk}
    \excessRisk_k(f):=\risk_k(f)-\risk_k(\fstar_k).%
\end{equation}

\subsection{Pareto optimality and scalarization}
In multi-objective learning, our aim is to learn models $g$ from some function class $\cG$ that, ideally, achieve low excess risk on all tasks \emph{simultaneously}. Since, by assumption, the individual tasks are optimally solved in $\cH_k$, we only consider hypothesis classes $\cG\subset\Fall$ that satisfy $\cG \supseteq \bigcup_{k\in[K]} \cH_k$.
But even if $\cG$ is very large, minimizing all excess risks may not be possible.
In particular, in this work we do \emph{not} assume that there exists one $f:\cX\to\cY$ that performs well across objectives (as opposed to the collaborative learning framework \cite{blum2017collaborative} or the setting in \cite{awasthi2024semi}).
Instead, the aim is to recover the set of \emph{Pareto optimal} solutions in the class $\cG$ for the \(K\) objectives, formally defined as follows. 

\begin{definition}[Pareto optimality] \label{def:pareto-optimality}
    Let $\excessRisk_1,\ldots, \excessRisk_K$ be a collection of excess risk functionals defined in \cref{eqn:excess-risk}. We say that a function $g\in\cG$ is \emph{Pareto optimal} in \(\cG\) if there is no other \(g'\in\cG\) such that
    \[\exists k\in[K]\text{ s.t. }\excessRisk_k(g')<\excessRisk_k(g)\qquad\text{and}\qquad\forall j\in[K],\excessRisk_j(g')\leq\excessRisk_j(g).\]
    The subset of \(\cG\) containing all Pareto optimal functions is called the \emph{Pareto set}. The subset of \(\RR^K\) containing the excess risk vectors of the Pareto set is called the \emph{Pareto front}, defined as 
    \[\front(\cG):= \big\{\big(\excessRisk_1(g),\dots,\excessRisk_K(g)\big):g\text{ is in the Pareto set of }\cG\big\}\subseteq\RR^K.\]
\end{definition}
In words, any model in $\cG$ that reduces one risk over a Pareto optimal model must increase another risk.
Every Pareto-optimal model corresponds to a distinct ``preference'' or ``trade-off'', all of which are equally valid from a decision-theoretic perspective \cite{hwang2012multiple}. We can quantify such trade-offs using scalarization functions \(s:\RR^K\to\RR\) that, for all $f\in \Fall$, map the vector of excess risks into a scalar objective\footnote{We assume that there exists \(g_s\in\cG\) such that \(\sTradeoff(g_s)=\inf_{g\in\cG}\sTradeoff(g)\). Otherwise, all our arguments go through with an \(\eps\)-minimizer in \(\cG\), so we do not lose any generality with this assumption.}
\begin{equation}\label{eqn:s-trade-off}
    \sTradeoff(f)=s\big(\excessRisk_1(f),\dots,\excessRisk_K(f)\big) \qquad \text{with}  \qquad g_s\in\argmin_{g\in\cG}\sTradeoff(g),
\end{equation}
see also \cref{eqn:sopt}.
We call $\sTradeoff(f)$ the \emph{\(s\)-trade-off} achieved by $f$. It has a natural interpretation: recall that $\excessRisk_k(g_s)$ is the cost incurred by the $k$-th task 
to make this particular type of trade-off over myopically optimizing $\risk_k$. Then, $\sTradeoff(g_s)$ aggregates these costs (see \cref{fig:illust}). This interpretation also further motivates scalarizing the \emph{excess} risks instead of the risks: if one task were to have much higher Bayes risk than another, scalarizing the risks directly would not aggregate the \emph{additional} cost, cf.\ \cref{subfig:scalarize-risk}.  

Two popular examples of scalarization families are \emph{Tchebycheff} and \emph{linear} scalarizations, defined as
\begin{equation}\label{eqn:Tchebycheff-linear}
    \STchebycheff=\crl{\TchebycheffScalarization(\bv)=\max_{k\in[K]}\weight_kv_k\mid\weights\in\Delta^{K-1}},\qquad \Slin=\crl{\linearScalarization(\bv)=\sum^K_{k=1}\weight_kv_k\mid\weights\in\Delta^{K-1}},
\end{equation}
where $\Delta^{K-1}$ is the $(K-1)$-probability simplex. They represent the worst-case and averaged notions of excess risks, respectively (see \cref{subfig:scalarize-risk} for a visualization of the Tchebycheff scalarization). Minimizing these families of scalarizations recovers the Pareto set under some conditions (e.g., convexity for linear scalarization), see the detailed discussions in \cite{nakayama2009sequential,ehrgott2005multicriteria,miettinen1999nonlinear}. But of course, other scalarizations also exist \cite{ehrgott2005multicriteria}.
Our most general result (\cref{subsec:uniform-bound}) holds for scalarizations that satisfy the reverse triangle inequality and positive homogeneity, defined as
\begin{equation}
\label{eqn:scalarization-properties}
    \begin{aligned}
    \forall \bv,\bw\in \RR^K: \qquad && \abs{s(\bv)-s(\bw)} &\leq s(\abs{v_1-w_1},\dots,\abs{v_K-w_K}),  \\
    \forall \bv\in \RR^K, \alpha>0:\qquad && s(\alpha\bv) &= \alpha \cdot s(\bv). 
    \end{aligned}
\end{equation}
Both the linear and Tchebycheff scalarizations from \cref{eqn:Tchebycheff-linear} satisfy the properties in \cref{eqn:scalarization-properties}.

\subsection{Multi-objective learning} 
\label{subsec:multi-objective-learning}
Because $\inf_{g\in \cG}\sTradeoff(g)$  may be arbitrarily large, we evaluate our empirical estimates of $g_s$ using the \emph{excess \(s\)-trade-off}, defined, for \(f\in\Fall\), as
\begin{equation}\label{eqn:excess-s-tradeoff}
    \sTradeoff(f)-\inf_{g\in\cG}\sTradeoff(g).
\end{equation}
The \smol\@ problem is then to achieve small excess $s$-trade-off across scalarizations with high probability.
\begin{definition}[\smol]
\label{def:smol}
Let $\cS$ be a family of scalarizations $s:\RR^K\to\RR$, $(\eps_s)_{s\in \cS}$ a family of positive real numbers, and $\delta\in(0,1)$. Let $\cA$ be an algorithm that, provided with a dataset $\cD$ and the function classes $\crl{\cH_k}_{k=1}^K $ and $\cG$, returns a family of functions $\crl{\ghat_s:s\in \cS}\subset \cG$. Then $\cA$ solves the \emph{$\cS$-multi-objective learning (\smol)} problem with parameters $((\eps_s)_{s\in \cS},\delta)$, if
\begin{equation}\label{eqn:smol}
    \PP\pr{\forall s\in \cS:\ \sTradeoff(\ghat_s)-\inf_{g\in \cG} \sTradeoff(g)\leq \eps_s}\geq 1-\delta, \tag{$\cS$-MOL}
\end{equation}
where the probability is taken with respect to draws of the training dataset $\cD$.
\end{definition}

\begin{figure}[t]
    \centering
    \begin{subfigure}{0.39\textwidth}
        \includegraphics[height=4cm]{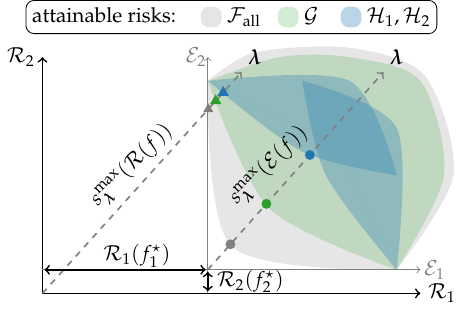}
        \caption{The attainable risks in each function class and one Tchebycheff scalarization of the risks and of the excess risks.}
        \label{subfig:scalarize-risk}
    \end{subfigure}
    \hfill
    \begin{subfigure}{0.29\textwidth}
        \includegraphics[height=3.7cm]{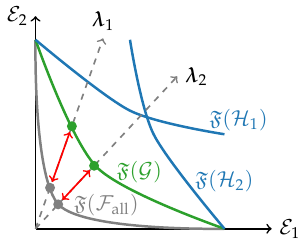}
        \caption{The larger $\cG$, the smaller the minimal $s$-trade-off within $\cG$ due to low ``bias''. }
        \label{subfig:paretofront}
    \end{subfigure}
    \hfill
    \begin{subfigure}{0.29\textwidth}
        \includegraphics[height=3.7cm]{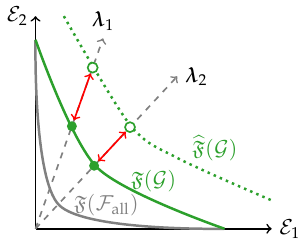}
        \caption{The larger $\cG$, the larger the \emph{excess} $s$-trade-off on the function class $\cG$ due to high ``variance''.}
        \label{subfig:extrade}
    \end{subfigure}
    \caption{\small{ 
    (\protect\subref{subfig:scalarize-risk}) All attainable risks for the function classes $\Fall,\cG,\cH_1,\cH_2$. Using scalarizations on the risks directly can be misleading if the Bayes risk of one task is much larger than that of another: even if both tasks have equal weight, the Tchebycheff scalarization \eqref{eqn:Tchebycheff-linear} may inadvertently only solve one task (triangles). Scalarizing the excess risks avoids this (dots).
    (\protect\subref{subfig:paretofront}) The Pareto fronts for the classes $\cH_1, \cH_2, \cG, \Fall$ in the space of \emph{excess} risks, and two Tchebycheff scalarizations $\TchebycheffScalarization$ \eqref{eqn:Tchebycheff-linear} with different $\weights$ (gray dashed line), with the corresponding trade-off minimizers (dots), and the gap to the minimizers in $\Fall$ (bi-directed red arrows).
    (\protect\subref{subfig:extrade}) The population and empirical Pareto \(\front(\cG)\) and $\smash{\widehat{\front}(\cG)}$, and the excess $s$-trade-off of the estimated Pareto points (bi-directed red arrows) on the same two Tchebycheff scalarizations.  }}
    \label{fig:illust}
\end{figure}

From the population-level optimization perspective in \cref{eqn:sopt}, better trade-offs become possible as the class $\cG$ grows. This is visualized in \cref{subfig:paretofront}, showing the Pareto fronts achieved by the function classes $\Fall,\cG,\cH_1,\cH_2$ in a two-objective setting. The separation between the Pareto front $\front(\cG)$ and the theoretical optimum $\front(\Fall)$ can be seen as the ``multi-objective bias'' incurred in \smol\@ due to a conservative choice of $\cG$. For two Tchebycheff scalarizations, the red bi-directed arrows in \cref{subfig:paretofront} reflect this point-wise ``bias''. However, because the Pareto front needs to be learned from finite samples, we would also expect from classical learning theory that as $\cG$ grows, so does the ``multi-objective variance'' of an empirical Pareto front $\smash{\widehat{\front}(\cG)}$. \cref{subfig:extrade} illustrates this by the gap between $\front(\cG)$ and $\smash{\widehat{\front}(\cG)}$, and for the same two Tchebycheff scalarizations, the red bi-directed arrows reflect the excess $s$-trade-off. In the next section we first address how much excess trade-off any algorithm necessarily incurs when learning $\front(\cG)$ from data.

\section{Motivating Bregman losses: A hardness result}
\label{sec:hardness}

To answer this in the context of a semi-supervised setting, we now argue that for the unlabeled data to help solve \smol\@, the structure of the loss functions is key.

\subsection{A sample complexity lower bound for ideal semi-supervised \texorpdfstring{$\cS$}{S}-MOL}\label{subsec:trade-off-much-harder}
Let us consider the class of PAC-learners for \smol, which are learners that achieve \smol\@ for all distributions over $\cX \times \cY$. For concreteness, consider multi-objective binary classification with zero-one loss, where $\cS$ is the entire family of linear scalarizations $\Slin$:

\begin{definition}[Multi-objective binary classification]
    \label{def:zero-one-mol}
    Let $\cG$ be a hypothesis class with VC dimension $d_\cG \in \NN$ on a data domain $\cX \times \cY$ where $\cY = \{0,1\}$. %
    For each task $k \in[K]$, define $\loss_k(y,\widehat y) = \mathbf{1}\{y \ne \widehat y\}$ to be the zero-one loss.
\end{definition}

For supervised \smol\@ with $\eps_s\equiv\eps$ for all $s\in\Slin$, prior works achieve a sample complexity upper bound of ${O}(Kd_\cG/\eps^2)$, up to logarithmic terms, see \cite{cortes2020agnostic,sukenik2024generalization} and \cref{cor:vc-erm-upper}. In fact, a matching lower bound of $\Omega(Kd_\cG/\eps^2)$ holds as well:
after all, the set of $s$-trade-offs $\{\sTradeoff: s \in \Slin\}$ contains the individual excess risk functionals $\excessRisk_k$, and hence solving \smol\@ requires the learner to solve the $K$ original tasks as well. 
The lower bound then follows from standard agnostic PAC-learning \cite[Theorem 6.8]{shalev2014understanding}.
In short, previous upper bounds are tight and the sample complexity of supervised \smol\@ is \(\smash{{\Theta}(K \dG /\eps^2)}\), which also coincides with the sample complexity of MDL under non-adaptive sampling \cite{zhang2024optimal}.

In the semi-supervised \smol\@ setting, the question now becomes: can the unlabeled data reduce the label complexity of this problem?
Perhaps surprisingly, we now show that the same lower bound holds, even if the learner has additional access to Bayes optimal classifiers $\fstar_k$ and the marginal distributions $\smash{P^k_X}$.

\begin{proposition}[Hardness of semi-supervised multi-objective binary classification] 
\label{prop:hardness}
    Fix any $K > 1$ and any $\eps \in (0, 1/12)$. For a given tuple $(P^1,\ldots, P^K)$, denote by $S^k$ a labeled dataset consisting of i.i.d.\@ samples from $P^k$, let $\fstar_k$ be a Bayes optimal classifier of $P^k$, and let $P^k_X$ be the marginal distribution on $\cX$. Denote by $\cA$ any algorithm that, given $\{S^k, \fstar_k, P^k_X\}_{k=1}^K$, returns a set of classifiers $\{\smash{\ghat_s \in \cG}: s \in \Slin\}$. 
    
    If $\cA$ achieves \eqref{eqn:smol} with $\eps_s\equiv \eps$ for all linear scalarizations $s\in\Slin$, $\delta \leq 1/6$, and for all distributions $(P^1,\ldots, P^K)$ in the multi-objective binary classification setting (\cref{def:zero-one-mol}),
    then the total number of labeled samples it requires is at least $|S^1| + \dotsm + |S^K| \geq CKd_\cG /\eps^2$ where $C > 0$ is a universal constant.
\end{proposition}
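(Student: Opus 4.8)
The plan is to reduce semi-supervised $\cS$-MOL to $K$ independent single-task agnostic-learning problems, and then to prove a per-task lower bound of $\Omega(d_\cG/\eps^2)$ that survives revealing $(\fstar_k,P^k_X)$.
\textbf{Reduction to $K$ coordinate tasks.} The family $\Slin$ contains every coordinate scalarization $s=e_k\in\Delta^{K-1}$, for which $\sTradeoff(g)=\excessRisk_k(g)$ and hence $\inf_{g\in\cG}\sTradeoff(g)=\inf_{g\in\cG}\excessRisk_k(g)$. So any $\cA$ achieving \eqref{eqn:smol} with $\eps_s\equiv\eps$ must, with probability $\ge 1-\delta$, simultaneously output $\ghat_{e_k}\in\cG$ with $\excessRisk_k(\ghat_{e_k})-\inf_{g\in\cG}\excessRisk_k(g)\le\eps$ for every $k$ --- it must agnostically learn $\cG$ on each task. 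I would take the hard instance to be a product, with each $P^k$ drawn independently from a single-task hard family defined below, so that $S^k$ (together with the revealed $\fstar_k,P^k_X$) carries information only about task $k$. If the total budget satisfied $|S^1|+\dots+|S^K|<CKd_\cG/\eps^2$, then $|S^k|<Cd_\cG/\eps^2$ for some $k$, and a single-task lower bound would push the failure probability on task $k$ --- hence on \eqref{eqn:smol} --- above $\delta$ for some member of the product family (put a uniform prior on the hidden parameters and average over it). This contradicts $\cA$ solving \eqref{eqn:smol}, so it suffices to prove the single-task bound.

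\textbf{A single-task hard family whose side information is uninformative.} The design goal is to neutralize $(\fstar_k,P^k_X)$. The marginal is trivially uninformative if all members share it, so take $P_X=\mathrm{Unif}(T)$ for a fixed finite $T\subseteq\cX$. For $\fstar$, observe that if the hard distributions were supported on a set that $\cG$ shatters, the learner could output $\fstar$ restricted to the support (which would then lie in $\cG$) and attain zero excess risk; hence $T$ must be a set that $\cG$ does \emph{not} shatter. I would take $|T|=\Theta(d_\cG)$ together with a target labeling $\tau:T\to\{0,1\}$ with $\tau\notin\cG|_T$, chosen --- using a set of size $d_\cG$ shattered by $\cG$ --- so that every $g\in\cG$ disagrees with $\tau$ on a constant fraction of $T$ while the learner can still steer its disagreement set across $\Theta(d_\cG)$ independent coordinates. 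Set $\fstar|_T=\tau$, and for the zero--one loss let $\eta_b(x)=\tfrac12+\sigma(x)\,\gamma\,(1+b_x)$, where the signs $\sigma(x)\in\{-1,+1\}$ are fixed so that the Bayes rule of $P_b$ equals $\tau$, $\gamma=\Theta(\eps)$, and $b\in\{0,1\}^{T}$ is a hidden bit-string. Since $\eta_b$ stays strictly on one side of $\tfrac12$ at each point, $\tau$ is Bayes-optimal for \emph{every} $b$, so revealing $\fstar$ (an extension of $\tau$) and $P_X$ gives no information about $b$. The zero--one excess risk is $\excessRisk_b(g)=\tfrac1{|T|}\sum_{x:\,g(x)\ne\tau(x)}2\gamma(1+b_x)$; it depends on the \emph{magnitudes} $|2\eta_b-1|$, which $\fstar$ does not pin down --- in contrast to a Bregman loss, for which $\fstar$ (the conditional mean) together with $P_X$ determines the excess risk completely and no labels would be needed. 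Because every $g\in\cG$ must disagree with $\tau$ on $\Theta(|T|)$ points, the minimizer over $\cG$ disagrees on the \emph{cheap} points (those with $b_x=0$), so attaining excess $\le\eps$ over $\inf_{g\in\cG}\excessRisk_b(g)$ forces the learner to classify all but a small constant fraction of the coordinates of $b$ correctly.

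\textbf{Information-theoretic finish, and the main obstacle.} Distinguishing $b_x=0$ ($\eta=\tfrac12+\gamma$) from $b_x=1$ ($\eta=\tfrac12+2\gamma$) at a single point needs $\Omega(1/\gamma^2)$ labeled draws landing there, so with $|T|=\Theta(d_\cG)$ equiprobable points and $\gamma=\Theta(\eps)$, recovering all but a small constant fraction of the $|T|$ bits costs $\Omega(|T|/\gamma^2)=\Omega(d_\cG/\eps^2)$ labeled samples from the task. I would make this precise exactly as in the standard agnostic PAC lower bound --- a uniform prior on $b$, conditioning on the per-point sample counts, and the fact that an undersampled coordinate is guessed correctly with probability at most $\tfrac12+o(1)$ (equivalently, Assouad's lemma), as in \cite[Theorem 6.8]{shalev2014understanding}. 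Combined with the reduction and a careful tracking of constants --- where $\eps<1/12$ and $\delta\le1/6$ are used to absorb lower-order terms --- this yields the claimed $CKd_\cG/\eps^2$. The delicate part, and the main obstacle, is the combinatorial construction in the second step: for an \emph{arbitrary} $\cG$ of VC dimension $d_\cG$, exhibiting a non-shattered support $T$ of size $\Theta(d_\cG)$ and a target $\tau$ such that $\cG|_T$ ``surrounds'' $\tau$ in the required sense --- every hypothesis bounded away from $\tau$ in Hamming distance, yet the $\Theta(d_\cG)$ disagreement choices genuinely free --- so that the hidden parameter is $\Theta(d_\cG)$-dimensional and the excess-risk bookkeeping produces a universal constant. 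Everything else is the textbook many-biased-coins argument.
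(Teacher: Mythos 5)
There is a genuine gap, and it sits exactly where you flag your ``main obstacle'' --- but it is not a delicate combinatorial step that needs more care, it is an impossibility in general, because the reduction to coordinate scalarizations discards the part of the problem where the hardness actually lives. In the idealized semi-supervised setting the single-task problems can have \emph{zero} labeled sample complexity: take $\cX$ to be a set of exactly $d_\cG$ points and $\cG$ the class of all binary functions on $\cX$ (VC dimension $d_\cG$). Then every $T\subseteq\cX$ is shattered, so no pair $(T,\tau)$ with $\tau\notin\cG|_T$ exists, and, more to the point, $\fstar_k\in\cG$ for every distribution, so the learner can answer every coordinate scalarization $s=e_k$ by outputting $\ghat_{e_k}=\fstar_k$ and incur zero excess risk over $\cG$ with no labels at all. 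Hence no per-task lower bound of order $d_\cG/\eps^2$ can survive revealing $(\fstar_k,P^k_X)$ for arbitrary $\cG$, and your product-of-single-task hard families cannot establish \cref{prop:hardness}, which must (and does) hold for this class as well. Your own observation that the support must not be shattered is precisely the symptom: for general $\cG$ the requisite $(T,\tau)$ with constant-fraction disagreement and $\Theta(d_\cG)$ free disagreement choices need not exist.

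The paper's proof instead locates the hardness in the trade-off \emph{between} tasks rather than inside any single task, and uses shattering as a feature rather than avoiding it. On a set $\{x_1,\dots,x_d\}$ shattered by $\cG$, for a hidden $\bsigma\in\{0,1\}^d$ it takes task-1 labels $\mathrm{Ber}(1/2+c\sigma_i)$ and task-2 labels $\mathrm{Ber}(1/2-c(1-\sigma_i))$ at $x_i$, with $c=4\eps$ and uniform marginals. The constant classifiers $1$ and $0$ are Bayes optimal for tasks 1 and 2 for \emph{every} $\bsigma$, so the revealed side information is vacuous; yet the equally weighted scalarization satisfies $\cT_{s_{1/2}}(g)=(c/d)\sum_i\mathbf{1}\{g(x_i)\ne\sigma_i\}$ with $\inf_{g\in\cG}\cT_{s_{1/2}}(g)=0$ by shattering, so $\eps$-optimality forces recovery of a $3/4$ fraction of the bits of $\bsigma$, and Assouad's lemma yields $\Omega(d_\cG/\eps^2)$ labels per pair of tasks, summed over $\lfloor K/2\rfloor$ disjoint pairs. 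Your cheap-versus-expensive-disagreement mechanism is the right intuition (it is the same ``magnitudes of $2\eta-1$ are hidden'' phenomenon), but to make it work for arbitrary $\cG$ it has to be realized across two tasks via a mixed scalarization, not within one task via a coordinate scalarization.
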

See \cref{sec:proof-hardness} for the proof. \cref{prop:hardness} shows that 
the label sample complexity lower bounds for supervised \smol\@ cannot be improved for the problem in \cref{def:zero-one-mol}\textemdash 
even in
an idealized semi-supervised \smol\@ setting where the learner has infinite unlabeled data and can perfectly solve the individual learning tasks. Moreover, contrary to the MDL setting \cite{zhang2024optimal}, adaptive sampling cannot improve upon learners using non-adaptively sampled data. 

\cref{prop:hardness} crucially hinges on the zero-one loss through the following intuition: Suppose you flip two biased coins $A,B\in\{0,1\}$, one with probability $\PP(A=1)=a>1/2$, and one with probability $\PP(B=1)=b<1/2$. 
The most accurate predictors of the outcomes of each coin separately are the constant predictors $1$ and $0$, irrespective of the actual values of $a$ and $b$. 
However, suppose now you would like to minimize the linear scalarization of both prediction errors with weights $\weight_1=\weight_2=1/2$. Then striking an optimal trade-off would require 
knowing both $a$ and $b$, since
\begin{equation*}
    \argmin_{y\in\crl{0,1}} \ \lambda_1 \EE\br{\mathbf{1}\{A \ne y\}} + \lambda_2 \EE\br{\mathbf{1}\{B \ne y\}} = 
    \begin{cases}
        \crl{1} & \text{if } a>1- b, \\
        \crl{0,1} & \text{if }  a=1- b, \\
        \crl{0} & \text{if } a<1- b. 
    \end{cases}
\end{equation*}
Therefore, in this example, knowing the optimal predictors for each coin separately does not help with minimizing the trade-off.
This effect is due to the zero-one loss not being a \emph{proper scoring rule} in the sense that knowing the minimizer does not provide any information about the distribution, nor about the expected (excess) loss of a prediction, which is necessary for weighing the risks of two different tasks against each other. And indeed, other losses, such as the hinge or absolute deviation loss, suffer from the same problem. See also \cite{verma2025calibration} for a discussion of \emph{calibration} in multi-objective learning.

In our main results, we show that this lower bound can be circumvented in learning settings where the loss functions are proper in the above sense, and hence more informative.

\subsection{Bregman divergence losses and a pseudo-labeling algorithm}\label{subsec:Bregman-losses}

In this section, we introduce \emph{Bregman losses} and their key property that allows us to leverage unlabeled data and alleviate labeled sample complexity via a pseudo-labeling algorithm. 

\begin{definition}[Bregman loss]\label{def:Bregman-loss}
    Let $\cY$ be convex. A loss $\loss:\cY\times \cY \to [0,\infty]$ is called a \emph{Bregman loss} if there is a strictly convex and differentiable potential $\phi : \cY \to \mathbb{R}$ such that \(\loss(y,\yhat) = \phi(y) - \phi(\yhat) - \inner{\nabla \phi(\yhat)}{y - \yhat}\). 
\end{definition}
Many standard prediction losses are Bregman losses. For example, the squared loss can be obtained by setting $\phi(y) = \nrm{y}_2^2$, the logistic loss by choosing $\phi(y) = y\log y + (1-y)\log(1-y)$, and the Kullback-Leibler divergence using $\phi(y)=\sum_{k=1}^q y_j\log y_j$. 
As we now show, an important fact that we will leverage about learning with a Bregman loss is that the associated excess risk functional can be expressed in terms of its minimizer.
To state it precisely, we %
introduce the notions of \emph{population and empirical risk discrepancies} of a function \(f\in\Fall\) from some \(h\in\cH_k\), defined as:
\begin{equation}\label{eqn:risk-discrepancy}
    \riskDiscrepancy{k}{f}{h}:=\EE\left[\loss_k(h(X^k),f(X^k))\right],\qquad \empRiskDiscrepancy{k}{f}{h}:=\frac{1}{N_k}\sum^{N_k}_{i=1}\loss_k(h(\Xtilde^k_i),f(\Xtilde^k_i)).
\end{equation}
We further define for some \(h_1\in\cH_1,\dots,h_K\in\cH_K\) and \(\h=(h_1,\dots,h_K)\) the population and empirical \emph{scalarized} risk discrepancies of a function \(f\in\Fall\) from \(\h\) as
\begin{equation}\label{eqn:scalarized-risk-discrepancy}
    \riskDiscrepancy{s}{f}{\h}=s(\riskDiscrepancy{1}{f}{h_1},\dots,\riskDiscrepancy{K}{f}{h_K}),\qquad\empScalarizedDiscrepancy{f}{\h}=s(\empRiskDiscrepancy{1}{f}{h_1},...,\empRiskDiscrepancy{K}{f}{h_K}).
\end{equation}
We are now ready to state \cref{lem:properties-bregman-loss}, proved in \cref{subsubsec:proof-properites-bregman-loss}. 
\begin{lemma}[Properties of Bregman losses, based on \cite{Banerjee2005optimality}]
\label{lem:properties-bregman-loss}
    For each \(k\in[K]\), let $\loss_k$ be a Bregman loss with potential $\phi_k$. If both $\EE [Y^k]$ and $\EE[\phi_k(Y^k)]$ are finite, then up to almost sure equivalence, it holds that
    \[\fstar_k(\cdot) := \argmin_{f\in \Fall}\,\risk_k(f) = \EE\br{Y^k|X^k=\cdot} \quad   \text{and}\quad\forall f\in\Fall,\enspace\excessRisk_k(f)=\riskDiscrepancy{k}{f}{\fstar_k}.\]
\end{lemma}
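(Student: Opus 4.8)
The plan is to verify the two claimed identities by manipulating the definition of the Bregman loss directly, using the classical fact that the Bregman risk minimizer is the conditional mean. Throughout, fix $k \in [K]$ and write $\loss = \loss_k$, $\phi = \phi_k$, $P = P^k$, $(X,Y) = (X^k, Y^k)$, $\fstar = \fstar_k$, suppressing the index $k$.

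\textbf{Step 1: The minimizer is the conditional mean.}
For any $f \in \Fall$, expand the risk using \cref{def:Bregman-loss}:
\[
\risk(f) = \EE\big[\phi(Y)\big] - \EE\big[\phi(f(X))\big] - \EE\big[\inner{\nabla\phi(f(X))}{Y - f(X)}\big].
\]
The first term is finite by assumption and does not depend on $f$, so minimizing $\risk(f)$ amounts to minimizing $g(f) := -\EE[\phi(f(X))] - \EE[\inner{\nabla\phi(f(X))}{Y-f(X)}]$. Conditioning on $X$ and using the tower rule, it suffices to minimize, for $P_X$-almost every $x$, the conditional objective over the value $\yhat = f(x) \in \cY$:
\[
-\phi(\yhat) - \inner{\nabla\phi(\yhat)}{\EE[Y \mid X=x] - \yhat}.
\]
Adding back the ($\yhat$-independent) conditional first term $\EE[\phi(Y)\mid X=x]$, this equals $\EE[\loss(Y,\yhat)\mid X=x]$, which by the non-negativity and strict convexity of the Bregman loss in its second argument is uniquely minimized at $\yhat = \EE[Y\mid X=x]$; equivalently, one can complete the square via the three-point identity $\EE[\loss(Y,\yhat)\mid X=x] = \EE[\loss(Y,\mu(x))\mid X=x] + \loss(\mu(x), \yhat)$ with $\mu(x) := \EE[Y\mid X=x]$, which makes the minimizer transparent since $\loss(\mu(x),\yhat) \ge 0$ with equality iff $\yhat = \mu(x)$. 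Hence $\fstar(x) = \EE[Y\mid X=x]$ up to $P_X$-a.s. equivalence (this is where I would invoke \cite{Banerjee2005optimality}); one also checks $\mu \in \Fall$, i.e.\ $\mu \in L^2(P_X)$, which follows from Jensen and finiteness of $\EE[\|Y\|^2]$ — though the paper's $L^2$ assumption on $\Fall$ together with $\fstar \in \cH_k \subseteq \Fall$ already covers this.

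\textbf{Step 2: Excess risk equals the risk discrepancy from $\fstar$.}
Using the three-point identity from Step 1 with $\yhat = f(x)$, take conditional expectations and then integrate over $X \sim P_X$:
\[
\risk(f) = \risk(\fstar) + \EE\big[\loss(\fstar(X), f(X))\big] = \risk(\fstar) + \riskDiscrepancy{k}{f}{\fstar},
\]
so $\excessRisk_k(f) = \risk(f) - \risk(\fstar) = \riskDiscrepancy{k}{f}{\fstar}$, which is exactly the second claim. The three-point identity itself is a direct algebraic consequence of the definition of $\loss$: writing out $\loss(Y,\fstar(x)) + \loss(\fstar(x),\yhat) - \loss(Y,\yhat)$, all $\phi$-terms cancel and the gradient terms collapse to $\inner{\nabla\phi(\yhat) - \nabla\phi(\fstar(x))}{\,\fstar(x) - Y\,}$, whose conditional expectation given $X=x$ vanishes precisely because $\EE[Y\mid X=x] = \fstar(x)$.

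\textbf{Main obstacle.}
The substantive content is entirely in Step 1 — identifying the minimizer over the function class $\Fall$ and making the reduction to a pointwise problem rigorous. The care needed is (i) justifying the interchange of infimum and integral (a measurable selection / Fubini argument, standard once $\Fall \subseteq \bigoplus_k L^2(P^k_X)$ is granted), and (ii) handling integrability: one must ensure $\EE[\inner{\nabla\phi(f(X))}{Y-f(X)}]$ and $\EE[\loss(Y,f(X))]$ are well-defined and finite so the additive splitting is legal — this is exactly what the hypotheses "$\EE[Y^k]$ and $\EE[\phi_k(Y^k)]$ finite" and the definition of $\Fall$ are there to guarantee. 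Step 2 is then a purely algebraic corollary with no further obstacles. I would present Step 1 by citing \cite{Banerjee2005optimality} for the conditional-mean optimality and only spell out the three-point identity in detail, since that is the piece actually reused downstream.
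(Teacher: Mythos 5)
Your proof is correct and follows essentially the same route as the paper: cite \cite{Banerjee2005optimality} for the conditional-mean characterization of the minimizer, then apply the generalized Pythagorean (three-point) identity together with the tower property so the cross term vanishes at $\fstar_k$, giving $\excessRisk_k(f)=\riskDiscrepancy{k}{f}{\fstar_k}$. (Minor note: the collapsed gradient term should read $\inner{\nabla\phi(\yhat)-\nabla\phi(\fstar(x))}{Y-\fstar(x)}$, a sign slip that is immaterial since its conditional expectation vanishes either way.)
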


Along with \cref{eqn:s-trade-off,eqn:scalarized-risk-discrepancy}, \Cref{lem:properties-bregman-loss} implies $\sTradeoff(f) = \riskDiscrepancy{s}{f}{\bfstar}$ for $\bfstar = (\fstar_1,\dots,\fstar_K)$.
Note that the second part of \cref{lem:properties-bregman-loss} decomposes the risk into a task-specific intrinsic noise and a discrepancy term, where $\risk_k(f) = \risk_k(\fstar_k) + d_k(f;\fstar_k)$. It turns out that Bregman divergences are, up to transformation of the label space, the only loss functions that enjoy such a decomposition (see \cref{sec:beyond}). This decomposition helps justify the following \emph{pseudo-labeling multi-objective learning} algorithm (PL-MOL, \cref{alg:pseudolabeling}).

\begin{wrapfigure}{R}{0.44\textwidth}
    \vspace{-20pt}
    \begin{minipage}{0.43\textwidth}
        \begin{algorithm}[H]
        \caption{Pseudo-labeling (PL-MOL)}
        \label{alg:pseudolabeling}
            \begin{algorithmic}[1]
                \FOR{$k\in [K]$}
                \STATE Compute $\hhat_k = \argmin_{h\in \cH_k} \empRisk_k(h)$ \label{step:erm}
                \ENDFOR
                \FOR{$s \in \cS$}
                \STATE Compute $\ghat_s = \argmin_{g\in \cG}\empScalarizedDiscrepancy{g}{\bhhat}$  \label{step:s-erm}
                \ENDFOR
                \STATE Return $\crl{\ghat_s:s\in \cS}$.
            \end{algorithmic}
        \end{algorithm}
    \end{minipage}
    \vspace{-10pt}
\end{wrapfigure}

First, we minimize the individual empirical risks $\empRisk_k$ over $\cH_k$ to obtain \smash{$\bhhat=(\hhat_1,\dots,\hhat_K)$}, the set of empirical risk minimizers; thus, we estimate the task-wise Bayes-optimal models \(\fstar_k\) (Line \ref{step:erm}).\footnote{Here, we use ERM for each task separately, but in principle, any algorithm can be used to estimate the \(\fstar_k\)'s. In particular, if there were assumptions about information flow between different \(P^k\)'s, then it could be of interest to deploy an algorithm tailored to \emph{multi-task} learning. However, we do not impose such assumptions, and hence our algorithm and corresponding bounds do not take advantage of potential shared information between the different tasks.} 
Given \Cref{lem:properties-bregman-loss}, we can then approximate the excess risks $\excessRisk_k$ via the empirical risk discrepancies \smash{\(\empRiskDiscrepancy{k}{\cdot}{\hhat_k}\)} using unlabeled data.
And so, the \(s\)-trade-off \(\sTradeoff(\cdot)\) can accordingly be approximated by \smash{\(\empScalarizedDiscrepancy{\cdot}{\bhhat}\)}. The empirical estimate of the Pareto set in $\cG$ is then given as the minimizer of \smash{\(\empScalarizedDiscrepancy{\cdot}{\bhhat}\)} in \(\cG\) (Line \ref{step:s-erm}). 
In this second step, one could also reuse the covariates of the labeled data, but this yields at most a constant gain.

Notice that the second step (Line \ref{step:s-erm}) is equivalent to first pseudo-labeling all unlabeled data using the ERMs, and then passing it to the supervised \smol\@ algorithm from \cite{sukenik2024generalization} (``ERM-MOL'', \cref{alg:erm-mol} discussed in \cref{subsec:all-tradeoffs-generalization}).
Finally, note that from a computational perspective, even if $\cS$ is not finite, \cref{alg:pseudolabeling} can be implemented, e.g., using hypernetworks, see \cref{subsec:adjacent-related-works}.

\subsection{Characterization of models with optimal trade-offs: A variational inequality}

The pseudo-labeling method illustrates how one can estimate the $s$-trade-off solutions $g_s\in\cG$ from both labeled and unlabeled data when the losses are Bregman divergences. 
We now show that Bregman losses also enable characterizing the minimizers $g_s$ via a variational inequality in some cases. From this inequality, in turn, we can derive conditions for $g_s$ to have a particularly simple representation.
Specifically, in the case of learning with Bregman losses and linear scalarizations $s\in \Slin$ in a convex model class $\cG$, we can show the following result.

\begin{lemma}[Variational characterization of minimizers]
\label{lem:variational-inequality}
    For each $k\in[K]$, let $\ell_k$ be a Bregman loss with potential $\phi_k$. Suppose $s=\linearScalarization$ is linear with weights $\weights$, and $\mu_s := \sum_{k=1}^K\weight_k P^k_X$. Denote by $\inner{\cdot}{\cdot}_{s}$ the inner product defined as $\inner{f}{f'}_{s} = \int \inner{f(x)}{f'(x)}d\mu_s(x)$.
    Then for every non-empty, convex and closed set $\cG\subseteq\Fall$,
    \begin{equation*}
        \sTradeoff(g)=\inf_{g'\in\cG} \sTradeoff(g') \quad \text{if and only if} \quad \forall g'\in\cG:\quad  \inner{\sum_{k=1}^K \weight_k \frac{dP_X^k}{d\mu_s} \nabla^2 \phi_k(g)(g-\fstar_k)}{g'-g}_{s} \geq 0
    \end{equation*}
    where $\nabla^2 \phi_k(g)$ denotes the function $x \mapsto \nabla^2 \phi_k(g(x))$. If $\cG$ is bounded, such a $g\in \cG$ is guaranteed to exist.
\end{lemma}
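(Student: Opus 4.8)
The plan is to recognize this as a standard first-order optimality (variational inequality) characterization for a convex optimization problem over a closed convex set in a Hilbert space, and to carry it out by (i) identifying the right Hilbert space, (ii) showing the objective is convex and Fréchet differentiable there, (iii) computing its gradient, and (iv) invoking the classical VI characterization of minimizers of a convex differentiable functional over a closed convex set. By \cref{lem:properties-bregman-loss}, for linear $s = \linearScalarization$ we have $\sTradeoff(g) = \sum_{k=1}^K \weight_k\, \riskDiscrepancy{k}{g}{\fstar_k} = \sum_{k=1}^K \weight_k \int \loss_k(\fstar_k(x), g(x))\, dP^k_X(x)$. The natural ambient space is $L^2(\mu_s; \RR^q)$ with $\mu_s = \sum_k \weight_k P^k_X$, equipped with $\inner{\cdot}{\cdot}_s$; note $\cG \subseteq \Fall \subseteq \bigcap_k L^2(P^k_X)$ and each $P^k_X \ll \mu_s$, so $\cG$ embeds into this space and the Radon–Nikodym derivatives $dP^k_X/d\mu_s$ are well-defined. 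Rewriting, $\sTradeoff(g) = \int \sum_{k=1}^K \weight_k \frac{dP^k_X}{d\mu_s}(x)\, \loss_k(\fstar_k(x), g(x))\, d\mu_s(x)$, i.e.\ an integral functional with integrand $\psi(x, v) := \sum_k \weight_k \frac{dP^k_X}{d\mu_s}(x)\, D_{\phi_k}(\fstar_k(x), v)$, which is convex in $v$ since each Bregman divergence $D_{\phi_k}(y, \cdot)$ is convex (as $\phi_k$ is convex; $D_{\phi_k}(y,v) = \phi_k(y) - \phi_k(v) - \inner{\nabla\phi_k(v)}{y-v}$ has Hessian $\nabla^2\phi_k(v)$ in $v$). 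Hence $\sTradeoff$ is convex on $L^2(\mu_s;\RR^q)$.

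For differentiability, I would compute the directional derivative of $v \mapsto D_{\phi_k}(y, v)$: it equals $-\nabla^2\phi_k(v)(v - y)$ contracted with the direction (the $\phi_k(v)$ and linear terms' derivatives partially cancel, leaving the Hessian term). Thus the Gâteaux derivative of $\sTradeoff$ at $g$ in direction $u$ is $\inner{\sum_k \weight_k \frac{dP^k_X}{d\mu_s} \nabla^2\phi_k(g)(g - \fstar_k)}{u}_s$, which identifies the gradient as the element $G(g) := \sum_k \weight_k \frac{dP^k_X}{d\mu_s} \nabla^2\phi_k(g)(g-\fstar_k) \in L^2(\mu_s;\RR^q)$ appearing in the statement. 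Then the desired equivalence is exactly the textbook fact: for a convex, Gâteaux-differentiable functional $J$ on a Hilbert space and a nonempty closed convex $\cG$, a point $g \in \cG$ minimizes $J$ over $\cG$ iff $\inner{\nabla J(g)}{g' - g} \geq 0$ for all $g' \in \cG$ (necessity: $t \mapsto J(g + t(g'-g))$ is minimized at $t=0$ on $[0,1]$, so its right derivative is $\geq 0$; sufficiency: convexity gives $J(g') \geq J(g) + \inner{\nabla J(g)}{g'-g} \geq J(g)$). Existence under boundedness of $\cG$ follows since a bounded closed convex set in a Hilbert space is weakly compact and a convex lower-semicontinuous functional attains its infimum there — I would verify weak lower semicontinuity of $\sTradeoff$ from convexity plus strong lower semicontinuity (Fatou on the nonnegative integrand).

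The main obstacle is integrability/regularity bookkeeping needed to make the gradient computation and the exchange of derivative and integral rigorous: one must ensure that $G(g) \in L^2(\mu_s;\RR^q)$ (so that the inner product in the statement is finite and the functional is genuinely differentiable rather than merely having one-sided directional derivatives), which requires control on $\nabla^2\phi_k(g(x))$ and on $g(x) - \fstar_k(x)$ — this is where one leans on $\Fall$ being defined precisely so that all relevant integrals converge, and on $\phi_k$ being differentiable (hence $C^1$, with $\nabla\phi_k$ locally bounded) on the convex label space $\cY$. A careful treatment would either add a mild local-boundedness hypothesis on $\nabla^2\phi_k$ over the range of functions in $\cG$, or note that the conclusion should be read in the Gâteaux (one-sided directional derivative) sense, where the VI still characterizes minimizers via convexity alone; I would flag this and proceed with the differentiable case, which covers the square and cross-entropy examples of interest. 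The positivity of the Radon–Nikodym weights and dominated convergence to justify differentiating under the integral sign are the remaining routine checks.
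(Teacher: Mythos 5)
Your overall strategy---embed $\cG$ into the Bochner space $L^2(\mu_s;\RR^q)$, compute the gradient of the integral functional, and invoke the classical variational-inequality characterization of constrained minimizers in a Hilbert space---is exactly the route the paper takes: its \cref{lem:gradient-smoothness} supplies the gradient via Fr\'echet derivatives with respect to $\inner{\cdot}{\cdot}_s$, and Theorem~46 of Zeidler supplies the variational inequality. The gradient you arrive at is correct and matches the statement (your intermediate ``$-\nabla^2\phi_k(v)(v-y)$'' has a sign slip relative to your own final formula $\nabla^2\phi_k(g)(g-\fstar_k)$, which is the right one), and your treatment of the integrability bookkeeping and of existence via weak compactness is in the right spirit.

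There is, however, a genuine error in your justification of convexity. You assert that $v\mapsto D_{\phi_k}(y,v)$ is convex because it ``has Hessian $\nabla^2\phi_k(v)$ in $v$.'' Differentiating the gradient $\nabla^2\phi_k(v)(v-y)$ once more gives $\nabla^2\phi_k(v)+\nabla^3\phi_k(v)[\,\cdot\,,\cdot\,,v-y]$, and the third-derivative term can make this indefinite: in one dimension with $\phi(v)=v^4+\eps v^2$ the second derivative of $D_\phi(y,\cdot)$ is $36v^2-24vy+2\eps$, which is negative near $v=y/3$ for small $\eps$. The paper itself flags this in a footnote (``in general, Bregman divergences are not convex in the second argument''). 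This is not cosmetic: your sufficiency direction rests on the gradient inequality $J(g')\geq J(g)+\inner{\nabla J(g)}{g'-g}_s$, and your existence argument rests on weak lower semicontinuity deduced from convexity---both collapse without convexity of $\sTradeoff$. The necessity direction and the gradient computation survive, and convexity does hold for the squared and logistic/KL losses (where $\partial_v^2 D_\phi(y,v)>0$ directly), so your argument covers those cases; but at the stated level of generality the ``if'' direction and the existence claim are not established. To close the gap you would need to verify convexity of $\sTradeoff$ on $\cG$ (or monotonicity of the gradient map) as an explicit hypothesis, or supply a different argument for sufficiency.
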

\cref{lem:variational-inequality} is a direct consequence of \cref{lem:gradient-smoothness} and Theorem 46 in \cite{zeidler1985nonlinear}. 
From this lemma, we can derive the $s$-trade-off solutions $g_s$ analytically in the special case where $\cG = \Fall$ and all potentials are shared $\phi_k = \phi$. In that case, since the set of feasible models is unconstrained, the variational inequality in \cref{lem:variational-inequality} holds with equality. In particular, the first argument of $\langle \cdot, \cdot\rangle_s$ must vanish, up to $\mu_s$-equivalence. Thus, we can deduce that the $s$-trade-off solution is of the form
\[g_s(x) = \sum_{k \in [K]} w_k(x) \fstar_k(x), \qquad \text{where} \quad w_k(x) = \weight_k \frac{dP^k_X}{d\mu_s}(x)\]
so that $x\mapsto w_k(x)$ is non-negative and $\sum_{k\in[K]} w_k(x) \equiv 1$. In short, the optimal prediction with respect to the $s$-trade-off on the instance $x \in \cX$ is a convex combination of the individual Bayes optimal labels. Additionally, if the marginals are shared $P_X^k = P_X$, then each $dP_X^k/d\mu_s = 1$, so the weights $w_k$ are independent of $x$. We will later make use of this specific form in \cref{subsec:main-result} for our more specialized result. However, these are specific settings, and \emph{$g_s$ does not generally need to take this form}.

\section{Sample complexity upper bounds for pseudo-labeling}\label{sec:main-results}

We now present uniform and localized upper bounds for \cref{alg:pseudolabeling} for Bregman losses in terms of \emph{Rademacher complexities}.
Specifically, we use the coordinate-wise Rademacher complexity of a $\cY$-valued function class $\cH\subseteq\Fall$ under distribution $P^k_X$ with $n$ samples, which is defined as
\begin{equation*}
    \radComp_n^k(\cH) := \expect_{\substack{X^k_1,\dots,X^k_n \sim P^k_X \\ \sigma_{11},\sigma_{12},\dots, \sigma_{nq} \sim \Rad}} \br{\sup_{h\in \cH} \abs{\frac{1}{n}\sum_{i=1}^n \sum_{j=1}^{q} \sigma_{ij} h_j(X^k_i)}}.
\end{equation*}
We discuss the choice and properties of this Rademacher complexity in \cref{subsec:lemmas-rademacher-complexities}.

\subsection{A uniform learning bound} \label{subsec:uniform-bound}
We start with some assumptions on the loss functions \(\loss_k\) that we require for our bounds.
\begin{assumption}[Regularity of the losses] \label{ass:loss-ass}
    For each $k \in [K]$, let $\loss_k$ be a Bregman loss satisfying:\footnote{The norm of the strong convexity potential and Lipschitz continuity in the first argument can be replaced by an arbitrary norm in \cref{thm:uniform-bound}. Replacing the norm of the Lipschitz continuity in the second argument in \cref{thm:uniform-bound} entails using other vector Rademacher complexities, described in \cref{subsec:lemmas-rademacher-complexities}. They cannot be replaced in \cref{thm:main-result}. Moreover, for our results it is sufficient for the Lipschitz property to hold on the range of $\cG$.}
    \begin{itemize}[leftmargin=*,itemsep=0pt,topsep=-2pt]
        \item Its associated potential function $\phi_k$ is $\mu_k$-strongly convex in $\cY$ with respect to $\ell_2$-norm, so that for all $y, y' \in \cY$, it holds that $\loss_k(y,y')=\phi_k(y) - \phi_k(y') - \langle \nabla \phi_k(y'), y - y'\rangle \geq \frac{\mu_k}{2} \|y - y'\|_2^2$.
        \item The loss is $L_k$-Lipschitz continuous in both arguments with $\ell_2$-norm in $\RR^q$, that is, for all $y,y',y''\in \cY$ it holds that $\abs{\loss(y,y') -\loss(y,y'')} \leq L_k \nrm{y'-y''}_2$ and $\abs{\loss(y',y) -\loss(y'',y)} \leq L_k \nrm{y'-y''}_2$.
        \item The loss is bounded by some constant $B_k<\infty$ as $\loss_k \leq B_k$.
    \end{itemize}
\end{assumption}
The boundedness enables the concentration bounds used in our results and is a common assumption, and the strong convexity and Lipschitz continuity enable using a vector contraction inequality from \cite{maurer2006bounds}, as well as establishing a uniform approximation of the excess risks (see the proof outline below).  
Most Bregman losses satisfy \cref{ass:loss-ass} on bounded domains $\cY$, while some (like the logistic loss) require careful treatment of Lipschitz continuity if the gradient is unbounded at the boundary of $\cY$ (cf.\ \cref{cor:logistic-regression,lem:Lipschitz-Bregman}).
We now state our first main result.

\begin{theorem}
\label{thm:uniform-bound}
    Suppose that \cref{ass:loss-ass} holds. Let $\cS$ be any class of scalarizations that satisfy the reverse triangle inequality and positive homogeneity in \cref{eqn:scalarization-properties}, and let $\{\ghat_s : s \in \cS\}$ be the class of solutions returned by Algorithm~\ref{alg:pseudolabeling}. Then \eqref{eqn:smol} holds for any $\delta \in(0,1)$ and $\eps_s=s(\eps_1,\dots,\eps_K)$,
    where each $\eps_k$ is bounded by
    \begin{equation} \label{eqn:individual-excess-bound}
        \eps_k \leq C_k \left(\radComp_{N_k}^k\!\pr{\cG} + \left(\frac{\log(K/\delta)}{N_k}\right)^{1/2}  + \sqrt{\radComp_{n_k}^k\!\pr{\cH_k} + \, \left(\frac{\log(K/\delta)}{n_k}\right)^{1/2}}\,\right),
    \end{equation}
    with $C_k= \max\{4L_k,\sqrt{2}B_k,L_k\sqrt{24L_k/\mu_k},L_k\sqrt{6B_k/\mu_k}\}$.
\end{theorem}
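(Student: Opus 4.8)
The plan is to bound the excess $s$-trade-off of each $\ghat_s$ by first reducing, via the scalarization properties in \cref{eqn:scalarization-properties}, to a per-task analysis, and then controlling each task's error by a combination of (i) a uniform approximation of $\excessRisk_k(\cdot)$ by the empirical risk discrepancy $\empRiskDiscrepancy{k}{\cdot}{\hhat_k}$ over $g\in\cG$, and (ii) the ERM guarantee for $\hhat_k$ over $\cH_k$. Concretely, write $\bhhat=(\hhat_1,\dots,\hhat_K)$ and recall from \cref{lem:properties-bregman-loss} that $\sTradeoff(f)=\riskDiscrepancy{s}{f}{\bfstar}$. Since $\ghat_s$ minimizes $\empScalarizedDiscrepancy{\cdot}{\bhhat}$ over $\cG$ while $g_s$ minimizes $\sTradeoff(\cdot)=\riskDiscrepancy{s}{\cdot}{\bfstar}$, a standard comparison gives
\[
\sTradeoff(\ghat_s)-\sTradeoff(g_s)\leq \big(\riskDiscrepancy{s}{\ghat_s}{\bfstar}-\empScalarizedDiscrepancy{\ghat_s}{\bhhat}\big)+\big(\empScalarizedDiscrepancy{g_s}{\bhhat}-\riskDiscrepancy{s}{g_s}{\bfstar}\big)\leq 2\sup_{g\in\cG}\big|\riskDiscrepancy{s}{g}{\bfstar}-\empScalarizedDiscrepancy{g}{\bhhat}\big|.
\]
Then the reverse triangle inequality and positive homogeneity from \cref{eqn:scalarization-properties} let me pull the scalarization outside: $|\riskDiscrepancy{s}{g}{\bfstar}-\empScalarizedDiscrepancy{g}{\bhhat}|\leq s(|d_1(g;\fstar_1)-\widehat d_1(g;\hhat_1)|,\dots,|d_K(g;\fstar_K)-\widehat d_K(g;\hhat_K)|)$, so it suffices to bound $\sup_{g\in\cG}|d_k(g;\fstar_k)-\widehat d_k(g;\hhat_k)|$ by the claimed $\eps_k$ for each $k$, after which $\eps_s=s(\eps_1,\dots,\eps_K)$ follows.

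The per-task bound splits into two pieces. First, the \emph{label-estimation} piece: replacing $\fstar_k$ by $\hhat_k$ inside the discrepancy. Using $L_k$-Lipschitzness of $\loss_k$ in its first argument (\cref{ass:loss-ass}), $|d_k(g;\fstar_k)-d_k(g;\hhat_k)|\leq L_k\,\EE\|\fstar_k(X^k)-\hhat_k(X^k)\|_2=L_k\|\fstar_k-\hhat_k\|_{L^2(P_X^k)}$, and similarly for the empirical version $\widehat d_k$. By $\mu_k$-strong convexity, $\tfrac{\mu_k}{2}\|\fstar_k-\hhat_k\|_{L^2(P_X^k)}^2\leq \excessRisk_k(\hhat_k)=\riskDiscrepancy{k}{\hhat_k}{\fstar_k}$, and a standard ERM-over-$\cH_k$ argument (symmetrization + vector contraction inequality of \cite{maurer2006bounds} applied to the $L_k$-Lipschitz loss + bounded-differences concentration using $\loss_k\le B_k$) bounds $\excessRisk_k(\hhat_k)\lesssim L_k\radComp_{n_k}^k(\cH_k)+B_k\sqrt{\log(K/\delta)/n_k}$; here I use $\fstar_k\in\cH_k$ from \cref{eqn:func-ass} so that the approximation error vanishes. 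Taking square roots produces the $\sqrt{\radComp_{n_k}^k(\cH_k)+\sqrt{\log(K/\delta)/n_k}}$ term with the constant factors folded into $C_k$ (this is where $L_k\sqrt{L_k/\mu_k}$ and $L_k\sqrt{B_k/\mu_k}$ enter). Second, the \emph{unlabeled-concentration} piece: $\sup_{g\in\cG}|d_k(g;\hhat_k)-\widehat d_k(g;\hhat_k)|$ is the uniform deviation, over $g\in\cG$ and conditionally on $\hhat_k$, of an empirical average of the bounded function $x\mapsto\loss_k(\hhat_k(x),g(x))$ from its mean over the $N_k$ unlabeled samples; symmetrization, the vector contraction inequality (using $L_k$-Lipschitzness in the \emph{second} argument, since $\hhat_k$ is fixed) to peel $\loss_k$, and McDiarmid with $\loss_k\le B_k$ give $\lesssim L_k\radComp_{N_k}^k(\cG)+B_k\sqrt{\log(K/\delta)/N_k}$. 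A union bound over the $K$ tasks (splitting $\delta$) yields the stated high-probability statement.

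The main obstacle I anticipate is handling the conditioning correctly in the unlabeled-concentration piece: $\hhat_k$ depends on the labeled data, so $\widehat d_k(g;\hhat_k)$ is not a clean empirical mean of i.i.d. terms unless the unlabeled sample is independent of $\hhat_k$, which it is here by construction; still, one must argue the Rademacher bound $\radComp_{N_k}^k(\cG)$ is the right complexity even though the "label" $\hhat_k(x)$ is data-dependent — the resolution is that for \emph{fixed} $\hhat_k$ the class $\{x\mapsto\loss_k(\hhat_k(x),g(x)):g\in\cG\}$ has Rademacher complexity controlled by $L_k\radComp_{N_k}^k(\cG)$ via contraction in the second argument, uniformly over the choice of $\hhat_k$, so one can condition on the labeled sample and then take expectations. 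The other delicate point is making sure the vector contraction inequality of \cite{maurer2006bounds} applies with the coordinate-wise Rademacher complexity $\radComp$ as defined — this is exactly the bookkeeping deferred to \cref{subsec:lemmas-rademacher-complexities}, and I would cite those lemmas rather than reprove them. Everything else is routine: tracking which constant ($4L_k$, $\sqrt2 B_k$, $L_k\sqrt{24L_k/\mu_k}$, $L_k\sqrt{6B_k/\mu_k}$) absorbs which term and taking the max.
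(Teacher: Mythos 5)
Your proposal is correct and follows essentially the same route as the paper's proof: the same ERM-comparison decomposition into two uniform deviation terms, the same reverse-triangle-inequality reduction to per-task bounds, the same split into an unlabeled concentration term (symmetrization, vector contraction, McDiarmid, conditioning on $\hhat_k$) and a labeled term controlled via Lipschitzness, Jensen, and strong convexity of the potential (the paper's \cref{lem:approximate-excess-risk}), followed by a union bound. The conditioning subtlety you flag is handled in the paper exactly as you describe.
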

\cref{thm:uniform-bound} is proved in \cref{app:proof-thm-1}.
Using VC bounds on the Rademacher complexity (see \cref{lem:VC-bound}), \cref{thm:uniform-bound} implies that for VC (subgraph) classes $\cG$ and $\cH_k=\cH$ with VC dimensions $d_\cG,d_\cH$, only $ O(K d_{\cH}/\eps^4)$ labeled and $O(K d_{\cG}/\eps^2)$ unlabeled samples are necessary to achieve $\eps$-excess $s$-trade-off uniformly for all scalarizations. Comparing this with the sample complexity $\Theta(Kd_\cG/\eps^2)$ from \cref{prop:hardness}, it is apparent that for Bregman losses, \cref{alg:pseudolabeling} can alleviate the label complexity of \smol\@ significantly when $d_{\cG}\gg d_{\cH}$, and completely eradicates its dependence on $\cG$. 
It shows that a large complexity of $\cG$ can be compensated by a large amount of unlabeled data $N_k$, as long as $\radComp_{N_k}^k\!\pr{\cG}\to 0$ for $N_k\to\infty$. 

Notice that, under \cref{ass:loss-ass}, the map $g\mapsto \excessRisk_k(g)$ or its domain $\cG$ can be non-convex, in which case non-linear scalarizations are necessary to reach the entire Pareto front. \cref{thm:uniform-bound} applies to many such scalarizations, and in particular, the Tchebycheff scalarizations from \cref{eqn:Tchebycheff-linear}.

\paragraph{Proof outline of \cref{thm:uniform-bound}.}
Using the inequality $\empRiskDiscrepancy{s}{\ghat_s}{\bhhat} \leq \empRiskDiscrepancy{s}{g_s}{\bhhat}$ and the identity $\sTradeoff(g) = \riskDiscrepancy{s}{g}{\bfstar}$ from \cref{lem:properties-bregman-loss}, the proof of \cref{thm:uniform-bound} reduces to bounding the two terms $\riskDiscrepancy{s}{\ghat_s}{\bfstar} -\empRiskDiscrepancy{s}{\ghat_s}{\bhhat}$ and $\empRiskDiscrepancy{s}{g_s}{\bhhat}-\riskDiscrepancy{s}{g_s}{\bfstar}$.
By the reverse triangle inequality and positive homogeneity of the scalarization defined in \eqref{eqn:scalarization-properties}, both are controlled in each coordinate $k\in [K]$ by the uniform bound
\begin{equation*}
    \sup_{g\in \cG}  \abs{\empRiskDiscrepancy{k}{g}{\hhat_k} - \riskDiscrepancy{k}{g}{\fstar_k}} \leq \sup_{g\in \cG} \underbrace{\abs{\empRiskDiscrepancy{k}{g}{\hhat_k} - \riskDiscrepancy{k}{g}{\hhat_k}}}_{\text{error from finite unlabeled data} } \,+\, \sup_{g\in \cG} \underbrace{\abs{\riskDiscrepancy{k}{g}{\hhat_k} - \riskDiscrepancy{k}{g}{\fstar_k}}}_{\text{error from finite labeled data}}.
\end{equation*}
The bounds for both 
terms use the boundedness from \cref{ass:loss-ass} through McDiarmid's inequality \cite{mcdiarmid1989method} and the Lipschitz continuity in the second argument for a vector contraction inequality \cite{Maurer2016vector}. Further,  the Lipschitz continuity from \cref{ass:loss-ass} together with the contraction inequality enables us to bound the first term with the global Rademacher complexity of $\cG$ that is independent of $\hhat_k$. Finally, the strong convexity and Lipschitz continuity in the first argument from \cref{ass:loss-ass} implies the bound
$\smash{\sup_{f \in \Fall}\, \big|\riskDiscrepancy{k}{f}{\hhat_k} - \riskDiscrepancy{k}{f}{\fstar_k}\big| \lesssim \excessRisk_k^{1/2}(\hhat_k)}$ for the second term (\cref{lem:approximate-excess-risk}). This is the origin of the square-root in \cref{thm:uniform-bound}, which is unavoidable in a uniform bound. In turn, we bound the excess risk of $\hhat_k$ in terms of the global Rademacher complexity of $\cH_k$ using standard uniform convergence arguments. To aggregate the errors, we take a union bound over the $K$ labeled and unlabeled empirical processes. Due to the global suprema, the final bound then holds uniformly for all scalarizations.

\subsection{A localized learning bound} 
\label{subsec:main-result}

The analysis in \cref{thm:uniform-bound} is crude: it estimates the excess risks on all of $\cH_k$ and $\cG$, which is why the \emph{global} Rademacher complexities appear in the bound and the unusual extra square-root appears. Such an analysis can be overly conservative, and a \emph{localized} bound can provide much tighter statistical guarantees \cite{bartlett2005local,koltchinskii2006local}. 
To facilitate a localized analysis for \cref{alg:pseudolabeling}, we require some additional assumptions. First of all, we only consider linear scalarizations, that is, $\cS\subseteq \Slin$ from \cref{eqn:Tchebycheff-linear}, mostly for the following norms to be \emph{Hilbert} norms: for all $k\in[K]$, $s\in \Slin$, and $f\in \Fall$, define
\begin{equation*}
    \nrm{f}_k^2 := \EE\|f(X^k)\|^2_2 \qquad \text{and} \qquad \nrm{f}_s^2 := s\pr{\nrm{f}_1^2,\dots,\nrm{f}_K^2}.
\end{equation*}
We also require the following shape, strong convexity and smoothness assumptions.
\begin{assumption}[Shape, strong convexity and smoothness]
    \label{ass:convexity-smoothness}
    Recall that $\fstar_k \in \argmin_{f\in \Fall}\cR_k(f)$.
    \begin{itemize}[leftmargin=*,itemsep=0pt,topsep=-2pt]
        \item For all $k\in[K]$, the function classes $\cH_k-\fstar_k$ are star-shaped around the origin, that is, for all $\alpha\in[0,1]$, if $h\in \cH_k-\fstar_k$, then $\alpha h \in \cH_k-\fstar_k$. Moreover, the function class $\cG$ is convex and closed.
        \item There exists a constant $\gamma> 0$ so that for all $s\in \cS$ and all $\h\in \cH_1\times\dots \times\cH_K$, the map $g\mapsto \riskDiscrepancy{s}{g}{\h}-\gamma\nrm{g}_s^2$ is convex on $\cG$.
        \item For some $0<\nu<\infty$, the second and third derivatives of the potentials $\phi_k$ are bounded on $\cY$ as $\sup_{y\in \cY}\nrm{\nabla^2\phi_k(y)}_2\leq \nu$ and $\sup_{y\in \cY}\nrm{\nabla^3\phi_k(y)}_2\leq \nu$
    where $\nrm{\cdot}_2$ denotes the $\ell_2$-operator norms.
    \end{itemize}
\end{assumption}
The shape constraints are commonly used in local Rademacher complexity proofs \cite{bartlett2005local}, and the strong convexity acts as a ``multi-objective Bernstein condition'' \cite{bartlett2005local,Kanade2024exponential}.
Moreover, the convexity of $\cG$, strong convexity, and smoothness also enable a variational argument that is integral to the bound based on \cref{lem:variational-inequality} (see proof outline below for details).
For a refined version of our result, we also require the following norm-equivalence assumption that allows relating errors in $\nrm{\cdot}_s$-norm to errors in $\nrm{\cdot}_k$-norm.
\begin{assumption}[Norm equivalence]
    \label{ass:norm-equivalence} 
    All covariate distributions $P^k_X$ are absolutely continuous with respect to the mixture distributions $ \sum_{k=1}^K \weight_k P_X^k$ \emph{for all} $\linearScalarization\in \cS$, and there is a constant $\eta$ so that
    \begin{equation*}
        \forall k\in [K],\, \linearScalarization \in \cS:\qquad  \esssup\frac{dP_{X}^k}{d \pr{\sum_{k=1}^K \weight_k P_X^k}}  \leq \eta^2 < \infty. %
    \end{equation*}
\end{assumption}
Specifically, as proved in \cref{lem:norm-equivalence-Radon-Nikodym}, \cref{ass:norm-equivalence} is equivalent to imposing $\nrm{\cdot}_k\leq \eta\nrm{\cdot}_s$ for all $k\in[K]$ and $s\in \cS$. Sufficient conditions for \cref{ass:norm-equivalence} are that all weights of the linear scalarizations in $\cS$ are bounded away from zero, or $\smash{P^k_X\ll P^j_X}$ and $\smash{\esssup dP^k_X / dP^j_X \leq \eta^2}$ for all $k,j\in[K]$.

We are now ready to state the learning bound in terms of localized Rademacher complexities.
Recall that $\fstar_k$ is the Bayes model for the $k$th task, cf. \cref{eqn:func-ass}, and for any $\h=(h_1,\dots,h_K)\in \cH_1\times\dots\times\cH_K$, define 
\[g_s^{\h}:=\argmin_{g\in \cG} \riskDiscrepancy{s}{g}{\h},\] 
so that \raisebox{0pt}[1.4ex][0pt]{${g_s=g_s^{\,\bfstar}}$\!\!,} cf.\ \cref{eqn:s-trade-off}.
The result depends on the Rademacher complexities of the following sets of functions, defined using the balls $\cB_{\nrm{\cdot}_k}=\crl{f\in \Fall:\nrm{f}_k\leq 1}$ as
\begin{equation}
\label{eq:Hk-and-cGk}
    \cH_k(r):= (\cH_k-\fstar_k)\cap r\cB_{\nrm{\cdot}_k} \qquad \text{and}\qquad \cG_k(r;\h) :=\bigcup_{s\in \cS}(\cG-g_s^{\h})\cap r\cB_{\nrm{\cdot}_k}. %
\end{equation}
The excess $s$-trade-off is bounded in terms of the following 
\emph{critical radii}, defined for each $k\in [K]$ as
\begin{equation}
\label{eq:critical-radii-definition}
    \rH = \inf\crl{r\geq 0 : r^2 \geq \radComp_{n_k}^k\pr{\cH_k(r)}} \qquad \text{and} \qquad \rG = \inf\crl{r\geq 0: r^2 \geq \radComp_{N_k}^k\pr{\cG_k(r;\bfstar)}}.
\end{equation}
Critical radii like these are the key quantities of localized generalization bounds \cite{bartlett2005local,koltchinskii2006local}. They can be bounded using VC dimension (\cref{lem:VC-bound}) or with (generic) chaining \cite{dudley1967sizes,talagrand2005generic}. We define the \emph{worst-case} critical radius in $\cG$ by replacing $\bfstar$ in the definition of $\rG$ with a supremum over ground-truth functions $\h$ 
\begin{equation}
\label{eq:supremum-critical-radius}
    \rGsup := \sup_{\h\in \cH_1\times\dots\times \cH_K }\inf\crl{r\geq 0: r^2 \geq \radComp_{N_k}^k\pr{\cG_k(r;\h)}},
\end{equation}
and then $\rG\leq \rGsup$. The same quantity appears in our main result.
\begin{theorem}
\label{thm:main-result}
    Let $\cS\subseteq \Slin$ be a set of linear scalarizations, and let \cref{ass:loss-ass,ass:convexity-smoothness} hold.
    Then, if $\delta>0$ is sufficiently small, 
    the output $\crl{\ghat_s:s\in \cS}$ from \cref{alg:pseudolabeling} satisfies \eqref{eqn:smol} with probability $1-\delta$ and $\eps_s = s(\eps_1,\dots,\eps_K)$, where
    \begin{equation}
    \label{eq:main-result-bound}
        \eps_k \lesssim \Cfin \pr{\rGsup[2] + \rH[2] + \pr{N_k^{-1}+n_k^{-1}}\log(4K/\delta)}
    \end{equation}
    and $ \Cfin = (\nicefrac{\nu^3 (1+\diam_{\nrm{\cdot}_2}(\cY))}{\gamma^2}) \max\{ \nicefrac{L_k^2}{\gamma^2}+\nicefrac{B_k}{\gamma}, \nicefrac{L_k^{2}}{\mu_k^2} + \nicefrac{B_k}{\mu_k}\}$.\footnote{We assume $\min\crl{L_k,\mu_k} /\gamma \geq1$ for all $k\in[K]$; otherwise, the same holds without the squares on each ratio. }
    If additionally \cref{ass:norm-equivalence} holds, then for $\rHmax[2] = \sup_{s\in \cS} s(\rHnok[1]^2,\ldots,\rHnok[K]^2)$ and $n_{\cS} =(\sup_{s\in \cS} s(1/n_1,\ldots,1/n_K))^{-1}$ 
    we have 
    \begin{equation}
    \label{eq:main-result-bound-2}
        \eps_k \lesssim \widetilde C_k \pr{\rG[2] + \rHmax[2] + (N_k^{-1}+n_{\cS}^{-1})\log(4K/\delta)},
    \end{equation}
    with $\widetilde C_k = \Cfin \cdot  (\nicefrac{\eta\nu}{\gamma})^2 \max_{k\in[K]}\pr{\nicefrac{B_k}{\mu_k}+\nicefrac{L_k^{2}}{\mu_k^2}}$.
\end{theorem}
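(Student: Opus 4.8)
The plan is to bound the excess $s$-trade-off of $\ghat_s$ by decomposing the error into (i) the error incurred by estimating $\bfstar$ with the ERMs $\bhhat$ from labeled data, and (ii) the error incurred by minimizing the \emph{empirical} scalarized discrepancy $\empScalarizedDiscrepancy{\cdot}{\bhhat}$ over $\cG$ instead of the population one, using unlabeled data. Because $\cS\subseteq\Slin$ is linear, $\nrm{\cdot}_s$ is a genuine Hilbert norm, so throughout I work coordinate-wise and then aggregate via linearity of $s$ at the very end. The key structural tool is \cref{lem:variational-inequality}: the strong convexity of the map $g\mapsto \riskDiscrepancy{s}{g}{\h}$ modulo $\gamma\nrm{g}_s^2$ (from \cref{ass:convexity-smoothness}) combined with the variational inequality characterizing $g_s^{\h}$ gives a quadratic growth lower bound $\riskDiscrepancy{s}{g}{\h}-\riskDiscrepancy{s}{g_s^{\h}}{\h}\gtrsim \gamma\nrm{g-g_s^{\h}}_s^2$ on the convex set $\cG$. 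The smoothness bounds on $\nabla^2\phi_k,\nabla^3\phi_k$ let me control how $g_s^{\h}$ and the discrepancy functional move as $\h$ varies around $\bfstar$, so that the ``plug-in'' error from using $\bhhat$ instead of $\bfstar$ is itself controlled by $\nrm{\hhat_k-\fstar_k}_k$, which is an excess-risk quantity for the $k$th task.

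First I would invoke the standard localized uniform-convergence machinery (à la \cite{bartlett2005local,koltchinskii2006local}, as packaged for star-shaped classes) twice. On the labeled side: since $\cH_k-\fstar_k$ is star-shaped, the ERM $\hhat_k$ satisfies $\nrm{\hhat_k-\fstar_k}_k^2\lesssim \rH[2] + n_k^{-1}\log(K/\delta)$ with probability $1-\delta/K$, using strong convexity of $\risk_k$ (which follows from $\mu_k$-strong convexity of $\phi_k$, \cref{ass:loss-ass}) to turn the excess risk into a squared-norm bound and back. On the unlabeled side: conditionally on $\bhhat$, the function $g\mapsto \empScalarizedDiscrepancy{g}{\bhhat}$ is being minimized over $\cG$; I localize around $g_s^{\bhhat}$ using the set $\cG_k(r;\bhhat)$ and its critical radius, and the strong-convexity (quadratic growth) lower bound converts the empirical suboptimality into $\nrm{\ghat_s-g_s^{\bhhat}}_s^2\lesssim \Cfin(\rGsup[2]+N_k^{-1}\log(K/\delta))$ — here $\rGsup$ (the supremum over $\h$) appears precisely because I must localize around the \emph{random} center $g_s^{\bhhat}$, so I cannot use $\rG$ (centered at $\bfstar$) directly. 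A union bound over $k\in[K]$ on both sides, plus the boundedness and Lipschitz constants from \cref{ass:loss-ass} feeding the concentration (McDiarmid) and the vector contraction inequality, yields the constants $L_k,B_k,\mu_k$ appearing in $\Cfin$.

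Next I handle the plug-in error $\abs{\sTradeoff(\ghat_s)-\riskDiscrepancy{s}{\ghat_s}{\bhhat}}$ and the shift $\nrm{g_s^{\bhhat}-g_s}_s$. For the former: since $\sTradeoff(g)=\riskDiscrepancy{s}{g}{\bfstar}$ (\cref{lem:properties-bregman-loss}), this is a difference of Bregman discrepancies differing only in the ``anchor'' $\bhhat$ vs.\ $\bfstar$; Taylor-expanding in the anchor and using $\sup\nrm{\nabla^2\phi_k}\le\nu$, $\sup\nrm{\nabla^3\phi_k}\le\nu$ together with $\diam_{\nrm{\cdot}_2}(\cY)$ gives a bound of order $\nu(1+\diam(\cY))\nrm{\hhat_k-\fstar_k}_k$ in each coordinate — this is where the $\nu^3(1+\diam_{\nrm{\cdot}_2}(\cY))$ and the second pair in the $\max$ defining $\Cfin$ come from. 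For the shift of the minimizer, I compare the two variational inequalities (for $g_s^{\bhhat}$ and $g_s$) from \cref{lem:variational-inequality}, subtract them, and use $\gamma$-strong monotonicity of the operator plus the $\nu$-smoothness bound on the difference of the two operators (which again scales with $\nrm{\hhat_k-\fstar_k}_k$), obtaining $\nrm{g_s^{\bhhat}-g_s}_s\lesssim (\nu^3/\gamma^2)\cdot(\text{labeled error})$. Chaining $\sTradeoff(\ghat_s)-\sTradeoff(g_s)\le \sTradeoff(\ghat_s)-\riskDiscrepancy{s}{\ghat_s}{\bhhat} + [\riskDiscrepancy{s}{\ghat_s}{\bhhat}-\riskDiscrepancy{s}{g_s^{\bhhat}}{\bhhat}] + [\riskDiscrepancy{s}{g_s^{\bhhat}}{\bhhat}-\riskDiscrepancy{s}{g_s}{\bhhat}] + [\riskDiscrepancy{s}{g_s}{\bhhat}-\sTradeoff(g_s)]$, bounding each bracket by the above, squaring where the quadratic-growth bound was used, and substituting $\nrm{\hhat_k-\fstar_k}_k^2\lesssim \rH[2]+n_k^{-1}\log(K/\delta)$, gives \eqref{eq:main-result-bound} after reading off $\Cfin$. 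The refinement \eqref{eq:main-result-bound-2} then follows by invoking \cref{ass:norm-equivalence} in the form $\nrm{\cdot}_k\le\eta\nrm{\cdot}_s$ (\cref{lem:norm-equivalence-Radon-Nikodym}): this lets me (a) replace $\rGsup$ by $\rG$, since under norm equivalence the localization radius around $g_s^{\bhhat}$ is comparable to that around $g_s$ up to the factor $\eta$, and (b) trade the per-task $\rH[2]$ and $n_k^{-1}$ for the scalarization-weighted $\rHmax[2]$ and $n_{\cS}^{-1}$, at the cost of the extra $(\eta\nu/\gamma)^2\max_k(B_k/\mu_k+L_k^2/\mu_k^2)$ factor in $\widetilde C_k$.

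The main obstacle I anticipate is the second step — localizing the unlabeled empirical process around the \emph{data-dependent} center $g_s^{\bhhat}$ uniformly over $s\in\cS$. One must either (i) discretize/cover $\cS$ and the family $\{g_s^{\h}\}$ and pay only logarithmically, or (ii) work with the union $\cG_k(r;\h)=\bigcup_{s\in\cS}(\cG-g_s^{\h})\cap r\cB_{\nrm{\cdot}_k}$ directly, which is exactly how \eqref{eq:Hk-and-cGk} is defined; then the supremum over $\h$ in $\rGsup$ absorbs the randomness of $\bhhat$, but one must verify that the quadratic-growth lower bound and the concentration hold \emph{simultaneously} for all $s$ and the relevant centers — this requires the strong-convexity constant $\gamma$ to be \emph{uniform} over $s$ and $\h$, which is precisely what \cref{ass:convexity-smoothness} grants. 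Making the ``sufficiently small $\delta$'' and the constant bookkeeping precise (so that the critical-radius fixed-point inequalities $r^2\ge\radComp(\cdot(r))$ close properly after the strong-convexity rescaling) will be the other fiddly part, but it is routine given the star-shapedness.
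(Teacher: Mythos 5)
Your three main ingredients match the paper's: (i) a localized bound $\nrm{\hhat_k-\fstar_k}_k^2\lesssim \rH[2]+n_k^{-1}\log(K/\delta)$ for the ERMs on the star-shaped classes; (ii) localization of the unlabeled empirical process around the \emph{random} centers $g_s^{\bhhat}$ over the union $\cG_k(r;\bhhat)$, with $\rGsup$ absorbing the randomness (or, under \cref{ass:norm-equivalence}, a critical-radius-shift argument reducing to $\rG$ plus the labeled error); and (iii) a quadratic stability bound $\nrm{g_s^{\bhhat}}-g_s\|_s^2\lesssim (\nu^2/\gamma^2)\sum_k\weight_k\nrm{\hhat_k-\fstar_k}_k^2$ obtained by subtracting the two variational inequalities and using $\gamma$-strong monotonicity. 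This is essentially \cref{lem:localization-in-H}, \cref{prop:localization-along-helper}, \cref{prop:critical-radius-shift}, and \cref{prop:stability}.

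However, there is a genuine gap in how you assemble them. Your final chain contains the two anchor-swap brackets $\sTradeoff(\ghat_s)-\riskDiscrepancy{s}{\ghat_s}{\bhhat}$ and $\riskDiscrepancy{s}{g_s}{\bhhat}-\sTradeoff(g_s)$, i.e.\ differences $\riskDiscrepancy{k}{f}{\fstar_k}-\riskDiscrepancy{k}{f}{\hhat_k}$ at a \emph{fixed} $f$, and you bound each by $\nu(1+\diam_{\nrm{\cdot}_2}(\cY))\nrm{\hhat_k-\fstar_k}_k$. That term is genuinely first order: writing $\loss_k(y_1,\hat y)-\loss_k(y_2,\hat y)=\inner{\nabla\phi_k(y_2)-\nabla\phi_k(\hat y)}{y_1-y_2}+\Bregman{\phi_k}(y_1,y_2)$, the cross term does not vanish in expectation, so no quadratic bound in $\nrm{\hhat_k-\fstar_k}_k$ is possible for these brackets individually. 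Plugging in $\nrm{\hhat_k-\fstar_k}_k\lesssim\rH+n_k^{-1/2}\sqrt{\log(K/\delta)}$ then yields $\eps_k\gtrsim\rH$, i.e.\ exactly the slow (square-root) labeled rate of \cref{thm:uniform-bound} that the theorem is meant to eliminate; you cannot reach $\rH[2]$ this way. Meanwhile the bracket to which you apply the stability argument, $\riskDiscrepancy{s}{g_s^{\bhhat}}{\bhhat}-\riskDiscrepancy{s}{g_s}{\bhhat}$, is trivially nonpositive since $g_s^{\bhhat}$ minimizes $\riskDiscrepancy{s}{\cdot}{\bhhat}$, so the stability result is spent where it is not needed. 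The paper's decomposition is different precisely at this point: it never swaps anchors at a fixed function. It first bounds the excess trade-off by $\Csmooth\nrm{\ghat_s-g_s}_s^2$ using the smoothness of $g\mapsto\riskDiscrepancy{s}{g}{\bfstar}$ together with the variational inequality at $g_s$ (\cref{lem:gradient-smoothness}), then splits the \emph{squared norm} as $\nrm{\ghat_s-g_s}_s^2\lesssim\nrm{\ghat_s-g_s'}_s^2+\nrm{g_s'-g_s}_s^2$, and controls the second piece by the quadratic stability of $\h\mapsto g_s^{\h}$. Every labeled-data quantity then enters squared, which is what produces $\rH[2]$. You would need to restructure your chain accordingly (or exploit a cancellation between the two anchor-swap brackets, which your proposal does not do) before the claimed rate follows.
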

The proof of \cref{thm:main-result} can be found in \cref{subsec:proof-main-result}. 
By comparing \cref{eqn:individual-excess-bound} with \cref{eq:main-result-bound}, we can see that, under the additional assumptions, \cref{thm:main-result} yields much better rates than \cref{thm:uniform-bound}, whenever the critical radii are (much) smaller than the global Rademacher complexities. 
Effectively, \cref{thm:uniform-bound} provides a ``slow rate'' analysis, while \cref{thm:main-result} provides a ``fast rate'' analysis. Additionally, \cref{thm:main-result} avoids the ``doubly slow rate'' $\smash{\radComp^k_{n_k}(\cH_k)^{1/2}}$ that appears in \cref{thm:uniform-bound}, and hence can potentially yield a speed-up of power $4$ over \cref{thm:uniform-bound}; for instance, if $\cH$ has VC (subgraph) dimension $d_\cH$, then the label complexity reduces to order $ O(Kd_{\cH}/\eps)$ compared to the $ O(Kd_\cH/\eps^4)$ from \cref{thm:uniform-bound}.
While \cref{eq:main-result-bound} depends on the \emph{worst} location of the true Pareto set $g_s$ in $\cG$ (through $\rGsup$), \cref{eq:main-result-bound-2} refines this bound by also showing the \emph{adaptivity} of the algorithm to the specific location of the true Pareto set $g_s$ in $\cG$.

In the setting where the algorithm has access to the marginals $\smash{\{P^k_X\}_{k=1}^K}$, called the \emph{ideal semi-supervised setting} \cite{zhang2019semi}, the proof of \cref{thm:main-result} also yields a slightly tighter bound than \eqref{eq:main-result-bound} by combining \cref{eq:bound-helper-set,eq:basic-decomposition-localization}. Under \cref{ass:loss-ass,ass:convexity-smoothness}, we obtain $\eps_s=s(\eps_1,\ldots,\eps_K)$ with
\begin{equation*}
    \eps_k \lesssim \Cfin\pr{\rH[2] + n_k^{-1}\log (2K/\delta)},
\end{equation*}
where $\Cfin = \frac{\nu^3}{\gamma^2} \big(1+\diam_{\nrm{\cdot}_2}(\cY)\big)(\nicefrac{B_k}{\mu_k}+\nicefrac{L_k^{2}}{\mu_k^2})$.

\begin{figure}
    \centering
    \includegraphics[width=0.9\linewidth]{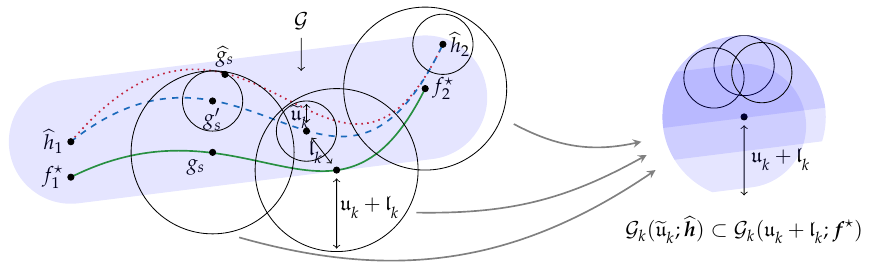}
    \caption{\small Informal visualization of the proof of \cref{thm:main-result}. We first localize the set of estimators $\smash{\crl{\ghat_s:s\in \cS}}$ (dotted red line) around the \emph{random} ``helper'' set $\smash{\crl{g_s':s\in \cS}}$ (dashed blue line) within the set $\smash{\cG_k(\rGrand;\bhhat)}$. We then expand a set $\smash{\cG_k(\rG,\bfstar)}$ centered at the ``true'' set $\smash{\crl{g_s:s\in \cS}}$ (solid green line) to include the set $\smash{\cG_k(\rGrand;\bhhat)\subset \cG_k(\rG+\rH;\bfstar)}$ where $\smash{\rH}$ bounds the maximal deviation of $g_s$ to $g_s'$. This way, we may bound the critical random critical radius $\smash{\rGnok}$ of $\smash{\cG_k(\rGrand,\bhhat)}$ in terms of the deterministic critical radius $\smash{\rGnok}$ of $\smash{\cG_k(\rGnok;\bfstar)}$ and $\smash{\rH}$ as $\smash{\rGrand[2]\lesssim \rG[2]+\rH[2]}$.}
    \label{fig:localization-proof}
\end{figure}

\paragraph{Proof outline of \cref{thm:main-result}.} Again, we decompose the excess $s$-trade-off into two terms: \cref{ass:convexity-smoothness} implies smoothness of $g\mapsto \riskDiscrepancy{s}{g}{\f}$, which combined with \cref{lem:properties-bregman-loss} and a triangle inequality yields
\begin{equation*}
    \sTradeoff(\ghat_s)-\inf_{g\in \cG}\sTradeoff(g) \lesssim \underbrace{\nrm{\ghat_s-g_s'}_s^2}_{\text{error from finite unlabeled data}} + \underbrace{\nrm{g_s'-g_s}_s^2}_{\text{error from finite labeled data}} =: \Tunlab + \Tlab,
\end{equation*}
where we defined the additional ``helper'' models $\smash{g_s' = \argmin_{g\in \cG}\riskDiscrepancy{s}{g}{\bhhat}}$ that are random with respect to the labeled data through $\bhhat$, but deterministic with respect to the unlabeled data and correspond to the estimator in the \emph{ideal semi-supervised setting} where the marginals $P^k_X$ are known \cite{zhang2019semi}. Our proof then consists of three steps that all contain non-standard arguments. \cref{fig:localization-proof} visualizes the proof.

\emph{First}, we bound $\Tlab$. Using standard localization, we show that the excess risk of the ERMs $\hhat_k$ is of order $\rH[2]$. Here we use the star-shaped assumption on $\cH_k-\fstar_k$ from \cref{ass:convexity-smoothness}; if it fails to hold, we could use the star-hull of $\cH_k-\fstar_k$ instead. We then show how this excess risk propagates to the models $g_s'$ using an argument inspired by \cite[Theorem 1]{wegel2025learning}:\footnote{See \cref{sec:related-works} for a more detailed comparison with \cite{wegel2025learning}.} we prove a \emph{quadratic stability} of the minimizer $\smash{\h \mapsto g_s^{\h}=\argmin_{g\in \cG} \riskDiscrepancy{s}{g}{\h}}$ under the smoothness, strong convexity and convexity of $\cG$ from \cref{ass:convexity-smoothness}, using a variational argument based on \cref{lem:variational-inequality} (\cref{prop:stability}) to obtain $\nrm{g_s'-g_s}_s^2 \lesssim \sum_{k=1}^K \weight_k \|\hhat_k-\fstar_k\|_k^2$.
Crucially, a \emph{linear} bound follows directly from the Lipschitz continuity of the losses, but the \emph{quadratic} bound avoids the square-root from \cref{thm:uniform-bound}.
Combining this bound with the risk bounds of the ERMs yields $\smash{\Tlab\lesssim s(\rHnok[1]^2,\dots,\rHnok[K]^2)}$ and directly proves the bound in the ideal semi-supervised setting.

\emph{Second}, we bound $\Tunlab$. 
Because \smol\@ requires solving all scalarized problems simultaneously, there is no single ground-truth around which we could localize. Instead, we use that the Lipschitz contraction of the loss from \cite{maurer2006bounds} allows us to consider the differences $\ghat_s-g_s'$. And while the location of $g_s'$ in $\cG$ varies with $s$, the differences $\ghat_s-g_s'$ are all ``centered at the origin'' and the intersections of $\cG-g_s'$ with some ball of radius $r>0$ may significantly overlap for different $s\in \cS$, see right side of \cref{fig:localization-proof}. 
This leads to the definition of the random function class \smash{$\cG_k(r;\bhhat)$} in \cref{eq:Hk-and-cGk} centered around $\smash{\crl{g_s':s\in \cS}}$ on which we use Talagrand's concentration inequality \cite{talagrand1994sharper} (applicable thanks to the boundedness from \cref{ass:loss-ass}). Using the strong convexity\footnote{Also note that the strong convexity of the potential from \cref{ass:loss-ass} does not imply the strong convexity of the risk discrepancy: in general, Bregman divergences (i.e., our losses) are not convex in the second argument.} from \cref{ass:convexity-smoothness} as a multi-objective version of the Bernstein condition from classical localization \cite{bartlett2006empirical,Kanade2024exponential}, we then show that $\Tunlab \lesssim s(\rGrandnok[1]^2,\dots,\rGrandnok[K]^2)$, where $\rGrand$ is the critical radius of \smash{$\cG_k(r;\bhhat)$}, without bounding $\|\ghat_s-g_s'\|_k\lesssim \rG$.

\emph{Third}, since \smash{$\cG_k(r;\bhhat)$} is random with respect to the labeled data, we need to further bound the random critical radii $\rGrand$. We discuss two ways of bounding it: \emph{Option 1} simply uses the supremum from \cref{eq:supremum-critical-radius}, because then clearly $\rGrand\leq \rGsup$. \emph{Option 2} exploits that we already showed that $g_s'$ is close to $g_s$. To that end, we use the critical radii $\rG$ from the set $\cG_k(r;\bfstar)$ centered around the set $\crl{g_s:s\in \cS}$, cf.\ \cref{eq:critical-radii-definition}. We then (essentially) show that $\cG_k(\rGrand;\bhhat)\subset \cG_k(\rG+\Delta_k;\bfstar)$, where $\Delta_k=\sup_{s\in \cS} \nrm{g_s-g_s'}_k$. Some algebra (\cref{prop:critical-radius-shift}) then shows $\rGrand \lesssim \rG + \Delta_k$. Combining the bound on $\Tlab$ and the norm equivalence from \cref{ass:norm-equivalence} yields $\Delta_k \lesssim \rHmax[2]$, and so $\Tunlab \lesssim s(\rGnok[1]^2+\rHmax[2],\ldots,\rGnok[K]^2+\rHmax[2])$, concluding the proof. 

\subsection{Discussion of the bounds}
\label{subsec:discussion-main-results}
We now discuss the bounds of \cref{thm:uniform-bound,thm:main-result}, and how they could potentially be extended.

\paragraph{Consistency of pseudo-labeling.} To begin with, a word of caution: while \cref{thm:uniform-bound,thm:main-result} imply consistency of PL-MOL for Bregman losses, when the losses are \emph{not} Bregman divergences, \cref{alg:pseudolabeling} can even be \emph{inconsistent}, and hence worse than straight-forward ERM (\cref{alg:erm-mol}). Specifically, the intuition discussed after \cref{prop:hardness} can be formalized as follows:
Let $N_k=n_k = n\in \NN $ and let $\smash{\cS=\{\linearScalarization\}}$ with fixed weights $\weights$. There exists a multi-objective binary classification setting with the zero-one loss (\cref{def:zero-one-mol}) and weights $\weights$, so that the output $\ghat_s$ from \cref{alg:pseudolabeling} satisfies 
\begin{equation}
\label{eqn:inconsistency}
    \lim_{n\to \infty} \PP\pr{\sTradeoff(\ghat_s) > \inf_{g\in \cG}\sTradeoff(g)+c}=1
\end{equation}
for some universal constant $c>0$ (e.g., $c=0.1$), while \cref{alg:erm-mol} is consistent. The proof of this is provided in \cref{subsec:proof-inconsistency}.

\paragraph{Generalization beyond Bregman losses.}
At first glance, the inconsistency from \cref{eqn:inconsistency}, together with \cref{prop:hardness}, suggests that trying to generalize our results beyond Bregman losses may be hopeless.
As previously noted, the key property that the pseudo-labeling \cref{alg:pseudolabeling} and \cref{thm:uniform-bound,thm:main-result} exploit the risk decomposition given by \cref{lem:properties-bregman-loss}. Under mild regularity assumptions, Bregman losses are the only losses that have this property.\footnote{Interestingly, this exact characterization was an open problem until recently, see \cite{brown2024bias,heskes2025bias}.} 
As such, the pseudo-labeling \cref{alg:pseudolabeling} is expected to work only if the losses are in this set of generalized Bregman divergences. 
Nevertheless, it is interesting to determine for exactly which losses the semi-supervised setting can improve upon the supervised one. It turns out that under stronger assumptions, and using a tailored version of the pseudo-labeling algorithm, it is possible. We provide a first result of this kind in \cref{sec:beyond}.

\paragraph{Adaptivity and weakening \cref{ass:norm-equivalence}.}
The second bound of \cref{thm:main-result}, \cref{eq:main-result-bound-2}, is adaptive to the exact location of the set $\crl{g_s:s\in\cS}$ in $\cG$. Depending on the geometry of $\cG$, this set may lie in a ``low complexity region'' of $\cG$. If that is the case, then the radii $\rG$ can be smaller than $\rGsup$, and the bound adapts to this low complexity. But this comes at a cost:
to prove \cref{eq:main-result-bound-2}, we require the norm equivalence from \cref{ass:norm-equivalence}, and have to replace $\rH$ by $\rHmax$. 
This stems from using \emph{Option 2} in the above proof outline to bound the random critical radii $\rGrand$ using \cref{prop:critical-radius-shift}. Intuitively, the distance of $g_s'$ to $g_s$ can only be controlled in the norm $\nrm{\cdot}_s$; in particular, if $\weight_k=0$, then there is no reason that $g_s'$ should be close to $g_s$ in the norm $\nrm{\cdot}_k$. 
But $\rGrand$, defined through $\nrm{\cdot}_k$, has to bound the $k$th coordinate for \emph{all} scalarizations $s\in \cS$, making the norm equivalence from \cref{ass:norm-equivalence} necessary. For \emph{finite} sets of scalarizations, on the other hand, this can be avoided (but replaced by a union bound), see \cref{cor:finite-S}. Hence, there seems to be an inherent tension between controlling the error for all scalarizations simultaneously and proper adaptivity to the local complexity of the problem. It is interesting to explore this tension further and whether a different proof technique could improve upon it.

\paragraph{Generalization to other scalarizations.} While \cref{thm:uniform-bound} demonstrates benefits of the semi-supervised setting for many scalarizations, the localized bounds in \cref{thm:main-result} rely on the linearity of the scalarization. For example, in \cref{lem:gradient-smoothness}, we use the induced Hilbert space to define the notion of the gradient $\nabla_g \riskDiscrepancy{s}{g}{\h}$ with respect to $\nrm{\cdot}_s$, and in \cref{prop:localization-along-helper} we use linearity to move between scalarized and coordinate-wise empirical processes, see \cref{eq:T1-decomposition}. Generalizing \cref{thm:main-result} to other scalarizations, therefore, would be an interesting research direction that requires substantially different tools.

\section{Examples}
\label{sec:examples}
We now discuss two applications of the main results. We provide more details on the examples and their proofs, as well as an additional example and some visualizations, in \cref{sec:more-examples}.

\subsection{Logistic regression}
\label{subsec:logistic-regression}

Denote for any $q \in[1,\infty]$ the norm balls $\cB_q^d=\{w\in \RR^d:\nrm{w}_q\leq 1\}$. 
Suppose that the covariates lie in the space \smash{$\cX = \cB_\infty^d \subset \RR^d$}, and that the labels in $\cY = [0,1]$ for each task follow the Bernoulli distribution
\begin{equation*}
    Y^k|X^k=x \sim \Bernoulli(\sigma(\inner{x}{w^\star_k}))
\end{equation*}
where $\sigma(x) =1/(1+\exp\pr{-x})$ denotes the sigmoid function and we assume that $w^\star_k\in \cB^d_1$. This is the standard logistic regression setup. The Bayes-optimal models with respect to the logistic loss $\loss(y,\hat y)= -(y\log(\hat y)+(1-y)\log(1-\hat y))$
are given by $\fstar_k(\cdot) = \sigma(\inner{\cdot}{w^\star_k})\in \cH = \{h(x)=\sigma(\inner{x}{w}):w\in \cB_1^d\}$.
However, striking a good trade-off between the tasks within $\cH$ can be impossible (see \cref{fig:classification-2} for a simple example).
To circumvent this issue, we may want to use some feature map $\Phi:\cB^d_\infty\to \cB^p_\infty$ with $p\gg d$, and then learn in the larger function class $\cG = \{g(x)=\sigma(\inner{\Phi(x)}{w}):w\in \cB_1^p\}$.
For example, $p=O(d^\kappa)$ and $\cH \subseteq \cG$ whenever $\Phi$ maps to the set of all polynomial features up to degree $\kappa$. In this setting, \cref{alg:pseudolabeling} effectively exploits unlabeled data to achieve good trade-offs in the larger function class $\cG$, as we show in \cref{cor:logistic-regression}. This straightforwardly follows from \cref{thm:uniform-bound};
the proof is given in \cref{subsec:proofs-examples}.

\begin{corollary}[Logistic regression]
\label{cor:logistic-regression}
    In the setting described above, let $\cS$ be some class of scalarizations satisfying reverse triangle inequality and positive homogeneity. Suppose that $\min_{k \in [K]} N_k \geq \log(p + K)$ and $\min_{k \in [K]} n_k \geq \log (d+K)$. Then, the output of \cref{alg:pseudolabeling} $\crl{\ghat_s:s\in \cS}$ satisfies \eqref{eqn:smol}
    with probability at least $0.99$ and $\eps_s = s\pr{\eps_1,\dots, \eps_K}$ where $\eps_k\lesssim \pr{\nicefrac{\log (dK)}{n_k}}^{1/4}+\pr{\nicefrac{\log (pK)}{N_k}}^{1/2}$.
\end{corollary}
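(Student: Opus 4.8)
\textbf{Proof proposal for \cref{cor:logistic-regression}.}
The plan is to instantiate \cref{thm:uniform-bound} in the logistic regression setting, so the main work is (i) verifying \cref{ass:loss-ass} for the logistic loss on the relevant domains, and (ii) bounding the Rademacher complexities $\radComp_{N_k}^k(\cG)$ and $\radComp_{n_k}^k(\cH_k)$ appearing in \cref{eqn:individual-excess-bound} by explicit quantities of order $\sqrt{\log(p)/N_k}$ and $\sqrt{\log(d)/n_k}$ respectively.

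First I would pin down the effective label space. Since $\cX=\cB_\infty^d$ and $w^\star_k\in \cB_1^d$, by Hölder's inequality $|\langle x,w^\star_k\rangle|\le 1$, so $\fstar_k$ takes values in $[\sigma(-1),\sigma(1)]\subset(0,1)$; similarly every $h\in\cH$ and $g\in\cG$ (using $\Phi:\cB_\infty^d\to\cB_\infty^p$ and $w\in\cB_1^p$) has range contained in this compact interval bounded away from $\{0,1\}$. On this restricted range $\cY'=[\sigma(-1),\sigma(1)]$ the logistic-loss potential $\phi(y)=y\log y+(1-y)\log(1-y)$ has $\phi''(y)=\tfrac{1}{y(1-y)}$ bounded above and below by absolute constants, giving $\mu$-strong convexity with an absolute $\mu>0$, boundedness $\loss\le B$ with an absolute $B$, and $L$-Lipschitzness in both arguments with an absolute $L$ (the gradient $\nabla\phi$ is Lipschitz on $\cY'$); this is exactly the kind of ``careful treatment near the boundary'' alluded to after \cref{ass:loss-ass}, and I would cite \cref{lem:Lipschitz-Bregman}. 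Hence \cref{ass:loss-ass} holds with constants $L,B,\mu$ that do not depend on $d,p,K$, so the prefactor $C_k$ in \cref{thm:uniform-bound} is an absolute constant.

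Next I would bound the Rademacher complexities. The class $\cH=\{x\mapsto\sigma(\langle x,w\rangle):w\in\cB_1^d\}$ is a (scalar, $q=1$) Lipschitz transform of the linear class $\{x\mapsto\langle x,w\rangle:w\in\cB_1^d\}$ on $\cX=\cB_\infty^d$; by the contraction inequality and the standard $\ell_1$/$\ell_\infty$ Rademacher bound (e.g.\ via Massart's lemma on the $2d$ extreme points $\pm e_i$), $\radComp_{n_k}^k(\cH)\lesssim \sqrt{\log(2d)/n_k}$, and the same argument with $\Phi(x)\in\cB_\infty^p$ gives $\radComp_{N_k}^k(\cG)\lesssim\sqrt{\log(2p)/N_k}$. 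Plugging these and the absolute constant $C_k$ into \cref{eqn:individual-excess-bound}, and using the sample-size conditions $N_k\ge\log(p+K)$, $n_k\ge\log(d+K)$ together with $\delta=0.01$ (so $\log(K/\delta)\lesssim\log K$) to absorb the $\sqrt{\log(K/\delta)/N_k}$ and $\sqrt{\log(K/\delta)/n_k}$ terms into the Rademacher terms up to constants, yields
\[
    \eps_k \;\lesssim\; \pr{\frac{\log(pK)}{N_k}}^{1/2} + \pr{\frac{\log(dK)}{n_k}}^{1/4},
\]
where the fourth-root on the labeled term comes precisely from the outer square-root in \cref{eqn:individual-excess-bound} applied to $\radComp_{n_k}^k(\cH)\asymp\sqrt{\log(dK)/n_k}$. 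Finally, $\eps_s=s(\eps_1,\dots,\eps_K)$ is immediate from the statement of \cref{thm:uniform-bound}, and taking the union bound built into that theorem with $\delta=0.01$ gives probability at least $0.99$.

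The only genuinely delicate point — the ``hard part'' — is the Lipschitz/strong-convexity bookkeeping for the logistic loss: its potential is \emph{not} globally Lipschitz-gradient on all of $[0,1]$ (indeed $\phi''$ blows up at the endpoints), so one must first argue that all functions involved — $\fstar_k$, the ERMs $\hhat_k\in\cH$, and the candidates $g\in\cG$ — take values in the compact sub-interval $\cY'=[\sigma(-1),\sigma(1)]$ where the constants are tame, and check that \cref{thm:uniform-bound} only needs the loss regularity on the common range of $\cG$ (which the footnote to \cref{ass:loss-ass} grants). Everything else is a routine specialization: plug absolute-constant loss parameters and the $\sqrt{\log(\cdot)/\cdot}$ Rademacher bounds into \cref{eqn:individual-excess-bound}.
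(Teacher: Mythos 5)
Your proposal is correct and follows essentially the same route as the paper's proof: restrict the logistic potential to the range $[\sigma(-1),\sigma(1)]$ forced by $\cB_1$-bounded weights and $\cB_\infty$-bounded (features of) covariates, verify \cref{ass:loss-ass} there via \cref{lem:Lipschitz-Bregman} with absolute constants, bound the Rademacher complexities by $\sqrt{\log(2d)/n_k}$ and $\sqrt{\log(2p)/N_k}$ using contraction through the sigmoid and the standard $\ell_1/\ell_\infty$ linear-class bound, and plug into \cref{thm:uniform-bound}. The only cosmetic difference is that the paper observes $\phi''\geq 4$ on all of $[0,1]$ so strong convexity needs no range restriction, but this does not affect the argument.
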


We can also empirically observe the benefits of the semi-supervised method,  \cref{alg:pseudolabeling} (PL-MOL), over purely supervised approaches---namely, over running \cref{alg:erm-mol} to learn models from either $\cH$ (ERM-MOL linear) or $\cG$ (ERM-MOL polynomial). \cref{fig:classification-2} visualizes a toy classification problem with linear scalarization (see \cref{subsec:hard-example-classification} for full details), and it compares the resulting decision boundaries, Pareto fronts, and excess $s$-trade-off across the different approaches.

A bias-variance trade-off arises here. In this case, the individual tasks can be perfectly solved over the family of linear classifiers $\cH$. However, ERM-MOL over $\cH$ necessarily fails to find good trade-offs, as this model class is insufficiently expressive for the multi-objective learning problem---it has large bias. On the other hand, the ERM-MOL over $\cG$ yields large estimation error, since there is not enough labeled data to solve for trade-offs over the much larger family of polynomial classifiers---the learned trade-offs have high variance. In contrast, the PL-MOL algorithm reduces this variance using only additional unlabeled data. In this experiment, we can also corroborate the importance of the loss function. \cref{subfig:zero-one-loss} shows that PL-MOL can be inconsistent when the losses are not Bregman divergences, cf.\ \cref{eqn:inconsistency}. While the Pareto front found by PL-MOL dominates the other methods, it incorrectly weighs the different objectives per linear scalarization, resulting in a sub-optimal excess $s$-trade-off (in the sense of \cref{eqn:inconsistency}). 

\begin{figure}[t]
    \centering
    \begin{subfigure}{0.26\textwidth}
        \includegraphics[width=\linewidth]{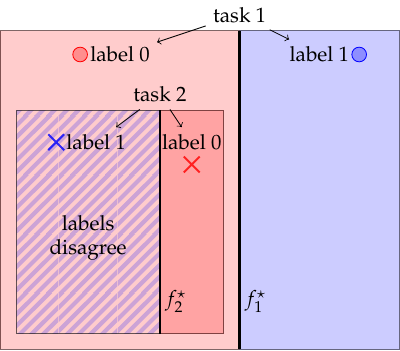}
        \caption{data distribution}
        \label{subfig:data-distribution}
    \end{subfigure}
    \hfill
    \begin{subfigure}{0.72\textwidth}
        \includegraphics[width=\linewidth]{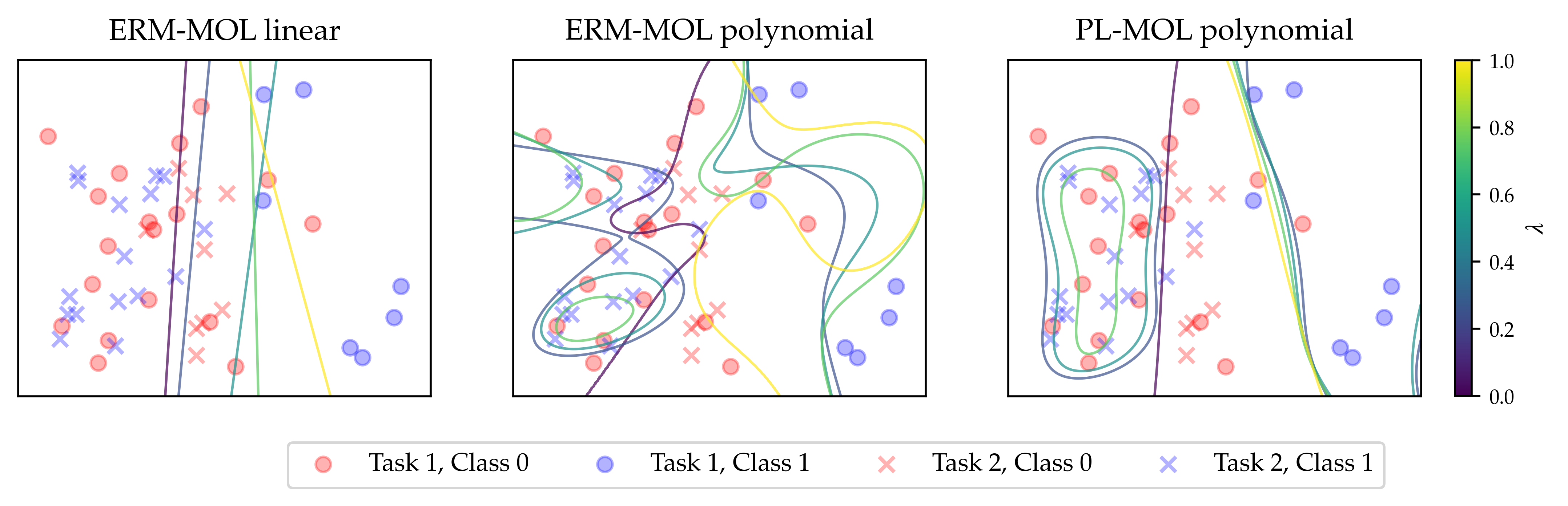}
        \caption{training data and decision boundaries with logistic loss}
        \label{subfig:decision-boundaries}
    \end{subfigure}
    \begin{subfigure}{0.49\textwidth}
        \includegraphics[width=\linewidth]{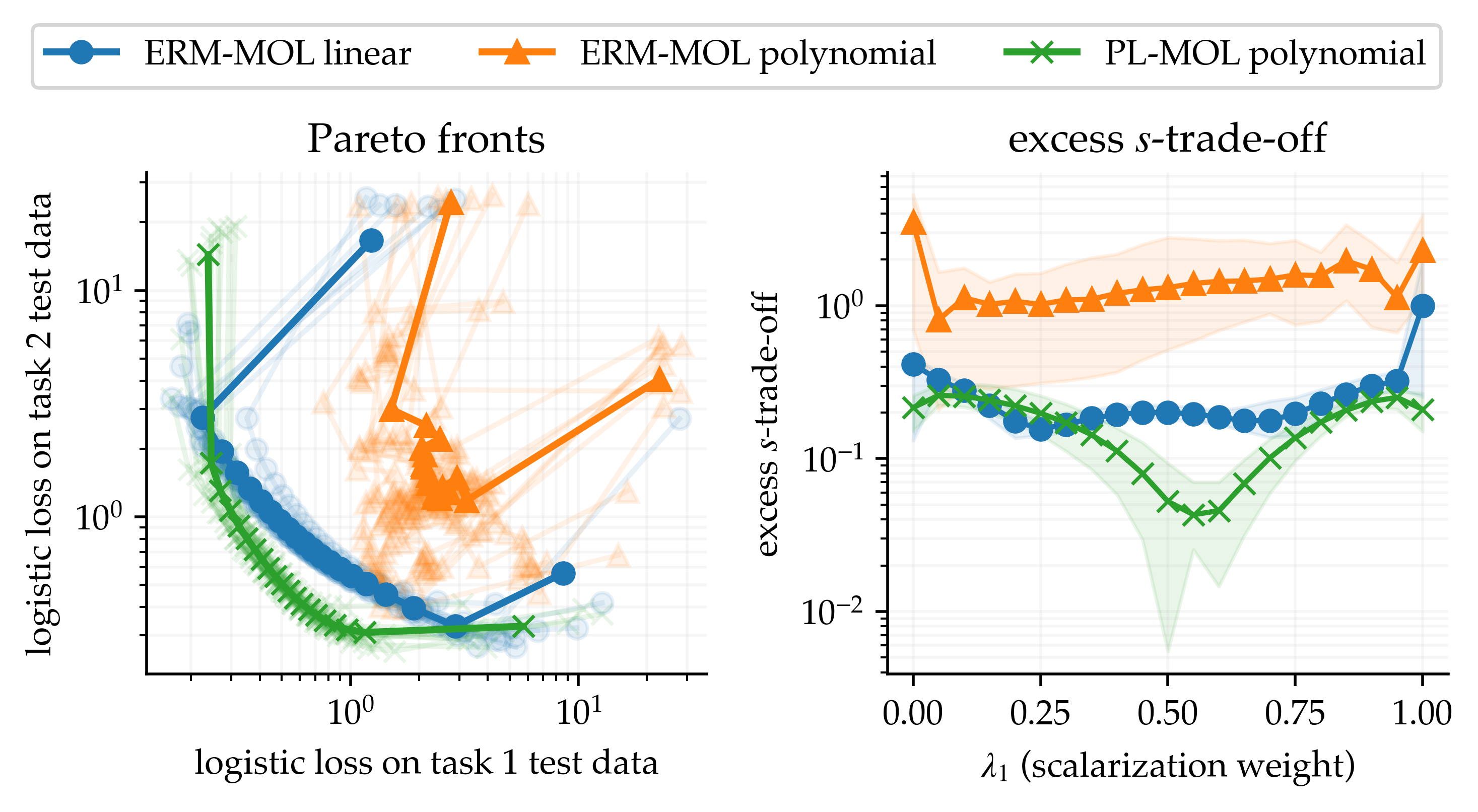}
        \caption{logistic loss}
        \label{subfig:logistic-loss}
    \end{subfigure}
    \hfill
    \begin{subfigure}{0.49\textwidth}
        \includegraphics[width=\linewidth]{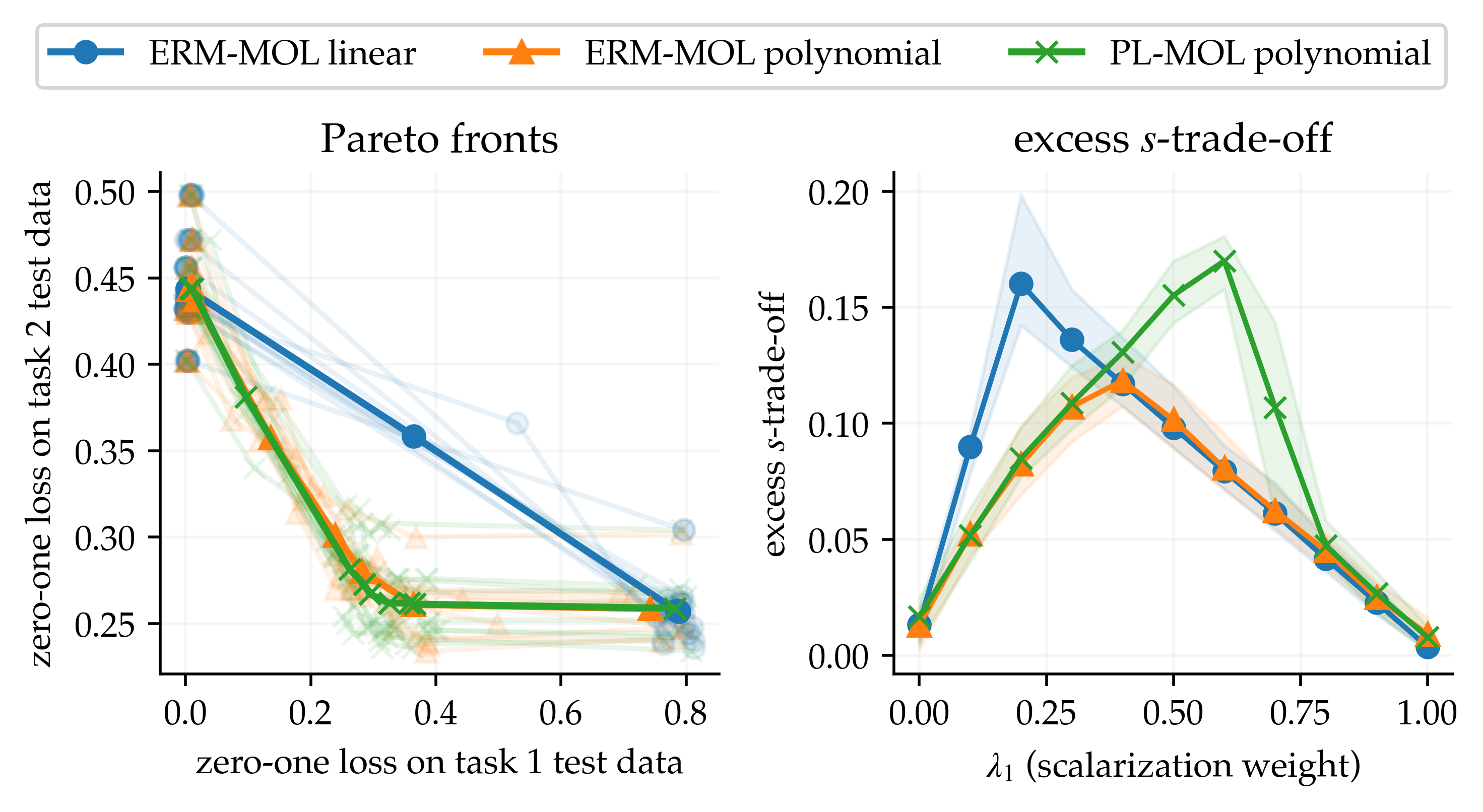}
        \caption{zero-one loss (large sample sizes)}
        \label{subfig:zero-one-loss}
    \end{subfigure}
    \caption{\small Learning trade-offs in the classification problem visualized in \cref{subfig:data-distribution} (more details can be found in \cref{subsec:hard-example-classification}). We show 1) supervised linear models, 2) supervised  polynomial kernels, and 3) the mixture through the semi-supervised PL-MOL algorithm. (\protect\subref{subfig:decision-boundaries}) The training data and decision boundaries of the three methods, with a score threshold of $1/2$, for varying trade-off parameters $\weight_1$. (\protect\subref{subfig:logistic-loss}) The Pareto fronts for logistic loss and the $s$-trade-off as a function of the parameter $\weight_1$ in the linear scalarization. (\protect\subref{subfig:zero-one-loss}) The Pareto fronts for zero-one loss and the $s$-trade-off as a function of the parameter $\weight_1$. We repeat the experiment $10$ times and show corresponding interquartile ranges.  }
    \label{fig:classification-2}
\end{figure}

\subsection{Non-parametric regression with Lipschitz functions}
\label{subsec:non-parametric-regression}
Let $\cX=[0,1]$ and $\cY=[0,1]$. Let $\loss_k$ be the square loss. Define for $0<L_\cH<L_\cG$ the function classes $\smash{\cH=\crl{h:[0,1]\to [0,1] : h \text{ is }L_\cH\text{-Lipschitz}}}$ and $\smash{\cG=\crl{g:[0,1]\to [0,1] : g \text{ is }L_\cG\text{-Lipschitz}}}$.
Furthermore, let $K=2$ and $P^k_X$ have a density $p_k$ on $[0,1]$ with respect to the Lebesgue measure.
So that \cref{eqn:func-ass} holds, assume that there exist two functions $\fstar_1,\fstar_2\in \cH$ for which $\EE[Y^k|X^k=x]= \fstar_k(x)$ for all $ x\in [0,1]$.
We now apply \cref{thm:main-result} to obtain upper bounds for \smol\@ in this setting.
\begin{corollary}
\label{cor:non-parametric-regression}
    Let $\cS\subseteq\Slin$ be a set of linear scalarizations. Then the output $\crl{\ghat_s:s\in \cS}$ of \cref{alg:pseudolabeling} satisfies \eqref{eqn:smol} with probability $0.99$ and $\eps_s = s(\eps_1,\ldots,\eps_K)$ where $ \eps_k\lesssim \pr{\nicefrac{L_{\cH}}{n_{k}}}^{2/3}+ \pr{\nicefrac{L_{\cG}}{N_{k}}}^{2/3}$ for all $s\in \cS$.
\end{corollary}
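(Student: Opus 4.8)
The plan is to instantiate \cref{thm:main-result} in this setting — checking its hypotheses with explicit, universal constants — and to use its \emph{first} bound \eqref{eq:main-result-bound}, which does not require the norm-equivalence \cref{ass:norm-equivalence} (a condition that need not hold here without extra assumptions on the densities $p_k$); the remaining work is to evaluate the two critical radii $\rH$ and $\rGsup$ by a standard metric-entropy plus Dudley-chaining argument for Lipschitz classes on $[0,1]$. First I would dispose of \cref{ass:loss-ass}: the square loss $\loss_k(y,\yhat)=(y-\yhat)^2$ on $\cY=[0,1]$ is the Bregman loss with potential $\phi_k(y)=\nrm{y}_2^2$ (so $q=1$), whose Hessian is constantly $2$; hence $\phi_k$ is $\mu_k$-strongly convex with $\mu_k=2$, one has $\abs{\loss_k(y,y')-\loss_k(y,y'')}=\abs{y'-y''}\,\abs{2y-y'-y''}\le 2\nrm{y'-y''}_2$ and symmetrically in the other argument, so $L_k=2$, and $\loss_k\le1$, so $B_k=1$. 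By \cref{lem:properties-bregman-loss}, $\fstar_k=\EE[Y^k\mid X^k=\cdot]$, which lies in $\cH$ by assumption, and $\cH\subseteq\cG$ because $L_\cH<L_\cG$, so \cref{eqn:func-ass} and $\cG\supseteq\bigcup_k\cH_k$ hold.

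Next I would verify \cref{ass:convexity-smoothness}. Each $\cH_k-\fstar_k=\cH-\fstar_k$ is convex (a translate of the convex set $\cH$) and contains the origin, hence is star-shaped around $0$; and $\cG$ is convex and, being uniformly bounded and equi-Lipschitz, compact in $C([0,1])$ by Arzel\`a--Ascoli, hence closed in every $L^2(P^k_X)$ since $\nrm{\cdot}_{L^2(P^k_X)}\le\nrm{\cdot}_\infty$. For the square loss, \cref{lem:properties-bregman-loss} gives $\riskDiscrepancy{k}{g}{h}=\EE[(h(X^k)-g(X^k))^2]=\nrm{g}_k^2-2\inner{h}{g}_k+\nrm{h}_k^2$, so for every linear $s$ the map $g\mapsto\riskDiscrepancy{s}{g}{\h}-\gamma\nrm{g}_s^2$ is affine (a fortiori convex) when $\gamma=1$; thus the ``multi-objective Bernstein'' condition holds with $\gamma=1$. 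Finally $\nabla^2\phi_k\equiv2$ and $\nabla^3\phi_k\equiv0$, so $\nu=2$ works; since $\min\{L_k,\mu_k\}/\gamma=2\ge1$ and $\diam_{\nrm{\cdot}_2}(\cY)=1$, the prefactor $\Cfin$ in \eqref{eq:main-result-bound} is a universal number here.

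Since $\cS\subseteq\Slin$, \cref{thm:main-result} now gives, with probability $1-\delta$, $\eps_k\lesssim\rGsup[2]+\rH[2]+(N_k^{-1}+n_k^{-1})\log(4K/\delta)$, and it remains only to bound the critical radii. The key point is that localization collapses everything onto fixed Lipschitz balls: $\cH_k(r)\subseteq\{2L_\cH\text{-Lipschitz }[0,1]\to[-1,1]\}\cap r\cB_{\nrm{\cdot}_k}$, while $\cG_k(r;\h)=\bigcup_{s\in\cS}(\cG-g_s^{\h})\cap r\cB_{\nrm{\cdot}_k}\subseteq(\cG-\cG)\cap r\cB_{\nrm{\cdot}_k}\subseteq\{2L_\cG\text{-Lipschitz}\}\cap r\cB_{\nrm{\cdot}_k}$, the last inclusion holding uniformly over $\h$ because $g_s^{\h}\in\cG$ — which is precisely why replacing $\rG$ by $\rGsup$ costs nothing here. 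I would then invoke the classical estimate $\log N(\epsilon,\{L\text{-Lipschitz }[0,1]\to[0,1]\},\nrm{\cdot}_\infty)\lesssim L/\epsilon$, which also controls the empirical $L^2(P^k_X)$ covering numbers as these are dominated by the sup-norm ones. Dudley's entropy integral gives $\radComp_n^k\big(\cF\cap r\cB_{\nrm{\cdot}_k}\big)\lesssim n^{-1/2}\int_0^r\sqrt{L/\epsilon}\,d\epsilon\asymp\sqrt{Lr/n}$, and solving the fixed point $r^2\ge\sqrt{Lr/n}$ gives a critical radius of order $(L/n)^{1/3}$. Substituting $(L,n)=(2L_\cH,n_k)$ and $(2L_\cG,N_k)$ yields $\rH[2]\lesssim(L_\cH/n_k)^{2/3}$ and $\rGsup[2]\lesssim(L_\cG/N_k)^{2/3}$; the residual $(N_k^{-1}+n_k^{-1})\log(4K/\delta)$ (with $\log(4K/\delta)$ an absolute constant for $K=2$, $\delta=1/100$) is of strictly smaller order and is absorbed, leaving $\eps_k\lesssim(L_\cH/n_k)^{2/3}+(L_\cG/N_k)^{2/3}$ with probability $0.99$.

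Since the bulk of the work is carried by \cref{thm:main-result}, I do not anticipate a genuine obstacle; the main content is the critical-radius computation — coupling the $L/\epsilon$ entropy bound for $[0,1]$-Lipschitz functions with a single Dudley step and solving the resulting fixed-point inequality — together with the easy but load-bearing observation that $\cG_k(r;\h)$ sits inside a fixed $2L_\cG$-Lipschitz ball regardless of the data-dependent centres $g_s^{\h}$. Two minor points to check: that $\delta=1/100$ qualifies as ``sufficiently small'' for \cref{thm:main-result}, and that the noise terms $N_k^{-1},n_k^{-1}$ are truly dominated by the critical radii in the regime of interest (this holds as soon as $L_\cH,L_\cG\gtrsim1$; otherwise it merely adds a harmless $N_k^{-1}+n_k^{-1}$ to the stated rate).
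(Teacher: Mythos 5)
Your proposal is correct and follows essentially the same route as the paper: verify \cref{ass:loss-ass,ass:convexity-smoothness} for the square loss (strong convexity of the discrepancy with $\gamma=1$ via the exact quadratic expansion), note that $\cH_k(r)$ and $\cG_k(r;\h)$ embed uniformly into fixed $2L_{\cH}$- and $2L_{\cG}$-Lipschitz balls, then bound the critical radii through the $L/\epsilon$ metric-entropy estimate and Dudley's integral, solve the fixed point to get $(L/n)^{2/3}$, and plug into the first bound \eqref{eq:main-result-bound} of \cref{thm:main-result}. The only differences are cosmetic (your Arzel\`a--Ascoli closedness check and slightly different constants), so there is nothing to add.
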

The proof of \cref{cor:non-parametric-regression} can be found in \cref{subsec:proofs-examples}. Note that we recover the familiar minimax rate $n^{-2/3}$ of Lipschitz regression.
In comparison, the crude, unlocalized bound from \cref{thm:uniform-bound} would yield the potentially much slower rates $\smash{ L_{\cH}^{1/4}n_{k}^{-1/4} + L_{\cG}^{1/2}N_{k}^{-1/2}}$.

Let us use the example of regression with square loss to further discuss the intuition why unlabeled data helps here.
In the setting from above, consider the linear scalarization $s\in \Slin$ with weight $1/2$ on each objective. The solution to $\min_{f\in \Fall}\sTradeoff(f)$
can easily be shown to be the point-wise weighted average
\begin{equation*}
    x\mapsto\frac{p_1(x)\fstar_1(x)+p_2(x)\fstar_2(x)}{p_1(x)+p_2(x)},
\end{equation*}
see \cref{lem:variational-inequality}.
If the Lipschitz constant $L_\cG$ happens to be large enough for this function to be included in $\cG$, then $g_s$ from \cref{eqn:s-trade-off} is exactly given by this expression. However, in general, the Lipschitz constant of this function may be much larger than the Lipschitz constant of $\cH$, $L_\cH$, when the densities vary more than the Bayes-optimal models $\fstar_k$. The reason that unlabeled data helps follows directly: at each point $x\in[0,1]$, we need to estimate both $\fstar_1(x),\fstar_2(x)$ and the likelihoods $p_1(x),p_2(x)$ of $x$ occurring in each task. And of course, these likelihoods can be estimated (indirectly) using only unlabeled data.

\begin{figure}[t]
    \centering
    \includegraphics[width=\linewidth]{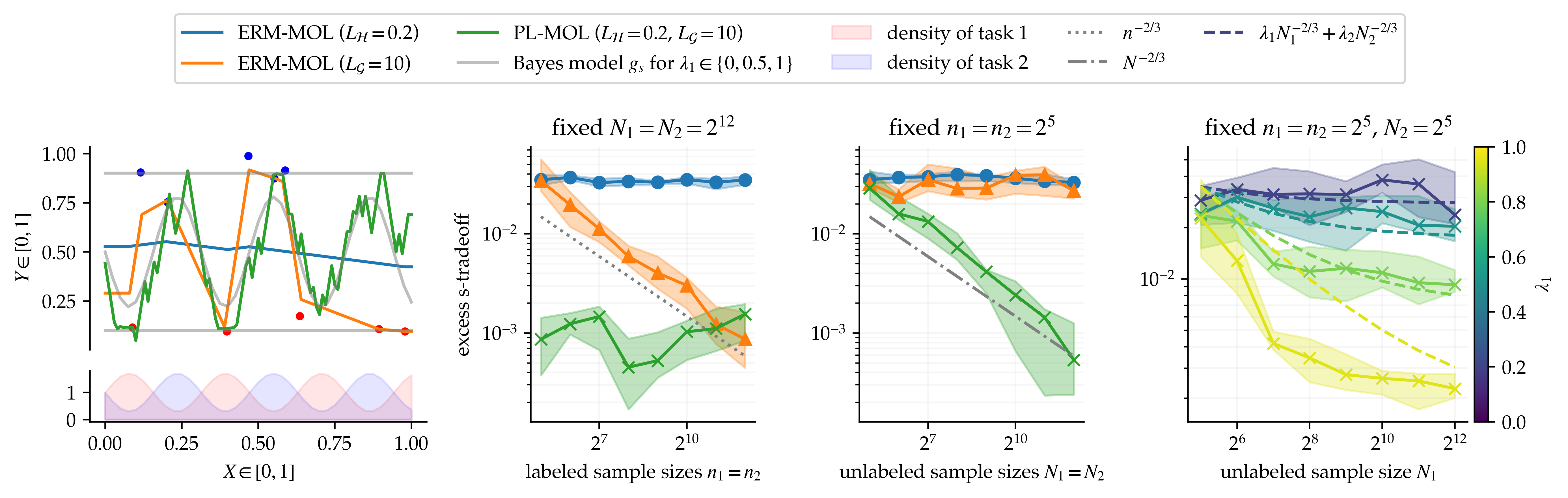}
    \caption{\small \emph{On the left:} one fit of the methods on $n_1=n_2=5$ labeled and $N_1=N_2=100$ unlabeled samples with weights $\weights=(1/2,1/2)$. \emph{In the center}: excess $s$-trade-off as a function of labeled and unlabeled sample sizes for fixed weights $\weights=(1/2,1/2)$. We fix the unlabeled and labeled sample sizes to $2^{12}$ and $2^5$, respectively. \emph{On the right:} the excess $s$-trade-off of PL-MOL as a function of unlabeled sample size $N_1$ while $n_1=n_2=N_2=2^5$ are fixed, and for varying weights.  We repeat each experiment $10$ times and show median, $20\%$ and $80\%$ quantiles.}
    \label{fig:non-parametric}
\end{figure}

We illustrate this in \cref{fig:non-parametric} on the following example: Let
$\cH$ be a set of almost constant functions (that is, $L_\cH=0.2$), and let $\fstar_1\equiv a$ and $\fstar_2\equiv b$ for two constants $a,b\in[0,1]$. Minimizing $\sTradeoff(h)$ for the weights $\weights=(1/2,1/2)$ over $\cH$ yields the solution $h_s\approx (a+b)/2$ while for large enough $L_{\cG}$, the solution in $\cG$ becomes $g_s=(p_1a+p_2b)/(p_1+p_2)$. 
On the left of \cref{fig:non-parametric}, we show one data instance and the resulting models from \cref{alg:pseudolabeling,alg:erm-mol} when the densities are $p_1(x)=0.7\sin(20x)+1$ and $p_2=2-p_1$. In the center and on the right, we show the excess $s$-trade-off in this setting as a function of sample size. We can see the rates predicted by \cref{cor:non-parametric-regression}:
when we fix the unlabeled sample sizes as large enough ($N_1=N_2 = 2^{12}$), PL-MOL achieves a small excess $s$-trade-off already for small labeled sample sizes. Meanwhile, ERM-MOL requires a labeled sample size to be of the same order $2^{12}$ before it achieves a similar excess $s$-trade-off.
At the same time, if we fix the labeled sample size sufficiently large to learn the almost constant functions in $\cH$, only PL-MOL improves with an increasing number of unlabeled data.
In both cases, the familiar $n^{-2/3}$-rate from Lipschitz regression is observable, as also predicted by \cref{cor:non-parametric-regression}.
Finally, on the right of \cref{fig:non-parametric}, we see that if we keep all sample sizes fixed\textemdash except for $N_1$\textemdash , then the rates are eventually bottlenecked by the harder task for all scalarizations; the risks stagnate at $\weight_2N_2^{-2/3}\asymp \weight_2$, which (up to constants and the smaller order terms) again corresponds to \cref{cor:non-parametric-regression}.

\section{Related work}
\label{sec:related-works}

Our pseudo-labeling method is connected to many fields that are adjacent to multi-objective learning, like stacked generalization \cite{wolpert1992stacked,naimi2018stacked}, Mixture-of-Experts approaches \cite{mu2025comprehensive}, boosting and weak to strong generalization \cite{freund1997decision}, multi-task learning \cite{liu2007semi,zhang2018overview,yousefi2018local,ando2005framework,maurer2006bounds,park2023towards}, multi-risk settings \cite{dowd2006after,lee2020learning,snell2023quantile}, and learning for multi-objective optimization \cite{navon2020learning,ruchte2021scalable,zuluaga2013active}. We provide a detailed discussion of these in \cref{subsec:adjacent-related-works}.

\paragraph{Semi-supervised learning.}
Semi-supervised single-objective learning is a well-established field of research, and the question of \emph{when and how} unlabeled data can help in a single learning problem is rather subtle \cite{chapelle2006,gopfert2019can,tifrea2023can,balcan2010discriminative}. 
Interestingly, the reason that unlabeled data helps in our setting is quite different from how it can help in single-objective learning.  
Contrary to our setup, results that demonstrate a benefit of semi-supervised settings in single-objective learning usually require the marginals to carry some form of information about the labels (such as clusterability, manifold structure, low-density separation, smoothness, compatibility, etc.) \cite{seeger2001learning,rigollet2007generalization,chapelle2006}, without which semi-supervised learners are no better than ones that discard the unlabeled data altogether \cite{gopfert2019can,ben2008does}. Our results, on the other hand, hold regardless of such assumptions: if the likelihood of a sample is higher under one task than another, a model with a good trade-off prioritizes that task, and unlabeled data enables (implicitly) estimating that likelihood. This is true, even if that likelihood carries no additional information about the labels.

\paragraph{Multi-distribution learning.} In the (supervised) multi-distribution learning (MDL) setting, the goal is to learn only one $s$-trade-off for the scalarization $s(\bv) = \max\, v_k$. 
Then, for VC-classes of dimension $d_\cG$, the label complexity to achieve excess $s$-trade-off $\eps>0$ is $\smash{\widetilde{\Theta}((K + d_\cG)/\eps^2)}$ using an on-demand sampling framework, in which the algorithm is allowed to decide which distribution to sample from sequentially \cite{haghtalab2022demand,awasthi2023sample,peng2024sample,zhang2024optimal}.
Importantly, this adaptive sampling improves upon the ``trivial'' rate $\Theta(Kd_\cG / \eps^2)$ (see \cite{zhang2024optimal} and \cref{subsec:all-tradeoffs-generalization}) by removing the multiplicative dependence on the number of objectives. For non-adaptive sampling, the rate $\Theta(Kd_\cG / \eps^2)$ is tight, that is, the fact that the algorithm has to solve only one scalarization does not improve upon the sample complexity of solving all scalarizations, cf.\ \cref{cor:vc-erm-upper}.
Of course, the statistical complexity under adaptive sampling must fail to hold for \smol\@ with multiple scalarizations, because it includes all individual learning tasks. MDL is also related to collaborative (where the tasks are assumed to share a ground-truth), federated, and group DRO frameworks, for which we refer the readers to the discussions in \cite{haghtalab2022demand,zhang2024optimal,blum2017collaborative}. 
In \cite{awasthi2024semi}, the authors propose a semi-supervised framework for group DRO (a problem related to MDL). The underlying assumption in \cite{awasthi2024semi} is that for each label-scarce group, there exists a group with sufficiently much labeled data and which is ``related enough'' for cross-group pseudo-labeling to be effective (similar to the collaborative learning setup).

\paragraph{Generalization for all trade-offs (\smol).} Applying ERM on labeled data to solve \smol\@ was analyzed in \cite{cortes2020agnostic,sukenik2024generalization} and in \cite{chen2023three} through algorithmic stability; we discuss their results in \cref{subsec:all-tradeoffs-generalization}. As far as we are aware, we are the first to study the \smol\@ problem in the general semi-supervised setting.
The closest work to ours is \cite{wegel2025learning}, where the question of learning Pareto manifolds in high-dimensional Euclidean space was studied in a semi-supervised setting. They assume that 1) the ground-truths exhibit a sufficiently sparse structure and 2) the objectives have a \emph{benign parametrization} (their Assumptions 1,2, and 3): the paper considers parametric function classes, and the algorithm that achieves the bounds requires knowledge about a parameter $\theta_k\in \RR^q$ so that $\risk_k$ depends on distribution $P^k$ only through $\theta_k$.
Estimating these parameters and then performing standard multi-objective optimization can enable learning in high dimensions. The resulting two-stage estimator is similar to our pseudo-labeling algorithm, and they can coincide, e.g., for linear regression with square loss. Moreover, in \cite{wegel2025learning} the \emph{necessity} of unlabeled data in high-dimensional linear regression is shown.
While we borrow an idea for the stability argument in our \cref{prop:stability}, our results apply to far more general settings.

\paragraph{Comparison of label sample complexities.} In order to compare the label sample complexity of our results with prior work, we summarize the resulting bounds in \cref{tab:sample-complexities} for VC (subgraph)-classes $\cG$ and $\cH_k=\cH$ with VC dimensions $d_\cG,d_\cH$. Recall that in the ideal setting, the marginals are known.

\newcolumntype{L}[1]{>{\raggedright\arraybackslash}p{#1}}
\newcolumntype{C}[1]{>{\centering\arraybackslash}p{#1}}
\begin{table}[h]
\centering
\caption{\small Label complexities up to logarithmic factors from this (gray) and prior work for VC (subgraph) classes. It holds that $d_\cH\leq d_\cG$ and potentially $d_\cH\ll d_\cG$. A definition of $d_\Theta$ is in \cref{sec:beyond}; both $d_\Theta\ll d_\cG$ and $d_\Theta \gg d_\cG$ are possible. Note that these results are not strictly comparable, as they depend on varying technical assumptions.}
\label{tab:sample-complexities}
\vspace{-7pt}
{\small
\begin{tabular}{C{4.0cm}C{2.7cm}C{2.5cm}C{3.6cm}}
\toprule
 & \multicolumn{2}{c}{\textbf{zero‑one loss}} & \multicolumn{1}{c}{\textbf{Bregman divergence losses}}\\
\cmidrule(lr){2-3}\cmidrule(lr){4-4}
\textbf{problem class} & upper bound & lower bound & upper bound\\
\midrule

supervised MDL & $\frac{d_{\mathcal G}+K}{\eps^{2}}$ \ \cite{zhang2024optimal,peng2024sample}  & $\frac{d_{\mathcal G}+K}{\eps^{2}}$\ \cite{haghtalab2022demand} & $\frac{d_{\mathcal G}+K}{\eps^{2}}$ \ \cite{zhang2024optimal} \\[2pt]

supervised \smol & $\frac{K d_{\mathcal G}}{\eps^{2}}$ \cite{sukenik2024generalization} / Cor.~\ref{cor:vc-erm-upper} & \cellcolor{gray!25} $\frac{K d_{\mathcal G}}{\eps^{2}}$ Prop.~\ref{prop:hardness} & $\frac{K d_{\mathcal G}}{\eps^{2}}$ \cite{sukenik2024generalization} / Prop.~\ref{prop:erm-mol-bound} \\[2pt]

ideal semi‑sup. \smol & $\frac{K d_{\mathcal G}}{\eps^{2}}$ \ \cite{sukenik2024generalization} / Cor.\ \ref{cor:vc-erm-upper}  & \cellcolor{gray!25} $\frac{K d_{\mathcal G}}{\eps^{2}}$\ Prop.\ \ref{prop:hardness} & \cellcolor{gray!25} $\frac{K d_{\mathcal H}}{\eps^{4}}$ Thm.\ \ref{thm:uniform-bound}  \\[2pt]

\makecell[c]{ideal semi‑sup. \smol \\ \footnotesize{(with stronger assumptions)}} & \cellcolor{gray!25} $\frac{K d_{\Theta}}{\eps^{4}}$ \ Prop.\ \ref{prop:pseudo-label-zero-one} & --- & \cellcolor{gray!25} $\frac{K d_{\mathcal H}}{\eps}$ Thm.\ \ref{thm:main-result} \\
\bottomrule
\end{tabular}
}

\end{table}

\section{Conclusion}
\label{sec:discussion}
This work studies when it is possible to mitigate the statistical cost of multi-objective learning, in which we illuminate the roles of unlabeled data and of the loss functions. This need arises because the function classes that contain models achieving good trade-offs may need to be much larger than those that are well-suited for any one task. We show that for general losses, the label complexity of learning multiple trade-offs simultaneously in a class $\cG$ is determined solely by the complexity of $\cG$, even when the learner has full access to marginal distributions and the Bayes optimal models for each task (\cref{prop:hardness}).
But for Bregman losses, a simple pseudo-labeling algorithm can significantly reduce the label complexity (\cref{thm:uniform-bound}), where unlabeled data can fully absorb the statistical cost of the expressive model class.
Our analysis with local Rademacher complexities further refines these bounds (\cref{thm:main-result}) and shows, among other things, adaptivity of the algorithm under some conditions. 

Future work may investigate the tension between controlling the errors of all scalarizations and adaptive rates, and in this context, whether \cref{ass:norm-equivalence} is really necessary (see discussion in \cref{subsec:discussion-main-results}). Moreover, it would be interesting to relax structural assumptions in \cref{thm:main-result}, e.g., by generalizing it to non-linear scalarizations, and to apply our framework to generative models, which naturally fit into our vector-valued, divergence-based setting.

\section*{Acknowledgements}
We thank Konstantin Donhauser for helpful discussions. TW was supported by SNSF Grant 204439 and JP by SNSF Grant 218343. GS was partially supported by the NSF award CCF-2112665 (TILOS), DARPA AIE program, the U.S. Department of Energy, Office of Science, the Facebook Research Award, as well as CDC-RFA-FT-23-0069 from the CDC’s Center for Forecasting and Outbreak Analytics. 
This work was done in part while the authors were visiting the Simons Institute for the Theory of Computing.

\bibliographystyle{abbrv} %
{
\small
\bibliography{bibliography}
}

\appendix
\counterwithin{lemma}{section}
\counterwithin{proposition}{section}
\counterwithin{corollary}{section}
\counterwithin{theorem}{section}

\newpage 
\part{Appendix}
\parttoc

\newpage
\section{More on related work}
\label{sec:prior-generalization}

We briefly review adjacent literature (\cref{subsec:adjacent-related-works}), and then discuss works on generalization in multi-objective learning (MOL) for all trade-offs (\cref{subsec:all-tradeoffs-generalization}).

\subsection{Adjacent related works}
\label{subsec:adjacent-related-works}

There are many works considering \emph{multi-risk settings} in different contexts (not to be confused with the multiple competing risks in survival analysis, cf.\ \cite{kalbfleisch2002statistical}), for example, in fairness or insurance mathematics through the lens of multiple quantile risk measures \cite{dowd2006after,lee2020learning,snell2023quantile}. 
Our work specifically is related to the fields of \emph{ensembling, multi-task learning, and Pareto set learning.}

\paragraph{Learning multiple models for one task.} Recall that in our \cref{alg:pseudolabeling}, we first learn multiple models (one per task), and combine them into a family of models that trade off the different risks. In comparison, there are many different ways in which combining multiple models can also help \emph{on a single task}, usually by using some sort of ensembling. For instance: 
\begin{itemize}[leftmargin=*,itemsep=2pt,topsep=0pt]
    \item \emph{Stacked generalization} combines multiple base models via a meta-model that takes their predictions as input features and outputs the final prediction \cite{wolpert1992stacked,naimi2018stacked}. 
    \item \emph{Mixture-of-Experts} models maintain a collection of expert predictors, and use a routing mechanism to select one or more experts based on the new input. This routing is often done through a direct soft gating or a weighted combination of the models \cite{mu2025comprehensive}. 
    \item \emph{Boosting} aggregates multiple weak learners to form a single strong predictor for one task, typically through sequential training where each model corrects the errors of the previous ensemble \cite{freund1997decision}. 
\end{itemize}

In contrast to any of these methods, our pseudo-labeling algorithm uses the predictions from individual models (in our work ERMs for simplicity) as training targets and fits a new model (or family of models) from scratch using the unlabeled inputs. This distinction is essential: unlike the aforementioned methods, our algorithm does not aggregate existing models to solve a single task, but instead leverages them as a supervisory signal to reduce the statistical cost of learning trade-offs in a richer function class. In particular, the described methods do not address the core challenge in MOL: the need to reconcile conflicting objectives within a single model. Our method explicitly constructs a family of joint predictors that trades off competing risks and can\textemdash or sometimes even must\textemdash deviate significantly from any of the base models.

\paragraph{Learning multiple models for multiple tasks.} Multi-task learning (MTL), including semi-supervised MTL, is a problem that is related to MOL in that both are used in settings where multiple learning problems need to be solved. However, in MTL, the aim is to learn multiple models, one per task, and exploit relatedness between tasks to improve sample complexity \cite{liu2007semi,zhang2018overview}. As such, the problem of striking a trade-off, which is at the heart of MOL, is not present in MTL. For example, suppose a new instance $x\in \cX$ is observed. In MTL, we can make multiple different predictions, one per task, in the hope that each prediction is good for the corresponding task. In MOL, on the other hand, we have to commit to one prediction for all tasks. 
Aside from these differences, as mentioned, if there is a relationship between the different learning problems, we could employ off-the-shelf MTL algorithms to adapt our pseudo-labeling algorithm by learning the task-specific models in the first part of the algorithm (Line \ref{step:erm} in \cref{alg:pseudolabeling}).
Finally, from a technical perspective, it is worth mentioning that (localized) Rademacher complexities have been used for MTL in \cite{yousefi2018local,ando2005framework,maurer2006bounds,park2023towards}.

\paragraph{Learning for multi-objective optimization.} A recent line of research has introduced the so-called \emph{Pareto set learning} (PSL) framework \cite{ruchte2021scalable,lin2022pareto,navon2020learning}, which has found various applications, e.g., in finetuning language models on multiple objectives \cite{wang2024conditional}. 
PSL is an approach to making learning algorithms such as \cref{alg:pseudolabeling,alg:erm-mol} computationally tractable: instead of producing a family of models, one for each trade-off, PSL approximates this family with one fixed function that takes both weights of the objectives and covariates as input (often called a \emph{hypernetwork} \cite{navon2020learning}). However, importantly, there is no direct connection of PSL to the \emph{learning} part of the MOL problem: it is actually \emph{purely a computational technique}.  
Specifically, if one approximates the outputs of \cref{alg:pseudolabeling,alg:erm-mol} with PSL, then it inherits their statistical guarantees up to the approximation errors.
The name Pareto set \emph{learning} has its origin in the fact that to find such a PSL function, it is common to minimize some expected scalarization, where the expectation is taken with respect to weights of the objectives \cite{zhang2020random}. A standard way to make this tractable is to \emph{sample the weights} \cite{navon2020learning}. Generalization is then usually discussed in terms of the number of sampled weights, not the data. See also \cite{wegel2025learning} for a discussion.
Finally, beyond hypernetworks, various other learning techniques have been deployed for multi-objective optimization when evaluating the objectives is expensive, such as active learning in \cite{zuluaga2013active,zuluaga2016pal,jablonka2021bias}.

\subsection{Generalization for multiple scalarizations} 
\label{subsec:all-tradeoffs-generalization}
In multi-objective optimization, decision makers can be broadly categorized based on whether they have an \emph{a priori} or an \emph{a posteriori} preference over Pareto solutions \cite{hwang2012multiple,jones2022multi}. An \textit{a priori} decision maker aims to recover a specific Pareto model, which is the solution to a trade-off $\sTradeoff$ that is known beforehand. In contrast, an \textit{a posteriori} decision maker will first recover the whole Pareto set. Recall from \cref{subsec:multi-objective-learning} or \cite{ehrgott2005multicriteria} that, under mild conditions, this entails solving a family of optimization problems
\begin{equation}
    \forall s\in \cS,\hspace{20pt} \min_{g \in \cG}\, \sTradeoff(g).\hspace{2em}
\end{equation}
The preference of such a decision maker is then informed by the set of trade-offs that are possible. 

The learning version of the problem has been studied for both types of decision makers, where the trade-off functionals need to be estimated from data. This leads to two types of algorithms and generalization bounds. The \textit{a priori} approach has been especially developed in the context of learning with fairness or multi-group constraints \cite{haghtalab2022demand,awasthi2023sample,peng2024sample,zhang2024optimal}. The \textit{a posteriori} approach, to which our work mostly belongs, has been studied by \cite{cortes2020agnostic,sukenik2024generalization}. 

Then, to learn the Pareto set, empirical risk minimization (ERM), or perhaps more aptly empirical trade-off minimization, is a natural approach to learning all Pareto solutions. The idea is simply to use labeled data sampled for each of the $K$ tasks to empirically estimate the $s$-trade-off functional $\sTradeoff$ of any model. The Pareto set can then be found by minimizing the estimated trade-offs. This algorithm, that we call \emph{empirical risk minimization for multi-objective learning} (ERM-MOL), is formalized in \cref{alg:erm-mol}. 

Learning the Pareto set through ERM has been described and analyzed by \cite{cortes2020agnostic}, where $\cS$ is a family of linear scalarizations. In particular, they provide a sample complexity upper bound that depends on the complexity of $\cS$ through a covering number of the weights that appear in $\cS$. 
Later, \cite{sukenik2024generalization} extended the ERM framework to go beyond the empirical estimator of the risk functionals, allowing for any ``statistically valid'' estimator based on uniform convergence. They further improve the sample complexity upper bound by removing dependency on $\cS$ in \cite[Theorem 2]{sukenik2024generalization}. Their result can be used to derive bounds for ERM in \smol: we now instantiate their bound in our setting (see \cref{sec:prelims}), making the following assumption to enable comparison with our results:

\begin{assumption}[Regularity conditions for ERM-MOL] \label{ass:erm-mol-ass}
    The risk and excess risk functionals are equal,
    \[\forall k\in [K]:\qquad \inf_{f \in \Fall}\, \cR_k(f) = 0.\]
\end{assumption}

\begin{proposition}[Sample complexity of ERM-MOL]\label{prop:erm-mol-bound}
    Suppose that \cref{ass:erm-mol-ass} holds and that the loss $\ell_k(\cdot, \cdot)$ is bounded by $B$ and $L_k$-Lipschitz continuous in the second argument for each $k \in [K]$. Let $\cS$ be any class of scalarizations satisfying reverse triangle inequality  and positive homogeneity \eqref{eqn:scalarization-properties}. Then, for any $\delta \in(0,1)$, the class of solutions returned by \cref{alg:erm-mol}, $\{\ghat_s : s \in \cS\}$, satisfies \eqref{eqn:smol} with probability $1-\delta$ and $\eps_s = s\big(\eps_1,\ldots, \eps_K)$,
    where for each $k \in [K]$, $\eps_k$ is given by:
    \begin{equation} \label{eqn:individual-excess-bound-erm}
        \eps_k = 6L_k\radComp_{n_k}^k\!\pr{\cG} + \,2B \left(\frac{2\log(K/\delta)}{n_k}\right)^{1/2}.
    \end{equation}
\end{proposition}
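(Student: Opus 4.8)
The plan is to run the standard ERM / uniform-convergence argument and let the scalarization structure \eqref{eqn:scalarization-properties} handle the aggregation over tasks; this is essentially the specialization of \cite[Theorem~2]{sukenik2024generalization} to the empirical trade-off estimator. Under \cref{ass:erm-mol-ass} the Bayes risks vanish, so $\excessRisk_k\equiv\risk_k$ and $\sTradeoff(f)=s(\risk_1(f),\dots,\risk_K(f))$, while \cref{alg:erm-mol} returns $\ghat_s\in\argmin_{g\in\cG}\widehat\sTradeoff(g)$ with $\widehat\sTradeoff(g):=s(\empRisk_1(g),\dots,\empRisk_K(g))$. Fix $g_s\in\argmin_{g\in\cG}\sTradeoff(g)$. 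I would proceed in three steps: (i) per-task uniform convergence of $\empRisk_k$ to $\risk_k$ over $\cG$; (ii) a union bound over $k\in[K]$ to obtain a single good event; (iii) gluing the per-task deviations through the reverse triangle inequality and positive homogeneity of $s$, combined with the optimality of $\ghat_s$ for the empirical objective.

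\emph{Step 1 (per-task uniform convergence).} Fix $k$. Since $\loss_k\leq B$, the random variable $\sup_{g\in\cG}(\risk_k(g)-\empRisk_k(g))$, viewed as a function of the $n_k$ labeled samples, has bounded differences of order $B/n_k$, so McDiarmid's inequality \cite{mcdiarmid1989method} together with symmetrization gives, with probability at least $1-\delta/K$,
\[
  \sup_{g\in\cG}\big|\risk_k(g)-\empRisk_k(g)\big|\;\lesssim\;\Rad_{n_k}\!\big(\loss_k\circ\cG\big)+B\sqrt{\tfrac{\log(K/\delta)}{n_k}}.
\]
Because $\loss_k$ is $L_k$-Lipschitz in its second argument with respect to the $\ell_2$-norm on $\cY\subseteq\RR^q$, Maurer's vector-contraction inequality \cite{Maurer2016vector} bounds $\Rad_{n_k}(\loss_k\circ\cG)\lesssim L_k\,\radComp_{n_k}^k(\cG)$. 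Tracking the constants (including the two-sided version) yields $\sup_{g\in\cG}|\risk_k(g)-\empRisk_k(g)|\leq\tfrac12\eps_k$ on this event, with $\eps_k$ as in \eqref{eqn:individual-excess-bound-erm}.

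\emph{Step 2 (union bound and aggregation).} A union bound over $k\in[K]$ produces an event $E$ with $\PP(E)\geq1-\delta$ on which the bound of Step~1 holds simultaneously for all $k$. On $E$, for any $s\in\cS$ and any $g\in\cG$, the reverse triangle inequality of \eqref{eqn:scalarization-properties}, monotonicity of $s$ on the non-negative orthant (which holds for every scalarization considered here, e.g.\ linear and Tchebycheff), and positive homogeneity give $|\sTradeoff(g)-\widehat\sTradeoff(g)|\leq s(|\risk_1(g)-\empRisk_1(g)|,\dots,|\risk_K(g)-\empRisk_K(g)|)\leq s(\tfrac12\eps_1,\dots,\tfrac12\eps_K)=\tfrac12\eps_s$. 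Applying this at $g=\ghat_s$ and $g=g_s$, and using $\widehat\sTradeoff(\ghat_s)\leq\widehat\sTradeoff(g_s)$ to drop the middle term of the telescoping decomposition,
\[
  \sTradeoff(\ghat_s)-\inf_{g\in\cG}\sTradeoff(g)\;=\;\sTradeoff(\ghat_s)-\sTradeoff(g_s)\;\leq\;\big[\sTradeoff(\ghat_s)-\widehat\sTradeoff(\ghat_s)\big]+\big[\widehat\sTradeoff(g_s)-\sTradeoff(g_s)\big]\;\leq\;\eps_s.
\]
Since $E$ does not depend on $s$ (the deviation bounds are uniform over $\cG$, which contains every $\ghat_s$ and $g_s$), this is exactly \eqref{eqn:smol} with parameters $((\eps_s)_{s\in\cS},\delta)$.

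\emph{Main obstacle.} There is no serious difficulty here --- this is a direct instantiation of the ERM-MOL analysis. The only points requiring care are matching the norm in the vector-contraction inequality to the coordinate-wise Rademacher complexity $\radComp$ used in the paper, and the routine constant/$\delta$-splitting bookkeeping needed to land on exactly $6L_k\radComp_{n_k}^k(\cG)+2B\sqrt{2\log(K/\delta)/n_k}$ (two-sided deviation, the $\sqrt2$ from the vector contraction, etc.). The only structural point worth flagging is that one tacitly uses monotonicity of $s$ on the non-negative orthant when passing from $s((|\risk_k-\empRisk_k|)_k)$ to $s((\eps_k)_k)$; this is immediate for the linear and Tchebycheff families, and is a mild additional requirement in general.
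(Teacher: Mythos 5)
Your proposal is correct and follows essentially the same route as the paper's proof: per-task uniform convergence via symmetrization, McDiarmid, and vector contraction, a union bound over $k\in[K]$, and then the reverse triangle inequality plus positive homogeneity combined with the three-term decomposition in which the empirical-optimality term is non-positive. Your remark that one tacitly needs monotonicity of $s$ on the non-negative orthant when passing to $s(\eps_1/2,\ldots,\eps_K/2)$ is a fair observation that applies equally to the paper's own argument; otherwise only routine constant bookkeeping separates the two write-ups.
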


We demonstrate the implications of this bound for a VC class and for linear scalarizations below, using the VC bound on the Rademacher complexity (\cref{lem:VC-bound}):

\begin{corollary}\label{cor:vc-erm-upper}
Let $\cG$ be any hypothesis class with VC dimension $d_{\cG} \in \NN$ on data domain $\cX \times \cY$ where $\cY = \{0,1\}$. For each task $k \in [K]$, define $\ell_k(y,y') = \mathbf{1}\{y \ne y'\}$ be the zero-one loss (cf.\ \cref{def:zero-one-mol}). Let $(P^1,\ldots, P^K)$ be any tuple of data distributions over $\cX \times \cY$. Then, for any $\delta,\eps > 0$, the  output of \cref{alg:erm-mol} $\{\ghat_s: s \in \Slin\}$ satisfies \eqref{eqn:smol} with probability $1-\delta$ and $\eps_s=\eps$ for all $s\in \Slin$
whenever the number of samples is at least $n_k = \Omega\left(\frac{d_{\cG} + \log (K/\delta)}{\eps^2}\right)$ for each $k \in [K]$.
\end{corollary}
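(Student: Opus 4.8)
The plan is to specialize Proposition \ref{prop:erm-mol-bound} to the zero-one loss in the binary classification setting of Definition \ref{def:zero-one-mol}, and then invoke the VC bound on the Rademacher complexity. First I would check that the hypotheses of Proposition \ref{prop:erm-mol-bound} are met: the zero-one loss $\loss_k(y,y') = \mathbf{1}\{y \ne y'\}$ is bounded by $B = 1$, and it is $L_k$-Lipschitz in the second argument with $L_k = 1$ (for binary labels, $|\mathbf{1}\{y\ne y'\} - \mathbf{1}\{y \ne y''\}| \le |y'-y''| = |y'-y''|_2$ when $y',y''\in\{0,1\}$; more carefully one argues this on the range of $\cG$, which is $\{0,1\}$-valued here, or on $[0,1]$ after the usual relaxation). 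Crucially, Assumption \ref{ass:erm-mol-ass} need \emph{not} hold in general — but it is not actually needed here, since the zero-one loss in binary classification with a Bayes-optimal model inside $\cG$ (recall $\cG \supseteq \bigcup_k \cH_k$) means $\inf_{f\in\Fall}\risk_k(f)$ is whatever it is, and the \emph{excess} $s$-trade-off bound of Proposition \ref{prop:erm-mol-bound} is stated in terms of $\excessRisk_k$ directly. If Assumption \ref{ass:erm-mol-ass} is genuinely required for Proposition \ref{prop:erm-mol-bound}, I would instead appeal directly to the underlying uniform-convergence result of \cite[Theorem 2]{sukenik2024generalization} (or the standard agnostic PAC bound), which gives the same $\eps_k$ without that normalization; this is the one subtlety worth being careful about.

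Next I would plug in the VC bound (Lemma \ref{lem:VC-bound}): for a class $\cG$ of $\{0,1\}$-valued functions with VC dimension $d_\cG$, the coordinate-wise Rademacher complexity satisfies $\radComp_{n_k}^k(\cG) \lesssim \sqrt{d_\cG / n_k}$ up to logarithmic factors. Substituting into \cref{eqn:individual-excess-bound-erm} with $B = L_k = 1$ gives
\[
\eps_k \;\lesssim\; \sqrt{\frac{d_\cG}{n_k}} + \sqrt{\frac{\log(K/\delta)}{n_k}} \;\lesssim\; \sqrt{\frac{d_\cG + \log(K/\delta)}{n_k}},
\]
hiding logarithmic factors. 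Setting the right-hand side to at most $\eps$ and solving for $n_k$ yields the stated requirement $n_k = \Omega\big((d_\cG + \log(K/\delta))/\eps^2\big)$. Finally, since the linear scalarizations $\Slin$ satisfy positive homogeneity and the reverse triangle inequality (as noted after \cref{eqn:scalarization-properties}), and each $s\in\Slin$ has weights in $\Delta^{K-1}$, we get $\eps_s = s(\eps_1,\dots,\eps_K) = \sum_k \weight_k \eps_k \le \max_k \eps_k \le \eps$, so the uniform guarantee $\eps_s = \eps$ for all $s\in\Slin$ follows at once from the per-coordinate bound. This completes the argument.

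The main obstacle I anticipate is not technical depth but rather the bookkeeping around Assumption \ref{ass:erm-mol-ass}: Proposition \ref{prop:erm-mol-bound} is stated under the normalization $\inf_{f\in\Fall}\risk_k(f) = 0$, which does not hold for arbitrary binary-classification distributions. The clean resolution is to observe that the excess-risk formulation is invariant under the shift $\risk_k \mapsto \risk_k - \inf\risk_k$, so the bound on $\eps_k$ (which controls $\excessRisk_k$, not $\risk_k$) transfers verbatim; alternatively one quotes the uniform-convergence machinery directly. Everything else is a routine substitution of the VC Rademacher bound and the definition of linear scalarization.
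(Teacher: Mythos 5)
Your proposal is correct and takes essentially the same route as the paper: specialize \cref{prop:erm-mol-bound} with $B=L_k=1$ for the zero-one loss, plug in the VC bound on the Rademacher complexity from \cref{lem:VC-bound}, solve for $n_k$, and conclude via $s(\eps_1,\ldots,\eps_K)\leq s(\eps,\ldots,\eps)=\eps$ for linear scalarizations. Your caution about \cref{ass:erm-mol-ass} is well placed (the paper's own proof invokes the proposition without comment); your resolution is sound, since for linear scalarizations the constant shift $\sum_k\weight_k\risk_k(\fstar_k)$ cancels in the excess $s$-trade-off, so the bound transfers verbatim.
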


See \cref{sec:erm-mol-proofs} for proofs.

\paragraph{Dependence on the size of $\cS$ in our results.}
The authors in \cite{sukenik2024generalization} noted that the dependence on $\cS$ in \cite{cortes2020agnostic} is sub-optimal, in the worst case by a factor of $K \log n_k$, and that such a dependence could be removed. Here, we should add that this is only true because in \cite{sukenik2024generalization} the learning bounds are exclusively globally uniform\textemdash no localization bounds were derived. 
Similarly, the bound from our \cref{thm:uniform-bound} is also independent of the size of $\cS$. 
\cref{thm:main-result}, on the other hand, paints a more nuanced picture: the size of the sets $\cG_k(r;\bfstar)$ from \cref{eq:Hk-and-cGk} depends on the size of $\cS$ through a union: if all local neighborhoods of the $g_s$ are ``similar,'' then $\cS$ does not affect the bound at all. However, if the local neighborhoods are very different, then the union may be larger than any of the individual neighborhoods and hence the bound will grow with the size of $\cS$; see the right side of \cref{fig:localization-proof}. See also \cref{subsec:discussion-main-results}.
Nonetheless, if $\cS$ is finite, \cref{thm:main-result} also yields the following bound.
\begin{corollary}
\label{cor:finite-S}
    Let $\cS\subseteq \Slin$ be finite and let \cref{ass:loss-ass,ass:convexity-smoothness} hold. Define
    \begin{equation*}
        \rG(s) := \inf\crl{r\geq 0: r^2\geq \radComp_{N_k}^k\pr{r\cB_{\nrm{\cdot}_k}\cap (\cG-g_s)}}.
    \end{equation*}
    Then, if $\delta>0$ is sufficiently small,  
    the output $\crl{\ghat_s:s\in \cS}$ from \cref{alg:pseudolabeling} satisfies \eqref{eqn:smol} with probability $1-\delta$ and $\eps_s = s(\eps_1,\dots,\eps_K)$, where
    \begin{equation*}
        \eps_k \lesssim \widetilde C_k \pr{\rG[2](s) + \rH[2] + (N_k^{-1}+n_{k}^{-1})\log(4K\abs{\cS}/\delta)},
    \end{equation*}
    with $\widetilde C_k=\widetilde C_k(s)$ from \cref{eq:main-result-bound-2} where we set $\eta^2=\eta^2(s):=\max\crl{1/\weight_k: k\in[K], \weight_k>0}$ for $s=\linearScalarization$.
\end{corollary}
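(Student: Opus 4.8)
The plan is to deduce the corollary from \cref{thm:main-result} by applying its refined bound \eqref{eq:main-result-bound-2} once for each singleton family $\cS'=\crl{s}$, $s\in\cS$, and then combining the $\abs{\cS}$ resulting guarantees by a union bound. The reason to pass to singletons is that the set $\cG_k(r;\bfstar)=\bigcup_{s'\in\cS'}(\cG-g_{s'}^{\bfstar})\cap r\cB_{\nrm{\cdot}_k}$ of \eqref{eq:Hk-and-cGk} collapses, for $\cS'=\crl{s}$, to the single shifted intersection $(\cG-g_s)\cap r\cB_{\nrm{\cdot}_k}$; hence the critical radius $\rG$ entering \eqref{eq:main-result-bound-2} is exactly $\rG(s)$ as defined above --- no supremum over scalarizations, and in particular no worst-case-over-ground-truths radius $\rGsup$, inflates it. For the same reason $\rHmax$, $n_{\cS}$ and $\eta$ in \eqref{eq:main-result-bound-2} all reduce to their values at the single $s$, so that the constant there takes the value $\widetilde C_k(s)$ with $\eta^2=\eta^2(s)$.

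Before invoking \cref{thm:main-result} for $\cS'=\crl{s}$, I would check its hypotheses. \cref{ass:loss-ass,ass:convexity-smoothness} are inherited by every subfamily $\cS'\subseteq\cS$, since they are stated either without reference to $\cS$ or uniformly over $s\in\cS$. The only additional ingredient that \eqref{eq:main-result-bound-2} requires is \cref{ass:norm-equivalence}, which the corollary does \emph{not} assume globally; the point is that a single linear scalarization always satisfies it. Indeed, for $s=\linearScalarization$ the mixture $\sum_{j}\weight_j P^j_X$ dominates $\weight_k P^k_X$ as a measure, so $dP^k_X/d\pr{\sum_j\weight_j P^j_X}\le 1/\weight_k$ almost everywhere for every $k$ with $\weight_k>0$; hence \cref{ass:norm-equivalence} holds for $\cS'$ with constant $\eta^2(s)=\max\crl{1/\weight_k:k\in[K],\,\weight_k>0}$, precisely as in the statement. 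Coordinates $k$ with $\weight_k=0$ do not enter $\sTradeoff$ at all, so the value attributed to $\eps_k$ for such $k$ is immaterial to \eqref{eqn:smol} and may be set by the same formal expression.

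With the hypotheses verified, for each $s\in\cS$ I would apply \eqref{eq:main-result-bound-2} with failure probability $\delta/\abs{\cS}$ in place of $\delta$: the ``$\delta$ sufficiently small'' requirement of \cref{thm:main-result} is met as soon as $\delta/\abs{\cS}$ is small enough, and its logarithmic factor becomes $\log(4K\abs{\cS}/\delta)$. A union bound over the $\abs{\cS}$ resulting events then gives \eqref{eqn:smol} simultaneously for all $s\in\cS$. The only step requiring a little bookkeeping --- and the main, if modest, obstacle --- is to rewrite the per-$s$ guarantee $\sTradeoff(\ghat_s)-\inf_{g\in\cG}\sTradeoff(g)\le s(\eps_1,\dots,\eps_K)$ furnished by \eqref{eq:main-result-bound-2} in the clean coordinate-wise form claimed. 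This uses only linearity of $s=\linearScalarization$: one has $s(\eps_1,\dots,\eps_K)=\sum_k\weight_k\eps_k$, and likewise $\rHmax[2]=\sum_k\weight_k\rHnok[k]^2$ and $n_{\cS}^{-1}=\sum_k\weight_k/n_k$, so the aggregated labeled-data terms redistribute across the coordinates and one may take $\eps_k\lesssim \widetilde C_k(s)\pr{\rG[2](s)+\rH[2]+(N_k^{-1}+n_k^{-1})\log(4K\abs{\cS}/\delta)}$; if the loss parameters $L_k,B_k,\mu_k$ genuinely vary with $k$, the resulting discrepancy between the coefficients $\widetilde C_k(s)$ is harmless and absorbed into the implicit constant. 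No estimate beyond \cref{thm:main-result} is needed; the geometry behind that proof is depicted in \cref{fig:localization-proof}.
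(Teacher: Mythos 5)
Your proposal is correct and follows essentially the same route as the paper's own proof: reduce to the singleton family $\{s\}$ (so that $\cG_k(r;\bfstar)$ collapses to $(\cG-g_s)\cap r\cB_{\nrm{\cdot}_k}$ and the critical radius becomes $\rG(s)$), observe that a single linear scalarization automatically satisfies \cref{ass:norm-equivalence} with $\eta^2(s)=\max\{1/\weight_k:\weight_k>0\}$ (ignoring coordinates with $\weight_k=0$), invoke \cref{eq:main-result-bound-2}, and union bound over the $\abs{\cS}$ scalarizations to get the $\log(4K\abs{\cS}/\delta)$ factor. The only difference is that you spell out the coordinate-wise rewriting of $\rHmax$ and $n_\cS$, which the paper leaves implicit.
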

\begin{proof}
    Consider the where $\cS=\{\linearScalarization\}$ is singleton. Because we only consider this one scalarization, we can make the following case distinction for each $k\in[K]$: either $\weight_k=0$, so we can ignore index $k$ completely, or $\weight_k>0$ and so $\smash{\esssup dP_X^k /d (\sum_{j=1}^K P_X^j) \leq 1/\weight_k}$. Hence, \cref{ass:norm-equivalence} is satisfied for $\cS=\{\linearScalarization\}$ with $\smash{\eta^2(\linearScalarization)=\max\crl{1/\weight_k: k\in[K], \weight_k>0}}$. 
    The bound for $\cS=\{\linearScalarization\}$ follows from \cref{thm:main-result}, and the corollary for a finite $\cS$ follows from a union bound.
\end{proof}

\begin{algorithm}[t]
\caption{ERM for Multi-objective Learning (ERM-MOL)}
\label{alg:erm-mol}
\textbf{Input:} Labeled data $\crl{(X^k_i,Y^k_i)}_{i=1}^{n_k}$, hypothesis space $\cG$, scalarization set $\cS$. 

    \begin{algorithmic}[1]
        \FOR{$k\in [K]$}
            \STATE Define the empirical risk functional:
            $\widehat{\risk}_k(g) := \frac{1}{n_k} \sum_{i = 1}^{n_k} \ell_k\big(Y_i^k, g(X_i^k)\big).$
        \ENDFOR
        \FOR{$s \in \cS$}
            \STATE Minimize the empirical $s$-trade-off:
            $\ghat_s = \argmin_{g \in\cG}\, s\big(\widehat{\risk}_1(g), \ldots, \widehat{\risk}_K(g)\big).$
        \ENDFOR
        \STATE Return $\crl{\ghat_s:s\in \cS}$.
    \end{algorithmic}
\end{algorithm}

\subsection{Proofs for ERM-MOL} \label{sec:erm-mol-proofs}
\begin{proof}[Proof of \cref{prop:erm-mol-bound}]
    The proof is analogous to the proof of \cite[Theorem 2]{sukenik2024generalization}, additionally using Rademacher complexity and McDiarmid's bound to bound the supremum (denoted $C_N$ in \cite{sukenik2024generalization}) and slightly different assumptions on the scalarizations. We repeat the proof here for completeness.

    Fix $\delta > 0$. By a standard Rademacher bound (also see \cref{app:proof-thm-1}), the following generalization guarantee holds for each task $k \in [K]$,
    \begin{equation}\label{eqn:erm-mol-good-event}
        \probab \left(\forall g \in \cG: \quad \big|\empRisk_k(g) - \risk_k(g)\big| \leq \eps_k/2\right)  \geq 1 - \frac{\delta}{K}.
    \end{equation}
    For any scalarization $s$, let $\widehat{\sTradeoff}$ denote the empirical $s$-trade-off
    \[\widehat{\sTradeoff}(g) = s\big(\empRisk_1(g),\ldots, \empRisk_K(g)\big).\]
    Then, $\sTradeoff$ is well-approximated by $\widehat{\sTradeoff}$. By \cref{ass:erm-mol-ass}, $\cR_k = \cE_k$, so that when the event \cref{eqn:erm-mol-good-event} holds for all $k \in [K]$, and this occurs with probability at least $1 - \delta$ by a union bound, we obtain that:
    \begin{align}  
        \sup_{g \in \cG}\, \big|\widehat{\sTradeoff}(g) - \sTradeoff(g)\big| &= \sup_{g \in \cG}\, \big|s\big(\empRisk_1(g),\ldots, \empRisk_K(g)\big) - s\big(\risk_1(g),\ldots, \risk_K(g)\big)\big| \notag
        \\&\leq  \sup_{g \in \cG}\, s\big(\big|\empRisk_1(g) - \risk_1(g)\big|,\ldots, \big|\empRisk_K(g) - \risk_K(g)\big|\big) \notag 
        \\&\leq s\big(\eps_1/2,\ldots, \eps_K/2\big), \label{eqn:erm-uniform-s-risk-bound}
    \end{align}
    where the first inequality used the reverse triangle inequality, and the second inequality used the above claim. In particular, this will allow us to bound the excess $s$-trade-off of \(\ghat_s\) as follows:
    \begin{align*}
        \sTradeoff(\ghat_s) - \sTradeoff(g_s) &= \underbrace{\sTradeoff(\ghat_s) - \widehat{\sTradeoff}(\ghat_s)}_{(a)} \,+\, \underbrace{\widehat{\sTradeoff}(\ghat_s) - \widehat{\sTradeoff}(g_s)}_{(b)}\, +\, \underbrace{\widehat{\sTradeoff}(g_s) - \sTradeoff(g_s)}_{(c)}
        \\&\leq s\big(\eps_1/2,\ldots, \eps_K/2\big) + s\big(\eps_1/2,\ldots, \eps_K/2\big)\vphantom{\underbrace{R_s - R_s}}
        \\&\leq s\big(\eps_1,\ldots, \eps_K\big),
    \end{align*}
    where the (a) and (c) terms both contribute at most $s(\eps_1/2,\ldots, \eps_K/2)$ error from \Cref{eqn:erm-uniform-s-risk-bound}, while the (b) term is non-positive, since $\smash{\ghat}_s$ minimizes the empirical $s$-trade-off $\widehat{\sTradeoff}$. The last inequality follows from the positive homogeneity of the scalarizations.
\end{proof}

\begin{proof}[Proof of \cref{cor:vc-erm-upper}]
    From \cref{prop:erm-mol-bound} and the VC bound on Rademacher complexity (\cref{lem:VC-bound}), there exists a constant $C>0$ such that for each $k \in [K]$ we have $\eps_k \leq \eps$ for whenever $n_k$ is sufficiently large:
    \[n_k \geq \frac{1}{\eps^2} \pr{72 C^2L_k^2 d_{\cG} + 16 B^2\log \frac{K}{\delta}}. %
    \]
    The result follows, since we have that for any $s \in \Slin$:
    \[s\big(\eps_1,\ldots, \eps_K\big) \leq s \big(\eps,\ldots, \eps\big) = \eps,\]
    which concludes the proof.
\end{proof}

\newpage
\section{Beyond Bregman losses: Pseudo-labeling for the zero-one loss}
\label{sec:beyond}
Recall that \cref{prop:hardness} is a worst-case negative result that rules out any benefit of unlabeled data for MOL, at least in the absence of any additional structure. Indeed, our main results focus on overcoming this hardness for more structured losses---namely, Bregman losses. But, there are other natural forms of structure to consider; in this section, we take an alternative approach to circumventing the lower bound.

To help motivate this next approach, let us revisit the reason that Bregman losses are amenable to the semi-supervised approach. As we discuss in \cref{subsec:Bregman-losses}, the crux is \cref{lem:properties-bregman-loss}. It expresses the excess risk functionals $\excessRisk_k$ in terms of a discrepancy operator $\riskDiscrepancy{k}{\cdot\,}{\cdot}$ and the Bayes-optimal model $\fstar_k$, as follows:
\begin{equation}\label{eqn:excess-risk-discrepancy}
    \excessRisk_k(\cdot)=\riskDiscrepancy{k}{\cdot}{\fstar_k}.
\end{equation}
Estimating the discrepancy operator over $\cG$ only makes use of unlabeled data. And even though learning the Bayes-optimal model requires labeled data, its statistical cost is mitigated by the knowledge that $\cH_k$ contains the optimal model. \cref{alg:pseudolabeling} precisely constructs estimators of $\excessRisk_k$ in this way, before solving for the $s$-trade-offs over the learned approximations. 

The close relationship between $\excessRisk_k$ and $\fstar_k$ described by \cref{eqn:excess-risk-discrepancy} is specific to Bregman losses. Nevertheless, the excess risk functional for other losses may have decompositions that are similar in spirit. 

\paragraph{Excess risk decomposition for the zero-one loss.}
To see another instance of excess-risk decomposition, let us revisit the setting of the worst-case examples in \cref{prop:hardness} and \cref{eqn:inconsistency}: the multi-objective binary classification setting with the zero-one loss (\cref{def:zero-one-mol}). In this case, it is a standard result that the excess risk can be expressed in terms of the conditional mean of the labels $\theta_k(x) := \EE[Y^k|X^k=x]$, as in \cite[Section 2.1]{devroye1996probabilistic}. In particular, the excess risks for the zero-one loss is given by
\begin{equation} \label{eqn:zero-one-excess}
    \excessRisk_k(f)= \EE\br{\abs{2\theta_k(X^k)-1}\cdot\abs{f(X^k)-1\crl{\theta_k(X^k)\geq 1/2}}}.
\end{equation}
Thus, for the zero-one loss, the form of its corresponding excess risk functional is $\excessRisk_k(\cdot) = d_k^{\,0/1}\big(\,\cdot\,; \theta_k)$, where let \smash{$d_k^{\,0/1}(\cdot;\cdot)$} be the ``zero-one discrepancy'' operator, given by
\[\riskDiscrepancyZeroOne{k}{f}{\theta} = \EE\br{\abs{2\theta(X^k)-1}\cdot\abs{f(X^k)-1\crl{\theta(X^k)\geq 1/2}}},\]
where $\theta : \cX \to [0,1]$ is a conditional mean. Notice that the operator $d_k^{\, 0/1}(\cdot;\cdot)$ depends only the marginal distribution \smash{$P^k_X$}, as was the case for Bregman losses (cf.\ \cref{eqn:excess-risk-discrepancy}). This suggests that a semi-supervised approach analogous to \cref{alg:pseudolabeling} becomes possible if the \emph{regression problem} of learning $\theta_k$ is easy.

We now formalize this intuition. For each $k \in [K]$, let $\Theta_k$ be a class of functions $\theta : \cX \to [0,1]$. In lieu of assuming that $\fstar_k \in \cH_k$, we now assume that the true conditional mean $\theta_k$ is contained in $\Theta_k$ (we also do not assume that $\Theta_k \subseteq \cG$, especially as $\Theta_k$ consists of regression functions and $\cG$ consists of classifiers). Define the empirical regression map \smash{$\thetahat_k(\cdot)$} and empirical zero-one discrepancy \smash{$\empRiskDiscrepancyZeroOne{k}{g}{\thetahat_k}$} as follows:
\begin{align*} 
    \thetahat_k(\cdot)&:= \argmin_{\theta\in \Theta_k} \frac{1}{n_k}\sum_{i=1}^{n_k}(Y_k-\theta(X^k))^2\\
    \empRiskDiscrepancyZeroOne{k}{g}{\thetahat_k} &:= \frac{1}{N_k}\sum_{i=1}^{N_k} \br{\abs{2\thetahat_k(\Xtilde^k_i)-1}\cdot\abs{g(\Xtilde^k_i)-1\crl{\thetahat_k(\Xtilde^k_i)\geq 1/2}}}.
\end{align*}
And finally, define the analogous $s$-scalarization as $\empRiskDiscrepancyZeroOne{s}{g}{\bthetahat}:= s\big(\empRiskDiscrepancyZeroOne{1}{g}{\thetahat_1},\ldots,\empRiskDiscrepancyZeroOne{K}{g}{\thetahat_K}\big)$.

\begin{proposition}
\label{prop:pseudo-label-zero-one}
    In the multi-objective binary classification setting (\cref{def:zero-one-mol}), let $\cS$ be a class of scalarizations satisfying the reverse triangle inequality and positive-homogeneity \eqref{eqn:scalarization-properties}.
    Assume that $\EE[Y^k|X=\cdot]\in \Theta_k$.
    Then, $\smash{\{\ghat_s\in \argmin_{g\in \cG}\empRiskDiscrepancyZeroOne{s}{g}{\bthetahat}:s\in \cS\}}$ satisfies \eqref{eqn:smol} with probability $1-\delta$ and $\eps_s=s(\eps_1,\ldots,\eps_k)$, where
    \begin{equation*}
        \eps_k = 12\pr{\radComp^k_{N_k}(\cG)+ \sqrt{\frac{\log(2K/\delta)}{N_k}}+\sqrt{\radComp^k_{n_k}(\Theta_k) + \pr{\frac{\log(2K/\delta)}{n_k}}^{1/2}} }.
    \end{equation*}
\end{proposition}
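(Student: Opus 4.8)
The plan is to mirror the proof of \cref{thm:uniform-bound}, replacing the Bregman excess-risk identity $\excessRisk_k(\cdot)=\riskDiscrepancy{k}{\cdot}{\fstar_k}$ by its zero-one counterpart $\excessRisk_k(\cdot)=\riskDiscrepancyZeroOne{k}{\cdot}{\theta_k}$ from \cref{eqn:zero-one-excess}. Writing $\boldsymbol\theta=(\theta_1,\dots,\theta_K)$ and $\riskDiscrepancyZeroOne{s}{f}{\boldsymbol\theta}:=s\big(\riskDiscrepancyZeroOne{1}{f}{\theta_1},\dots,\riskDiscrepancyZeroOne{K}{f}{\theta_K}\big)$, this identity gives $\sTradeoff(f)=\riskDiscrepancyZeroOne{s}{f}{\boldsymbol\theta}$ for every $f\in\Fall$, so $\inf_{g\in\cG}\sTradeoff(g)=\riskDiscrepancyZeroOne{s}{g_s}{\boldsymbol\theta}$ for $g_s\in\argmin_{g\in\cG}\riskDiscrepancyZeroOne{s}{g}{\boldsymbol\theta}$. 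Since $\ghat_s$ minimizes $\empRiskDiscrepancyZeroOne{s}{\cdot}{\bthetahat}$ over $\cG$, I would decompose $\sTradeoff(\ghat_s)-\sTradeoff(g_s)$ into the usual three terms (the middle empirical gap being nonpositive by optimality of $\ghat_s$) and use the reverse triangle inequality and positive homogeneity of $s$ to reduce the whole problem to controlling, for each $k\in[K]$,
\[
    \delta_k:=\sup_{g\in\cG}\abs{\empRiskDiscrepancyZeroOne{k}{g}{\thetahat_k}-\riskDiscrepancyZeroOne{k}{g}{\theta_k}}.
\]
Indeed this yields $\sTradeoff(\ghat_s)-\sTradeoff(g_s)\le s(2\delta_1,\dots,2\delta_K)$, so it suffices to prove $2\delta_k\le\eps_k$ for the claimed $\eps_k$ on an event of probability at least $1-\delta$.

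Next, I would split $\delta_k\le\mathrm{(I)}+\mathrm{(II)}$, where $\mathrm{(I)}=\sup_{g\in\cG}\abs{\empRiskDiscrepancyZeroOne{k}{g}{\thetahat_k}-\riskDiscrepancyZeroOne{k}{g}{\thetahat_k}}$ is the error from finitely many unlabeled samples and $\mathrm{(II)}=\sup_{g\in\cG}\abs{\riskDiscrepancyZeroOne{k}{g}{\thetahat_k}-\riskDiscrepancyZeroOne{k}{g}{\theta_k}}$ is the error from $\thetahat_k\neq\theta_k$. For $\mathrm{(I)}$: conditioning on the labeled sample fixes $\thetahat_k$ and makes it independent of the unlabeled sample, so $\mathrm{(I)}$ is a uniform deviation of a $[0,1]$-valued empirical process indexed by $g\in\cG$; McDiarmid's inequality and symmetrization bound it by a constant times the Rademacher complexity of the integrand class $\{x\mapsto\abs{2\thetahat_k(x)-1}\cdot\abs{g(x)-\mathbf{1}\{\thetahat_k(x)\ge 1/2\}}:g\in\cG\}$ plus a $\sqrt{\log(2K/\delta)/N_k}$ deviation term. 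Since $g(x)$ and $c(x):=\mathbf{1}\{\thetahat_k(x)\ge 1/2\}$ take values in $\{0,1\}$, one has $\abs{g(x)-c(x)}=c(x)+(1-2c(x))g(x)$, so each integrand is a $g$-free term in $[0,1]$ plus a reweighting $w(x)g(x)$ with $\abs{w(x)}\le 1$; the scalar contraction inequality then bounds this Rademacher complexity by $\radComp_{N_k}^k(\cG)$ up to a constant and a lower-order $N_k^{-1/2}$ term. All bounds here are independent of the realization of $\thetahat_k$, hence hold unconditionally on the corresponding good event.

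The term $\mathrm{(II)}$ is the structural heart of the argument, and the step I expect to be the main obstacle. For $\theta\in[0,1]$ and $t\in\{0,1\}$, put $\rho(\theta,t)=\abs{2\theta-1}\cdot\abs{t-\mathbf{1}\{\theta\ge 1/2\}}$, so that $\riskDiscrepancyZeroOne{k}{g}{\theta}=\EE\br{\rho(\theta(X^k),g(X^k))}$. The key observation is that $\rho(\theta,0)=(2\theta-1)^{+}$ and $\rho(\theta,1)=(1-2\theta)^{+}$: the discontinuity of the indicator at $\theta=1/2$ is exactly cancelled by the factor $\abs{2\theta-1}$, so $\theta\mapsto\rho(\theta,t)$ is $2$-Lipschitz for each fixed $t$. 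Consequently, \emph{uniformly over} $g\in\cG$, $\abs{\riskDiscrepancyZeroOne{k}{g}{\thetahat_k}-\riskDiscrepancyZeroOne{k}{g}{\theta_k}}\le 2\,\EE\abs{\thetahat_k(X^k)-\theta_k(X^k)}\le 2\nrm{\thetahat_k-\theta_k}_{L^2(P^k_X)}$. Because the regression model is well specified ($\theta_k\in\Theta_k$), the Pythagorean identity for the squared loss yields $\nrm{\thetahat_k-\theta_k}_{L^2(P^k_X)}^{2}=\EE[(Y^k-\thetahat_k(X^k))^2]-\EE[(Y^k-\theta_k(X^k))^2]$, the excess risk of the least-squares ERM over $\Theta_k$; the basic inequality, symmetrization, and McDiarmid (using that $(y,\theta)\mapsto(y-\theta)^2$ is $2$-Lipschitz in $\theta$ on $[0,1]$ and bounded by $1$) bound this by a constant times $\radComp_{n_k}^k(\Theta_k)+\sqrt{\log(2K/\delta)/n_k}$. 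Hence $\mathrm{(II)}\lesssim\sqrt{\radComp_{n_k}^k(\Theta_k)+(\log(2K/\delta)/n_k)^{1/2}}$, which is the source of the outer square root in the statement, exactly as the strong-convexity step produces the square root in \cref{thm:uniform-bound}.

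Finally, I would intersect the $2K$ good events (for each task: one from the labeled data controlling $\mathrm{(II)}$, one from the unlabeled data controlling $\mathrm{(I)}$), each of probability at least $1-\delta/(2K)$, take a union bound to obtain probability at least $1-\delta$, and collect the absolute constants to arrive at $2\delta_k\le\eps_k$ with the claimed $\eps_k$. Apart from $\mathrm{(II)}$, every step is routine empirical-process bookkeeping (symmetrization, bounded differences, scalar contraction) that parallels the proof of \cref{thm:uniform-bound}; the only extra care needed elsewhere is to condition on the labeled data before handling the $\thetahat_k$-dependent integrands in $\mathrm{(I)}$ and to observe that the resulting bounds do not depend on $\thetahat_k$.
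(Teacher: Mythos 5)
Your proposal is correct and follows essentially the same route as the paper's proof: the same two-term decomposition of $\sup_{g\in\cG}\abs{\empRiskDiscrepancyZeroOne{k}{g}{\thetahat_k}-\excessRisk_k(g)}$, the same Lipschitz-in-$\theta$ control of the pseudo-labeling error reduced to the least-squares excess risk of $\thetahat_k$ over $\Theta_k$, the same contraction argument for the unlabeled empirical process, and the same union bound. The only (cosmetic) difference is that your ramp-function identity $\rho(\theta,0)=(2\theta-1)^{+}$, $\rho(\theta,1)=(1-2\theta)^{+}$ yields the Lipschitz constant $2$ directly, whereas the paper obtains $4$ via a triangle inequality plus the observation that the indicators can only disagree when $\abs{\theta-\theta'}\ge\abs{\theta-1/2}$.
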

The proof of \cref{prop:pseudo-label-zero-one} is in \cref{subsec:proof-pseudo-label-zero-one} and is analogous to that of \cref{thm:uniform-bound}. Here, we can also give a data-independent sample complexity bound.
Let $d_{\Theta_k}$ denote the VC subgraph dimension (a.k.a.\ pseudo dimension) of $\Theta_k$ and $d_\cG$ the VC dimension of $\cG$. Then, \cref{prop:pseudo-label-zero-one} implies a label sample complexity on the order of $Kd_{\Theta_k}/\eps^4$ and an unlabeled sample complexity of $Kd_\cG/\eps^2$ (see \cref{lem:VC-bound}).

In summary, the hardness result \cref{prop:hardness} shows that, in the worst-case, the Bayes classifiers $\fstar_k$ are uninformative for making appropriate trade-offs over zero-one losses. Instead, \cref{eqn:zero-one-excess} suggests that the relevant information is actually captured by the Bayes regressors $\theta_k$. \cref{prop:pseudo-label-zero-one} makes this intuition rigorous: if we have additional structure, given here in the form of $\Theta_k$, we can expect benefits of semi-supervision even for the zero-one loss, as long as the Rademacher complexity of $\Theta_k$ is manageable.

\subsection{Proof of Proposition \ref{prop:pseudo-label-zero-one}}
\label{subsec:proof-pseudo-label-zero-one}

For this proof, define $f_\theta(x,g)=\abs{2\theta(x)-1}\cdot\abs{g(x)-1\crl{\theta(x)\geq 1/2}}$. Then
\begin{align*}
    \abs{f_\theta(x,g)-f_{\theta'}(x,g)} &\leq \abs{\abs{2\theta(x)-1}-\abs{2\theta'(x)-1}} + \abs{2\theta'(x)-1}\cdot\abs{1\crl{\theta'(x)\geq 1/2}-1\crl{\theta(x)\geq 1/2}}\\
    &\stackrel{(a)}{\leq} 2\abs{\theta(x)-\theta'(x)} +2\abs{\theta(x)-\theta'(x)}  \leq 4\abs{\theta(x)-\theta'(x)}.
\end{align*}
where in $(a)$ we used that the indicators can only disagree if $\abs{\theta'(x)-\theta(x)}\geq \abs{\theta(x)-1/2}$. It follows that for any $g\in \cG$,
\begin{align*}
    \abs{\excessRisk_k(g)-\empRiskDiscrepancyZeroOne{k}{g}{\thetahat_k}} &\leq \abs{\excessRisk_k(g)-\riskDiscrepancyZeroOne{k}{g}{\thetahat_k}} + \abs{\riskDiscrepancyZeroOne{k}{g}{\thetahat_k}-\empRiskDiscrepancyZeroOne{k}{g}{\thetahat_k}} \\
    &= \abs{\EE\br{f_{\theta_k}(X^k,g)-f_{\thetahat_k}(X^k,g)}} + \abs{\riskDiscrepancyZeroOne{k}{g}{\thetahat_k}-\empRiskDiscrepancyZeroOne{k}{g}{\thetahat_k}} \\
    &\leq 4\EE\br{\abs{\theta_k(X^k)-\thetahat_k(X^k)}} + \abs{\riskDiscrepancyZeroOne{k}{g}{\thetahat_k}-\empRiskDiscrepancyZeroOne{k}{g}{\thetahat_k}}. 
\end{align*}
We bound each term separately. First, analogous to \cref{eqn:t-bk} in the proof of \cref{thm:uniform-bound}, since the square loss is $2$-Lipschitz on $[0,1]$,
with probability at least $1-\delta/(2K)$
\begin{align*}
    \EE\br{\abs{\theta_k(X^k)-\thetahat_k(X^k)}} &\leq \sqrt{\EE\br{\pr{\theta_k(X^k)-\thetahat_k(X^k)}^2}} \leq \sqrt{24\radComp^k_{n_k}(\Theta_k) + 2\pr{\frac{2\log(2K/\delta)}{n_k}}^{1/2}}.
\end{align*}

For the second term, we use that $g\mapsto f_\theta(x,g)$ is $1$-Lipschitz continuous. Let $c(x)=1\crl{\theta(x)\geq 1/2}$; then
\begin{equation*}
    \abs{f_\theta(x,g)-f_\theta(x,g')} \leq \abs{2\theta(x)-1}\abs{\abs{g(x)-c(x)}-\abs{g'(x)-c(x)}}\leq \abs{g(x)-g'(x)}.
\end{equation*}
Hence, using contraction again, we get the analogous bound to \cref{eqn:t-ak} with probability $1-\delta$
\begin{align*}
    \sup_{g\in \cG}\abs{\riskDiscrepancyZeroOne{k}{g}{\thetahat_k}-\empRiskDiscrepancyZeroOne{k}{g}{\thetahat_k}} &= \sup_{g\in \cG} \abs{\frac{1}{N_k}\sum_{i=1}^{N_k} f_{\thetahat_k}(\Xtilde_i^k,g) -\EE\br{f_{\thetahat_k}(X^k,g)}} \\
    &\leq 2\radComp^k_{N_k}(\{ x\mapsto f_{\thetahat_k}(x,g) : g\in \cG\}) + \sqrt{\frac{2\log(1/\delta)}{N_k}} \\
    &\leq 6\radComp^k_{N_k}(\cG)+ \sqrt{\frac{2\log(1/\delta)}{N_k}}.
\end{align*}

In conclusion, we have proved that with probability $1-\delta$, for all $k \in [K]$, we have
\begin{equation}
\label{eq:newbound}
    \sup_{g\in \cG} \abs{\excessRisk_k(g) - \empRiskDiscrepancyZeroOne{k}{g}{\thetahat_k}} \leq \sqrt{24\radComp^k_{n_k}(\Theta_k) + 2\pr{\frac{2\log(2K/\delta)}{n_k}}^{1/2}} + 6\radComp^k_{N_k}(\cG)+ \sqrt{\frac{2\log(2K/\delta)}{N_k}}.
\end{equation}
The rest of the proof then follows analogously to the proof of \cref{thm:uniform-bound} provided in \cref{app:proof-thm-1} by replacing the ``claim'' with the uniform bound in \Cref{eq:newbound}.

\section{More examples}
\label{sec:more-examples}

In this section, we discuss an ``easy'' example for classification, the toy setting used in \cref{fig:classification-2}, and provide another example applying \cref{thm:main-result} to linear regression.

\subsection{An easy example for classification}
\label{subsec:easy-example-classification}

\begin{figure}[ht]
    \centering
    \includegraphics[width=0.7\linewidth]{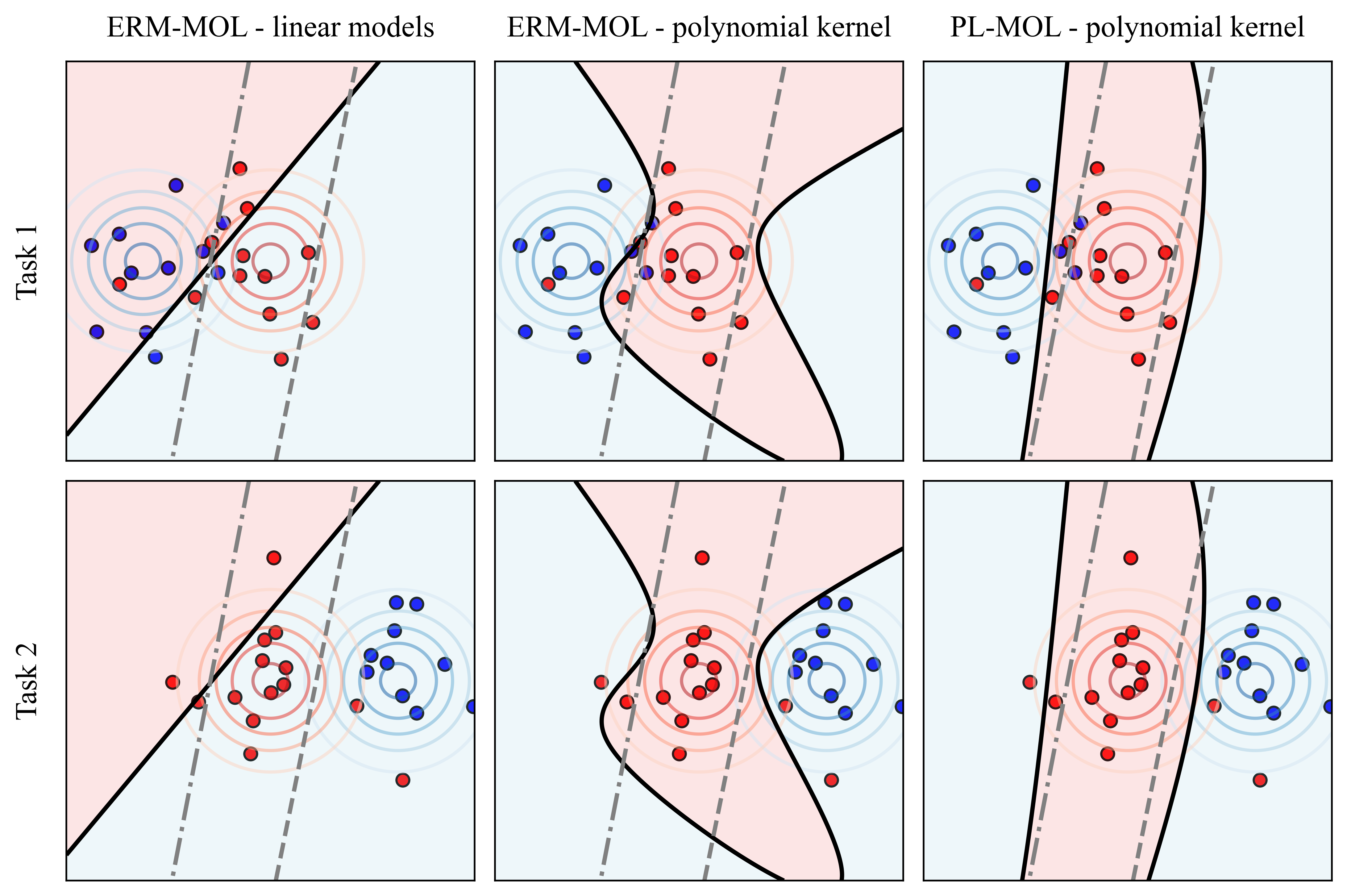}
    \caption{\small Classification with logistic loss on two tasks (one per row) that are not fully compatible. The gray lines are the decision boundaries for the ERMs $\smash{\hhat_1,\hhat_2}$ of the two tasks.  The black lines and red/blue shading show the decision boundary of the three methods (one per column). }
    \label{fig:classification}
\end{figure}

Let $\cX = \RR^2$ and $\cY=[0,1]$. Consider the two distributions $P^1,P^2$, defined by sampling the covariates from a mixture of Gaussians, and then labeling from a linear logistic model:
\[
\begin{array}{r@{\;\sim\;}l@{\qquad}r@{\;\sim\;}l}
    X^1 & \frac{1}{2}\cN(0,\I_2)+\frac{1}{2}\cN(- \e_1,\I_2), & X^2 &  \frac{1}{2}\cN(0,\I_2)+\frac{1}{2}\cN( \e_1,\I_2), \\
    Y^1|X^1=x           & \Bernoulli(\sigma\pr{\inner{x}{-\e_1}+1/2}),     & Y^2|X^2=x           & \Bernoulli(\sigma\pr{\inner{x}{\e_1}+1/2}).
\end{array}
\]

In this model, when using cross-entropy loss, each task is solved optimally by a linear classifier that discriminates between the two Gaussians in each task, that is, by $\fstar_1 (x)= \sigma\pr{\inner{x}{-\e_1}+1/2}$ and $\fstar_2(x) = \sigma\pr{\inner{x}{\e_1}+1/2}$, respectively. However, striking a good trade-off using linear models in the two tasks simultaneously is impossible. 
Specifically, consider the function classes (note the slightly different setup to \cref{subsec:logistic-regression}) defined as
\begin{align*}
    \cH &= \crl{h:x\mapsto \inner{w}{x}+b:w\in 20\cdot \cB_2^2,b\in\RR}, \\
\cG &= \crl{g:x\mapsto \inner{w}{\Phi(x)}+b:w\in 20\cdot \cB_2^{10},b\in\RR},
\end{align*}
where $\Phi$ maps $x$ to all of its $p=\binom{2+3}{2}=10$ polynomial features up to degree $3$.
In one data instance of $n_1=n_2=24$ labeled data points sampled from the statistical model defined above, we run three different algorithms on cross-entropy loss using linear scalarization with equal weights:
\begin{itemize}[leftmargin=*,itemsep=2pt,topsep=0pt]
    \item ERM-MOL (\cref{alg:erm-mol}) on the function class $\cH$ of linear models.
    \item ERM-MOL (\cref{alg:erm-mol}) on the function class $\cG$ of polynomial kernels.
    \item PL-MOL (\cref{alg:pseudolabeling}) using $\cH$ in the first stage for all tasks, and $\cG$ in the second stage with an additional number of $N_1=N_2 = 400$ unlabeled data points.
\end{itemize}
We now show the resulting decision boundaries at threshold $1/2$ in \cref{fig:classification}.

Clearly, striking a good trade-off in $\cH$ is not possible, while the variance of the larger function class $\cG$ induces a larger error on the few ($n_1=n_2=24$) labeled data points. \cref{alg:pseudolabeling} (PL-MOL) on the other hand reduces this variance using the $N_1=N_2=400$ unlabeled data points.

\subsection{A hard example for classification}
\label{subsec:hard-example-classification}

\begin{wrapfigure}{R}{0.35\textwidth}
    \vspace{-10pt}
    \includegraphics[width=\linewidth]{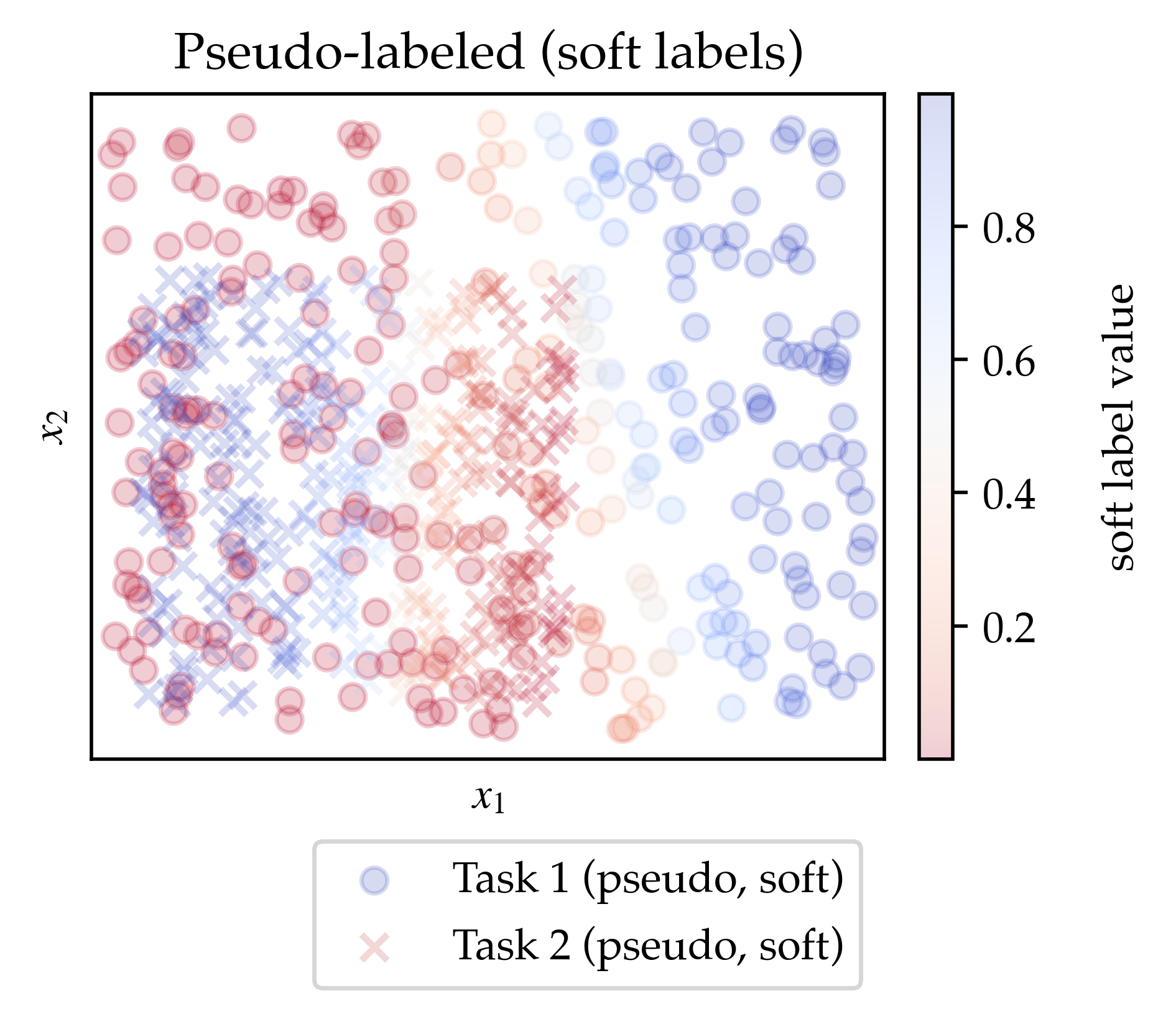}
    \caption{\small Pseudo-labeled data using PL-MOL with logistic loss.}
    \label{fig:classification-pseudo-labeled-data}
    \vspace{-15pt}
\end{wrapfigure}
We now discuss another example, where finding a trade-off is harder than in the previous example. This is the same problem used in \cref{fig:classification-2}. Specifically, consider the data from \cref{subfig:data-distribution}: 
the support of task $2$ is completely contained within the support of task $1$, and in particular, there is an area where the labels of the two tasks disagree (the bottom left ``striped rectangle''). Both tasks are optimally solvable by linear models, but trying to solve both tasks at the same time is impossible, even in $\Fall$. Meanwhile, better trade-offs still become available using, e.g., polynomial features.

We sample $n_1=n_2=25$ data points uniformly from the regions in \cref{subfig:data-distribution} and, in \cref{subfig:decision-boundaries,subfig:logistic-loss}, label them according to the linear logistic model $Y^k|X^k=x\sim \Bernoulli(\fstar_k(x))$, that is, with noise and in accordance with \cref{eqn:func-ass}. Again, we run the three different algorithms on the logistic loss using linear scalarization: ERM-MOL (\cref{alg:erm-mol}) on the function class $\cH$ of linear models, ERM-MOL on the function class $\cG$ of linear models on polynomial features up to degree $5$, and PL-MOL (\cref{alg:pseudolabeling}) using $\cH$ in the first stage for all tasks, and $\cG$ in the second stage with an additional number of $N_1=N_2 = 300$ unlabeled data points.
PL-MOL fits linear models to the labeled data and uses these to predict (soft) pseudo-labels for the unlabeled data, resulting in \cref{fig:classification-pseudo-labeled-data}. Some resulting decision boundaries of each method are shown in \cref{subfig:decision-boundaries}, and the Pareto fronts (on the test data) as well as excess $s$-trade-offs are shown in \cref{subfig:logistic-loss}.

As discussed in \cref{subsec:discussion-main-results}, the pseudo-labeling \cref{alg:pseudolabeling} can fail when the losses are not Bregman divergences, and even be \emph{inconsistent} in terms of $s$-trade-off, as demonstrated in \cref{eqn:inconsistency}.
While for the proof of \cref{eqn:inconsistency} a simple coin-flip example is sufficient, in the example from \cref{subfig:data-distribution}, this effect also happens. To amplify this effect, in \cref{subfig:zero-one-loss} we generate the labels in task 1 without any noise, and in task 2 according to this model:
\begin{equation*}
    Y^{(2)}|X^{(2)}=x \sim 
    \begin{cases}
        0 & x\text{ is in the ``red region'' of task 2}, \\
        \Bernoulli(0.65) & x\text{ is in the ``blue region'' of task 2},
    \end{cases}
\end{equation*}
where recall the different regions from \cref{subfig:data-distribution}.
Merely changing the loss to the zero-one loss then breaks \cref{alg:pseudolabeling} in the sense of \cref{eqn:inconsistency}. 
In \cref{subfig:zero-one-loss}, we show how the same algorithms perform in a large sample regime ($n_1=n_2=400$). PL-MOL does not attain the best-possible trade-off within $\cG$, even when it recovers the Pareto front of $\cG$.

\subsection{\texorpdfstring{$\ell_2$}{l2}-constrained linear regression}
\label{subsec:sparse-linear-regression}
We now discuss another example where the localization can yield much tighter results than \cref{thm:uniform-bound}. To that end, we consider the following problem of constrained linear regression with squared loss.

Let $\cX=\cB_2^d$, $\cY=[-1,1]$, and $\loss_k$ be the squared loss. For $R\in [0,1]$, define the hypothesis spaces $\cH=\crl{h(x)=\inner{x}{w} : \nrm{w}_2\leq  R} $ and $\cG=\crl{g(x)=\inner{x}{w} : \nrm{w}_2\leq 1}$.
We consider distributions that satisfy $\EE\br{Y^k|X^k=x}= \inner{w^\star_k}{x}$, that is, a (possibly heteroscedastic) zero-mean noise model
\begin{equation*}
    Y^k = \inner{w^\star_k}{X^k}+\xi^k  \qquad \text{with} \qquad \forall x\in \cX:\; \EE\br{\xi^k|X^k=x}=0,
\end{equation*}
where $\fstar_k= \inner{w^\star_k}{\cdot }\in\cH\subset \Fall$ for all $k$. Suppose that the covariance matrices of $X^k$ have smallest eigenvalue bounded from below by $\kappa\in [R,1]$ (which is easily satisfied). \cref{thm:main-result} then yields the following corollary, proven in \cref{subsec:proofs-examples}.

\begin{corollary}[$\ell_2$-constrained linear regression]
\label{cor:sparse-linear-regression}
    In the setting described above, the output of \cref{alg:pseudolabeling} satisfies \eqref{eqn:smol} with probability $0.99$ and $\eps_s=s(\eps_1,\dots,\eps_K)$ for all $s\in \Slin$ (\cref{eqn:Tchebycheff-linear}), where $$\eps_k \lesssim \min\crl{\frac{1}{\kappa n_k},\frac{2R}{\sqrt{n_k}}}+ \min\crl{\frac{1}{\kappa N_k},\frac{2}{\sqrt{N_k}}}.$$
\end{corollary}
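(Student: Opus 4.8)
The plan is to derive \cref{cor:sparse-linear-regression} directly from the localized bound \cref{thm:main-result}, and in fact from \cref{eq:main-result-bound} — the version that uses the worst-case radius $\rGsup$ and does \emph{not} invoke \cref{ass:norm-equivalence}. Two things are needed: checking \cref{ass:loss-ass,ass:convexity-smoothness} with explicit $O(1)$ constants, and bounding the critical radii $\rH$ and $\rGsup$ of \cref{eq:critical-radii-definition,eq:supremum-critical-radius} in a way that avoids any dependence on the dimension $d$.

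\emph{Checking the assumptions.} For the squared loss on $\cY=[-1,1]$ the potential is $\phi_k(y)=y^2$, so $\loss_k(y,y')=(y-y')^2\ge\tfrac{\mu_k}{2}\nrm{y-y'}_2^2$ with $\mu_k=2$; it is $L_k$-Lipschitz in both arguments on $[-1,1]$ with $L_k=4$; and it is bounded by $B_k=4$ on the range of $\cG$, since $\nrm{w}_2\le1$ and $\cX=\cB_2^d$ force every $g\in\cG$ to take values in $[-1,1]$. Also $\sup_{y}\nrm{\nabla^2\phi_k(y)}_2=2$ and $\nabla^3\phi_k\equiv0$, so $\nu=2$, and $\diam_{\nrm{\cdot}_2}(\cY)=2$; hence \cref{ass:loss-ass} holds and the smoothness part of \cref{ass:convexity-smoothness} holds. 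In weight coordinates $\cH_k-\fstar_k$ is the ball $\{w-w^\star_k:\nrm{w}_2\le R\}$, which is convex and contains the origin because $\fstar_k\in\cH$ forces $\nrm{w^\star_k}_2\le R$; hence it is star-shaped around the origin, and $\cG$ is the convex, closed unit ball $\{w:\nrm{w}_2\le1\}$. Finally, writing $\Sigma_k:=\EE[X^k(X^k)^\top]$, the discrepancy of squared-loss predictors is the quadratic form $\riskDiscrepancy{k}{g}{h}=(w_g-w_h)^\top\Sigma_k(w_g-w_h)$, so $\riskDiscrepancy{s}{g}{\h}$ differs from $\nrm{g}_s^2=w_g^\top(\sum_k\weight_k\Sigma_k)w_g$ only by terms affine in $w_g$; consequently $g\mapsto\riskDiscrepancy{s}{g}{\h}-\gamma\nrm{g}_s^2$ is affine plus $(1-\gamma)$ times a convex quadratic in $w_g$, hence convex for every $\gamma\le1$. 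Taking $\gamma=1$ gives $\min\{L_k,\mu_k\}/\gamma=2\ge1$ and $\Cfin=O(1)$ in \cref{thm:main-result}.

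\emph{Dimension-free critical radii.} The crux is to combine two complementary bounds on the coordinate-wise Rademacher complexity of a ball-constrained linear class. First, $\cH_k-\fstar_k$ is contained in the weight ball of radius $2R$ (again using $\nrm{w^\star_k}_2\le R$), so the standard linear Rademacher bound together with $\nrm{X^k}_2\le1$ gives $\radComp_{n_k}^k(\cH_k(r))\le 2R/\sqrt{n_k}$ for every $r$. Second, if $f=\inner{v}{\cdot}\in r\cB_{\nrm{\cdot}_k}$ then $v^\top\Sigma_k v\le r^2$, and since the covariance has smallest eigenvalue at least $\kappa$ this forces $\nrm{v}_2\le r/\sqrt{\kappa}$, giving $\radComp_{n_k}^k(\cH_k(r))\le r/\sqrt{\kappa n_k}$. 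Taking the minimum and using that for star-shaped classes any $r_0$ with $r_0^2\ge\radComp_{n_k}^k(\cH_k(r_0))$ upper-bounds the critical radius, one checks directly that $r_0^2=\min\{1/(\kappa n_k),\,2R/\sqrt{n_k}\}$ is admissible, so $\rH[2]\le\min\{1/(\kappa n_k),\,2R/\sqrt{n_k}\}$. The identical argument applies to $\cG$: for every $\h$ and every $s\in\Slin$ the minimizer $g_s^{\h}\in\cG$ has weight norm at most $1$, so $\bigcup_{s\in\Slin}(\cG-g_s^{\h})$ lies in the weight ball of radius $2$, independently of $\h$; intersecting with $r\cB_{\nrm{\cdot}_k}$ and arguing as before yields $\radComp_{N_k}^k(\cG_k(r;\h))\le\min\{2/\sqrt{N_k},\,r/\sqrt{\kappa N_k}\}$ uniformly in $\h$, hence $\rGsup[2]\le\min\{1/(\kappa N_k),\,2/\sqrt{N_k}\}$.

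\emph{Conclusion and the main obstacle.} Plugging these into \cref{eq:main-result-bound} with $\delta=1/100$ (a fixed small constant, so the ``$\delta$ sufficiently small'' hypothesis of \cref{thm:main-result} is met), using $\Cfin=O(1)$, and noting that the residual term $(N_k^{-1}+n_k^{-1})\log(4K/\delta)$ is of the same order as — or smaller than — the critical-radius terms up to the constant and $\log K$ factors absorbed by $\lesssim$, gives $\eps_k\lesssim\min\{1/(\kappa n_k),\,2R/\sqrt{n_k}\}+\min\{1/(\kappa N_k),\,2/\sqrt{N_k}\}$; the statement $\eps_s=s(\eps_1,\dots,\eps_K)$ for all $s\in\Slin$ then follows from the conclusion of \cref{thm:main-result}. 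The only step requiring genuine care is the dimension-free control of the critical radii: a naive metric-entropy estimate for the ellipsoid $\{v:v^\top\Sigma_k v\le r^2\}$ would contribute a spurious $\sqrt{d/n}$ term, and the point is precisely that the ball constraint of the class lets one trade that $\sqrt d$ for $1/\sqrt\kappa$. The $\gamma$-convexity check in \cref{ass:convexity-smoothness}, although immediate here, is the other place worth spelling out, since it is special to the squared loss — the quadratic part of the discrepancy is exactly $\nrm{\cdot}_s^2$.
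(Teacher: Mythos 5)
Your proposal is correct and takes essentially the same route as the paper's proof: check \cref{ass:loss-ass,ass:convexity-smoothness} with $O(1)$ constants for the squared loss, bound the local Rademacher complexity of the norm-ball-constrained linear class by $\min\{2R/\sqrt{n},\,r/\sqrt{\kappa n}\}$ (radius $2$ for $\cG$), solve for the critical radii, and plug into the worst-case bound \eqref{eq:main-result-bound} of \cref{thm:main-result}. The only cosmetic differences are that you control the ellipsoid piece by using $\eigenvalue_{\min}(\Sigma_k)\geq\kappa$ to force $\nrm{v}_2\leq r/\sqrt{\kappa}$ instead of the paper's Cauchy--Schwarz in the $\Sigma_k^{-1}$-norm, and you bypass the paper's explicit computation of $g_s=g_{w_\weights}$ (and the condition $R\leq\kappa$) by noting $g_s^{\h}\in\cG$ has weight norm at most $1$, which bounds $\rGsup$ uniformly in $\h$ and yields the same rates.
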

Here $1/n_k$ and $1/N_k$ are the localized rates, where \cref{thm:uniform-bound} would yield $\smash{1/\sqrt{n_k}}$ and $\smash{1/\sqrt{N_k}}$ instead. Notice that if $\cH$ is very small (i.e., $R<1/(2\kappa n_k)$), then the first term is small due to the smaller complexity of $\cH$, while the second term may only become small due to larger unlabeled sample size $N_k$.

\subsection{Proofs for the Examples}
\label{subsec:proofs-examples}

In this section we provide the proofs of \cref{cor:logistic-regression,cor:sparse-linear-regression,cor:non-parametric-regression}.

\begin{proof}[Proof of \cref{cor:logistic-regression}]
    We apply \cref{thm:uniform-bound} to the setting. First, note that $\fstar_k(x)=\EE\br{Y^k|X^k=x}=\sigma(\inner{x}{w_k^\star})$ is contained in $\cH$, so that \cref{eqn:func-ass} holds. Also note that the loss $\loss(y,\hat y)=-(y\log(\hat y)+(1-y)\log(1-\hat y))$ is a Bregman loss (\cref{def:Bregman-loss}) induced by the potential $\phi(y)=y\log y + (1-y)\log (1-y)$.

    Moreover, for all $g\in \cG$ we have that $g(\cX)\subset [\sigma(-1),\sigma(1)]$, since for all $w\in \cB_1^p$ and $\Phi(x)\in \cB_\infty^p$ we have that $\abs{\inner{w}{\Phi(x)}}\leq \nrm{w}_1\nrm{\Phi(x)}_\infty \leq 1$.
    
    We can then check \cref{ass:loss-ass} (using the remark that only the range of $\cG$ needs to be considered):
    \begin{enumerate}
        \item Because $\frac{d^2}{dy^2}\phi(y) = 1/(y(1-y))\geq 4$ for all $y\in [0,1]$, we have that $\phi$ is $4$-strongly convex.
        \item Making use of the fact that the range of functions in $\cG$ lies in $[\sigma(-1),\sigma(1)]$, we get that $\ell$ is $L$-Lipschitz in both arguments with $L=\frac{3}{2\sigma(-1)\sigma(1)}$. To see that, employ \cref{lem:Lipschitz-Bregman} with $\diam_{\abs{\cdot}}(\cY)=1$ and $\frac{d^2}{dy^2}\phi(y)\leq \frac{1}{\sigma(-1)\sigma(1)}$ on the range of $\cG$.
        \item Similarly, because the range of functions in $\cG$ lies in $[\sigma(-1),\sigma(1)]$, the loss is bounded by $\ell \leq B=-\log( \sigma(-1))$.
    \end{enumerate}

    Hence, we may apply \cref{thm:uniform-bound}. Standard bounds on the Rademacher complexities yield
    \begin{equation*}
        \radComp^k_{n_k}(\cH) \leq\frac{1}{4} \sqrt{\frac{2\log 2d}{n_k}} \quad \text{and} \quad \radComp^k_{N_k}(\cG) \leq\frac{1}{4} \sqrt{\frac{2\log 2p}{N_k}}.
    \end{equation*}
    This can be proven using Lipschitz contraction with respect to the sigmoid (which is $1/4$-Lipschitz continuous). For both bounds, there exist distributions so that the bound is tight. Plugging this into \cref{thm:uniform-bound} yields (for some fixed high probability, such as $0.99$)
    \begin{align*}
        \eps_k &\lesssim \pr{\frac{\log p}{N_k}}^{1/2} + \pr{\frac{\log K}{N_k}}^{1/2} + \sqrt{\pr{\frac{\log d}{n_k}}^{1/2} + \pr{\frac{\log K}{n_k}}^{1/2}} \\
        &\lesssim \pr{\frac{\log dK}{n_k}}^{1/4}+\pr{\frac{\log pK}{N_k}}^{1/2}
    \end{align*}
    where the last inequality holds if $\max\crl{\frac{\log p}{N_k},\frac{\log K}{N_k},\frac{\log d}{n_k},\frac{\log K}{n_k}}\leq 1$, which we assumed.
\end{proof}

\begin{proof}[Proof of \cref{cor:non-parametric-regression}]
We verify the assumptions of \cref{thm:main-result}: \cref{eqn:func-ass} holds by definition of the data generating model. \cref{ass:loss-ass} and the smoothness from \cref{ass:convexity-smoothness} hold, because the square loss $\loss$ is $2$-Lipschitz and $1$-bounded on $\cY=[0,1]$, and induced by $\phi(y)=y^2$ which is $1$-strongly convex, and $\max\crl{\phi'', \phi'''} \leq 2$. The other parts of \cref{ass:convexity-smoothness} hold because the function classes $\cG$ and $\cH$ are convex, and the strong convexity holds with $\gamma_s=1$:  For every $s\equiv \linearScalarization\in \Slin$ a quick calculation shows that
\begin{align*}
    \riskDiscrepancy{s}{g}{\h} &= \int_0^1 (g(x)-h_1(x))^2 \weight_1 p_1(x) + (g(x)-h_2(x))^2 \weight_2 p_2(x)\, dx \\
    \nrm{g}_s^2 &= \int_{0}^1 g^2(x)(\weight_1 p_1(x) + \weight_2 p_2(x)) \, dx
\end{align*}
which implies that
\begin{equation*}
    \riskDiscrepancy{s}{g}{\h} - \nrm{g}_s^2 = \int_0^1 \pr{(g(x)-h_1(x))^2-g^2(x)} \weight_1 p_1(x) + \pr{(g(x)-h_2(x))^2-g^2(x)} \weight_2 p_2(x)\, dx
\end{equation*}
and hence the strong convexity follows from the convexity of $g\mapsto (g-a)^2-g^2$ for any $a\in \RR$.

To apply \cref{thm:main-result}, denote the space of $2L_\cG$-Lipschitz functions $[0,1]\to[-1,1]$ as $\widetilde\cG$, and note that for any function $g\in \cG$ we have that $\smash{\cG-g\subset \widetilde\cG}$. Hence, we can see that
\begin{align*}
    \cG_k(r;\h)=r\cB_{\nrm{\cdot}_k}\cap \bigcup_{s\in \cS} (\cG-g_s^{\h}) \subset \crl{g\in \widetilde\cG: \nrm{g}_k\leq r}.
\end{align*}
Denote by $N(t,\cA,\nrm{\cdot})$ the covering number of a set $\cA$ with norm $\nrm{\cdot}$ at radius $t>0$, see e.g., \cite[Chapter 5]{wainwright2019high} for a definition.
It is a standard fact \cite[Example 5.10]{wainwright2019high} that the metric entropy of $\smash{\{g\in \widetilde\cG: \nrm{g}_k\leq r\}}$ is bounded as
\begin{equation*}
    \forall 0<t\leq r:\qquad \log N\pr{t,\crl{g\in \widetilde\cG: \nrm{g}_k\leq r},\nrm{\cdot}_{k}} \leq \frac{8L_{\cG}}{t}.
\end{equation*}
Hence, using standard bounds with Dudley's entropy integral \cite{dudley1967sizes}, we can bound the Rademacher complexity of this function class by
\begin{align*}
    \radComp^k_{N_k}\pr{\crl{g\in \widetilde\cG: \nrm{g}_k\leq r}} &\lesssim \frac{1}{\sqrt{N_k}} \int_{0}^r \sqrt{\log N\pr{t,\crl{g\in \widetilde\cG: \nrm{g}_k\leq r},\nrm{\cdot}_{k}}} dt\\
    &\lesssim \frac{1}{\sqrt{N_k}}\int_{0}^r (L_{\cG})^{1/2}t^{-1/2} dt  \\
    &\lesssim \sqrt{\frac{ L_{\cG} r}{N_k} }
\end{align*}
and, similarly for $\cH_k(r)$ we get that
\begin{equation*}
    \radComp^k_{n_k}\pr{\cH_k(r)} \lesssim \sqrt{\frac{ L_{\cH} r}{n_k}} .
\end{equation*}
Solving the corresponding inequalities $r^2 \geq  \sqrt{\frac{ L_{\cH} r}{n_k}}$ and $r^2\geq \sqrt{\frac{ L_{\cG} r}{N_k} }$ yields
\begin{equation*}
    \rH[2] \lesssim L_{\cH}^{2/3}n_k^{-2/3} \qquad \text{and} \qquad \rGsup[2] \lesssim L_{\cG}^{2/3}N_k^{-2/3}.
\end{equation*}
Plugging this into \cref{eq:main-result-bound} from \cref{thm:main-result} and noting that 1) for any fixed confidence $1-\delta$ (such as $0.99$) the confidence term goes to zero faster than the main terms, and 2) the constants $\Cfin$ are universal constants in this example, yields the result.
\end{proof}

\begin{proof}[Proof of \cref{cor:sparse-linear-regression}]
    Denote $\Sigma_k=\EE\br{X^k (X^k)^\top}$, $s\equiv\linearScalarization$ and $g_w=\inner{\cdot}{w}\in \cG$, so that for any $w,w'\in \RR^d$
    \begin{equation*}
        \nrm{g_w-g_{w'}}_k^2=\EE\br{\pr{g_w(X^k)-g_{w'}(X^k)}^2} = \EE\br{\pr{\inner{w-w'}{X^k}}^2} = (w-w')^\top\Sigma_k (w-w'),
    \end{equation*}
    and by an identical argument $\riskDiscrepancy{k}{g_w}{g_{w'}} = (w-w')^\top \Sigma_k (w-w')  = \nrm{g_w-g_{w'}}_k^2$. It follows that
    \begin{align}
        \riskDiscrepancy{s}{g_w}{\bfstar} &= \sum_{k=1}^K\weight_k \pr{(w-w_k^\star)^\top \Sigma_k (w-w_k^\star)} \nonumber \\
        &= \pr{w-w_\weights}^\top\pr{\sum_{k=1}^K\weight_k\Sigma_k}\pr{w-w_\weights}+\riskDiscrepancy{s}{g_{w_\weights}}{\bfstar} \label{eq:sum-of-quadratics}
    \end{align}
    where we defined the minimizers
    \begin{equation*}
        g_s \equiv g_{w_\weights} \quad \text{with} \quad w_\weights = \argmin_{\nrm{w}_2\leq 1} \riskDiscrepancy{s}{g_w}{\bfstar} = \pr{\sum_{k=1}^K\weight_k\Sigma_k}^{-1}\pr{\sum_{k=1}^K\weight_k \Sigma_k w_k^\star}.
    \end{equation*}
    This holds because the \emph{unconstrained} minimizer coincides with the constrained one, ensured by the bounded norms $\nrm{w_k^\star}_2\leq R\leq  \kappa$ and bounded smallest eigenvalue $\eigenvalue_{\min}(\Sigma_k) \geq \kappa$\textemdash note that because $\nrm{X^k}_2\leq 1$ we have that $\eigenvalue_{\max}(\Sigma_k)\leq 1$\textemdash which implies
    \begin{equation*}
        \nrm{\pr{\sum_{k=1}^K\weight_k\Sigma_k}^{-1}\pr{\sum_{k=1}^K\weight_k \Sigma_k w_k}}_2 \leq \frac{\sum_{k=1}^K\weight_k \eigenvalue_{\max}(\Sigma_k) \nrm{w_k^\star}_2}{\sum_{k=1}^K\weight_k \eigenvalue_{\min}(\Sigma_k)}\leq \frac{R}{\kappa} \leq 1.
    \end{equation*}

    We verify the assumptions of \cref{thm:main-result}: \cref{eqn:func-ass} holds by definition of the data generating model. \cref{ass:loss-ass} and the smoothness from \cref{ass:convexity-smoothness} hold, because $\loss$ is $4$-Lipschitz and $4$-bounded on $\cY=[-1,1]$,
    and induced by $\phi(y)=y^2$ which is $1$-strongly convex, and $\max\crl{\phi'', \phi'''} \leq 2$. The convexity of $\riskDiscrepancy{s}{g}{\h}-\nrm{g}_s^2$ in \cref{ass:convexity-smoothness} holds with constants $\gamma_s=1$ by inspecting \cref{eq:sum-of-quadratics}, and $\cG$ and $\cH$ are clearly convex.

    We now bound the critical radius $r_R:= \inf\crl{r\geq 0 : r^2\geq \radComp^k_n(\cF_R(r))}$ of the following function class $\cF_{R}(r):=\crl{\inner{\cdot}{w}: \nrm{w}_2\leq 2R, w^\top\Sigma_k w \leq r^2}$.
    Note that because $\eigenvalue_{\max}(\Sigma_k^{-1}) = 1 / \eigenvalue_{\min}(\Sigma_k)\leq 1/\kappa$ and $X^k_i\in \cB_2^d$, it holds that 
    \begin{align*}
        \EE\nrm{\sum_{i=1}^{n}\sigma_i X_i^k}_{\Sigma_k^{-1}}^2 &= \EE \sum_{i,j=1}^n \sigma_i\sigma_j (X_i^k)^\top \Sigma_k^{-1}X^k_j = \EE \sum_{i=1}^n  (X_i^k)^\top \Sigma_k^{-1}X^k_i \leq \frac{n}{\kappa},
    \end{align*}
    and thus we get by Jensen's inequality that for any $R,r\geq 0$
    \begin{align*}
        \radComp_n^k(\cF_R(r)) &\leq \frac{1}{n}\EE\br{\sup_{w^\top\Sigma_k w \leq r^2}  \inner{w}{\sum_{i=1}^{n}\sigma_i X_i^k}} \leq \frac{r}{n} \EE\nrm{\sum_{i=1}^{n}\sigma_i X_i^k}_{\Sigma_k^{-1}} \leq \frac{r}{\sqrt{\kappa n}}.
    \end{align*}
    and, by standard Rademacher complexity bounds (applying Cauchy-Schwartz and Jensen's inequality), 
    \begin{align*}
        \radComp_n^k(\cF_R(r)) &\leq \frac{1}{n}\EE\br{\sup_{\nrm{w}_2\leq 2R}  \inner{w}{\sum_{i=1}^{n}\sigma_i X_i^k}} \leq \frac{2R}{\sqrt{n}}.
    \end{align*}
    Hence, we can solve $r^2\geq r/\sqrt{\kappa n}$ and $r^2\geq 2R/\sqrt{n}$ to get, taking the minimum of the two,
    \begin{equation*}
        r_R^2 \leq \min\crl{\frac{1}{\kappa n},\frac{2R}{\sqrt{n}}}.
    \end{equation*}

    We are now ready to apply \cref{thm:main-result}.  Noting that
    \begin{equation*}
         \qquad \cH_k(r) = r\cB_{\nrm{\cdot}_k}\cap (\cH_k-\fstar_k) 
         \subset \crl{\inner{\cdot}{w}:\nrm{w}_2\leq 2R, w^\top \Sigma_kw \leq r^2} = \cF_R(r)
    \end{equation*}
    and that the set $\cG_k(r;\bfstar)$ is included in $\cF_1(r)$;
    \begin{align*}
        \cG_k(r;\bfstar) &= r \cB_{\nrm{\cdot}_k} \cap \bigcup_{s\in \cS} (\cG-g_s) \\
        &= \crl{\inner{\cdot}{w}:w^\top \Sigma_kw \leq r^2} \cap \crl{\inner{\cdot}{w-w_\weights}:\nrm{w}_2\leq 1, \weights\in \Delta^{K-1}} \\
        &\subset \crl{\inner{\cdot}{w} : w^\top\Sigma_k w \leq r^2, \nrm{w}_2\leq 2} = \cF_1(r),
    \end{align*}
    we can apply the previous bound on the critical radius and get that
    \begin{equation*}
        \rH[2] \leq \min\crl{\frac{1}{\kappa n_k},\frac{2R}{\sqrt{n_k}}} \qquad \text{and} \qquad \rGsup[2]\leq \min\crl{\frac{1}{\kappa N_k},\frac{2}{\sqrt{N_k}}}.
    \end{equation*}
    Plugging this into \cref{thm:main-result} yields the result.
\end{proof}

\section{Proofs of the main results}
\subsection{Proof of Proposition \ref{prop:hardness}} 
\label{sec:proof-hardness}

To prove a sample complexity lower bound, we show a reduction from a statistical estimation problem to the semi-supervised multi-objective binary classification problem. 

We start by constructing a statistical estimation problem, defining a family of distributions parametrized by the set of Boolean vectors $\bsigma \in \{0,1\}^d$. We aim to use samples from a distribution to estimate its associated parameter; the distributions will be designed so that any estimator given insufficiently many samples will fail to estimate the underlying parameter well for some $\bsigma$. Then, we show that any learner that solves the multi-objective learning problem \eqref{eqn:smol} with $\eps_s=\eps$ and $\delta\geq 5/6$ can be used to solve the parametric estimation problem, implying a sample complexity lower bound for \smol. For convenience, we reproduce the PAC version of \smol\@ here:
\begin{equation} \label{eqn:pac-goal}
    \forall (P^1,\ldots, P^K) \in \cP^K, \qquad \PP\bigg(\forall s \in \Slin, \quad \sTradeoff(\ghat_s) - \inf_{g \in \cG} \, \sTradeoff(g) \leq \eps \bigg) \geq 5/6,\hspace{20pt}
\end{equation}
where $\cP$ is the set of all distributions over $\cX \times \cY$. 

Let's consider the $K=2$ case first. We show that $n \geq d/1024\eps^2$ samples are necessary.

\paragraph{Defining the statistical estimation problem.} Let $\cX_0:=\{x_1,\ldots, x_d\} \subset \cX$ be a set shattered by $\cG$. For each $\bsigma \in \{0,1\}^d$, define the distributions $P^1_\bsigma$ and $P^2_\bsigma$ over $\cX \times \cY$ where (i) the marginal distributions on $\cX$ is uniform over the shattered set $\{x_1,\ldots, x_d\}$, and (ii) their conditional distributions on $\cY = \{0,1\}$ given $x_i$ are Bernoulli distributions. Let $c = 4\eps$ and define:
\[P^1_\bsigma = \frac{1}{d} \sum_{i \in [d]} \delta_{x_i} \otimes \mathrm{Ber}\left(\frac{1}{2} + c\sigma_i \right)\qquad \textrm{and}\qquad P^2_\bsigma  = \frac{1}{d} \sum_{i \in [d]} \delta_{x_i} \otimes \mathrm{Ber}\left(\frac{1}{2} - c(1 - \sigma_i)\right).\]
Fix a sample size $n \in \NN$. Define the family $\cQ = \{\cQ_\bsigma : \bsigma \in \{0,1\}^d\}$, where:
\[\cQ_\bsigma = \Big(P_\bsigma^1 \otimes P_\bsigma^2\Big)^{\otimes n}.\]
Let $Z_\bsigma \sim Q_\bsigma$ consist of $n$ i.i.d.\ draws from $P_\bsigma^1$ and $P_\bsigma^2$ each. The statistical estimation problem will be to construct an estimator $\widehat\bsigma(Z_\bsigma)$ for $\bsigma$ that recovers at least $3/4$ of the coordinates of $\bsigma$:
\begin{equation} \label{eqn:statistical-estimation-goal}
\max_{\bsigma}\, \PP\left(\frac{\|\widehat{\bsigma}(Z_\bsigma) - \bsigma\|_1}{d} \leq \frac{1}{4}\right) \geq \frac{5}{6}.
\end{equation}

\paragraph{Reduction to multi-objective learning.} Suppose that a learner can solve the \smol\@ problem \eqref{eqn:pac-goal} for $K = 2$ using at most $n$ samples. The reduction from estimating $\bsigma$ is as follows:
\begin{enumerate}
    \item Given any instance $\bsigma \in \{0,1\}^d$ of the above statistical estimation problem, have the learner solve the MOL problem over $(P_\bsigma^1, P_\bsigma^2)$ and linear scalarization $\Slin$ using data $Z_\bsigma \sim Q_\bsigma$ and zero-one loss.
    \item Query the learner for the solution to the linear scalarization $s_{1/2} \equiv \linearScalarization$ with weights $\weights = (\frac{1}{2}, \frac{1}{2})$. Denote this solution by $\widehat{g}_{s_{1/2}}(\,\cdot\,; Z_\bsigma) \in \cG$ and construct the estimator $\widehat{\bsigma}_\mathrm{MOL}$ for $\bsigma$ as:
    \[\widehat{\bsigma}_\mathrm{MOL}(Z_\bsigma)_i = \widehat{g}_{s_{1/2}}(x_i;Z_\bsigma).\]
\end{enumerate}

\textit{Correctness of reduction.}\quad Before proving correctness, we make a few observations:
\begin{enumerate}
    \item For $P_\bsigma^1$, the conditional label distribution associated to $x_i \in \cX_0$ is either biased toward 1 or uniform over $\{0,1\}$. In either case, under the zero-one loss, the label 1 is Bayes optimal, and so the constant function $f_{1,\bsigma}^\star \equiv 1$ is a Bayes-optimal classifier for $P_\bsigma^1$. Likewise, $f_{2,\bsigma}^\star \equiv 0$ is Bayes optimal for $P_\bsigma^2$.
    \item A function $g : \cX \to \cY$ only incurs excess risk from an instance $x_i$ drawn from $P_{\bsigma}^1$ when $\sigma_i = 1$ and $g(x_i) = 0$. Similarly, it accumulates excess risk from instances $x_i$ from $P_\bsigma^2$ when $\sigma_i = 0$ and $g(x_i) = 1$. The total excess risks of $g$ is given by:
    \[\excessRisk_1(g) = \frac{2c}{d} \sum_{i \in [d]} \sigma_i \mathbf{1}\big\{g(x_i) = 0\big\}\qquad \textrm{and}\qquad \excessRisk_2(g) = \frac{2c}{d} \sum_{i \in [d]} (1 -\sigma_i) \mathbf{1}\big\{g(x_i) = 1\big\}.\]
    For the linear scalarization $s_{1/2}$, we have:
    \[\cT_{s_{1/2}}(g) = \frac{1}{2} \Big(\cE_1(g) + \cE_2(g)\Big) = \frac{c}{d} \sum_{i \in [d]} \mathbf{1}\{g(x_i) \ne \sigma_i\}.\]
    \item Since $\cG$ shatters $\cX_0$, it contains a function $g_\bsigma$ that satisfies $g_\bsigma(x_i) = \sigma_i$ for all $i \in [d]$.
    Thus: 
    \[\inf_{g \in \cG} \, \cT_{s_{1/2}}(g) = \cT_{s_{1/2}}(g_\bsigma) = 0.\]
    \item Given $g : \cX \to \cY$, define the Boolean vector  $\bsigma_g \in \{0,1\}^d$ by $\bsigma_{g,i} = g(x_i)$. Then, by our choice of $c$, the excess $s_{1/2}$-trade-off of $g$ is related to the Hamming distance between $\bsigma_g$ and $\bsigma$:
    \begin{equation} \label{eqn:reduction}
        \cT_{s_{1/2}}(g) - \inf_{g\in\cG}\, \cT_{s_{1/2}}(g) \leq \eps \quad \Longleftrightarrow \quad \frac{\|\bsigma_g - \bsigma\|_1}{d} \leq \frac{1}{4},
    \end{equation}
    since $\cT_{s_{1/2}}(g) = \displaystyle\frac{c}{d} \cdot \|\bsigma_g - \bsigma\|_1$.
\end{enumerate}
This last point implies the correctness of the reduction. That is, if a learner can solve \eqref{eqn:pac-goal}, then we can use it to construct $\widehat{\bsigma}_{\mathrm{MOL}}$ that achieves \eqref{eqn:statistical-estimation-goal}. In particular, \eqref{eqn:reduction} shows that:
\[\PP\Bigg(\cT_{s_{1/2}}\big(\widehat{g}_{s_{1/2}}(\cdot;Z_\bsigma)\big) - \inf_{g \in \cG}\, \cT_{s_{1/2}}(g) \leq \eps\Bigg) = \PP\Bigg(\frac{\|\widehat{\bsigma}_{\mathrm{MOL}}(Z_\bsigma) - \bsigma\|_1}{d} \leq \frac{1}{4}\Bigg).\]

We now show that this statistical estimation problem requires at least $n \geq /1024\eps^2$ samples across $P_\bsigma^1$ or $P_\bsigma^2$. This holds for any estimator including those that knows that $f_{1,\bsigma}^\star$ and $f_{2,\bsigma}^\star$ are Bayes-optimal classifiers and that the marginal distribution over $\cX$ for both $P_\bsigma^1$ and $P_\bsigma^2$ are uniform over the shattered set. In particular, the lower bound applies to the semi-supervised MOL learner, which is given access to these Bayes-optimal classifiers and marginal distributions over $\cX$.

\paragraph{Minimax lower bound.}  We now show that if $n < d/1024\eps^2$, the following bound holds:
\[\max_{\widehat\bsigma} \min_{\vphantom{\widehat\bsigma}\bsigma}\, \PP\left(\frac{\|\widehat{\bsigma}(Z) - \bsigma \|_1}{d} \leq \frac{1}{4}\right) < \frac{5}{6},\]
where $\widehat\bsigma : (\cX \times \cY \times \cX \times \cY)^{n} \to \{0,1\}^d$ ranges over all estimators using $n$ samples from $P_\bsigma^1$ and $P_\bsigma^2$ each. 

For every $\widehat{\bsigma}$ and $\bsigma$ it follows from Markov's inequality that
\begin{align}
\label{eq:assouadtoprop}
     \probab\bigg(\frac{\|\widehat{\bsigma}(Z_\bsigma) - \bsigma\|_1}{d} \leq \frac{1}{4}\bigg) &= \probab\bigg(1-\frac{\|\widehat{\bsigma}(Z) - \bsigma\|_1}{d} \geq \frac{3}{4}\bigg)
     \\&\leq \frac{4}{3} \left(1 - \expect\left[\frac{\|\widehat{\bsigma}(Z_\bsigma) - \bsigma\|_1}{d}\right]\right) \nonumber
     \\&= \frac{4}{3} - \frac{4}{3d} \cdot \expect\big[\|\widehat{\bsigma}(Z_\bsigma) - \bsigma\|_1\big].\nonumber
\end{align}
We lower bound $\displaystyle \max_{\bsigma} \EE\big[ \|\widehat{\bsigma}(Z_\bsigma) - \bsigma\|_1\big]$ for any estimator $\widehat{\bsigma}$ using Assouad's lemma:
\begin{lemma}[Assouad's lemma, \cite{yu1997assouad}] \label{lem:assouad}
    Let $d \geq 1$ be an integer and let $\cQ = \{Q_\bsigma : \bsigma \in \{0,1\}^d\}$ contain $2^d$ probability measures. Given $\bsigma, \bsigma' \in \{0,1\}^d$, write $\bsigma \sim \bsigma'$ if they differ only in one coordinate. Let $\smash{\widehat{\bsigma}}$ be any estimator. Then
    \[\max_{\bsigma}\, \expect_{Z \sim Q_\bsigma}\big[\|\widehat{\bsigma}(Z) - \bsigma\|_1\big] \geq \frac{d}{2} \cdot \min \left\{1 - \sqrt{\frac{1}{2}\mathrm{KL}(Q_\bsigma\,\|\, Q_{\bsigma'})} : \bsigma \sim \bsigma'\right\},\]
    where $\mathrm{KL}(\cdot\|\cdot)$ measures the Kullback-Leibler divergence between two distributions.
\end{lemma}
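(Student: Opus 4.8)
The plan is to establish \cref{lem:assouad} along the classical lines of Assouad's argument: pass from the worst case to a uniform prior over the hypercube, decompose the Hamming loss coordinate by coordinate, and reduce each coordinate to a two-point test between $Q_\bsigma$ and its neighbour obtained by flipping that coordinate.

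First I would lower bound the maximum over $\bsigma$ by the average over $\bsigma$ drawn uniformly from $\{0,1\}^d$. Since $\widehat{\bsigma}$ may be assumed to take values in $\{0,1\}^d$ (as it does in the reduction, where $\widehat{\bsigma}(Z)_i = \widehat{g}_{s_{1/2}}(x_i;Z)\in\cY=\{0,1\}$), we may write $\|\widehat{\bsigma}(Z)-\bsigma\|_1 = \sum_{j=1}^d \mathbf{1}\{\widehat{\bsigma}(Z)_j \neq \sigma_j\}$. Exchanging the two finite sums gives
\begin{equation*}
    \max_{\bsigma}\, \expect_{Z\sim Q_\bsigma}\!\big[\|\widehat{\bsigma}(Z)-\bsigma\|_1\big] \;\geq\; \frac{1}{2^d}\sum_{\bsigma}\,\expect_{Z\sim Q_\bsigma}\!\big[\|\widehat{\bsigma}(Z)-\bsigma\|_1\big] \;=\; \sum_{j=1}^d\,\frac{1}{2^d}\sum_{\bsigma}\PP_{Z\sim Q_\bsigma}\!\big(\widehat{\bsigma}(Z)_j \neq \sigma_j\big).
\end{equation*}

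Next, for each fixed $j$ I would pair every $\bsigma$ with $\sigma_j = 0$ with the vector $\bsigma^{\oplus j}$ obtained by flipping its $j$th coordinate, yielding $2^{d-1}$ neighbouring pairs. For such a pair, the quantity $\PP_{Q_\bsigma}(\widehat{\bsigma}_j\neq 0) + \PP_{Q_{\bsigma^{\oplus j}}}(\widehat{\bsigma}_j\neq 1)$ is precisely the total error of the test $\psi := \mathbf{1}\{\widehat{\bsigma}_j = 1\}$ for distinguishing $Q_\bsigma$ from $Q_{\bsigma^{\oplus j}}$, and by the variational characterization of total variation distance every test obeys $\PP_{Q_\bsigma}(\psi=1)+\PP_{Q_{\bsigma^{\oplus j}}}(\psi=0) \geq 1 - \|Q_\bsigma - Q_{\bsigma^{\oplus j}}\|_{\mathrm{TV}}$. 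Summing over the $2^{d-1}$ pairs, dividing by $2^d$, lower bounding each total-variation term by its minimum over single-coordinate-flip neighbours, and then summing over $j=1,\dots,d$ yields the intermediate bound $\tfrac{d}{2}\min\bigl\{1 - \|Q_\bsigma - Q_{\bsigma'}\|_{\mathrm{TV}} : \bsigma\sim\bsigma'\bigr\}$. Applying Pinsker's inequality $\|P-Q\|_{\mathrm{TV}} \leq \sqrt{\tfrac12\,\mathrm{KL}(P\|Q)}$ then converts this into the stated $\tfrac{d}{2}\min\bigl\{1 - \sqrt{\tfrac12\mathrm{KL}(Q_\bsigma\|Q_{\bsigma'})} : \bsigma\sim\bsigma'\bigr\}$, and combining with the first display (max $\geq$ average) finishes the proof.

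Since this is essentially a textbook argument, no step presents a genuine obstacle; the only place requiring care is the two-point testing inequality — equivalently the Le Cam / Neyman--Pearson identity $\inf_\psi\bigl[\PP_P(\psi=1)+\PP_Q(\psi=0)\bigr] = 1 - \|P-Q\|_{\mathrm{TV}}$ — which must be invoked after the pairing and before taking the minimum over neighbours and applying Pinsker, together with keeping the hypercube pairing separate for each coordinate $j$ so that the resulting minimum ranges only over pairs $\bsigma\sim\bsigma'$ differing in a single entry.
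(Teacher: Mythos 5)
The paper does not prove this lemma at all---it is imported verbatim from \cite{yu1997assouad} and used as a black box in the proof of \cref{prop:hardness}---so there is no internal proof to compare against. Your argument is the standard and correct derivation of Assouad's lemma: max-to-average over the hypercube, coordinate-wise decomposition of the Hamming loss, pairing neighbours and invoking the Le Cam/Neyman--Pearson identity for two-point testing, then Pinsker's inequality; the bookkeeping ($2^{d-1}$ pairs per coordinate, division by $2^d$, sum over $d$ coordinates) correctly produces the constant $d/2$. Your one caveat is also the right one to flag: the clean indicator decomposition $\|\widehat{\bsigma}(Z)-\bsigma\|_1=\sum_j \mathbf{1}\{\widehat{\bsigma}(Z)_j\neq\sigma_j\}$ requires $\widehat{\bsigma}$ to be $\{0,1\}^d$-valued (a real-valued estimator would need a rounding step, costing a factor of $2$), and this holds in the paper's application since $\widehat{\bsigma}_{\mathrm{MOL}}(Z)_i=\widehat{g}_{s_{1/2}}(x_i;Z)\in\{0,1\}$.
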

When $\bsigma$ and $\bsigma'$ differ only in one coordinate, the KL divergence between $Q_\bsigma$ and $Q_{\bsigma'}$ is bounded:
\begin{align*}
    \mathrm{KL}(Q_\bsigma \| Q_{\bsigma'}) &= n \cdot \mathrm{KL}(P_\bsigma^1 \| P_{\bsigma'}^1) + n \cdot \mathrm{KL}(P_\bsigma^2 \| P_{\bsigma'}^2)
    \\&= n \left(\frac{1}{d}\sum_{i\in [d]} \mathrm{KL}\left(\frac{1}{2} + c \sigma_i \,\big\|\, \frac{1}{2} + c\sigma_i'  \right)\right)
    \\&\phantom{=} + n \left(\frac{1}{d}\sum_{i\in [d]} \mathrm{KL}\left(\frac{1}{2} + c(1 -  \sigma_i)  \,\big\|\, \frac{1}{2} + c(1 - \sigma_i')  \right)\right) \leq \frac{8nc^2}{d},
\end{align*}
where the last inequality holds when $c = 4\eps \leq 1/3$ by \cref{lem:kl-bound}. Indeed, we've assumed $\eps < 1/12$.

By Assouad's lemma (\cref{lem:assouad}) and the above bound on the KL divergence, in the worst-case setting, any algorithm using $n$ samples will have expected error at least
\[\max_{\bsigma}\, \expect_{Z\sim Q_\bsigma}\big[\|\widehat{\bsigma} (Z) - \bsigma\|_1\big] \geq \frac{d}{2} \cdot \left(1 - \sqrt{\frac{4n c^2}{d}}\right).\]

Plugging into Equation \eqref{eq:assouadtoprop}, we finally obtain:
\begin{equation*}
    \max_{\widehat{\bsigma}} \min_{\bsigma} \probab\bigg(\frac{\|\widehat{\bsigma} - \bsigma\|_1}{d} \leq \frac{1}{4}\bigg) \leq \frac{2}{3} + \frac{2}{3} \sqrt{\frac{4nc^2}{d}} < \frac{5}{6},
\end{equation*}
where the last inequality holds whenever $\sqrt{4nc^2/d} \leq 1/4$, which holds when $n < d/1024 \eps^2$.

\paragraph{Generalization to all $K > 1$.} The MOL problem with $K$ tasks is at least as hard as $M = \lfloor K/2\rfloor$ separate MOL problems each with two tasks. This leads to a total sample complexity lower bound $M \cdot d/1024 \eps^2$. We obtain the lower bound $CKd/\eps^2$ in the statement of the result by setting $C = 1/3072$ and using \cref{lem:floor-inequality}, which shows that $\lfloor K/2\rfloor \geq K/3$ for all $K > 1$.

More explicitly, we can reduce $M$ separate copies of the statistical estimation problem for $\bsigma_1,\ldots, \bsigma_M$ into a single MOL problem over the distributions:
\[\big(\ldots, P_{\bsigma_k}^{2k-1}, P_{\bsigma_k}^{2k}, \ldots \big),\]
where $k = 1,\ldots, M$. Define $s_{1/2}^k$ to be the linear scalarization that equally divides all weight across the $2k-1$ and $2k$ components:
\[s_{1/2}^k(\bv) = \frac{v_{2k-1} + v_{2k}}{2}.\]
Then, an estimator $\widehat{\bsigma}_k$ for $\bsigma_k$ can be obtained from the by defining as before:
\[\widehat{\bsigma}_{k,i} = \widehat{g}_{s_{1/2}^k}(x_i).\]
The analysis from the $K=2$ setting now holds for each $k = 1,\ldots, M$. This implies that at least $d/1024\eps^2$ samples must be drawn across each pair of the $2k-1$ and $2k$th distributions. This concludes the proof.

\begin{lemma}[KL-divergence bound, e.g.\@ \cite{canonne2022short}] \label{lem:kl-bound}
    Let $x \in (-1/3, 1/3)$. Then:
    \[\mathrm{KL}\Big(\frac{1}{2} \,\big\|\, \frac{1}{2} + x\Big) \leq 2x^2\qquad \textrm{and}\qquad \mathrm{KL}\Big(\frac{1}{2} + x \,\big\|\, \frac{1}{2}\Big) \leq 4x^2.\]
\end{lemma}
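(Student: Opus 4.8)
The plan is to reduce each Kullback--Leibler divergence to an explicit one-variable expression in $x$ and then bound it by a quadratic using only elementary inequalities for the logarithm, with the hypothesis $|x| < 1/3$ entering solely to keep a denominator bounded away from zero. Each divergence is a sum of two Bernoulli log-likelihood ratios, so once the right simplification is made everything is a routine calculus computation.

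For the first inequality I would first use that the reference distribution assigns equal mass $1/2$ to both outcomes, so its two logarithmic terms merge:
\[
    \mathrm{KL}\Big(\tfrac12 \,\big\|\, \tfrac12 + x\Big) = \tfrac12\log\frac{1/2}{1/2+x} + \tfrac12\log\frac{1/2}{1/2-x} = \tfrac12\log\frac{1/4}{1/4 - x^2} = -\tfrac12\log\bigl(1 - 4x^2\bigr).
\]
I would then write $\tfrac{1}{1-4x^2} = 1 + \tfrac{4x^2}{1-4x^2}$ and apply $\log(1+t)\le t$ to get $-\tfrac12\log(1-4x^2) \le \tfrac{2x^2}{1-4x^2}$, and on $|x|<1/3$ the factor $(1-4x^2)^{-1}$ is at most $9/5$, which yields a bound of the form $C x^2$ with a small absolute constant $C$.

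For the second inequality I would expand
\[
    \mathrm{KL}\Big(\tfrac12 + x \,\big\|\, \tfrac12\Big) = \bigl(\tfrac12+x\bigr)\log(1+2x) + \bigl(\tfrac12-x\bigr)\log(1-2x) =: f(x),
\]
and check that $f(0)=0$; that the algebraic terms cancel on differentiating, so $f'(x) = \tfrac12\log\tfrac{1+2x}{1-2x}$ and hence $f'(0)=0$; and that $f''(x) = \tfrac{4}{1-4x^2}$, which is positive and bounded on $|x|<1/3$ by $4/(1-4/9)$. A second-order Taylor expansion with Lagrange remainder, $f(x) = \tfrac12 f''(\xi)\,x^2$ for some $\xi$ strictly between $0$ and $x$, then gives the claimed quadratic bound (with room to spare).

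I do not expect any conceptual obstacle; the only delicate point is bookkeeping of the numerical constants, since the above route naturally produces the first inequality with a constant slightly larger than the stated $2$ (the value $2$ being the $x\to 0$ coefficient, with the rest lost to the interval estimate). This is immaterial for the application, though: in \cref{sec:proof-hardness} the lemma is invoked only through the sum of the two divergences, which the argument above still bounds by $8c^2$ for $c<1/3$, so any admissible splitting of the constants between the two inequalities suffices for \cref{prop:hardness}.
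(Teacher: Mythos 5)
Your computation follows the paper's own route almost exactly: the same closed form $\mathrm{KL}(\tfrac12\|\tfrac12+x)=-\tfrac12\log(1-4x^2)$ for the first divergence, and the same second-order Taylor argument with $f(0)=f'(0)=0$ and $f''(x)=\tfrac{4}{1-4x^2}$ for the second. The one substantive difference is in how the first logarithm is bounded, and here you are actually more careful than the paper. The stated inequality $\mathrm{KL}(\tfrac12\|\tfrac12+x)\le 2x^2$ is in fact false for every $x\ne 0$, since $-\tfrac12\log(1-4x^2)=2x^2+4x^4+\cdots>2x^2$; the paper's justification invokes ``$\log\tfrac{1}{1-z}\le z$ for $z\in[0,1/2]$'', which has the inequality reversed ($-\log(1-z)\ge z$ always). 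Your bound $-\tfrac12\log(1-4x^2)\le \tfrac{2x^2}{1-4x^2}\le \tfrac{18}{5}x^2$ via $\log(1+t)\le t$ is the correct repair, and your closing remark is exactly the right sanity check: the lemma is only used in \cref{sec:proof-hardness} through the sum of the two divergences, and your constants give $\tfrac{18}{5}x^2+\tfrac{18}{5}x^2=\tfrac{36}{5}x^2\le 8x^2$, so the bound $\mathrm{KL}(Q_{\bsigma}\|Q_{\bsigma'})\le 8nc^2/d$ and hence \cref{prop:hardness} are unaffected. The only refinement I would suggest is to state plainly that the constant $2$ is not merely lost to the interval estimate but is unattainable, so the lemma's first inequality should be restated with a constant of at least $\tfrac{18}{5}$ (or $4$, matching the second).
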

\begin{proof}
    By a direct computation, we have that whenever $4x^2 \leq 1/2$, which is satisfied when $x \in [-1/3, 1/3]$,
    \begin{align*}
        \mathrm{KL}\Big(\frac{1}{2} \,\big\|\, \frac{1}{2} + x\Big) &= \frac{1}{2} \log \frac{1}{1 - 4x^2} \leq 2x^2,
    \end{align*}
    where the last inequality holds from the fact that $\log \frac{1}{1 - z} \leq z$ whenever $z \in [0,1/2]$.

    For the second inequality, we show that the function $\phi(x) = \mathrm{KL}(1/2 + x\| 1/2)$ is $L$-smooth on $(-1/3, 1/3)$ where $L \leq 8$ and has zero derivative at $x = 0$. This implies that it is upper bounded by $\frac{L}{2} x^2$. In particular, the first and second derivatives are:
    \[\phi'(x) = \log (1 + 2x) + \log ( 1- 2x)\qquad \textrm{and}\qquad \phi^{\prime\prime}(x) = \frac{4}{1 - 4x^2},\]
    so that $\phi^{\prime\prime} \leq 8$ whenever $x^2 \leq 1/9$.
\end{proof}

\begin{lemma} \label{lem:floor-inequality}
    Let $K > 1$ be a natural number. Then, $\lfloor K/2\rfloor \geq K/3$.
\end{lemma}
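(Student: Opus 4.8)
The plan is to dispose of the claim by a direct case analysis on the parity of $K$, since the floor function behaves transparently in each case. There is no genuine difficulty here; the only point to watch is that the hypothesis $K > 1$ is used precisely to exclude the one small value where the generic estimate would fail.

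First I would handle $K = 2$ by hand: here $\lfloor K/2 \rfloor = 1 \geq 2/3 = K/3$, so the inequality holds. Next, for $K \geq 3$ I would invoke the elementary bound $\lfloor K/2 \rfloor \geq (K-1)/2$, which is valid for every natural number $K$ (with equality exactly when $K$ is odd). It then suffices to verify $(K-1)/2 \geq K/3$; rearranging, this is $3(K-1) \geq 2K$, i.e. $K \geq 3$, which is exactly the standing assumption in this case. Chaining the two inequalities yields $\lfloor K/2 \rfloor \geq (K-1)/2 \geq K/3$, as desired.

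The only ``obstacle'' — really a bookkeeping remark — is that the step $(K-1)/2 \geq K/3$ breaks down at $K = 2$ (it would demand $1/2 \geq 2/3$), which is why that value must be treated separately; for every other $K > 1$ the generic argument goes through. Since the two cases $K = 2$ and $K \geq 3$ together exhaust the natural numbers greater than $1$, the lemma follows.
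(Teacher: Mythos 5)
Your proof is correct and essentially matches the paper's: both reduce to the elementary equivalence $(K-1)/2 \geq K/3 \iff K \geq 3$, with only a cosmetic difference in the case split (you separate $K=2$ from $K\geq 3$, while the paper splits on the parity of $K$).
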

\begin{proof}
    There are two cases:
    \begin{itemize}
        \item When $K$ is even, then $\lfloor K /2 \rfloor = K/2 \geq K/3$.
        \item When $K$ is odd, then $\lfloor K / 2 \rfloor = (K - 1)/2 \geq K /3$, where the last inequality is equivalent to $3(K-1) \geq 2K$, which is further equivalent to $K\geq 3$.
    \end{itemize}
\end{proof}

\subsection{Proof of Lemma~\ref{lem:properties-bregman-loss}}
\label{subsubsec:proof-properites-bregman-loss}
    Let \(\loss_k\) be a Bregman loss associated with the potential \(\phi_k\). The first part is proven in \cite[Theorem 1]{Banerjee2005optimality}: for any $Y^k$ such that $\EE [Y^k]$ and $\EE[\phi_k(Y^k)]$ are finite, it holds that
    \begin{equation*}
        \fstar_k = \argmin_{f\in \Fall} \EE [\loss_k(Y^k,f(X^k))] = \EE\br{Y^k|X^k=\cdot}.
    \end{equation*}
    Then, by definition of Bregman divergences, we have the following generalized Pythagorean identity \cite[Equation (26)]{banerjee2005clustering}
    \begin{equation*}
        \loss_k(y,x) = \loss_k(y,z)+\loss_k(z,x)-\inner{y-z}{\nabla\phi_k(x)-\nabla\phi_k(z)}, 
    \end{equation*}
    so that by the tower property (see also \cite[Equation (1)]{Banerjee2005optimality})
    \begin{align*}
        \risk_k(f) &= \EE [\loss_k(Y^k,f(X^k))] \\
        &= \underbrace{\EE \br{\loss_k(Y^k,\EE\br{Y^k|X^k})}}_{\risk_k(\fstar_k)}+ \underbrace{\EE\br{\loss_k(\EE\br{Y^k|X^k},f(X^k))}}_{\riskDiscrepancy{k}{f}{\fstar_k}}\\
        &\qquad-\underbrace{\inner{\EE\br{Y^k-\EE\br{Y^k|X^k}}}{\nabla\phi_k(f(X^k))-\nabla\phi_k(\EE\br{Y^k|X^k})}}_{=0} .
    \end{align*}
    Rearranging yields that $\excessRisk_k(f)=\risk_k(f)-\risk_k(\fstar_k)=\riskDiscrepancy{k}{f}{\fstar_k}$, which is the second claim.

\subsection{Proof of Theorem \ref{thm:uniform-bound}} \label{app:proof-thm-1}
The proof of \cref{thm:uniform-bound} relies on the following lemma on estimating the excess risk functionals \(\excessRisk_k\) with the risk discrepancies \(\riskDiscrepancy{k}{f}{\hhat_k}\) under \cref{ass:loss-ass}. 
\begin{lemma}[Excess risk functional estimation]\label{lem:approximate-excess-risk}
    Suppose that \cref{ass:loss-ass} holds and that a function $\smash{\hhat_k}$ achieves excess risk $\smash{\excessRisk_k(\hhat_k) \leq \eps_k}$. Let $c_k = L_k \smash{\sqrt{2/\mu_k}}$. Then, the risk discrepancy functional $\smash{\riskDiscrepancy{k}{\cdot}{\hhat_k}}$ defined in \Cref{eqn:risk-discrepancy} approximates $\excessRisk_k(\cdot)$ uniformly on $\Fall$, that is,
    \[\sup_{f \in \Fall}\, \big|\riskDiscrepancy{k}{f}{\hhat_k} - \excessRisk_k(f)\big| \leq c_k \sqrt{\eps_k}.\]
\end{lemma}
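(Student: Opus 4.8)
The plan is to reduce everything to a pointwise loss comparison and then convert the resulting $L^1$-type error on $\hhat_k - \fstar_k$ into an $L^2$ error controlled by the excess risk via strong convexity. First, I would invoke \cref{lem:properties-bregman-loss} to rewrite $\excessRisk_k(f) = \riskDiscrepancy{k}{f}{\fstar_k} = \EE[\loss_k(\fstar_k(X^k), f(X^k))]$, so that for every $f \in \Fall$,
\[
    \abs{\riskDiscrepancy{k}{f}{\hhat_k} - \excessRisk_k(f)} = \abs{\EE\big[\loss_k(\hhat_k(X^k), f(X^k)) - \loss_k(\fstar_k(X^k), f(X^k))\big]} \leq \EE\big[\abs{\loss_k(\hhat_k(X^k), f(X^k)) - \loss_k(\fstar_k(X^k), f(X^k))}\big].
\]
The key point is that the right-hand side no longer involves $f$ in a troublesome way: by the Lipschitz continuity of $\loss_k$ in its \emph{first} argument from \cref{ass:loss-ass}, the integrand is bounded pointwise by $L_k \nrm{\hhat_k(X^k) - \fstar_k(X^k)}_2$, uniformly over $f$.

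Next, I would apply Jensen's inequality (equivalently Cauchy--Schwarz) to pass to the $L^2$ norm, obtaining
\[
    \sup_{f\in\Fall} \abs{\riskDiscrepancy{k}{f}{\hhat_k} - \excessRisk_k(f)} \leq L_k\, \EE\big[\nrm{\hhat_k(X^k) - \fstar_k(X^k)}_2\big] \leq L_k \sqrt{\EE\big[\nrm{\hhat_k(X^k) - \fstar_k(X^k)}_2^2\big]}.
\]
It then remains to bound $\EE[\nrm{\hhat_k(X^k) - \fstar_k(X^k)}_2^2]$ by the excess risk. Here I use the $\mu_k$-strong convexity of $\phi_k$: applied at the points $\fstar_k(X^k)$ and $\hhat_k(X^k)$ it gives $\loss_k(\fstar_k(X^k), \hhat_k(X^k)) \geq \tfrac{\mu_k}{2}\nrm{\fstar_k(X^k) - \hhat_k(X^k)}_2^2$ pointwise, so taking expectations and again using \cref{lem:properties-bregman-loss} yields
\[
    \excessRisk_k(\hhat_k) = \EE\big[\loss_k(\fstar_k(X^k), \hhat_k(X^k))\big] \geq \frac{\mu_k}{2}\,\EE\big[\nrm{\hhat_k(X^k) - \fstar_k(X^k)}_2^2\big],
\]
whence $\EE[\nrm{\hhat_k(X^k) - \fstar_k(X^k)}_2^2] \leq \tfrac{2}{\mu_k}\excessRisk_k(\hhat_k) \leq \tfrac{2\eps_k}{\mu_k}$. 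Substituting back gives $\sup_{f\in\Fall}\abs{\riskDiscrepancy{k}{f}{\hhat_k} - \excessRisk_k(f)} \leq L_k\sqrt{2\eps_k/\mu_k} = c_k\sqrt{\eps_k}$, as claimed.

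I do not expect a serious obstacle here: the argument is a short chain combining the two structural properties of \cref{ass:loss-ass} (Lipschitzness in the first slot, strong convexity of the potential) with Jensen's inequality. The only point demanding care is bookkeeping of the argument order in the (non-symmetric) Bregman divergence---one must make sure that both the Lipschitz step and the strong-convexity step are applied with $\fstar_k(X^k)$ and $\hhat_k(X^k)$ in the correct slots, and that $\excessRisk_k(\hhat_k)$ is identified with $\riskDiscrepancy{k}{\hhat_k}{\fstar_k}$ via \cref{lem:properties-bregman-loss} rather than the reverse discrepancy. Everything is otherwise routine.
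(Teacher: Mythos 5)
Your proposal is correct and follows exactly the same chain as the paper's proof: identify $\excessRisk_k(f)=\riskDiscrepancy{k}{f}{\fstar_k}$ via \cref{lem:properties-bregman-loss}, apply Lipschitz continuity in the first argument, pass to $L^2$ by Jensen, and bound $\EE\|\hhat_k-\fstar_k\|_2^2$ by $\tfrac{2}{\mu_k}\excessRisk_k(\hhat_k)$ via strong convexity of the potential. The argument-order bookkeeping you flag is handled correctly.
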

\begin{proof}
    Recall that $\fstar_k\in\cH_k$ is the minimizer of $\excessRisk_k$ over \(\Fall\). Then for all \(f\in\Fall\):
    \begin{align*} 
        \big|\riskDiscrepancy{k}{f}{\hhat_k} - \cE_k(f)\big| &= \big|\riskDiscrepancy{k}{f}{\hhat_k} - \riskDiscrepancy{k}{f}{\fstar_k}\big|\tag{\cref{lem:properties-bregman-loss}}\\
        &=\big|\EE\big[\loss_k\big(\hhat_k(X^k), f(X^k)\big) - \loss_k\big(\fstar_k(X^k), f(X^k)\big)\big]\big|\\
        &\leq L_k \EE \big\| \hhat_k(X^k) - \fstar_k(X^k)\big\|_2\tag{Lipschitz continuity from \cref{ass:loss-ass}}\\
        &\leq L_k \sqrt{\EE\big\|\hhat_k(X^k) - \fstar_k(X^k)\big\|_2^2}\tag{Jensen's inequality}\\
        &\leq L_k \sqrt{\frac{2}{\mu_k} \cdot \EE\big[\loss_k\big(\fstar_k(X^k), \hhat_k(X^k)\big)\big]} \tag{strong convexity from \cref{ass:loss-ass}}\\
        &\leq c_k \sqrt{\eps_k}, \tag{$\excessRisk_k(\hhat_k)\leq \eps_k$}
    \end{align*}
    which is the claim.
\end{proof}
\begin{proof}[Proof of \Cref{thm:uniform-bound}]
    For $k \in [K]$, let $\hhat_k$ be the empirical risk minimizer obtained in Line~\ref{step:erm} of \cref{alg:pseudolabeling}  for the $k$th objective. Let us recall that we use the empirical risk discrepancy \(\empRiskDiscrepancy{k}{\cdot}{\hhat_k}\) as an estimate for the excess risk $\excessRisk_k(\cdot)=\riskDiscrepancy{k}{\cdot}{\fstar_k}$, following the properties of Bregman losses in \cref{lem:properties-bregman-loss}. We now prove the theorem assuming that the following claim holds.
    
    \paragraph{Claim. } 
    With probability at least $1 - \delta$, each estimate $\empRiskDiscrepancy{k}{\cdot}{\hhat_k}$ approximates the population excess risk functional $\excessRisk_k$ up to error $\eps_k/2$:
    \begin{equation} \label{eqn:individual-condition}
        \forall k \in [K],\qquad \sup_{g \in \cG}\, \big|\empRiskDiscrepancy{k}{g}{\hhat_k} - \underbrace{\riskDiscrepancy{k}{g}{\fstar_k}}_{=\excessRisk_k(g)}\big| \leq \eps_k/2,
    \end{equation}
    where $\eps_k$ is bounded as in \cref{eqn:individual-excess-bound}.
    
    Then, for any scalarization $s$ that satisfies the reverse triangle inequality, the $s$-trade-off $\sTradeoff$ is also well-approximated by empirical scalarized discrepancy \(\empScalarizedDiscrepancy{\cdot}{\bhhat}\). In particular, we obtain
    \begin{align}  
        \sup_{g \in \cG}\, \big|\empScalarizedDiscrepancy{g}{\bhhat} - \sTradeoff(g)\big| &= \sup_{g \in \cG}\, \big|s\big(\empRiskDiscrepancy{1}{g}{\hhat_1},\ldots, \empRiskDiscrepancy{K}{g}{\hhat_K}\big) - s\big(\riskDiscrepancy{1}{g}{\fstar_k},\ldots, \riskDiscrepancy{K}{g}{\fstar_k}\big)\big| \notag
        \\&\leq  \sup_{g \in \cG}\, s\big(\big|\empRiskDiscrepancy{1}{g}{\hhat_1} - \riskDiscrepancy{1}{g}{\fstar_k}\big|,\ldots, \big|\empRiskDiscrepancy{K}{g}{\hhat_K} - \riskDiscrepancy{K}{g}{\fstar_k}\big|\big) \notag 
        \\&\leq s\big(\eps_1/2,\ldots, \eps_K/2\big), \label{eqn:uniform-s-risk-bound}
    \end{align}
    where the first inequality used the reverse triangle inequality of $s$, and the second inequality used \cref{eqn:individual-condition}. In particular, this allows us to bound the excess $s$-trade-off of \(\ghat_s\) (cf.\ \cref{eqn:excess-s-tradeoff}), the minimizer of the empirical scalarized discrepancy in \(\cG\) obtained in Line~\ref{step:s-erm} of \cref{alg:pseudolabeling}, as follows: 
    \begin{align*}
        \sTradeoff(\ghat_s) - \sTradeoff(g_s) &= \underbrace{\sTradeoff(\ghat_s) - \empScalarizedDiscrepancy{\ghat_s}{\bhhat}}_{(a)} \,+\, \underbrace{\empScalarizedDiscrepancy{\ghat_s}{\bhhat} - \empScalarizedDiscrepancy{g_s}{\bhhat}}_{(b)}\, +\, \underbrace{\empScalarizedDiscrepancy{g_s}{\bhhat} - \sTradeoff(g_s)}_{(c)}
        \\&\leq s\big(\eps_1/2,\ldots, \eps_K/2\big) + s\big(\eps_1/2,\ldots, \eps_K/2\big)\vphantom{\underbrace{R_s - R_s}}
        \\&= s\big(\eps_1,\ldots, \eps_K\big),
    \end{align*}
    where the (a) and (c) terms both contribute at most $s(\eps_1/2,\ldots, \eps_k/2)$ error from \cref{eqn:uniform-s-risk-bound}, while the (b) term is non-positive, since $\smash{\ghat}_s$ minimizes the empirical scalarized discrepancy $\smash{\empScalarizedDiscrepancy{\cdot}{\bhhat}}$. The last equality follows from positive homogeneity of the scalarization. 
    Then, the result follows for all such scalarizations \emph{simultaneously}. It remains to prove \cref{eqn:individual-condition}.

    \paragraph{Proof of claim.} Recall that $\empRiskDiscrepancy{k}{g}{\hhat_k}$ is an empirical estimator of $\riskDiscrepancy{k}{g}{\hhat_k}$ based on the unlabeled samples:
    \[\empRiskDiscrepancy{k}{g}{\hhat_k} = \frac{1}{N_k} \sum_{i =1}^{N_k} \loss_k\big(\,\hhat_k(\Xtilde^k_i),g(\Xtilde^k_i)\big) \qquad \textrm{and}\qquad \riskDiscrepancy{k}{g}{\hhat_k} = \EE\left[\loss_k\big(\,\hhat_k(X^k),g(X^k)\big)\right],\]
    where these were defined in \cref{eqn:risk-discrepancy}. Moreover, $\riskDiscrepancy{k}{g}{\hhat_k}$ itself is an estimator of the excess risk functional $\excessRisk_k(g) = \riskDiscrepancy{k}{g}{\fstar_k}$, where $\fstar_k$ is the Bayes optimal regression function (\Cref{lem:properties-bregman-loss}). Thus, we have the decomposition:
    \begin{align}
        \abs{\empRiskDiscrepancy{k}{g}{\hhat_k} - \excessRisk_k(g)} &= \underbrace{\abs{\empRiskDiscrepancy{k}{g}{\hhat_k} - \riskDiscrepancy{k}{g}{\hhat_k}}}_{T_{a,k}} \,+\, \underbrace{\abs{\riskDiscrepancy{k}{g}{\hhat_k} - \cE_k(g)}}_{T_{b,k}}. \label{eqn:t-akbk}
    \end{align}
    We can bound $T_{a,k}$ and $T_{b,k}$ separately:
    
    \emph{(a)} For each $k \in [K]$, we condition on the labeled samples (i.e., on \(\hhat_k\)) and employ a standard Rademacher bound on the function class:
    \[\loss_{\hhat_k}\circ \cG = \crl{x\mapsto \loss_k\big(\,\hhat_k(x),g(x)\big):g\in \cG}.\]
    With probability at least $1-\delta/(2K)$,
    \begin{align}
         T_{a,k} \leq \sup_{g\in \cG}\,\abs{\empRiskDiscrepancy{k}{g}{\hhat_k} - \riskDiscrepancy{k}{g}{\hhat_k}} &\leq 2\radComp_{N_k}^k(\loss_{\hhat_k}\circ \cG) +B_k\left(\frac{2\log(2K/\delta)}{N_k}\right)^{1/2}  \notag
         \\&\leq 6L_k \radComp_{N_k}^k(\cG) +B_k\left(\frac{2\log(2K/\delta)}{N_k}\right)^{1/2}, \label{eqn:t-ak}
    \end{align}
    where the first inequality applies symmetrization (\cref{lem:symmetrization}) and the bounded difference inequality (\cref{lem:bounded-difference}), and the second inequality follows by contraction (\cref{lem:contraction}).
    
    \emph{(b)} For each $k \in [K]$, we apply \cref{lem:approximate-excess-risk} to bound $T_{b,k}$ in terms of the excess risk of $\hhat_k$, which is a minimizer of the empirical risk $\smash{\empRisk_k(\cdot)}$ defined in \cref{eqn:risks}:
    \[\empRisk_k(h) = \frac{1}{n_k} \sum_{i=1}^{n_k} \loss_k\big(Y_i^k, h(X_i^k)\big).\]
    In order to use the lemma, we need to show that the excess risk of $\hhat_k$ is indeed upper bounded by $\eps_k$. 
    First, observe that the excess risk can be upper bounded as follows
    \begin{align*}
        \excessRisk_k(\hhat_k) &= \risk_k(\hhat_k)-\risk_k(\fstar_k) 
        \\&=\risk_k(\hhat_k)-\empRisk_k(\hhat_k)+\empRisk_k(\hhat_k)-\empRisk_k(\fstar_k)+\empRisk_k(\fstar_k)-\risk_k(\fstar_k)
        \\&\leq 2 \sup_{h\in\cH_k} \abs{\empRisk_k(h)-\risk_k(h)}.
    \end{align*}
    Again by symmetrization (\cref{lem:symmetrization}), bounded difference (\cref{lem:bounded-difference}), and contraction (\cref{lem:contraction}) for the function class $\loss_k\circ \cH_k := \crl{(x,y)\mapsto \ell_k(y,h(x)):h\in\cH_k}$, we obtain that with probability at least $1-\delta/(2K)$,
    \begin{align*}
       2 \sup_{h\in\cH_k} \abs{\empRisk_k(h)-\risk_k(h)}&\leq 4\radComp_{n_k}^k\pr{\loss_k\circ \cH_k} + 2B_k\left(\frac{2\log(2K/\delta)}{n_k}\right)^{1/2}\\
       &\leq 12L_k \radComp_{n_k}^k\pr{\cH_k} + 2B_k\left(\frac{2\log(2K/\delta)}{n_k}\right)^{1/2}. 
    \end{align*}
    And so, by \cref{lem:approximate-excess-risk}, we obtain that with probability at least $1 - \delta/(2K)$,
    \begin{align}
        T_{b,k} \leq c_k \sqrt{12L_k \radComp_{n_k}^k\pr{\cH_k} + 2B_k\left(\frac{2\log(2K/\delta)}{n_k}\right)^{1/2}}, \label{eqn:t-bk}
    \end{align}
    where $c_k = L_k\sqrt{2/\mu_k}$.

    The claim in \cref{eqn:individual-condition} follows by a union bound. By combining \Cref{eqn:t-akbk,eqn:t-ak,eqn:t-bk}, we obtain that with probability at least $1 - \delta$, for all $k \in [K]$:
    \[\sup_{g \in \cG}\, \big|\empRiskDiscrepancy{k}{g}{\hhat_k} - \cE_k(g) \big| \leq \eps_k/2,\]
    where we can set $\eps_k$ as:
    \[\eps_k/2 = 4L_k \radComp_{N_k}^k(\cG) +B_k\left(\frac{2\log(2K/\delta)}{N_k}\right)^{1/2} + c_k \sqrt{12L_k \radComp_{n_k}\pr{\cH_k} + 2B_k\left(\frac{2\log(2K/\delta)}{n_k}\right)^{1/2}}.\]
    This concludes the proof of \cref{thm:uniform-bound}.
\end{proof}

\subsection{Proof of Theorem \ref{thm:main-result}}
\label{subsec:proof-main-result}

In this section we provide the proof of \cref{thm:main-result}, but leave some of the technical details to auxiliary lemmata that we prove after concluding the main proof. See also \cref{fig:localization-proof} for a visualization of the proof.
We outline the proof in \cref{subsec:discussion-main-results}.

\subsubsection{Preliminaries}

We begin by introducing the notation $\mu_k =P^k_X$ as well as the mixture distribution $\mu_s = s(\mu_1,\dots,\mu_K)$. Recall the definitions of the (semi-)Hilbert norms
\begin{equation*}
    \nrm{f}_k := \sqrt{\expect_{X^k\sim \mu_k}\nrm{f(X^k)}^2_2} \qquad \text{and} \qquad \nrm{f}_s := \sqrt{s\pr{\nrm{f}_1^2,\ldots,\nrm{f}_K^2}}.
\end{equation*}
which correspond to the inner products (denoting $\inner{\cdot}{\cdot}$ the inner product on $\RR^q$)
\begin{equation*}
    \inner{f}{f'}_k := \int_\cX \inner{f}{f'}d\mu_k \qquad \text{and} \qquad\inner{\cdot}{\cdot}_s:=s((\inner{\cdot}{\cdot}_k)_{k\in[K]}).
\end{equation*}
We first verify that these are indeed (semi-)Hilbert norms and inner products.
\begin{lemma}
\label{lem:Bochner-and-gradient}
    The functions $\inner{\cdot}{\cdot}_k, \inner{\cdot}{\cdot}_s$ and $\nrm{\cdot}_k,\nrm{\cdot}_s$ defined above are the inner products and norms of the $L^2(\mu_k)$ and $L^2(\mu_s)$-Bochner spaces of functions $\cX\to(\RR^q,\nrm{\cdot}_2)$, which are Hilbert spaces.
\end{lemma}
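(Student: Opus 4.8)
\textbf{Proof plan for Lemma~\ref{lem:Bochner-and-gradient}.}
The statement is a standard functional-analytic fact once the definitions are unpacked, so the plan is to reduce everything to known results about Bochner spaces and then check that the scalarized objects inherit the Hilbert structure. First I would recall that for a $\sigma$-finite measure $\mu$ on $\cX$ and the separable Hilbert space $(\RR^q,\nrm{\cdot}_2)$, the Bochner space $L^2(\mu;\RR^q)$ of (equivalence classes of) strongly measurable functions $f:\cX\to\RR^q$ with $\int_\cX \nrm{f(x)}_2^2\,d\mu(x)<\infty$ is itself a Hilbert space with inner product $\inner{f}{f'}_\mu=\int_\cX\inner{f(x)}{f'(x)}\,d\mu(x)$; this is, e.g., \cite[Chapter~IV]{diestel1977vector} or can be seen directly since $L^2(\mu;\RR^q)\cong L^2(\mu)\otimes\RR^q$ is a finite direct sum of copies of $L^2(\mu)$. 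Applying this with $\mu=\mu_k=P^k_X$ gives that $\inner{\cdot}{\cdot}_k$ is a genuine inner product and $\nrm{\cdot}_k$ its norm (up to $\mu_k$-a.s.\ identification, which is exactly the ``semi-'' caveat when we work on $\Fall$ before quotienting); applying it with $\mu=\mu_s=\sum_{k}\weight_k\mu_k$ (for $s=\linearScalarization$; note $\mu_s$ is a finite nonnegative measure) gives the same for $\inner{\cdot}{\cdot}_s$ and $\nrm{\cdot}_s$.

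Second I would reconcile the two presentations of $\nrm{\cdot}_s$: the lemma defines $\nrm{f}_s^2=s(\nrm{f}_1^2,\dots,\nrm{f}_K^2)$, whereas the Bochner description uses $\nrm{f}_s^2=\int_\cX\nrm{f}_2^2\,d\mu_s$. For a linear scalarization $s=\linearScalarization$ these agree by linearity of the integral:
\begin{equation*}
    \int_\cX \nrm{f(x)}_2^2\,d\mu_s(x)=\sum_{k=1}^K\weight_k\int_\cX\nrm{f(x)}_2^2\,d\mu_k(x)=\sum_{k=1}^K\weight_k\nrm{f}_k^2 = s\pr{\nrm{f}_1^2,\dots,\nrm{f}_K^2},
\end{equation*}
and likewise $\inner{f}{f'}_s=\sum_k\weight_k\inner{f}{f'}_k = s((\inner{f}{f'}_k)_{k\in[K]})$ and $\nrm{f}_s^2=\inner{f}{f}_s$. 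So the scalarized norm is exactly the Bochner norm of $L^2(\mu_s;\RR^q)$, hence it is a Hilbert norm, and the polarization identity confirms $\inner{\cdot}{\cdot}_s$ is its inner product. I would also note that $\Fall\subseteq\bigcap_k L^2(\mu_k;\RR^q)$ by the standing assumption in the footnote, so all these quantities are finite on $\Fall$, and $\Fall\subseteq L^2(\mu_s;\RR^q)$ since $\mu_s$ is dominated by $\max_k$ of the $\mu_k$ up to constants (or more simply, finiteness of each $\nrm{f}_k^2$ and the displayed identity).

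The only genuine subtlety—hence the ``main obstacle,'' though it is minor—is the distinction between a norm and a seminorm: on $\Fall$ (functions, not equivalence classes) $\nrm{\cdot}_k$ is only a seminorm because $\nrm{f}_k=0$ merely forces $f=0$ $\mu_k$-a.s. The clean statement is that the quotient of $\Fall$ by $\mu_k$-a.s.\ equality, completed, is $L^2(\mu_k;\RR^q)$, which is Hilbert; I would state the lemma's ``(semi-)Hilbert'' language as shorthand for this and point out that for $\nrm{\cdot}_s$ the null space is $\{f: f=0\ \mu_s\text{-a.s.}\}=\bigcap_{k:\weight_k>0}\{f=0\ \mu_k\text{-a.s.}\}$. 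Completeness in each case is inherited from completeness of $L^2(\mu)$; strong measurability is automatic here since $\RR^q$ is finite-dimensional and all our functions are (Borel) measurable by assumption. No delicate estimate is needed—the content is purely bookkeeping that the definitions of $\inner{\cdot}{\cdot}_k,\inner{\cdot}{\cdot}_s,\nrm{\cdot}_k,\nrm{\cdot}_s$ literally coincide with the Bochner-space data for $\mu_k$ and $\mu_s$.
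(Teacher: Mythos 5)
Your proposal is correct and follows essentially the same route as the paper: both identify $\inner{\cdot}{\cdot}_s$ and $\nrm{\cdot}_s$ with the Bochner $L^2(\mu_s)$ inner product and norm via linearity of the integral for $s=\linearScalarization$, and then invoke that such Bochner spaces are Hilbert. The extra bookkeeping you add (seminorm vs.\ norm modulo $\mu$-a.s.\ equivalence, strong measurability, completeness) is left implicit in the paper but is consistent with its ``(semi-)Hilbert'' phrasing.
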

See \cite[Definition 1.2.15]{hytonen2016analysis} for a definition.
We prove \cref{lem:Bochner-and-gradient} in \cref{subsubsec:proofs-norms} and use it throughout without explicitly referring to it. Note that we have implicitly assumed that $\Fall\subseteq \bigcap_{k\in [K]} L^2(\mu_k) $. 

In order to use first-order calculus throughout the proof (e.g., variational arguments), we derive the gradient and smoothness of the map $g\mapsto \riskDiscrepancy{s}{g}{\h}$ below.
We also prove \cref{lem:norm-equivalence-Radon-Nikodym} in \cref{subsubsec:proofs-norms}.
\begin{lemma}[Gradients and smoothness]
\label{lem:gradient-smoothness}
    For any $\h\in \cH_1\times \dots \times \cH_K$, denote by $\nabla_g\riskDiscrepancy{s}{g}{\h}:\cX \to \RR$ the gradient of the map $g\mapsto \riskDiscrepancy{s}{g}{\h}$ induced by the Fr\'echet derivatives on $L^2(\mu_s)$ and the inner product $\inner{\cdot}{\cdot}_s$.
    Then it holds that\footnote{Note that whenever $\weight_k>0$, the Radon-Nikodym derivative $d\mu_k / d\mu_s$ is well-defined.}
    \begin{equation*}
        \nabla_g \riskDiscrepancy{s}{g}{\h} : x\mapsto \sum_{k=1}^K\weight_k \frac{d\mu_k}{d\mu_s}(x) \nabla^2\phi_k(g(x))(g(x)-h_k(x))).
    \end{equation*}
    Moreover, if \cref{ass:convexity-smoothness} holds and we denote $D=\diam_{\nrm{\cdot}_2}(\cY)$, then the map $g\mapsto \riskDiscrepancy{s}{g}{\h}$ is $\Csmooth:= \nu (1 + D)$-smooth in $\nrm{\cdot}_s$, that is, the gradient from above is $\Csmooth$-Lipschitz continuous in $g$ with respect to $\nrm{\cdot}_s$.
    Moreover, for $g_s^{\h} = \argmin_{g\in \cG} \riskDiscrepancy{s}{g}{\h}$ and all $g\in \cG$ 
    \begin{equation}
    \label{eq:smooth-risk-Discrepancy}
    \riskDiscrepancy{s}{g}{\h}-\riskDiscrepancy{s}{g^{\h}_s}{\h}\leq \frac{\Csmooth}{2} \nrm{g-g^{\h}_s}_s^2.
\end{equation}
\end{lemma}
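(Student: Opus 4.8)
The plan is to compute the Fréchet derivative of $g \mapsto \riskDiscrepancy{s}{g}{\h}$ directly from the definition of the Bregman loss, and then to use the bounds on the second and third derivatives of the potentials from \cref{ass:convexity-smoothness} to get the Lipschitz estimate on the gradient. First I would write $\riskDiscrepancy{s}{g}{\h} = \sum_{k=1}^K \weight_k \riskDiscrepancy{k}{g}{h_k} = \sum_{k=1}^K \weight_k \int_\cX \ell_k(h_k(x), g(x)) \, d\mu_k(x)$, using linearity of $s$ and the definition in \cref{eqn:risk-discrepancy}. Fixing $x$, the map $\yhat \mapsto \ell_k(y, \yhat) = \phi_k(y) - \phi_k(\yhat) - \inner{\nabla\phi_k(\yhat)}{y - \yhat}$ has gradient (in $\yhat$, on $\RR^q$) equal to $-\nabla\phi_k(\yhat) - \nabla^2\phi_k(\yhat)(y - \yhat) + \nabla\phi_k(\yhat) = \nabla^2\phi_k(\yhat)(\yhat - y)$, where the first two terms come from differentiating $-\inner{\nabla\phi_k(\yhat)}{y-\yhat}$ by the product rule. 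So the pointwise gradient of the integrand $x \mapsto \ell_k(h_k(x), g(x))$ with respect to $g(x)$ is $\nabla^2\phi_k(g(x))(g(x) - h_k(x))$. Passing the derivative under the integral sign (justified by the boundedness of $\nabla^2\phi_k$ on the convex set $\cY$ and dominated convergence, using that $g, h_k \in L^2(\mu_k)$), the $L^2(\mu_k)$-gradient of $g \mapsto \riskDiscrepancy{k}{g}{h_k}$ is the function $x \mapsto \nabla^2\phi_k(g(x))(g(x) - h_k(x))$. To express the total gradient with respect to $\inner{\cdot}{\cdot}_s$ rather than $\inner{\cdot}{\cdot}_k$, I use the Radon--Nikodym change of measure $\int_\cX \inner{u(x)}{v(x)} d\mu_k(x) = \int_\cX \frac{d\mu_k}{d\mu_s}(x) \inner{u(x)}{v(x)} d\mu_s(x)$, valid whenever $\weight_k > 0$ (so $\mu_k \ll \mu_s$); the terms with $\weight_k = 0$ drop out. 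This yields exactly the claimed formula $\nabla_g\riskDiscrepancy{s}{g}{\h}: x \mapsto \sum_{k=1}^K \weight_k \frac{d\mu_k}{d\mu_s}(x)\nabla^2\phi_k(g(x))(g(x) - h_k(x))$.

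For the smoothness claim, I would bound $\nrm{\nabla_g\riskDiscrepancy{s}{g}{\h} - \nabla_g\riskDiscrepancy{s}{g'}{\h}}_s$ for $g, g' \in \cG$. Pointwise, the difference of the $k$-th summand's gradient is $\nabla^2\phi_k(g(x))(g(x) - h_k(x)) - \nabla^2\phi_k(g'(x))(g'(x) - h_k(x))$. Split this via a telescoping step: $\nabla^2\phi_k(g(x))(g(x) - g'(x)) + \big(\nabla^2\phi_k(g(x)) - \nabla^2\phi_k(g'(x))\big)(g'(x) - h_k(x))$. The first term has $\ell_2$-norm at most $\nu \nrm{g(x) - g'(x)}_2$ since $\nrm{\nabla^2\phi_k}_2 \le \nu$. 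For the second term, the mean-value inequality applied to the map $y \mapsto \nabla^2\phi_k(y)$ (whose derivative $\nabla^3\phi_k$ has operator norm $\le \nu$ on the convex set $\cY$) gives $\nrm{\nabla^2\phi_k(g(x)) - \nabla^2\phi_k(g'(x))}_2 \le \nu \nrm{g(x) - g'(x)}_2$, and $\nrm{g'(x) - h_k(x)}_2 \le D = \diam_{\nrm{\cdot}_2}(\cY)$ since both values lie in $\cY$; so this term is at most $\nu D \nrm{g(x) - g'(x)}_2$. Thus pointwise the $k$-th gradient difference has norm $\le \nu(1 + D)\nrm{g(x) - g'(x)}_2$. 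Multiplying by $\weight_k \frac{d\mu_k}{d\mu_s}(x)$, summing over $k$, and integrating against $\mu_s$ — using that for each fixed $x$ the convex weights $\weight_k \frac{d\mu_k}{d\mu_s}(x)$ sum to $1$, so a convexity/Jensen argument (or just the triangle inequality together with $\sum_k \weight_k \int \frac{d\mu_k}{d\mu_s} \nrm{g-g'}_2^2 d\mu_s = \nrm{g-g'}_s^2$) controls the weighted average — gives $\nrm{\nabla_g\riskDiscrepancy{s}{g}{\h} - \nabla_g\riskDiscrepancy{s}{g'}{\h}}_s \le \nu(1 + D)\nrm{g - g'}_s = \Csmooth \nrm{g - g'}_s$. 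Finally, \cref{eq:smooth-risk-Discrepancy} is the standard descent-lemma consequence of $\Csmooth$-smoothness: since $g_s^{\h}$ minimizes $\riskDiscrepancy{s}{\cdot}{\h}$ over the convex set $\cG$, the first-order optimality condition $\inner{\nabla_g\riskDiscrepancy{s}{g_s^{\h}}{\h}}{g - g_s^{\h}}_s \ge 0$ holds for all $g \in \cG$, and combining this with the smoothness inequality $\riskDiscrepancy{s}{g}{\h} \le \riskDiscrepancy{s}{g_s^{\h}}{\h} + \inner{\nabla_g\riskDiscrepancy{s}{g_s^{\h}}{\h}}{g - g_s^{\h}}_s + \frac{\Csmooth}{2}\nrm{g - g_s^{\h}}_s^2$ yields the claim.

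The main obstacle I anticipate is not any single estimate but the careful bookkeeping around the measures: making sure the Radon--Nikodym derivatives $d\mu_k/d\mu_s$ are handled correctly (they exist only when $\weight_k > 0$, and one must check those are exactly the indices that contribute), verifying that the interchange of differentiation and integration is legitimate in the Bochner $L^2$ setting (which is where \cref{lem:Bochner-and-gradient} and the assumed inclusion $\Fall \subseteq \bigcap_k L^2(\mu_k)$ are needed), and confirming that the weighted-average step in the smoothness bound genuinely produces the $\nrm{\cdot}_s$-norm on the right-hand side rather than some larger quantity — this last point relies crucially on $\sum_k \weight_k \frac{d\mu_k}{d\mu_s}(x) = 1$ for $\mu_s$-a.e.\ $x$, which should be recorded explicitly. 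The third-derivative bound is used only through a one-dimensional mean value argument along the segment between $g(x)$ and $g'(x)$, which stays inside $\cY$ by convexity of $\cY$, so no delicate multivariate Taylor remainder is required.
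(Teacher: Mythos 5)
Your proposal is correct and follows essentially the same route as the paper's own proof: the gradient is computed pointwise for the Bregman loss and passed under the integral via dominated convergence together with the Radon--Nikodym change of measure, the smoothness constant $\nu(1+D)$ comes from the identical split $\nabla^2\phi_k(g)(g-g') + \big(\nabla^2\phi_k(g)-\nabla^2\phi_k(g')\big)(g'-h_k)$ combined with the fact that $\sum_k \weight_k\, d\mu_k/d\mu_s = 1$ $\mu_s$-a.e.\ and a Jensen-type step, and \cref{eq:smooth-risk-Discrepancy} is obtained exactly as in the paper from the descent-lemma form of smoothness together with the variational inequality at the constrained minimizer. There are no substantive differences in approach.
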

\cref{lem:variational-inequality} is a direct consequence of the gradient characterization in \cref{lem:gradient-smoothness} together with Theorem 46 in \cite{zeidler1985nonlinear}.
Finally, we show that if \cref{ass:norm-equivalence} holds, the norms $\nrm{\cdot}_k$ and $\nrm{\cdot}_s$ are equivalent.
\begin{lemma}
\label{lem:norm-equivalence-Radon-Nikodym}
    Let $\cS\subset \Slin$ be in the set of linear scalarizations \eqref{eqn:Tchebycheff-linear}. Then, for any $ \eta \in [0,\infty)$
    \begin{equation*}
        \sup\crl{\frac{\nrm{f}_k}{\nrm{f}_s}: k\in[K],s\in \cS, f\in \Fall} \leq \eta \quad \text{if and only if} \quad \forall k\in [K], s\in \cS:\quad  \esssup\frac{d \mu_k}{d \mu_s} \leq \eta^2.
    \end{equation*}
\end{lemma}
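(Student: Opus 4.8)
The plan is to reduce the vector‑valued norm inequality to a one–line scalar computation with Radon–Nikodym densities, exploiting that $s=\linearScalarization$ is \emph{linear}. First I would record that, by linearity, the mixture measure is $\mu_s=\sum_{j=1}^K\weight_j\mu_j$, whence
\[
\nrm{f}_s^2=\sum_{j=1}^K\weight_j\nrm{f}_j^2=\int_\cX\nrm{f(x)}_2^2\,d\mu_s(x)\qquad\text{and}\qquad\nrm{f}_k^2=\int_\cX\nrm{f(x)}_2^2\,d\mu_k(x),
\]
so that ``$\nrm{f}_k\le\eta\nrm{f}_s$ for all $f\in\Fall$'' is equivalent to ``$\int\varphi\,d\mu_k\le\eta^2\int\varphi\,d\mu_s$ for every nonnegative $\varphi$ of the form $\varphi=\nrm{f(\cdot)}_2^2$''. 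Such $\varphi$ include all indicators $\mathbf{1}_A$ of measurable sets (take $f=\mathbf{1}_A e_1$, after extending the semi‑norms continuously to the closed linear span of $\Fall$ inside the ambient Bochner space if one wants to be pedantic about $\mathbf{1}_A e_1\notin\Fall$).

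Granting this reduction, both directions are short. For ``$\Leftarrow$'' I would write $\rho_k:=d\mu_k/d\mu_s$ (the right‑hand condition presupposes $\mu_k\ll\mu_s$, so $\rho_k$ exists) and bound $\nrm{f}_k^2=\int\nrm{f(x)}_2^2\rho_k(x)\,d\mu_s(x)\le(\esssup\rho_k)\nrm{f}_s^2\le\eta^2\nrm{f}_s^2$ using $\nrm{f(x)}_2^2\ge 0$; taking the supremum over $k\in[K]$, $s\in\cS$, $f\in\Fall$ gives the left‑hand side. For ``$\Rightarrow$'' I would argue by contraposition, pair by pair: testing the inequality on $f=\mathbf{1}_A e_1$ yields $\mu_k(A)\le\eta^2\mu_s(A)$ for every measurable $A$, which in particular forces $\mu_k\ll\mu_s$ (hence $\rho_k$ exists, including when $\weight_k=0$); then if $\esssup\rho_k>\eta^2$ one picks $\epsilon>0$ with $A:=\{\rho_k\ge\eta^2+\epsilon\}$ of positive $\mu_s$‑measure and obtains $\mu_k(A)=\int_A\rho_k\,d\mu_s\ge(\eta^2+\epsilon)\mu_s(A)>\eta^2\mu_s(A)$, a contradiction. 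Hence $\esssup\rho_k\le\eta^2$ for every $k$ and every $s\in\cS$, which is the right‑hand side.

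I expect the only point needing genuine care to be the very first step — justifying that it is enough to test the norm inequality against indicator functions, given that $\Fall$ is merely a \emph{subset} of the relevant Bochner space. I would handle this by continuity: both $\nrm{\cdot}_k$ and $\nrm{\cdot}_s$ extend continuously to the closed linear span of $\Fall$, which contains $\mathbf{1}_A e_1$ for every measurable $A$ (the $\mu_k,\mu_s$ are probability measures, so all indicators are square‑integrable), and one may equivalently approximate such indicators from within $\Fall$; neither route changes the conclusion about $\rho_k$. Everything after that is the textbook characterisation of uniform domination of one finite measure by another through the essential supremum of its Radon–Nikodym derivative, and no further ideas are needed.
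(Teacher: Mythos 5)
Your proposal is correct and follows essentially the same route as the paper: the ``$\Leftarrow$'' direction is the identical one-line density bound, and the ``$\Rightarrow$'' direction tests the inequality on scaled indicator functions to get $\mu_k(A)\le\eta^2\mu_s(A)$ and then inspects the set where the density exceeds the threshold (the paper uses $f=(y/\nrm{y}_2)\mathbf{1}_A$ for some $y\in\cY$ and argues directly via the definition of $\esssup$, while you argue by contraposition, but these are the same computation). Your extra care about whether indicator-type functions lie in $\Fall$, and about absolute continuity being forced in the ``$\Rightarrow$'' direction, addresses technicalities the paper glosses over and does not change the argument.
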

We also prove \cref{lem:norm-equivalence-Radon-Nikodym} in \cref{subsubsec:proofs-norms}.

\subsubsection{Main proof of Theorem \ref{thm:main-result}}
Recall the empirical and population minimizers of the corresponding risk discrepancies from \cref{eqn:risk-discrepancy} 
\begin{equation*}
\forall s\in \cS: \quad \ghat_s\in\argmin_{g\in \cG} \empRiskDiscrepancy{s}{g}{\bhhat}\quad \text{and} \quad   g_s\in\argmin_{g\in \cG} \riskDiscrepancy{s}{g}{\bfstar}.
\end{equation*}
Our goal is to bound $\sTradeoff(\ghat_s)-\inf_{g\in \cG}\sTradeoff(g)$ simultaneously for all $s\in \cS$. By \cref{lem:properties-bregman-loss}, we have $\sTradeoff(\ghat_s)-\inf_{g\in \cG}\sTradeoff(g)=\riskDiscrepancy{s}{\ghat_s}{\bfstar}-\riskDiscrepancy{s}{g_s}{\bfstar}$ so that we focus on bounding this equivalent expression. 

The basic decomposition of our proof is a triangle inequality with a helper set of minimizers of the population risk discrepancy, defined with respect to pseudo-labeled data as
\begin{equation*}
g_s'\in\argmin_{g\in \cG} \riskDiscrepancy{s}{g}{\bhhat}.
\end{equation*}
Specifically, by the smoothness from \cref{lem:gradient-smoothness}, we can bound the excess trade-off as
\begin{equation}
\label{eq:basic-decomposition-localization}
    \riskDiscrepancy{s}{\ghat_s}{\bfstar}-\riskDiscrepancy{s}{g_s}{\bfstar} \leq \frac{\Csmooth}{2} \nrm{\ghat_s-g_s}_s^2\leq \Csmooth \Big(\underbrace{\nrm{\ghat_s-g_s'}_s^2}_{=:\Tunlab} + \underbrace{\nrm{g_s'-g_s}_s^2}_{=:\Tlab} \Big).
\end{equation}
Here $\Tlab$ quantifies the error from having a finite amount of labeled data to estimate $\fstar_k$ with $\hhat_k$ and how that error propagates to $g_s'$, and $\Tunlab$ quantifies how close to $g_s'$ we can get with the finite amount of unlabeled data.
Our goal will be to bound the terms $\Tlab$ and $\Tunlab$ using localization, simultaneously for all $s\in \cS$. For the general proof technique of localization, we take inspiration from the approaches outlined in \cite{wainwright2019high,sen2018gentle,Kanade2024exponential,bartlett2005local,bartlett2006empirical,koltchinskii2006local}.

We proceed in three main steps also outlined in \cref{subsec:discussion-main-results}.
\begin{enumerate}[leftmargin=*,itemsep=0pt,topsep=-2pt]
    \item To bound $\Tlab$, we first use standard localization bounds for the ERMs in each task separately, using uniform bound on the local sets $\cH_k(r)=(\cH_k-\fstar_k)\cap r\cB_{\nrm{\cdot}_k}$ from \cref{eq:Hk-and-cGk}.
    We then show how their errors translate to $g_s'$ through a deterministic stability argument.
    \item To bound $\Tunlab$, we condition on $\bhhat$ and simultaneously localize around the (random) functions $g_s'$ for all $s\in \cS$, resulting in a uniform learning bound on local sets 
    \begin{equation}
    \label{eq:cGkprime}
        \cG_k(r;\bhhat)=r\cB_{\nrm{\cdot}_k}\cap \bigcup_{s\in \cS} (\cG-g_s')
    \end{equation}
    that are ``centered'' at the helper set $\crl{g_s':s\in \cS}$. 
    \item The resulting bound on $\Tunlab$ from the previous step is random,
    because $g_s'$ depends on $\bhhat$, so we need to further bound it. We prove two ways of doing that, so that the bound takes the minimum of the two: the critical radius of $\cG_k(r,\bfstar)=r\cB_{\nrm{\cdot}_k}\cap \bigcup_{s\in \cS} (\cG-g_s)$ from \cref{eq:Hk-and-cGk} together with the bound on $\Tlab$, or a worst-case bound taking the supremum over such $\crl{g_s':s\in \cS}$. 
\end{enumerate}
See also \cref{fig:localization-proof} for a visualization of the corresponding sets.

Throughout, we heavily use the following monotonicity property of the Rademacher complexity, analogous to the usual localization proofs. The proof can be found in \cref{subsec:proof-non-increasing-complexities}.
\begin{lemma}
\label{lem:non-increasing-complexities}
    Consider the sets from \cref{eq:Hk-and-cGk,eq:cGkprime}. Under \cref{ass:convexity-smoothness}, the functions
\begin{equation}
\label{eq:non-increasing-complexities}
    r\mapsto \frac{\radComp_{n_k}^k\pr{\cH_k(r)}}{r} \qquad \text{and} \qquad r\mapsto \frac{\radComp_{N_k}^{k}(\cG_k(r;\h))}{r}
\end{equation}
are non-increasing on $(0,\infty)$ for all $\h\in \cH_1\times\dots\times\cH_K$.
\end{lemma}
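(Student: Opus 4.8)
The plan is to reduce both statements in \eqref{eq:non-increasing-complexities} to a single standard fact: for any function class $\mathcal{F}\subseteq\Fall$ that is star-shaped around the origin and any (semi-)norm $\nrm{\cdot}$ on $\Fall$ with unit ball $\cB$, the map $r\mapsto \radComp_n(\mathcal{F}\cap r\cB)/r$ is non-increasing on $(0,\infty)$, where $\radComp_n$ is the coordinate-wise Rademacher complexity from the paper, evaluated at an arbitrary but fixed sample size and sampling distribution (neither plays any role in the argument). First I would record two elementary properties of $\radComp_n$ that follow directly from its definition: (i) monotonicity under inclusion, $\mathcal{F}_1\subseteq\mathcal{F}_2\implies\radComp_n(\mathcal{F}_1)\leq\radComp_n(\mathcal{F}_2)$, since the supremum is taken over a larger set; and (ii) positive homogeneity, $\radComp_n(\alpha\mathcal{F})=\alpha\,\radComp_n(\mathcal{F})$ for every $\alpha\geq 0$, which holds because the absolute value inside the definition lets the nonnegative constant $\alpha$ be pulled out of the supremum. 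The one mild subtlety here is exactly that absolute value: for $\alpha<0$ one would only get a factor $\abs{\alpha}$, but in the argument below we only ever rescale by $\alpha\in(0,1]$.

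The core step is then: fix $0<r_1\leq r_2$ and set $\alpha=r_1/r_2\in(0,1]$. I claim $\alpha\,(\mathcal{F}\cap r_2\cB)\subseteq\mathcal{F}\cap r_1\cB$. Indeed, if $f\in\mathcal{F}$ with $\nrm{f}\leq r_2$, then $\alpha f\in\mathcal{F}$ by star-shapedness around the origin, and $\nrm{\alpha f}=\alpha\nrm{f}\leq\alpha r_2=r_1$, so $\alpha f\in\mathcal{F}\cap r_1\cB$. Combining this inclusion with (i) and (ii) gives $\alpha\,\radComp_n(\mathcal{F}\cap r_2\cB)=\radComp_n(\alpha\,(\mathcal{F}\cap r_2\cB))\leq\radComp_n(\mathcal{F}\cap r_1\cB)$, which rearranges to $\radComp_n(\mathcal{F}\cap r_2\cB)/r_2\leq\radComp_n(\mathcal{F}\cap r_1\cB)/r_1$, the desired monotonicity. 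Note that in all our applications $0\in\mathcal{F}\cap r\cB$ for every $r>0$, so these sets are never empty and the Rademacher complexities are well-defined and nonnegative.

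To finish, I would instantiate this fact twice. For the first map in \eqref{eq:non-increasing-complexities}, take $\mathcal{F}=\cH_k-\fstar_k$, which is star-shaped around the origin by \cref{ass:convexity-smoothness} (and contains $0$ by \eqref{eqn:func-ass}), and $\nrm{\cdot}=\nrm{\cdot}_k$; then $\mathcal{F}\cap r\cB_{\nrm{\cdot}_k}=\cH_k(r)$ by \eqref{eq:Hk-and-cGk}. For the second, take $\mathcal{F}=\bigcup_{s\in\cS}(\cG-g_s^{\h})$ and $\nrm{\cdot}=\nrm{\cdot}_k$, so that $\mathcal{F}\cap r\cB_{\nrm{\cdot}_k}=\cG_k(r;\h)$ (the same definition as in \eqref{eq:Hk-and-cGk}, and as in \eqref{eq:cGkprime} with $\h=\bhhat$, using that intersection distributes over the union). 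The only point that is not completely automatic — and the one I would flag as the ``main obstacle,'' though it is really just bookkeeping — is verifying that this union is star-shaped around the origin: by \cref{ass:convexity-smoothness} the set $\cG$ is convex and closed, the minimizer $g_s^{\h}=\argmin_{g\in\cG}\riskDiscrepancy{s}{g}{\h}$ exists (it is assumed to exist, and does so by \cref{lem:gradient-smoothness} when $\cG$ is bounded), hence each $\cG-g_s^{\h}$ is convex and contains $0$, so it is star-shaped around the origin; and a union of sets that are each star-shaped around the same point is again star-shaped around that point, since if $f$ lies in the union it lies in one of the pieces, and $\alpha f$ then remains in that same piece for $\alpha\in[0,1]$. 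Applying the core step to these two instances yields exactly the two monotonicity claims in \eqref{eq:non-increasing-complexities}.
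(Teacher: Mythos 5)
Your proof is correct and follows essentially the same route as the paper: rescale an element of the larger-radius set by $r_1/r_2$, use star-shapedness of $\cH_k-\fstar_k$ (assumed) and of $\bigcup_{s\in\cS}(\cG-g_s^{\h})$ (each translate is convex and contains the origin), and combine with homogeneity and monotonicity of the Rademacher complexity. Packaging this as a single general lemma instantiated twice is only a cosmetic difference from the paper's argument.
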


\paragraph{Step 1: Localization for ERMs in $\cH_k$ and bounding $\Tlab$.}
In this step, we first bound the error of learning $\fstar_k$ with the ERMs $\smash{\hhat_k}$ (or, in fact, any other estimator that satisfies the basic inequality $\smash{\empRisk_k(\hhat_k)\leq \empRisk_k(\fstar_k)}$).
Recall the definition $\cH_k(r)= (\cH_k-\fstar_k)\cap r\cB_{\nrm{\cdot}_k}$ from \cref{eq:Hk-and-cGk},
and the corresponding critical radii $\rH = \inf\crl{r\geq 0: r^2\geq \radComp_{n_k}^k(\cH_k(r))}$. Using the non-increasing property from \cref{lem:non-increasing-complexities}, we can summarize the bound in the following Lemma.
\begin{lemma}
\label{lem:localization-in-H}
    Under \cref{ass:loss-ass,ass:convexity-smoothness}, and if $\delta>0$ is sufficiently small, we have that $\PP(\eventlab{\delta})\geq 1-\delta$, where we define the event
\begin{equation}
\label{eq:localization-in-H}
    \eventlab{\delta} := \crl{\forall k\in[K]:\quad \nrm{\hhat_k-\fstar_k}^2_k \lesssim \frac{L_k^{2}}{\mu_k^{2}}\rH[2]+\pr{\frac{B_k}{\mu_k}+\frac{L_k^{2}}{\mu_k^2}}\frac{\log(K/\delta)}{n_k}=:\zeta_k^2}.
\end{equation}
\end{lemma}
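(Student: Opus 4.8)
The plan is to run the standard localized-Rademacher (critical-radius / fixed-point) analysis of empirical risk minimization for each task $k$ separately, specialized to the vector-valued Bregman setting, and then union bound over $k\in[K]$. First I would reduce the target bound on $\nrm{\hhat_k-\fstar_k}_k^2$ to a bound on the excess risk $\excessRisk_k(\hhat_k)$: the $\mu_k$-strong convexity of $\phi_k$ in \cref{ass:loss-ass} gives $\loss_k(y,y')\ge\tfrac{\mu_k}{2}\nrm{y-y'}_2^2$, and since $\excessRisk_k(\hhat_k)=\riskDiscrepancy{k}{\hhat_k}{\fstar_k}=\EE[\loss_k(\fstar_k(X^k),\hhat_k(X^k))]$ by \cref{lem:properties-bregman-loss}, this yields $\nrm{\hhat_k-\fstar_k}_k^2\le\tfrac{2}{\mu_k}\excessRisk_k(\hhat_k)$. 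So it is enough to show that, with probability at least $1-\delta/K$, $\excessRisk_k(\hhat_k)\lesssim \tfrac{L_k^2}{\mu_k}\rH[2]+\big(\tfrac{L_k^2}{\mu_k}+B_k\big)\tfrac{\log(K/\delta)}{n_k}$; dividing by $\mu_k/2$, using $L_k\ge\mu_k$ (absent which a factor $L_k/\mu_k$ replaces its square, matching the footnote to \cref{thm:main-result}), and union bounding over $k$ then gives $\PP(\eventlab{\delta})\ge1-\delta$.

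Next I would control the centered empirical process. The ERM property $\empRisk_k(\hhat_k)\le\empRisk_k(\fstar_k)$ gives $\excessRisk_k(\hhat_k)\le Z_k(\hhat_k)$, where $Z_k(h):=\big(\risk_k(h)-\risk_k(\fstar_k)\big)-\big(\empRisk_k(h)-\empRisk_k(\fstar_k)\big)$ is indexed by the shifted loss class. For fixed $r>0$ the summands $(x,y)\mapsto\loss_k(y,h(x))-\loss_k(y,\fstar_k(x))$ with $h\in\cH_k$, $\nrm{h-\fstar_k}_k\le r$, are bounded in absolute value by $B_k$ (since $0\le\loss_k\le B_k$) and, by the $L_k$-Lipschitz continuity of $\loss_k$ in its second argument, have variance at most $L_k^2\nrm{h-\fstar_k}_k^2\le L_k^2r^2$. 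Talagrand's inequality \cite{talagrand1994sharper,bartlett2006empirical} then gives, with probability $1-\delta'$,
\begin{align*}
\sup_{\substack{h\in\cH_k\\ \nrm{h-\fstar_k}_k\le r}}|Z_k(h)|\ &\lesssim\ \EE\Big[\sup_{\substack{h\in\cH_k\\ \nrm{h-\fstar_k}_k\le r}}|Z_k(h)|\Big]\\
&\qquad+L_kr\sqrt{\tfrac{\log(1/\delta')}{n_k}}+\tfrac{B_k\log(1/\delta')}{n_k},
\end{align*}
and by symmetrization (\cref{lem:symmetrization}) and the vector contraction inequality of \cite{maurer2006bounds} applied to the $L_k$-Lipschitz map $u\mapsto\loss_k(y,u)-\loss_k(y,\fstar_k(x))$ (vanishing at $u=\fstar_k(x)$) the expectation is $\lesssim L_k\radComp_{n_k}^k(\cH_k(r))$. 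Finally I would invoke the monotonicity of $r\mapsto\radComp_{n_k}^k(\cH_k(r))/r$ from \cref{lem:non-increasing-complexities} and the definition of $\rH$ in \cref{eq:critical-radii-definition} to conclude $\radComp_{n_k}^k(\cH_k(r))\le\rH\max\{r,\rH\}$ for all $r>0$.

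The last step is to peel and solve a quadratic inequality. Applying the bound above simultaneously over the $O(\log)$ dyadic radii $r\in\{2^j\rH\}$ up to $\diam_{\nrm{\cdot}_2}(\cY)$ and over the $K$ tasks --- choosing $\delta'$ so the union bound costs $\le\delta$, which is where ``$\delta$ sufficiently small'' enters, absorbing the resulting $\log\log$ into $\log(K/\delta)$ --- a routine peeling argument (cf.\ \cite{bartlett2005local}, \cite[Ch.~14]{wainwright2019high}) shows that on this event
\begin{align*}
\excessRisk_k(\hhat_k)\ \le\ Z_k(\hhat_k)\ &\lesssim\ L_k\rH\max\{\nrm{\hhat_k-\fstar_k}_k,\rH\}\\
&\quad+L_k\nrm{\hhat_k-\fstar_k}_k\sqrt{\tfrac{\log(K/\delta)}{n_k}}+\tfrac{B_k\log(K/\delta)}{n_k}.
\end{align*}
Plugging in $\nrm{\hhat_k-\fstar_k}_k\le\sqrt{2\excessRisk_k(\hhat_k)/\mu_k}$ and repeatedly using $ab\le\tfrac14a^2+b^2$ to absorb the $\sqrt{\excessRisk_k(\hhat_k)}$ terms into $\tfrac12\excessRisk_k(\hhat_k)$ (and, in the branch $\nrm{\hhat_k-\fstar_k}_k\le\rH$, using $\nrm{\hhat_k-\fstar_k}_k^2\le\rH[2]$ directly) gives $\excessRisk_k(\hhat_k)\lesssim L_k\rH[2]+\tfrac{L_k^2}{\mu_k}\rH[2]+\big(\tfrac{L_k^2}{\mu_k}+B_k\big)\tfrac{\log(K/\delta)}{n_k}$, which under $L_k\ge\mu_k$ is exactly the bound required in the first step; dividing by $\mu_k/2$ and union bounding over $k\in[K]$ finishes the proof.

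I expect the main obstacle to be the localization bookkeeping in the last step: getting $\rH$ into the bound rather than the global Rademacher complexity hinges on the dyadic peeling combined with the self-improving monotonicity of \cref{lem:non-increasing-complexities} (which in turn needs the star-shapedness of $\cH_k-\fstar_k$ from \cref{ass:convexity-smoothness}), and one must carefully use the \emph{vector} contraction inequality of \cite{maurer2006bounds} --- not the scalar one --- because the hypotheses are $\RR^q$-valued, and the variance-sensitive Talagrand/Bousquet inequality --- not merely McDiarmid --- since it is the $L_k^2r^2$ variance control that yields the fast $\rH[2]$ rate rather than a $\rH$ rate. A minor but load-bearing point is that passing from $\excessRisk_k(\hhat_k)$ back to $\nrm{\hhat_k-\fstar_k}_k^2$ uses only the quadratic \emph{lower} bound on $\loss_k$, so the argument needs no smoothness of $\loss_k$ at this stage.
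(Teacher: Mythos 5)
Your proposal is correct and follows essentially the same route as the paper's proof: basic ERM inequality, Talagrand's inequality with the $L_k^2 r^2$ variance proxy, symmetrization plus vector contraction to reach $\radComp^k_{n_k}(\cH_k(r))$, the monotonicity of $r\mapsto\radComp^k_{n_k}(\cH_k(r))/r$ from star-shapedness, dyadic peeling over random radii, and finally strong convexity to close a quadratic self-bounding inequality. The only cosmetic difference is that you solve for $\excessRisk_k(\hhat_k)$ and then convert to $\nrm{\hhat_k-\fstar_k}_k^2$, while the paper substitutes $r_k^2\le\tfrac{2}{\mu_k}\excessRisk_k(\hhat_k)\le\tfrac{12}{\mu_k}\Phi_k(r_k,\delta)$ and solves directly for $r_k$; these are algebraically equivalent.
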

The proof of \cref{lem:localization-in-H} can be found in \cref{subsec:proof-localization-in-H}, and it essentially follows the localization technique from \cite{bartlett2005local}: We bound the suprema of the empirical process over $\cH_k(r)$ using Talagrand's inequality (\cref{lem:Talagrand}) in terms of the Rademacher complexity and variance. Using \cref{lem:non-increasing-complexities} and a peeling argument, we get the bound in terms of the critical radius.

Next, we show that the bound from \cref{eq:localization-in-H} directly translates into a bound on the helper set $\crl{g_s':s\in \cS}$ with respect to labels from $\bhhat$ but known covariate distributions. To do so, we prove the following \emph{stability} result. Effectively, it removes the square-root from \cref{lem:approximate-excess-risk} that appears in \cref{thm:uniform-bound}.

\begin{proposition}[Quadratic stability of minimizers]
\label{prop:stability}
    Denote $g^{\h}_s = \argmin_{g\in \cG} \riskDiscrepancy{s}{g}{\h}$ and $\Cstab := \nicefrac{\nu^2 }{4\gamma^2}$.
    Under \cref{ass:loss-ass,ass:convexity-smoothness}, we have for any $\h,\h' \in \cH_1 \times\dots\times \cH_K$, any $s=\linearScalarization\in \cS$, that
    \begin{equation*}
        \nrm{g^{\h}_s-g^{\h'}_s}_s^2 \leq \Cstab \sum_{k=1}^K\weight_k \nrm{h_k-h_k'}_k^2.
    \end{equation*}
\end{proposition}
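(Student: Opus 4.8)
The plan is to combine the first-order (variational) optimality conditions for the two minimizers with the $2\gamma$-strong convexity of $\riskDiscrepancy{s}{\cdot}{\h}$ in $\nrm{\cdot}_s$, and then to control the resulting ``gradient mismatch'' at the fixed point by the pointwise change in the pseudo-labels, using the Hessian bound $\nu$ from \cref{ass:convexity-smoothness}.

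First I would write $g := g_s^{\h}$, $g' := g_s^{\h'}$ and record the variational inequalities that characterize them. By the gradient formula of \cref{lem:gradient-smoothness} and the argument behind \cref{lem:variational-inequality} (which applies verbatim with an arbitrary ground truth $\h$ in place of $\bfstar$, since $\cG$ is convex and closed), for every $\tilde g\in\cG$,
\[
\inner{\nabla_g \riskDiscrepancy{s}{g}{\h}}{\tilde g - g}_s \ge 0
\qquad\text{and}\qquad
\inner{\nabla_g \riskDiscrepancy{s}{g'}{\h'}}{\tilde g - g'}_s \ge 0 ;
\]
these minimizers exist because, under \cref{ass:convexity-smoothness}, $\riskDiscrepancy{s}{\cdot}{\h}$ is $2\gamma$-strongly convex in $\nrm{\cdot}_s$ and $\cG$ is closed. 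Choosing $\tilde g = g'$ in the first and $\tilde g = g$ in the second and adding gives $\inner{\nabla_g \riskDiscrepancy{s}{g}{\h} - \nabla_g \riskDiscrepancy{s}{g'}{\h'}}{g' - g}_s \ge 0$. I would then decompose $\nabla_g \riskDiscrepancy{s}{g}{\h} - \nabla_g \riskDiscrepancy{s}{g'}{\h'} = \big(\nabla_g \riskDiscrepancy{s}{g}{\h} - \nabla_g \riskDiscrepancy{s}{g}{\h'}\big) + \big(\nabla_g \riskDiscrepancy{s}{g}{\h'} - \nabla_g \riskDiscrepancy{s}{g'}{\h'}\big)$, and use that \cref{ass:convexity-smoothness} makes $g\mapsto \riskDiscrepancy{s}{g}{\h'} - \gamma\nrm{g}_s^2$ convex, so the gradient of $\riskDiscrepancy{s}{\cdot}{\h'}$ is $2\gamma$-strongly monotone, i.e.\ $\inner{\nabla_g \riskDiscrepancy{s}{g}{\h'} - \nabla_g \riskDiscrepancy{s}{g'}{\h'}}{g - g'}_s \ge 2\gamma\nrm{g-g'}_s^2$. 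Substituting and applying Cauchy--Schwarz for $\inner{\cdot}{\cdot}_s$ yields
\[
2\gamma\nrm{g-g'}_s^2 \le \inner{\nabla_g \riskDiscrepancy{s}{g}{\h} - \nabla_g \riskDiscrepancy{s}{g}{\h'}}{g' - g}_s \le \nrm{\nabla_g \riskDiscrepancy{s}{g}{\h} - \nabla_g \riskDiscrepancy{s}{g}{\h'}}_s \cdot \nrm{g-g'}_s ,
\]
hence $\nrm{g-g'}_s \le \tfrac{1}{2\gamma}\,\nrm{\nabla_g \riskDiscrepancy{s}{g}{\h} - \nabla_g \riskDiscrepancy{s}{g}{\h'}}_s$.

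It then remains to bound the gradient mismatch at the single point $g$. By \cref{lem:gradient-smoothness}, evaluated at $x$ this difference equals $\sum_k \weight_k \tfrac{d\mu_k}{d\mu_s}(x)\,\nabla^2\phi_k(g(x))\,(h_k'(x) - h_k(x))$. I would bound its $\ell_2$-norm using $\nrm{\nabla^2\phi_k(g(x))}_2 \le \nu$ from \cref{ass:convexity-smoothness} (valid since $g(x)\in\cY$), the triangle inequality, and the observation that the coefficients $w_k(x) := \weight_k\tfrac{d\mu_k}{d\mu_s}(x)$ form a probability vector in $k$ because $\mu_s = \sum_j \weight_j \mu_j$. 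Jensen's inequality then gives $\big\|\sum_k w_k(x)\,\nabla^2\phi_k(g(x))(h_k'(x)-h_k(x))\big\|_2^2 \le \nu^2 \sum_k w_k(x)\,\nrm{h_k(x)-h_k'(x)}_2^2$. Integrating against $\mu_s$ and using the identity $w_k\,d\mu_s = \weight_k\,d\mu_k$ turns each $\mu_s$-integral into a $\weight_k$-weighted $\mu_k$-integral, so that $\nrm{\nabla_g \riskDiscrepancy{s}{g}{\h} - \nabla_g \riskDiscrepancy{s}{g}{\h'}}_s^2 \le \nu^2 \sum_k \weight_k \nrm{h_k - h_k'}_k^2$. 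Combining with the previous display gives $\nrm{g_s^{\h}-g_s^{\h'}}_s^2 \le \tfrac{\nu^2}{4\gamma^2}\sum_k \weight_k\nrm{h_k - h_k'}_k^2 = \Cstab \sum_k \weight_k \nrm{h_k - h_k'}_k^2$, as claimed.

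The genuinely delicate points are (i) that the first-order optimality condition is valid in the $L^2(\mu_s)$ setting, which we inherit from \cref{lem:variational-inequality,lem:gradient-smoothness}, and (ii) the bookkeeping with the Radon--Nikodym derivatives: the step that makes the constant come out cleanly is precisely $\weight_k\frac{d\mu_k}{d\mu_s}\,d\mu_s = \weight_k\,d\mu_k$, so that the apparently dangerous likelihood ratios collapse exactly into the weights appearing on the right-hand side. I expect this accounting --- together with carefully tracking which ground truth ($\h$ versus $\h'$) enters the strong-convexity step --- to be the part most prone to slips.
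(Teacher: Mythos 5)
Your proposal is correct and follows essentially the same route as the paper's proof: the two variational inequalities combined with $2\gamma$-strong monotonicity of $\nabla_g \riskDiscrepancy{s}{\cdot}{\h'}$, Cauchy--Schwarz, and the ``cross-smoothness'' bound $\nrm{\nabla_g\riskDiscrepancy{s}{g}{\h}-\nabla_g\riskDiscrepancy{s}{g}{\h'}}_s^2\le \nu^2\sum_k\weight_k\nrm{h_k-h_k'}_k^2$ proved via Jensen's inequality with the probability weights $\weight_k\,d\mu_k/d\mu_s$. The only differences are cosmetic (you add the two variational inequalities and decompose the gradient difference, whereas the paper subtracts the mixed term on both sides), so the argument matches the paper's.
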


We prove \cref{prop:stability} in \cref{subsubsec:proof-stability}. Note that a \emph{linear} bound would directly follow from Lipschitz continuity of the losses. However, this would yield much slower statistical rates than the stability argument from \cref{prop:stability}.
Recalling the definition of $\zeta_k$ from \eqref{eq:localization-in-H}, we can now use \cref{prop:stability} with $g_s^{\bhhat}=g_s'$, $g_s^{\bfstar}=g_s$, to conclude that on $\eventlab{\delta}$, it holds that for all $s\in\cS$,
\begin{equation}
\label{eq:bound-helper-set}
    \Tlab = \nrm{g_s'-g_s}_s^2
    \lesssim \Cstab\cdot s\pr{\zeta_1^2,\dots,\zeta_K^2}.   
\end{equation}
\cref{eq:bound-helper-set} describes how well our estimators would approximate the true set $\crl{g_s:s\in \cS}$ if we had an infinite amount of unlabeled data. In that sense, this can be seen as an intermediate result in the ideal semi-supervised setting by combining \cref{eq:basic-decomposition-localization,eq:bound-helper-set}.

\paragraph{Step 2: Localization along helper Pareto set in $\cG$ to bound $\Tunlab$.} 

We now need to take into account the finite sample effect of having only $N_k$ unlabeled samples to estimate the risk discrepancies. 

To perform localization around the helper set, we again rely on Talagrand's concentration inequality (\cref{lem:Talagrand}). The benefit of Talagrand's inequality in standard localization usually comes from the fact that it accounts for the \emph{variance} of the losses when centered at \emph{the ground truth}, which can usually be controlled by its radius of the local function class. We also used this in Step 1. Now, however, we need to \emph{simultaneously} localize for all scalarizations $s\in \cS$. Hence, recall $\cG_k(r;\bhhat)$ from \cref{eq:cGkprime}
where, intuitively, $r$ uniformly controls the deviations $\ghat_s-g_s'$ for all $s\in \cS$. 
To keep track of which $g_s'$ any $g\in\cG_k(r;\bhhat)$ ``belongs to'', we also define the set
\begin{equation}
    \label{eq:cMkprime}
    \cM_k(r)=\crl{(s,g):s\in \cS, g-g_s'\in \cG_k(r,\bhhat)}.
\end{equation}
Lifting the set $\cG_k(r;\bhhat)$ to $\cS\times \cG$ is inspired by a similar trick from multi-objective optimization, where the Pareto set is often lifted to this larger space to obtain its manifold structure, cf.\ \cite{roy2023optimization}.

We apply Talagrand's concentration inequality on $\cM_k(r)$ and use a localization argument, summarized in the following lemma. Define the radii $\rGrand = \inf\crl{r\geq 0 : r^2\geq \radComp_{N_k}^k(\cG_k(r;\bhhat))}$, and note that these radii are deterministic with respect to the unlabeled data, but are \emph{random} with respect to the labeled data through the ERMs, a point revisited in the next section. Recall that $\ghat_s$ is the minimizer of $\empRiskDiscrepancy{s}{g}{\bhhat}$ and $g_s'$ is the minimizer of $\riskDiscrepancy{s}{g}{\bhhat}$ (\cref{eqn:risk-discrepancy}). The next proposition bounds $\Tunlab = \nrm{\ghat_s-g_s'}^2_s$ (or, in fact, the deviation of any estimator satisfying the basic inequality $\empRiskDiscrepancy{s}{\ghat_s}{\bhhat}\leq \empRiskDiscrepancy{s}{g_s'}{\bhhat}$).
\begin{proposition}[Localization along helper Pareto set]
\label{prop:localization-along-helper}
    Under \cref{ass:loss-ass,ass:convexity-smoothness}, and for sufficiently small $\delta>0$, we have that $\PP(\eventunlab{\delta})\geq 1-\delta$, where we define the event
\begin{equation}
\label{eq:localization-along-helper}
    \eventunlab{\delta} := \crl{\forall s=\linearScalarization\in \cS:\quad \nrm{\ghat_s-g_s'}^2_s \lesssim \sum_{k=1}^K \weight_k\pr{\frac{L_k^2}{\gamma^2} \rGrand[2]+\pr{\frac{L_k^2}{\gamma^2}+\frac{B_k}{\gamma}}\frac{\log(2K/\delta)}{N_k}}}.
\end{equation}
\end{proposition}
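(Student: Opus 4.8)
The plan is to prove \cref{prop:localization-along-helper} by a localization argument around the helper set $\crl{g_s':s\in \cS}$, carried out \emph{conditionally on the labeled data}, so that each $g_s'=\argmin_{g\in \cG}\riskDiscrepancy{s}{g}{\bhhat}$ is a fixed function and only the $N_k$ unlabeled samples are random. Since $s=\linearScalarization$ is linear, both $\riskDiscrepancy{s}{g}{\bhhat}=\sum_{k=1}^K\weight_k\riskDiscrepancy{k}{g}{\hhat_k}$ and $\empScalarizedDiscrepancy{g}{\bhhat}=\sum_{k=1}^K\weight_k\empRiskDiscrepancy{k}{g}{\hhat_k}$ split coordinatewise, so I would work with the $K$ conditional empirical processes $\cW_k(g):=\riskDiscrepancy{k}{g}{\hhat_k}-\empRiskDiscrepancy{k}{g}{\hhat_k}$ and recombine them at the end through $s$. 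The first step is a \emph{Bernstein-type lower bound}: by the strong-convexity clause of \cref{ass:convexity-smoothness}, the map $g\mapsto\riskDiscrepancy{s}{g}{\bhhat}-\gamma\nrm{g}_s^2$ is convex on the convex, closed set $\cG$, hence $g\mapsto\riskDiscrepancy{s}{g}{\bhhat}$ is $\gamma$-strongly convex in $\nrm{\cdot}_s$ on $\cG$; combined with first-order optimality of the minimizer $g_s'$ over $\cG$ (using the gradient from \cref{lem:gradient-smoothness}), this yields $\riskDiscrepancy{s}{g}{\bhhat}-\riskDiscrepancy{s}{g_s'}{\bhhat}\ge\gamma\nrm{g-g_s'}_s^2$ for all $g\in\cG$. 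Feeding in the basic inequality $\empScalarizedDiscrepancy{\ghat_s}{\bhhat}\le\empScalarizedDiscrepancy{g_s'}{\bhhat}$ gives $\gamma\nrm{\ghat_s-g_s'}_s^2\le\riskDiscrepancy{s}{\ghat_s}{\bhhat}-\riskDiscrepancy{s}{g_s'}{\bhhat}\le\sum_{k=1}^K\weight_k\bigl(\cW_k(\ghat_s)-\cW_k(g_s')\bigr)$.

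Second, I would control each $\cW_k$ uniformly on the \emph{localized, lifted} class. Recall $\cG_k(r;\bhhat)=r\cB_{\nrm{\cdot}_k}\cap\bigcup_{s\in\cS}(\cG-g_s')$ from \cref{eq:cGkprime} and the book-keeping set $\cM_k(r)$ from \cref{eq:cMkprime}; taking the union over $s$ is exactly what lets a \emph{single} uniform bound cover all scalarizations even though the centers $g_s'$ move with $s$, because the differences $\ghat_s-g_s'$ all sit ``at the origin''. Conditionally on $\bhhat$, each loss $\loss_k(\hhat_k(\cdot),g(\cdot))$ is bounded by $B_k$ (\cref{ass:loss-ass}), and for $(s,g)\in\cM_k(r)$ the Lipschitz clause of \cref{ass:loss-ass} gives $\variance\bigl(\loss_k(\hhat_k(X^k),g(X^k))-\loss_k(\hhat_k(X^k),g_s'(X^k))\bigr)\le L_k^2\nrm{g-g_s'}_k^2\le L_k^2r^2$. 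Talagrand's concentration inequality (\cref{lem:Talagrand}) then gives, with probability $1-\delta/(2K)$, that $\sup_{(s,g)\in\cM_k(r)}\abs{\cW_k(g)-\cW_k(g_s')}\lesssim\EE\bigl[\sup_{(s,g)\in\cM_k(r)}\abs{\cW_k(g)-\cW_k(g_s')}\bigr]+L_kr\sqrt{\log(2K/\delta)/N_k}+B_k\log(2K/\delta)/N_k$; symmetrization followed by the vector contraction inequality of \cite{maurer2006bounds,Maurer2016vector} bounds the expected supremum by $\lesssim L_k\radComp_{N_k}^k(\cG_k(r;\bhhat))$. By the definition of $\rGrand$ and the monotonicity of $r\mapsto\radComp_{N_k}^k(\cG_k(r;\bhhat))/r$ (\cref{lem:non-increasing-complexities}), one has $\radComp_{N_k}^k(\cG_k(r;\bhhat))\le r\,\rGrand$ for all $r\ge\rGrand$; a dyadic peeling argument as in \cite{bartlett2005local,wainwright2019high} then upgrades the fixed-$r$ estimate into an envelope $\psi_k$ valid for all $r>0$, namely $\psi_k(r)\lesssim L_k\bigl(r\,\rGrand\vee\rGrand[2]\bigr)+L_kr\sqrt{\log(2K/\delta)/N_k}+B_k\log(2K/\delta)/N_k$.

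Third, I would close the argument. Since $\ghat_s-g_s'\in\cG_k(\nrm{\ghat_s-g_s'}_k;\bhhat)$, I may evaluate $\psi_k$ at $r=\nrm{\ghat_s-g_s'}_k$ and plug into the Bernstein bound to obtain $\gamma\nrm{\ghat_s-g_s'}_s^2\le\sum_{k=1}^K\weight_k\,\psi_k(\nrm{\ghat_s-g_s'}_k)$. Each cross term $\weight_kL_k\nrm{\ghat_s-g_s'}_k\,\rGrand$ and $\weight_kL_k\nrm{\ghat_s-g_s'}_k\sqrt{\log(2K/\delta)/N_k}$ is split by AM--GM into $\tfrac{\gamma}{4}\weight_k\nrm{\ghat_s-g_s'}_k^2$ plus a term of order $\tfrac{\weight_kL_k^2}{\gamma}\rGrand[2]$, respectively $\tfrac{\weight_kL_k^2}{\gamma}\cdot\tfrac{\log(2K/\delta)}{N_k}$; summing over $k$, the collected self-referential terms equal $\tfrac{\gamma}{2}\sum_k\weight_k\nrm{\ghat_s-g_s'}_k^2=\tfrac{\gamma}{2}\nrm{\ghat_s-g_s'}_s^2$ and are absorbed into the left side. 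Dividing by $\gamma$ then produces exactly the event $\eventunlab{\delta}$ of \cref{eq:localization-along-helper}, with the union bound over $k\in[K]$ inside the Talagrand step accounting for the $\log(2K/\delta)$.

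The main obstacle is the simultaneity over \emph{all} $s\in\cS$: unlike in classical localization there is no single population target to localize around, the centers $g_s'$ vary with $s$, and the ``local complexity'' must be measured on the union $\cG_k(r;\bhhat)=\bigcup_{s}(\cG-g_s')\cap r\cB_{\nrm{\cdot}_k}$ — which is precisely why the monotonicity of $r\mapsto\radComp_{N_k}^k(\cG_k(r;\bhhat))/r$ (\cref{lem:non-increasing-complexities}) and the lifted index set $\cM_k(r)$ are indispensable for the peeling/fixed-point step to go through uniformly. A secondary subtlety is that $g\mapsto\riskDiscrepancy{s}{g}{\bhhat}$ is in general \emph{not} convex (Bregman divergences need not be convex in the second argument), so the Bernstein condition cannot be read off the loss and must be extracted from \cref{ass:convexity-smoothness} via convexity of $g\mapsto\riskDiscrepancy{s}{g}{\bhhat}-\gamma\nrm{g}_s^2$ together with convexity of $\cG$.
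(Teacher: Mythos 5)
Your proposal is correct and follows essentially the same route as the paper's proof: conditioning on $\bhhat$, the lifted localized class $\cM_k(r)$ with Talagrand, symmetrization and vector contraction, the variance bound $L_k^2r^2$, the critical radius $\rGrand$ via \cref{lem:non-increasing-complexities} with a peeling argument over the random radii, and the multi-objective Bernstein condition extracted from \cref{ass:convexity-smoothness} plus the variational inequality at $g_s'$. The only cosmetic difference is the final algebra—you absorb the cross terms coordinatewise by AM--GM, whereas the paper applies Cauchy--Schwarz across $k$ and solves the quadratic inequality $r_s^2\le r_sa_s+b_s$—which yields the same bound up to constants.
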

The proof of \cref{prop:localization-along-helper} can be found in \cref{subsec:proof-localization-along-helper}.

\paragraph{Step 3: Bounding the random critical radii $\rGrand$.}
Recall that $\rGrand$ is deterministic with respect to the unlabeled data, but random with respect to the labeled data. To make the bound fully deterministic, we prove two bounds, so that their minimum appears in \cref{thm:main-result}.

\emph{Option 1} is taking the trivial approach: recall from \cref{eq:supremum-critical-radius} that we define for the function $g_s^{\h}=\argmin_{g\in\cG} \riskDiscrepancy{s}{g}{\h}$ the set 
\begin{equation*}
    \cG_k(r;\h):=\bigcup_{s\in \cS}(\cG-g_s^{\h})\cap r\cB_{\nrm{\cdot}_k}.
\end{equation*}
Then the following \emph{deterministic worst-case} localized radii
\begin{equation*}
    \rGsup := \sup_{\h\in \cH_1\times\dots\times \cH_K }\inf\crl{r\geq 0: r^2 \geq \radComp_{N_k}^k\pr{\cG_k(r;\h)}}
\end{equation*}
bounds $\rGrand[2] \leq \rGsup[2]$ (and $\rG[2]\leq \rGsup[2]$) deterministically (i.e., also almost surely).

\emph{Option 2} is more nuanced:
If \cref{ass:norm-equivalence} holds, we can combine \cref{eq:bound-helper-set} with an expansion argument to bound $\rGrand$ in terms of the $\rH$ and the $\rG$. 
To relate them, we employ the following key proposition.

\begin{proposition}[Critical radius shift]\label{prop:critical-radius-shift}
Let \(\cG\) be any class of functions that is convex (\cref{ass:convexity-smoothness}), and let \(n\in \NN\). 
Let $\nrm{\cdot}$ be any norm on $\Fall$ and let $\cB=\crl{f\in \Fall:\nrm{f}\leq 1}$ be its unit ball. Define 
\begin{equation*}
    \cG(r) = \bigcup_{s\in \cS} (\cG-g_s)\cap r\cB \qquad \text{and} \qquad  \cG'(r) = \bigcup_{s\in \cS} (\cG-g_s')\cap r\cB
\end{equation*}
as well as the critical radii (for $\radComp_n$ defined w.r.t.\ an arbitrary distribution)
\begin{equation*}
    \rGnok:=\inf \crl{r\ge0: \radComp_n\pr{\cG(r)}\le r^{2}} \qquad \text{and} \qquad \rGrandnok:= \inf \crl{r\ge0: \radComp_n\pr{\cG'(r)}\le r^{2}}.
\end{equation*}
Let $\Delta=\sup_{s\in \cS} \nrm{g_s-g_s'}$. Then it holds that $\rGrandnok\le 5(\rGnok+ \Delta)$.
\end{proposition}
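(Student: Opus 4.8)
I would sandwich $\cG'(r)$ between translates of the family $\cG(\,\cdot\,)$ of only slightly larger radius, pass to Rademacher complexities, and then use that the critical radius $\rGnok$ forces the sublinear growth $\radComp_n(\cG(\rho))\le\rho\,\rGnok$ for $\rho\ge\rGnok$. Plugging $r_0:=5(\rGnok+\Delta)$ into the resulting comparison inequality should certify $\radComp_n(\cG'(r_0))\le r_0^2$, which gives $\rGrandnok\le r_0$ directly from the definition of $\rGrandnok$ as an infimum.

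\textbf{Step 1 (star-shapedness and the monotone-ratio lemma).} First I would record that, since $\cG$ is convex and $g_s,g_s'\in\cG$, each $\cG-g_s$ and $\cG-g_s'$ is convex and contains the origin, hence is star-shaped around $0$; the unions $\cF:=\bigcup_{s\in\cS}(\cG-g_s)$ and $\cF':=\bigcup_{s\in\cS}(\cG-g_s')$ inherit this, and so do $\cG(r)=\cF\cap r\cB$ and $\cG'(r)=\cF'\cap r\cB$ (if $\nrm{f}\le r$ and $\alpha\in[0,1]$ then $\nrm{\alpha f}\le r$ and $\alpha f$ stays in $\cF$, resp.\ $\cF'$, by convexity of $\cG$). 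The standard rescaling argument — an element of $\cF\cap\rho_2\cB$ shrunk by $\rho_1/\rho_2\le1$ lands in $\cF\cap\rho_1\cB$, combined with positive homogeneity of $\radComp_n$ — then shows $\rho\mapsto\radComp_n(\cG(\rho))/\rho$ and $\rho\mapsto\radComp_n(\cG'(\rho))/\rho$ are non-increasing on $(0,\infty)$. From the definition of $\rGnok$ I would deduce that, if $\rGnok>0$, then $\radComp_n(\cG(\rho))\le\rho\,\rGnok$ for every $\rho>\rGnok$ (pick $r_\eps<\rGnok+\eps$ with $\radComp_n(\cG(r_\eps))\le r_\eps^2$, apply monotonicity at $\rho\ge r_\eps$, and let $\eps\downarrow0$); the degenerate case $\rGnok=0$ forces $\radComp_n(\cG(\cdot))\equiv0$, which makes the claimed bound immediate, so I would dispense with it at the outset. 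I would also note the trivial ball-monotonicity $\rho_1\le\rho_2\Rightarrow\radComp_n(\cG(\rho_1))\le\radComp_n(\cG(\rho_2))$.

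\textbf{Step 2 (comparison inequality).} Next I would decompose an arbitrary $f=g-g_s'\in\cG'(r)$, $g\in\cG$, $\nrm{f}\le r$, as $f=(g-g_s)+(g_s-g_s')$: the first term lies in $\cG-g_s$ with $\nrm{g-g_s}\le r+\Delta$, hence in $\cG(r+\Delta)$, and the second lies in the shift set $\cD:=\{g_{s'}-g_{s'}':s'\in\cS\}$. Thus $\cG'(r)\subseteq\cG(r+\Delta)\oplus\cD$, so subadditivity of $\radComp_n$ over Minkowski sums gives $\radComp_n(\cG'(r))\le\radComp_n(\cG(r+\Delta))+\radComp_n(\cD)$. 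The key observation is that $-\cD=\{g_{s'}'-g_{s'}:s'\in\cS\}\subseteq\cG(\Delta)$, because $g_{s'}'\in\cG$ and $\nrm{g_{s'}'-g_{s'}}\le\Delta$; since $\radComp_n$ is invariant under negating a function class, $\radComp_n(\cD)=\radComp_n(-\cD)\le\radComp_n(\cG(\Delta))$. Hence $\radComp_n(\cG'(r))\le\radComp_n(\cG(r+\Delta))+\radComp_n(\cG(\Delta))$ for all $r\ge0$.

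\textbf{Step 3 (plugging in $r_0$) and the main obstacle.} With $r_0:=5(\rGnok+\Delta)$ and $r_0+\Delta=5\rGnok+6\Delta>\rGnok$, Step 1 gives $\radComp_n(\cG(r_0+\Delta))\le(r_0+\Delta)\rGnok$, and, by ball-monotonicity together with the bound of Step 1 applied at radius $\Delta+\rGnok+\eps$ and letting $\eps\downarrow0$, $\radComp_n(\cG(\Delta))\le(\Delta+\rGnok)\rGnok$. Adding, $\radComp_n(\cG'(r_0))\le\rGnok(r_0+2\Delta+\rGnok)=6\rGnok^{2}+7\rGnok\Delta$, whereas $r_0^{2}=25\rGnok^{2}+50\rGnok\Delta+25\Delta^{2}$, so $r_0^{2}-\radComp_n(\cG'(r_0))=19\rGnok^{2}+43\rGnok\Delta+25\Delta^{2}\ge0$; therefore $r_0$ lies in $\{r\ge0:\radComp_n(\cG'(r))\le r^2\}$ and $\rGrandnok\le r_0=5(\rGnok+\Delta)$. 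The only genuinely non-mechanical point is the treatment of the $s$-dependent translations $g_s-g_s'$ in Step 2: a crude bound on $\radComp_n(\cD)$ would introduce a new, uncontrolled complexity term, whereas the reflection $-\cD\subseteq\cG(\Delta)$ recycles a quantity already governed by $\rGnok$. The remaining ingredients (star-shapedness of the union, the monotone-ratio lemma, the $\eps\downarrow0$ argument at the critical radius) are standard; the main care needed is to invoke only the abstract hypotheses of the proposition — convexity of $\cG$, membership $g_s,g_s'\in\cG$, and an arbitrary norm and sampling distribution — rather than the stronger \cref{ass:convexity-smoothness}.
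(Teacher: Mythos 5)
Your proof is correct and follows essentially the same route as the paper's: the same inclusion $\cG'(r)\subseteq\cG(r+\Delta)+\{g_s-g_s'\}$, the same subadditivity plus reflection trick $-\{g_s-g_s'\}\subseteq\cG(\Delta)$, and the same monotone-ratio lemma for the localized classes. The only (harmless) difference is the endgame: the paper bounds $\radComp_n(\cG'(\rGnok+\Delta))$ and then scales up via the monotone ratio of $\cG'$, whereas you evaluate the comparison inequality directly at $r_0=5(\rGnok+\Delta)$ and verify the quadratic inequality, which dispenses with the monotone ratio for $\cG'$ and handles the non-attained infimum slightly more carefully.
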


The proof of \cref{prop:critical-radius-shift} can be found in \cref{subsubsec:proof-critical-radius-shift}. 
We can apply \cref{prop:critical-radius-shift} to our setting:
recall the definitions of $\cG_k'(r)$ from \cref{eq:cGkprime} and $\cG_k(r)$ from \cref{eq:Hk-and-cGk}, and the definitions $\rGrand = \inf\crl{r\geq 0: r^2 \geq \cG_k(r;\bhhat)}$ and $ \rG = \inf\crl{r\geq 0: r^2\geq \radComp_{N_k}^k(\cG_k(r;\bfstar))}$.
From \cref{ass:norm-equivalence,lem:norm-equivalence-Radon-Nikodym,eq:bound-helper-set}, we know that on $\eventlab{\delta}$ from \cref{eq:localization-in-H}, for $\zeta_{\cS}^2=\sup_{s\in \cS}s\pr{\zeta_1^2,\dots,\zeta_K^2}$ 
\begin{equation*}
    \sup_{s\in \cS}\nrm{g_s'-g_s}_k^2\leq \sup_{s\in \cS}\eta^2\nrm{g_s'-g_s}_s^2 \lesssim \sup_{s\in \cS}\eta^2 \Cstab \cdot s\pr{\zeta_1^2,\dots,\zeta_K^2} \leq \eta^2  \Cstab \cdot \zeta_{\cS}^2 =:\Delta^2.
\end{equation*}
Employing \cref{prop:critical-radius-shift} with this $\Delta$ yields $\rGrand[2] \lesssim \rG[2] +\eta^2 \Cstab \cdot \zeta_{\cS}^2$.

We define
\begin{equation}
\label{eq:Cadd}
    \Cadd := \max_{k\in[K]} \pr{\frac{B_k}{\mu_k}+\frac{L_k^{2}}{\mu_k^2}},
\end{equation}
$\rHmax[2]:= \sup_{s\in\cS} s(\rHnok[1]^2,\dots,\rHnok[K]^2)$, and $n_{\cS} =1/\sup_{s\in \cS} s(1/n_1,\ldots,1/n_K)$. We can bound
\begin{align}
    \rGrand[2]&\lesssim \rG[2] +\eta^2 \Cstab \cdot \zeta_{\cS}^2 \nonumber \\
    &= \rG[2] + \eta^2 \Cstab \sup_{s\in \cS}\, s\pr{\pr{\frac{L_k^{2}}{\mu_k^{2}}\rH[2]+\pr{\frac{B_k}{\mu_k}+\frac{L_k^{2}}{\mu_k^2}}\frac{\log(4K/\delta)}{n_k}}_{k=1}^K } \nonumber\\
    &\leq \rG[2] + \eta^2 \Cstab \Cadd \pr{\rHmax[2] + \frac{\log(4K/\delta)}{n_{\cS}}} \nonumber \\
    &\leq \eta^2 \Cstab \Cadd\pr{\rG[2] +\rHmax[2] + \frac{\log(4K/\delta)}{n_{\cS}} } \label{eq:critical-radius-shift-bound}
\end{align}

Note that in general, either bound can be tighter. For practical purposes, it may be easier to bound $\rGsup$ anyways, so the detour through $\rG$ may be unnecessary.

\paragraph{Putting everything together.} 

From \cref{eq:basic-decomposition-localization}, we see that on $\eventlab{\delta/2}\cap \eventunlab{\delta/2}$, which holds with probability at least $1-\delta$ by union bound, for all $s\in \cS$, the excess $s$-trade-off $\sTradeoff(\ghat_s)-\inf_{g\in\cG}\sTradeoff(g)$ is bounded by
\begin{align*}
    &\Csmooth \pr{\Tunlab + \Tlab} \\
    &\lesssim \Csmooth \Bigg(\sum_{k=1}^K \weight_k\pr{\frac{L_k^2}{\gamma^2} \rGrand[2]+\pr{\frac{L_k^2}{\gamma^2}+\frac{B_k}{\gamma}}\frac{\log(4K/\delta)}{N_k} + \Cstab \pr{\frac{L_k^{2}}{\mu_k^{2}}\rH[2]+\pr{\frac{B_k}{\mu_k}+\frac{L_k^{2}}{\mu_k^2}}\frac{\log(4K/\delta)}{n_k}}}   \Bigg) \tag{from \cref{eq:localization-along-helper,eq:bound-helper-set}} \\
    &\leq \sum_{k=1}^K\weight_k \underbrace{ \Csmooth\Cstab\max\crl{\frac{L_k^2}{\gamma^2}+\frac{B_k}{\gamma},\frac{B_k}{\mu_k}+\frac{L_k^{2}}{\mu_k^2}}}_{=:\Cfin} \pr{\rGrand[2] + \rH[2] + \pr{N_k^{-1}+n_k^{-1}}\log(4K/\delta)},
\end{align*}
where $\lesssim$ only hides universal constants. From the two options of bounding $\rGrand$ we obtain:
\begin{enumerate}[leftmargin=*,itemsep=0pt,topsep=-2pt]
    \item The first bound, valid without \cref{ass:norm-equivalence}: Recalling $\Csmooth= \nu (1 + D)$, $\Cstab= \frac{\nu^2 }{4\gamma^2}$ 
    \begin{align*}
        \sTradeoff(\ghat_s)-\inf_{g\in\cG}\sTradeoff(g) &\lesssim \sum_{k=1}^K \weight_k \Cfin \pr{\rGsup[2] + \rH[2] + \pr{N_k^{-1}+n_k^{-1}}\log(4K/\delta)} \\
        \text{where} \quad \Cfin & \asymp  \frac{\nu^3 (1+D)}{\gamma^2} \max\crl{\frac{L_k^2}{\gamma^2}+\frac{B_k}{\gamma},\frac{B_k}{\mu_k}+\frac{L_k^{2}}{\mu_k^2}}. 
    \end{align*}
    \item The second bound, valid under \cref{ass:norm-equivalence}, by plugging in \cref{eq:critical-radius-shift-bound} and $\Cadd$ from \cref{eq:Cadd}
    \begin{align*}
        \sTradeoff(\ghat_s)-\inf_{g\in\cG}\sTradeoff(g) &\lesssim \sum_{k=1}^K\weight_k \widetilde C_k \pr{\rG[2] + \rHnok[\cS]^2 + \pr{N_k^{-1}+n_{\cS}^{-1}}\log(4K/\delta)}\\
        \text{where} \quad \widetilde C_k &= \Cfin \cdot \eta^2 \Cstab \Cadd \asymp \Cfin \cdot \eta^2 \frac{\nu^2}{\gamma^2} \max_{k\in[K]}\pr{\frac{B_k}{\mu_k}+\frac{L_k^{2}}{\mu_k^2}}.
    \end{align*}
\end{enumerate}
That concludes the proof of \cref{thm:main-result}, with the proofs of the auxiliary results presented next.

\subsubsection{Proof of preliminary lemmata}
\label{subsubsec:proofs-norms}
\begin{proof}[Proof of \cref{lem:Bochner-and-gradient}]
    The claim for $\inner{\cdot}{\cdot}_k,\nrm{\cdot}_k,k\in[K]$ is true by definition, but also as a special case of the scalarized form: for any $s=\linearScalarization \in \Slin$ and $f,f'\in L^2(\mu_s)$ we have
    \begin{equation*}
        \inner{f}{f'}_s = s(\inner{f}{f'}_1,\dots,\inner{f}{f'}_K) = \sum_{k=1}^K \weight_k \int \inner{f}{f'}d\mu_k = \int \inner{f}{f'} d\pr{\sum_{k=1}^K \weight_k \mu_k} = \int \inner{f}{f'}d\mu_s
    \end{equation*}
    where $\inner{\cdot}{\cdot}$ is the Euclidean inner product.
    This is exactly the inner product of the Bochner $L^2(\mu_s)$ space (e.g., \cite{hytonen2016analysis}).
    Further, plugging in $f'=f$ we obtain directly that
    $\inner{f}{f}_s = \nrm{f}_s^2$, 
    verifying that the norm is induced by this inner product.
\end{proof}

\begin{proof}[Proof of \cref{lem:gradient-smoothness}]

Recall that $\inner{\cdot}{\cdot}_s$ denotes the inner product of the norm $\nrm{\cdot}_s^2=\sum_{k=1}^K\weight_k \nrm{\cdot}_k^2$.
In this proof, we use \emph{Fr\'echet derivatives} (denoted $D$) and the corresponding gradients $\nabla_g$ induced by the inner product $\inner{\cdot}{\cdot}_s$. 
Background on Fr\'echet derivatives can be found in \cite{aliprantis2006infinite,bauschke2017convex}. From \cref{lem:Bochner-and-gradient} we know that $\nrm{\cdot}_s$ actually is the (semi-)Hilbert norm that corresponds to the Bochner $L^2(\mu_s)$ space with respect to the space $(\RR^q,\nrm{\cdot}_2)$, 
where recall that $\mu_s$ denotes the mixture distribution
\begin{equation*}
    \mu_s = \sum_{k=1}^K \weight_k \mu_k \qquad \text{where} \qquad \mu_k = P^k_{X}.
\end{equation*}
It is then easily shown that the gradient of $\sum_{k=1}^K\weight_k\EE\br{Q_k(g(X^k))}$ for any differentiable functions $Q_k:\RR^q\supset \cY\to\RR$ with $\sup_{y\in \cY}\nrm{\nabla Q(y)}_2\leq M<\infty$, induced by $\inner{\cdot}{\cdot}_s$, is given by
\begin{align*}
    \nabla_g \sum_{k=1}^K\weight_k\EE\br{Q_k(g(X^k))} :\cX &\to \RR^q, \qquad x \mapsto \sum_{k=1}^K \lambda_k \frac{d\mu_k}{d\mu_s}(x) \nabla Q_k(g(x)).
\end{align*}
Indeed, for any $f\in \Fall\subset L^2(\mu_s)$, we can write the Fr\'echet derivative as the limit
\begin{align*}
    D\pr{\sum_{k=1}^K\weight_k \EE\br{Q_k(g(X^k))}} [f] &= \sum_{k=1}^K\weight_k \lim_{\eps\to 0} \frac{\EE\br{Q_k(g(X^k)+\eps f(X^k))}-\EE\br{Q_k(g(X^k))}}{\eps} \\
    &= \sum_{k=1}^K\weight_k\EE\br{\inner{\nabla Q_k(g(X^k))}{f(X^k)}} \tag{dominated convergence}\\
    &= \sum_{k=1}^K\weight_k \int \inner{(\nabla Q_k(g(x))}{f(x)} \frac{d\mu_k}{d\mu_s}(x) d\mu_s(x) \\
    &= \int \inner{\sum_{k=1}^K \weight_k \frac{d\mu_k}{d\mu_s}(x) \nabla Q_k(g(x)) }{f(x)}d\mu_s(x) \\
    &= \inner{\sum_{k=1}^K \weight_k \frac{d\mu_k}{d\mu_s} (\nabla Q_k \circ g )}{f}_s
\end{align*}
where we could use dominated convergence thanks to $\sup_{y\in \cY}\nrm{\nabla Q(y)}_2\leq M<\infty$.
This implies the claimed form of the gradient. 

Since $\loss_k$ is Lipschitz and differentiable, its gradient in $g$ is bounded. Further,
\begin{equation*}
    \nabla_g \riskDiscrepancy{s}{g}{\h} =  \nabla_g \sum_{k=1}^K\weight_k \riskDiscrepancy{k}{g}{h_k} = \nabla_{g} \sum_{k=1}^K\weight_k \EE\br{\loss_k(h_k(X^k),g(X^k))}
\end{equation*}
and the gradient of a Bregman divergence in its second argument is given by
\begin{align*}
    \nabla_y D_\phi(x,y) &= \nabla_y \pr{\phi(x)-\phi(y)-\inner{\nabla\phi(y)}{x-y}} \\
    &= -\nabla \phi(y)- \nabla^2\phi(y)x + \nabla^2\phi(y) y + \nabla\phi(y) \\
    &= \nabla^2\phi(y)(y-x),
\end{align*}
so that the previous derivations imply for the Bregman losses that
\begin{equation*}
    \nabla_g \riskDiscrepancy{s}{g}{\h} : x\mapsto \sum_{k=1}^K\weight_k \frac{d\mu_k}{d\mu_s}(x) \nabla^2\phi_k(g(x))(g(x)-h_k(x))),
\end{equation*}
which is the first claim of the lemma.

Note that $\mu_s$-almost surely $\sum_{k=1}^K \weight_k \frac{d\mu_k}{d\mu_s}=1$. Hence, for every fixed $\h$, and $g,g'$, 
\begin{align}
    &\nrm{\nabla_g\riskDiscrepancy{s}{g}{\h}-\nabla_g\riskDiscrepancy{s}{g'}{\h}}_s^2 \nonumber \\
    &= \int \nrm{\sum_{k=1}^K \weight_k \frac{d\mu_k}{d\mu_s} \br{\nabla^2\phi_k(g)(h_k-g)-\nabla^2\phi_k(g')(h_k-g')}}_2^2 d\mu_s \nonumber \\
    &\leq \int \sum_{k=1}^K \weight_k \frac{d\mu_k}{d\mu_s} \nrm{\nabla^2\phi_k(g)(h_k-g)-\nabla^2\phi_k(g')(h_k-g')}_2^2 d\mu_s \tag{Jensen's inequality} \\
    &\leq \int \sum_{k=1}^K \weight_k \frac{d\mu_k}{d\mu_s} \pr{\nrm{\nabla^2\phi_k(g)(g-g')}_2+\nrm{(\nabla^2\phi_k(g)-\nabla^2\phi_k(g'))(h_k-g')}_2}^2 d\mu_s. \label{eqn:smoothness-first-bound}
\end{align}
To bound the first term, we use from \cref{ass:convexity-smoothness} that the $\ell_2$-operator norm of $\nabla^2\phi(g(x))$ is bounded by $\nu>0$, so that
\begin{equation*}
    \nrm{\nabla^2\phi_k(g)(g-g')}_2 \leq  \nu \nrm{g-g'}_2.
\end{equation*}
To bound the second term, we use that $\nrm{h_k-g'}_2 \leq \diam_{\nrm{\cdot}_2}(\cY)=:D$, and so
\begin{equation*}
    \nrm{(\nabla^2\phi_k(g)-\nabla^2\phi_k(g'))(h_k-g')}_2 \leq D \nrm{\nabla^2\phi_k(g)-\nabla^2\phi_k(g')}_2, 
\end{equation*}
which together with the smoothness from \cref{ass:convexity-smoothness} implies
\begin{equation*}
    \nrm{\nabla^2\phi_k(g)-\nabla^2\phi_k(g')}_2 = \nrm{\int_{0}^1 (\nabla^3\phi_k(g+t(g'-g))(g-g') dt}_2 \leq \nu \nrm{g-g'}_2
\end{equation*}
Plugging both into \cref{eqn:smoothness-first-bound} yields
\begin{equation*}
    \nrm{\nabla_g\riskDiscrepancy{s}{g}{\h}-\nabla_g\riskDiscrepancy{s}{g'}{\h}}_s^2 \leq \int \sum_{k=1}^K \weight_k \frac{d\mu_k}{d\mu_s} \nu^2(1+D)^2 \nrm{g-g'}_2^2 d\mu_s = \nu^2(1+D)^2 \nrm{g-g'}_s^2.
\end{equation*}

Hence, by equivalent characterizations of smoothness (e.g., \cite[Corollary 18.14]{bauschke2017convex}) it follows that
\begin{equation*}
    \riskDiscrepancy{s}{g}{\h}-\riskDiscrepancy{s}{g'}{\h}-\inner{\nabla_g\riskDiscrepancy{s}{g'}{\h}}{g-g'}_s \leq \frac{ \nu (1 + D)}{2} \nrm{g-g'}_s^2.
\end{equation*}
For the minimizer $g^{\h}_s = \argmin_{g\in \cG}\riskDiscrepancy{s}{g}{\h}$ we can use the variational inequality $\inner{\nabla \riskDiscrepancy{s}{g_s^{\h}}{\h}}{g-g_s^{\h}}_s\geq 0$ (e.g., \cite[Theorem 46]{zeidler1985nonlinear}) to obtain the bound
\begin{equation*}
    \riskDiscrepancy{s}{g}{\h}-\riskDiscrepancy{s}{g^{\h}_s}{\h}\leq \frac{ \nu  (1 + D)}{2} \nrm{g-g^{\h}_s}_s^2.
\end{equation*}
This concludes the proof.
\end{proof}

\begin{proof}[Proof of \cref{lem:norm-equivalence-Radon-Nikodym}]
Denote $\mu_k = P^k_X$ and $\mu_s = \sum_k\weight_k \mu_k$. Recall the definition of the essential supremum of a function $f:\cX\to \RR$ (with respect to $\mu_s$):
\begin{equation*}
    \esssup f = \inf\crl{a\in \RR: \mu_s(f^{-1}(a,\infty))=0}.
\end{equation*}

We start with ``$\Leftarrow$'': Since $\mu_s(\{ x\in \cX: d\mu_k/d\mu_s (x) \geq \eta^2\})=0$,
for any $k\in [K], s\in \cS$ and $f\in \Fall$, 
\begin{equation*}
    \nrm{f}_k^2=\int \nrm{f}_2^2 \frac{d\mu_k}{d\mu_s} d\mu_s \leq \eta^2 \nrm{f}_s^2 \implies  \nrm{f}_k\leq \eta\nrm{f}_s.
\end{equation*}

Now we show ``$\Rightarrow$'': Choose an arbitrary $y\neq 0\in \cY$ and measurable $A\subset \cX$, and let $f=(y/\nrm{y}_2)1_A$. Note that $\nrm{f}_2^2=1_A$. Then for all $s=\linearScalarization\in \cS$
\begin{equation*}
    \mu_k(A) = \int \nrm{f}^2_2 \mu_k = \nrm{f}_k^2 \leq \eta^2 \nrm{f}_s^2 = \eta^2 \sum_{j=1}^K \weight_j \nrm{f}_j^2 =\eta^2 \sum_{j=1}^K\weight_j \mu_j(A) = \eta^2 \mu_s(A).
\end{equation*}
This implies the bound $\alpha:= \esssup d\mu_k / d\mu_s \leq \eta^2$, since for any $\eps>0$ we can choose the measurable event $A_\eps:= \crl{x: d\mu_k / d\mu_s(x) \geq \alpha -\eps}$ which satisfies (by definition) $\mu_s(A_\eps) >0$ and so
\begin{equation*}
    \eta^2\geq \frac{\mu_k(A_\eps)}{\mu_s(A_\eps)} = \frac{1}{\mu_s(A_\eps)} \int_{A_\eps} \frac{d\mu_k}{d\mu_s} d\mu_s\geq \alpha-\eps.
\end{equation*}
Taking $\eps\to 0$ concludes the proof.
\end{proof}

\subsubsection{Proof of Lemma \ref{lem:non-increasing-complexities}} 
\label{subsec:proof-non-increasing-complexities}

For the first function, the argument is standard, we repeat it here for completeness. Let $0<r<r'$ and consider some $h\in \cH_k(r')$. Then $\nrm{h}_k\leq r'$ and hence $\nrm{(r/r')h}_k\leq r$, so that $(r/r')h\in \cH_k(r)$ by the star-shape of $\cH_k$ from \cref{ass:convexity-smoothness}. Therefore, we have that 
\begin{align*}
    \frac{r}{r'}\radComp_{n_k}^k\pr{\cH_k(r')} &= \EE\br{\sup_{h\in \cH(r')}\frac{1}{n_k}\sum_{i=1}^{n_k}\sum_{j=1}^q \sigma_{ij} \frac{r}{r'}h_j(X_i)}\leq \radComp_{n_k}^k\pr{\cH_k(r)}
\end{align*}
which is the claim.

For the other function the proof is identical once we realize that the convexity of $\cG$ from \cref{ass:convexity-smoothness} implies that $\cG_k(r;\h)$ is star-shaped around the origin. Indeed, for any $\h\in \cH_1\times \dots \times\cH_K$ and $g_s^{\h}=\argmin_{g\in \cG}\riskDiscrepancy{s}{g}{\h}$, since $(\cG-g_s^{\h}) \cap r\cB_k$ is convex and contains the origin,
\begin{equation*}
    g\in \bigcup_{s\in \cS} (\cG-g_s^{\h}) \cap r\cB_k \implies \forall \alpha \in [0,1], \ \alpha g \in \bigcup_{s\in \cS} (\cG-g_s^{\h}) \cap r\cB_k.
\end{equation*}
We require this star-shapedness for all $\h\in \cH_1\times \dots \times\cH_K$, 
because we also localize around $g_s'=g_s^{\bhhat}$
that are random elements and may be anywhere in $\cG$.

\subsubsection{Proof of Lemma \ref{lem:localization-in-H}}
\label{subsec:proof-localization-in-H}
The proof of this Lemma is a mixture of Corollary 5.3 in \cite{bartlett2005local} and Theorem 14.20 in \cite{wainwright2019high}; see also \cite{Kanade2024exponential} for an exposition. We repeat it here for completeness and because we make slightly different assumptions from \cite{bartlett2005local,wainwright2019high}, see \cref{rem:events-localization} below.
Recall the definition of the sets for any $r\geq0$,
\begin{equation*}
    \cH_k(r):= (\cH_k-\fstar_k)\cap r\cB_{\nrm{\cdot}_k} 
\end{equation*}
and the random variables
\begin{align*}
    T_k(r) &= \sup_{h-\fstar_k\in \cH_k(r)}\abs{(\risk_k(h)-\risk_k(\fstar_k))-(\empRisk_k(h)-\empRisk_k(\fstar_k))}
\end{align*}
which are the suprema of empirical processes indexed by the function classes defined as 
\[\crl{(x,y)\mapsto \loss_k(y,h(x))-\loss_k(y,\fstar_k(x)) : h-\fstar_k\in \cH_k(r)}.\] 
By \cref{ass:loss-ass}, these function classes are uniformly bounded by $B_k\geq 0$. Hence, by Talagrand's concentration inequality (\cref{lem:Talagrand}), for any choice of \emph{deterministic} radii $r_1,\dots,r_K\geq 0$, the event
\begin{equation*}
    \eventlabthree{\delta}{r_1,\ldots,r_K} := \crl{\forall k\in [K]:\  T_k(r_k) \leq 2\EE\br{T_k(r_k)} +  \sqrt{2} \sqrt{\frac{\tau^2_k(r_k)\log(K/\delta)}{n_k}}+3\frac{B\log(K/\delta)}{n_k}} 
\end{equation*}
holds with probability at least $1-\delta$. Here $\tau_k^2(r)$ is a short-hand for the variance proxy from \cref{lem:Talagrand}, defined as
\begin{equation*}
    \tau^2_k(r) = \sup_{h-\fstar_k\in \cH_k(r)} \variance\br{\loss_k(Y^k,h(X^k))-\loss_k(Y^k,\fstar_k(X^k))}.
\end{equation*}
We now bound $\EE\br{T_k(r_k)}$ and $\tau^2_k(r_k)$.
Using symmetrization (\cref{lem:symmetrization}) and vector contraction (\cref{lem:contraction}), recalling that $\loss_k$ is $L_k$-Lipschitz w.r.t.\ the $\ell_2$-norm in its second argument, we can bound
\begin{align*}
    \EE\br{T_k(r)}
    &\leq 6L_k\radComp_{n_k}^k(\cH_k(r)) \qquad \text{and} \qquad \tau_k^2(r) \leq L_k^2 r^2.
\end{align*}
Therefore, we get on the event $\eventlabthree{\delta}{r_1,\ldots,r_K}$ that for all $k\in [K]$
\begin{equation*}
    T_k(r_k) \leq 12L_k\radComp_{n_k}^{k}(\cH_k(r_k)) + \sqrt{2} L_k r_k\sqrt{\frac{\log(K/\delta)}{n_k}} +3B\frac{\log(K/\delta)}{n_k}.
\end{equation*}
Now recall the definition
\begin{equation*}
    \rH:= \inf\crl{r\geq 0: r^2\geq \radComp_{n_k}^k(\cH_k(r))}.
\end{equation*}
By \eqref{eq:non-increasing-complexities}, we get that for any $r\geq \rH$
\begin{equation*}
    \frac{\radComp_{n_k}^{k}(\cH_k(r))}{r} \leq \frac{\radComp_{n_k}^{k}(\cH_k(\rH))}{\rH}\leq \rH
\end{equation*}
and therefore, if $r_k\geq \rH$ for all $k$, on the event $\eventlabthree{\delta}{r_1,\ldots,r_K}$ (which holds with probability at least $1-\delta$), it holds that
\begin{equation*}
    T_k(r_k) \leq 12L_kr_k\rH + \sqrt{2} L_k r_k\sqrt{\frac{\log(K/\delta)}{n_k}} +3B\frac{\log(K/\delta)}{n_k}=:\Phi_k(r_k,\delta).
\end{equation*}

We now choose $r_k:=\nrm{\hhat_k-\fstar_k}_k$, which are \emph{random} radii, so we have to perform a peeling argument. Define the event
\begin{equation*}
    \eventlabtwo{\delta} := \crl{\exists k\in[K],\, h\in \cH_k:\ \nrm{h-\fstar_k}_k\geq \rH \text{ and } T_k(\nrm{h-\fstar_k}_k)\geq 3\Phi_k(\nrm{h-\fstar_k}_k,\delta)}.
\end{equation*}
Because $\nrm{h-\fstar_k}_k \leq \diam_{\nrm{\cdot}_2}(\cY)=:D$, we know that for any $M$ satisfying $2^M \rH \geq D \iff M \geq \log (D/\rH)/\log(2)$, for any $\nrm{h-\fstar_k}_k\geq \rH$ there must be at least one $0\leq m\leq M$ so that $2^{m-1}\rH \leq \nrm{h-\fstar_k}_k \leq 2^{m}\rH$.
Moreover, a calculation shows that the functions $\Phi_k$ satisfy
\begin{equation*}
    \forall  m\leq M :\quad  3\Phi_k(2^{m-1}\rH,\delta) \geq \Phi_k(2^{m}\rH,\delta/2^{m}).
\end{equation*}
for sufficiently small $\delta$, and so $\PP(\eventlabtwo{\delta})$ is bounded by
\begin{align*}
    &\PP\Bigg(\bigcup_{m\in [M]} \big\{\exists k\in [K],h\in \cH_k:\ 2^{m-1}\rH \leq \nrm{h-\fstar_k}_k\leq 2^m\rH 
    \text{ and } T_k(\nrm{h-\fstar_k}_k)\geq 3\Phi_k(\nrm{h-\fstar_k}_k,\delta)\big\}\Bigg) \\
    &\leq \sum_{m\in [M]} \PP\pr{\exists k\in [K]:\  T_k(2^m\rH)\geq 3\Phi_k(2^{m-1}\rH,\delta)} \\
    &\leq \sum_{m\in [M]} \PP\pr{\exists k\in [K]:\  T_k(2^m\rH)\geq \Phi_k(2^{m}\rH,\delta/2^{m})} \stackrel{(a)}{\leq} \sum_{m\in [M]} \frac{\delta}{2^{m}} \leq \delta.
\end{align*}
where in $(a)$ we used that $\PP(\eventlabthree{\delta/2^m}{2^m\rHnok[1],\ldots,2^m\rHnok[K]})\geq 1-\delta/2^m$.

Now, by the standard risk decomposition, we have that
\begin{align*}
    \risk_k(\hhat_k)-\risk_k(\fstar_k) &= \risk_k(\hhat_k)-\empRisk_k(\hhat_k)+\underbrace{\empRisk_k(\hhat_k)-\empRisk_k(\fstar_k)}_{\leq 0} +\empRisk_k(\fstar_k)-\risk_k(\fstar_k)
    \leq 2T_k\pr{\nrm{\hhat_k-\fstar_k}_k},
\end{align*}
and we can make a case distinction.

\begin{remark}
\label{rem:events-localization}
    Many localization proofs for general loss functions only assume strong convexity and Lipschitz continuity (see, e.g., Section 14.3 in \cite{wainwright2019high}), and therefore one needs to handle the case where the $L^2$-radius is bounded but the excess loss is not (tightly) bounded, which would occur in the first case below. In our setting, by the smoothness (\cref{lem:gradient-smoothness}), a bounded radius directly implies bounded excess risk, so this case cannot occur and no separate treatment is required.
\end{remark}
Either $r_k=\nrm{\hhat_k-\fstar_k}_k\leq \rH$ and we are done, or $r_k=\nrm{\hhat_k-\fstar_k}_k> \rH$, and so, because $\PP(\eventlabtwo{\delta})\leq \delta$, we have with probability at least $1-\delta$
\begin{equation*}
    T_k\pr{r_k}=T_k\pr{\nrm{\hhat_k-\fstar_k}_k} \leq 3\Phi_k\pr{\nrm{\hhat_k-\fstar_k}_k,\delta}=3\Phi_k\pr{r_k,\delta}.
\end{equation*}
Recall that by \cref{ass:loss-ass}, $\phi_k$ is $\mu_k$-strongly convex w.r.t.\ $\nrm{\cdot}_2$, so that $\loss_k(y,y')\geq \frac{\mu_k}{2}\nrm{y-y'}_2^2$. Hence, we have that
\begin{align*}
    r_k^2&=\expect_{X^k\sim P^k_X}\nrm{\hhat_k(X^k)-\fstar_k(X^k)}_2^2 \\
    &\leq \frac{2}{\mu_k}\expect_{X^k\sim P^k_X} \loss_k\pr{\fstar_k(X^k),\hhat_k} 
    = \frac{2}{\mu_k} \pr{\risk_k(\hhat_k)-\risk_k(\fstar_k)} 
    \leq \frac{4}{\mu_k}T_k(r_k)
    \leq \frac{12}{\mu_k} \Phi_k\pr{r_k,\delta},
\end{align*}
where we used \cref{lem:properties-bregman-loss} in the second equality.
Solving $r_k^2\leq \frac{12}{\mu_k} \Phi_k\pr{r_k,\delta}$
for $r_k$, we get that
\begin{equation*}
    r_k^{2}\leq \frac{82944\,L_k^{2}}{\mu_k^{2}}\rH[2]+\frac{144}{\mu_k}\pr{B_k+\frac{8\,L_k^{2}}{\mu_k}}\frac{\log(K/\delta)}{n_k}.
\end{equation*} 

Hence, in either case we have
\begin{equation*}
    r_k^{2}\lesssim \frac{L_k^{2}}{\mu_k^{2}}\rH[2]+\pr{\frac{B_k}{\mu_k}+\frac{L_k^{2}}{\mu_k^2}}\frac{\log(K/\delta)}{n_k}.
\end{equation*}

Therefore, because $\PP(\eventlabtwo{\delta})\leq \delta$, we have that $\PP(\eventlab{\delta})\geq 1-\delta$, where
\begin{equation*}
    \eventlab{\delta} := \crl{\forall k\in[K]:\quad \nrm{\hhat_k-\fstar_k}^2_k \lesssim \frac{L_k^{2}}{\mu_k^{2}}\rH[2]+\pr{\frac{B_k}{\mu_k}+\frac{L_k^{2}}{\mu_k^2}}\frac{\log(K/\delta)}{n_k}}.
\end{equation*}
which concludes the proof for localization in $\cH_k$.

\subsubsection{Proof of Proposition \ref{prop:stability}}
\label{subsubsec:proof-stability}

Recall the form of the gradient $\nabla_g \riskDiscrepancy{s}{g}{\h}$ from \cref{lem:norm-equivalence-Radon-Nikodym}. For every fixed $g$, and any $\h,\h'$,
\begin{align}
    &\nrm{\nabla_g\riskDiscrepancy{s}{g}{\h}-\nabla_g\riskDiscrepancy{s}{g}{\h'}}_s^2 \nonumber \\
    &= \int   \nrm{\sum_{k=1}^K\weight_k\frac{d\mu_k}{d\mu_s}(\nabla^2\phi_k(g)(h_k-g)-\nabla^2\phi_k(g)(h_k'-g))}_2^2d\mu_s  \nonumber\\
    &\leq \int  \sum_{k=1}^K\weight_k\frac{d\mu_k}{d\mu_s} \nrm{(\nabla^2\phi_k(g)(h_k-g)-\nabla^2\phi_k(g)(h_k'-g))}_2^2d\mu_s \tag{Jensen's inequality}\\
    &\leq \nu^2 \sum_{k=1}^K \weight_k \int \frac{d\mu_k}{d\mu_s}\nrm{h_k-h_k'}_2^2 d\mu_s \nonumber \\
    &= \nu^2 \sum_{k=1}^K \weight_k \nrm{h_k-h_k'}_k^2. \label{eq:crosssmooth}
\end{align}
This is what we call ``cross-smoothness''.

Denote $g=g^{\h}_s$ and $g'=g^{\h'}_s$. We may now use a generalization of the stability argument used in the proof of Theorem 1 in \cite{wegel2025learning}, where the following argument was used in $\RR^m$ and for unconstrained optimization: By the convexity of $\cG$ (\cref{ass:convexity-smoothness}), and the optimality of $g,g'$ we get these two variational inequalities
\begin{equation*}
        \inner{\nabla_g\riskDiscrepancy{s}{g}{\h}}{g'-g}_s \geq 0 \qquad \text{and} \qquad \inner{\nabla_g\riskDiscrepancy{s}{g'}{\h'}}{g-g'}_s \geq 0 \iff \inner{\nabla_g\riskDiscrepancy{s}{g'}{\h'}}{g'-g}_s \leq 0,
\end{equation*}
see \cref{lem:variational-inequality} and \cite[Theorem 46]{zeidler1985nonlinear}.
Combining both, and subtracting $\inner{\nabla_g\riskDiscrepancy{s}{g}{\h'}}{g'-g}$ on both sides we see that
\begin{equation}
\label{eqn:stability-intermediate}
    \inner{\nabla_g\riskDiscrepancy{s}{g}{\h}-\nabla_g\riskDiscrepancy{s}{g}{\h'}}{g'-g}_s \geq \inner{\nabla_g\riskDiscrepancy{s}{g'}{\h'}-\nabla_g\riskDiscrepancy{s}{g}{\h'}}{g'-g}_s.
\end{equation}
From the second item in \cref{ass:convexity-smoothness}, and the main results in \cite{nikodem2011characterizations}, we get that the right-hand side of \eqref{eqn:stability-intermediate} is lower bounded as
\begin{equation*}
    2\gamma \nrm{g-g'}_s^2 \leq \inner{\nabla_g\riskDiscrepancy{s}{g'}{\h'} - \nabla_g\riskDiscrepancy{s}{g}{\h'}}{g'-g}_s,
\end{equation*}
and from the cross-smoothness in \eqref{eq:crosssmooth}, we get that the left-hand side of \eqref{eqn:stability-intermediate} is upper bounded by
\begin{align*}
    \inner{\nabla_g\riskDiscrepancy{s}{g}{\h}-\nabla_g\riskDiscrepancy{s}{g}{\h'}}{g'-g}_s &\leq \nrm{\nabla_g\riskDiscrepancy{s}{g}{\h}-\nabla_g\riskDiscrepancy{s}{g}{\h'}}_s \nrm{g'-g}_s \\
    &\leq \nu \nrm{g'-g}_s \sqrt{\sum_{k=1}^K \weight_k  \nrm{h_k-h'_k}_k^2}.
\end{align*}
Combining the two, we can see that
\begin{equation*}
    2\gamma \nrm{g-g'}_s^2\leq \nu \nrm{g'-g}_s \sqrt{\sum_{k=1}^K \weight_k  \nrm{h_k-h'_k}_k^2} \implies \nrm{g-g'}_s^2\leq \frac{\nu^2}{4\gamma^2} \sum_{k=1}^K \weight_k  \nrm{h_k-h'_k}_k^2.
\end{equation*}
This is the claimed quadratic bound.

\subsubsection{Proof of Proposition \ref{prop:localization-along-helper}}
\label{subsec:proof-localization-along-helper}

Throughout this proof, condition on the $\hhat_k$. In particular, all expectations and variances are conditioned on $\hhat_k$.
Recall from \cref{eq:cGkprime} that for any $r\geq0$
\begin{equation*}
    \cG_k(r;\bhhat) = \bigcup_{s\in \cS} (\cG-g_s')\cap r\cB_{\nrm{\cdot}_k}
\end{equation*}
and from \cref{eq:cMkprime} that
\begin{equation*}
    \cM_k(r)= \crl{(s,g): s\in \cS, g-g_s'\in \cG_k(r;\bhhat)}.
\end{equation*}
The first part of this proof is mostly standard and follows the same proof structure as \cref{lem:localization-in-H}. Define the random variables
\begin{align*}
    Z_k(r) &:= \sup_{(s,g)\in\cM_k(r)} \abs{(\empRiskDiscrepancy{k}{g}{\hhat_k}-\empRiskDiscrepancy{k}{g_s'}{\hhat_k})-(\riskDiscrepancy{k}{g}{\hhat_k}-\riskDiscrepancy{k}{g_s'}{\hhat_k})} \\
    &=\sup_{(s,g)\in\cM_k(r)} \Bigg|\frac{1}{N_k}\sum_{i=1}^{N_k}(\ell_k(\hhat_k(\Xtilde^k_i),g(\Xtilde^k_i))-\ell_k(\hhat_k(\Xtilde^k_i),g_s'(\Xtilde^k_i))) \\
    &\hspace{4cm}-\EE(\ell_k(\hhat_k(X^k),g(X^k))-\ell_k(\hhat_k(X^k),g_s'(X^k)))\Bigg|
\end{align*}
which are the suprema of an empirical processes over the function classes for $k\in[K]$
\begin{equation*}
    \crl{x\mapsto \ell_k(\hhat_k(x),g(x))-\ell_k(\hhat_k(x),g_s'(x)) : (s,g)\in\cM_k(r)}
\end{equation*}
By \cref{ass:loss-ass}, these function classes are uniformly bounded by $B_k\geq 0$. 
Hence, by Talagrand's concentration inequality (\cref{lem:Talagrand}), for any choice of \emph{deterministic} radii $r_1,\dots,r_K\geq 0$, the event
\begin{equation*}
    \eventunlabthree{\delta}{r_1,\ldots,r_K} = \crl{\forall k\in [K]:\  Z_k(r_k) \leq 2\EE\br{Z_k(r_k)} +  \sqrt{2} \sqrt{\frac{\sigma^2_k(r_k)\log(K/\delta)}{N_k}}+3\frac{B_k\log(K/\delta)}{N_k}} 
\end{equation*}
holds with probability at least $1-\delta$. Here $\sigma_k^2(r_k)$ is a short-hand for the variance proxy from \cref{lem:Talagrand}, defined in this section as
\begin{equation*}
    \sigma^2_k(r) = \sup_{(s,g)\in\cM_k(r)} \variance\br{\ell_k(\hhat_k(X^k),g(X^k))-\ell_k(\hhat_k(X^k),g_s'(X^k)) }.
\end{equation*}

We now bound $\EE\br{Z_k(r) }$ and $\sigma^2_k(r)$.
Using symmetrization (\cref{lem:symmetrization}) in addition to vector contraction (\cref{lem:contraction}), recalling that $\ell_k$ is $L_k$-Lipschitz w.r.t.\ $\ell_2$-norm in its second argument, we can bound
\begin{align*}
    \EE\br{Z_k(r) } 
    &\leq 6L_k\radComp_{N_k}^k(\cG_k(r;\bhhat)) \qquad \text{and}\qquad \sigma^2_k(r) \leq L_k^2r^2.
\end{align*}
Therefore, we get on the event $\eventunlabthree{\delta}{r_1,\ldots,r_K}$ that
\begin{equation*}
    Z_k(r_k) \leq 12L_k\radComp_{N_k}^{k}(\cG_k(r;\bhhat)) + \sqrt{2} L_k r_k\sqrt{\frac{\log(K/\delta)}{N_k}} +6B_k\frac{\log(K/\delta)}{N_k}.
\end{equation*}

Define
\begin{equation*}
    \rGrand:= \inf\crl{r\geq 0: r^2\geq \radComp_{N_k}^k(\cG_k(r;\bhhat))}.
\end{equation*}
By \cref{lem:non-increasing-complexities}, which holds under \cref{ass:convexity-smoothness}, we get that for any $r\geq \rGrand$
\begin{equation*}
    \frac{\radComp_{N_k}^{k}(\cG_k(r;\bhhat))}{r} \leq \frac{\radComp_{N_k}^{k}(\cG_k(\rGrand;\bhhat))}{\rGrand}\leq \rGrand.
\end{equation*}
Therefore, if $r_k\geq \rGrand$ for all $k$, on the event $\eventunlabthree{\delta}{r_1,\ldots,r_K}$ (which holds with probability at least $1-\delta$), it holds that for all $k\in[K]$,
\begin{equation*}
    Z_k(r_k) \leq 12L_kr_k\rGrand + \sqrt{2} L_k r_k\sqrt{\frac{\log(K/\delta)}{N_k}} +3B_k\frac{\log(K/\delta)}{N_k}=:\Psi_k(r_k,\delta).
\end{equation*}

We now come to the part of the proof that is less standard. Consider the family of \emph{random} radii
\begin{equation*}
     r_k^s :=\nrm{\ghat_s-g_s'}_k \qquad s\in \cS.
\end{equation*}
We perform a peeling argument to bound the probabilities of the two events
\begin{align*}
    \eventunlabtwo{\delta,0} &:= \crl{\exists k\in[K]:\  Z_k(\rGrand) \geq \Psi_k(\rGrand,\delta)} \\
    \eventunlabtwo{\delta,1} &:= \crl{\exists k\in[K],\, s\in\cS, g\in \cG:\ \nrm{g-g_s'}_k\geq \rGrand \text{ and } Z_k(\nrm{g-g_s'}_k)\geq 3\Psi_k(\nrm{g-g_s'}_k,\delta)}.
\end{align*}

\begin{remark}
    Contrary to \cref{rem:events-localization}, here we include the case where the radii are small, because we have to control all $K$ radii simultaneously. One could also adapt the following proof without this case, but the resulting bound would be the same (up to constants).
\end{remark}

By the previous derivations, $\PP\pr{\eventunlabtwo{\delta,0}}\leq \delta$, and for $\eventunlabtwo{\delta,1}$ we apply a peeling argument.
Because $\nrm{g-g_s'}_k \leq \diam_{\nrm{\cdot}_2}(\cY)=:D$, we know that for any $M$ satisfying
\[2^M \rGrand \geq D \iff M \geq \log (D/\rGrand)/\log(2),\] and for any $\nrm{g-g_s'}_k\geq \rGrand$ there must be at least one $0\leq m\leq M$ so that $2^{m-1}\rGrand \leq \nrm{g-g_s'}_k \leq 2^{m}\rGrand$.
Moreover, a calculation shows that the functions $\Psi_k$ satisfy
\begin{equation*}
    \forall  0\leq m\leq M :\quad  3\Psi_k(2^{m-1}\rGrand,\delta) \geq \Psi_k(2^{m}\rGrand,\delta/2^{m})
\end{equation*}
for small enough $\delta$, which yields that
\begin{align*}
    \PP\pr{\eventunlabtwo{\delta,1}}&=\PP\Bigg(\bigcup_{m\in [M]} \big\{\exists k\in [K],s\in \cS, g\in \cG:\ 2^{m-1}\rGrand \leq \nrm{g-g_s'}_k\leq 2^m\rGrand 
    \\
    &\hspace{6cm} \text{ and } Z_k(\nrm{g-g_s'}_k)\geq 3\Psi_k(\nrm{g-g_s'}_k,\delta)\big\}\Bigg) \\
    &\leq \sum_{m\in [M]} \PP\pr{\exists k\in [K]:\  Z_k(2^m\rGrand)\geq 3\Psi_k(2^{m-1}\rGrand,\delta)} \\
    &\leq \sum_{m\in [M]} \PP\pr{\exists k\in [K]:\  Z_k(2^m\rGrand)\geq \Psi_k(2^{m}\rGrand,\delta/2^{m})} \stackrel{(a)}{\leq} \sum_{m\in [M]} \frac{\delta}{2^{m}} \leq \delta.
\end{align*}
where in $(a)$ we used that $\PP(\eventunlabthree{\delta/2^m}{2^m\rGrandnok[1],\ldots,2^m\rGrandnok[K]})\geq 1-\delta/2^m$.
Combining the two with a union bound yields $\PP \pr{(\eventunlabtwo{\delta,0})^c\cap (\eventunlabtwo{\delta,1})^c}\geq 1-2\delta$. Condition on $(\eventunlabtwo{\delta,0})^c\cap (\eventunlabtwo{\delta,1})^c$.

By the standard risk decomposition, we get for all $s\in \cS$
\begin{align}
    \riskDiscrepancy{s}{\ghat_s}{\bhhat}-\riskDiscrepancy{s}{g_s'}{\bhhat} &= \riskDiscrepancy{s}{\ghat_s}{\bhhat}- \empRiskDiscrepancy{s}{\ghat_s}{\bhhat} + \underbrace{\empRiskDiscrepancy{s}{\ghat_s}{\bhhat}- \empRiskDiscrepancy{s}{g_s'}{\bhhat}}_{\leq 0}+ \empRiskDiscrepancy{s}{g_s'}{\bhhat}-\riskDiscrepancy{s}{g_s'}{\bhhat} \nonumber\\
    &\leq \riskDiscrepancy{s}{\ghat_s}{\bhhat}- \empRiskDiscrepancy{s}{\ghat_s}{\bhhat}+ \empRiskDiscrepancy{s}{g_s'}{\bhhat}-\riskDiscrepancy{s}{g_s'}{\bhhat} \nonumber \\
    &= s\pr{\pr{\riskDiscrepancy{k}{\ghat_s}{\hhat_k}- \empRiskDiscrepancy{k}{\ghat_s}{\hhat_k}+ \empRiskDiscrepancy{k}{g_s}{\hhat_k}-\riskDiscrepancy{k}{g}{\hhat_k}}_{k\in[K]}}\nonumber\\
    &\leq s\pr{\pr{Z_k(r_k^s)}_{k\in[K]}} \label{eq:T1-decomposition}
\end{align}
Further, by the ``multi-objective Bernstein condition'' $\nrm{\ghat_s-g_s'}_s^2 \leq \frac{1}{\gamma}(\riskDiscrepancy{s}{\ghat_s}{\bhhat}-\riskDiscrepancy{s}{g_s}{\bhhat})$, implied by the second item from \cref{ass:convexity-smoothness}, and \cref{eq:T1-decomposition}, 
\begin{align*}
    r_s^2& := s\pr{(r^s_1)^2,\dots,(r^s_K)^2} =\nrm{\ghat_s-g_s'}^2_s \\
    &\leq \frac{1}{\gamma}\pr{\riskDiscrepancy{s}{\ghat_s}{\bhhat}-\riskDiscrepancy{s}{g_s'}{\bhhat}}\leq \frac{1}{\gamma}s\pr{\pr{Z_k(r_k^s)}_{k\in[K]}}.
\end{align*}
On the event $(\eventunlabtwo{\delta,0})^c\cap (\eventunlabtwo{\delta,1})^c$, we thus get for every $s=\linearScalarization\in \cS$
\begin{align*}
    r_s^2 \leq \frac{1}{\gamma} \pr{\sum_{k:\, r^s_k \leq \rGrand} \weight_k \Psi_k(\rGrand,\delta)   + \sum_{k:\, r^s_k >\rGrand} \weight_k 3\Psi_k(r^s_k,\delta)} \leq \frac{1}{\gamma} \pr{\sum_{k=1}^K \weight_k \Psi_k(\rGrand,\delta)   + \sum_{k=1}^K \weight_k 3\Psi_k(r^s_k,\delta)}.
\end{align*}
We can simplify the first term using $ab\leq \frac{1}{2}(a^2+b^2)$ as
\begin{align*}
    \frac{1}{\gamma}\sum_{k=1}^K \weight_k \Psi_k(\rGrand,\delta) &= \frac{1}{\gamma}\sum_{k=1}^K \weight_k \pr{12L_k\rGrand[2] + \sqrt{2} L_k \rGrand\sqrt{\frac{\log(K/\delta)}{N_k}} +3B_k\frac{\log(K/\delta)}{N_k}} \\
    &\leq \frac{1}{\gamma}\sum_{k=1}^K \weight_k \pr{13L_k\rGrand[2] +(L_k+3B_k)\frac{\log(K/\delta)}{N_k}} =: b_{s,1}.
\end{align*}
Plugging this into the bound on $r_s^2$ yields
\begin{align*}
    r_s^2&\leq \frac{1}{\gamma} \sum_{k=1}^K \weight_k \pr{ 36 L_kr_k^s\rGrand + 5 L_k r_k^s\sqrt{\frac{\log(K/\delta)}{N_k}} +18B_k\frac{\log(K/\delta)}{N_k}} + b_{s,1} \\
    &= \frac{1}{\gamma} \sum_{k=1}^K \pr{\sqrt{\weight_k}\cdot r^s_k} \Bigg(\sqrt{\weight_k}\underbrace{\pr{ 36 L_k\rGrand + 5 L_k\sqrt{\frac{\log(K/\delta)}{N_k}}}}_{=:a_k}\Bigg) +\underbrace{18 \frac{1}{\gamma} \sum_{k=1}^K\weight_k \frac{B_k\log(K/\delta)}{N_k}}_{=:b_{s,2}} + b_{s,1} \\
    &\leq \frac{1}{\gamma}\pr{\sum_{k=1}^K \weight_k (r^s_k)^2}^{1/2} \pr{\sum_{k=1}^K \weight_k a_k^2}^{1/2} +\underbrace{b_{s,1}+b_{s,2}}_{=:b_s} \\
    &=  r_s \underbrace{\pr{\sum_{k=1}^K \weight_k \frac{a_k^2}{\gamma^2}}^{1/2}}_{=: a_s} +b_s
\end{align*}
where the last inequality is Cauchy-Schwarz. Some algebra shows that $r_s^2 \leq r_s a_s + b_s$ implies $r_s^2 \leq 2(a_s^2 + b_s)$, and so
\begin{align*}
    r_s^2 &\leq 2 \sum_{k=1}^K \weight_k\pr{  \frac{1}{\gamma^2}\pr{36 L_k\rGrand + 5 L_k\sqrt{\frac{\log(K/\delta)}{N_k}} }^2 + 13\frac{L_k}{\gamma}\rGrand[2] +\frac{(L_k+3B_k)}{\gamma}\frac{\log(K/\delta)}{N_k} +  \frac{18}{\gamma} \frac{B_k\log(K/\delta)}{N_k}} \\
    &\lesssim \sum_{k=1}^K \weight_k\pr{\pr{\frac{L_k^2}{\gamma^2} +\frac{L_k}{\gamma}}\rGrand[2]+\pr{\frac{L_k^2}{\gamma^2}+\frac{L_k}{\gamma}+\frac{B_k}{\gamma}}\frac{\log(K/\delta)}{N_k}} \\
    &\lesssim \sum_{k=1}^K \weight_k\pr{\frac{L_k^2}{\gamma^2} \rGrand[2]+\pr{\frac{L_k^2}{\gamma^2}+\frac{B_k}{\gamma}}\frac{\log(K/\delta)}{N_k}}
\end{align*}
where in the last line we used $L_k/\gamma\geq 1$.
Therefore, because $\PP\pr{(\eventunlabtwo{\delta,0})^c\cap (\eventunlabtwo{\delta,1})^c}\geq 1-2\delta$, we have that $\PP\pr{\eventunlab{\delta}}\geq 1-\delta$, where
\begin{equation*}
    \eventunlab{\delta} := \crl{\forall s=\linearScalarization\in \cS:\quad \nrm{\ghat_s-g_s'}^2_s \lesssim \sum_{k=1}^K \weight_k\pr{\frac{L_k^2}{\gamma^2} \rGrand[2]+\pr{\frac{L_k^2}{\gamma^2}+\frac{B_k}{\gamma}}\frac{\log(2K/\delta)}{N_k}}},
\end{equation*}
which concludes the proof of this part.

\subsubsection{Proof of Proposition \ref{prop:critical-radius-shift}}
\label{subsubsec:proof-critical-radius-shift}

Recall that $\Delta=\sup_{s\in \cS}\nrm{g_s-g_s'}$.
For every \(r\ge0\), we have the following key inclusion
\begin{equation*}
    \cG'(r)\subset \cG(r+\Delta) + \crl{g_s-g_s' : s\in \cS}
\end{equation*}
To see that, let $h = g-g_s' \in \cG'(r)$. Then $h+(g_s'-g_s) = g-g_s$ and
\(\nrm{h+(g_s'-g_s)}\le r+\Delta\). 

Because Rademacher complexity is sub-additive, we get that for all $r\geq 0$
\begin{align*}
    \radComp_n\pr{\cG'(r)} &\leq \radComp_n\pr{\cG(r+\Delta) + \crl{g_s-g_s' : s\in \cS}}  \\
    &\leq \radComp_n\pr{\cG(r+\Delta)} + \radComp_n\pr{\crl{g_s-g_s' : s\in \cS}} \\
    &\leq \radComp_n\pr{\cG(r+\Delta)} + \radComp_n\pr{\cG(\Delta)}
\end{align*}
where in the last step we used that
\begin{equation*}
    - \crl{g_s-g_s' : s\in \cS} =  \crl{g_s'-g_s : s\in \cS} \subset \cG(\Delta).
\end{equation*}

Using that for all $r\geq \rGnok$ we have $\radComp_n\pr{\cG(r)} \leq r^2$, we get that 
\begin{equation*}
    \radComp_n\pr{\cG'(\rGnok+\Delta)} \leq \radComp_n\pr{\cG(\rGnok+2\Delta)} + \radComp_n\pr{\cG(\Delta)} \leq (\rGnok+2\Delta)^2+\radComp_n\pr{\cG(\Delta)}
\end{equation*}
and using that $r\mapsto \radComp_n\pr{\cG(r)}/r$ is non-increasing (by \cref{ass:convexity-smoothness} and \cref{lem:non-increasing-complexities}), we get that
\begin{equation*}
    \radComp_n\pr{\cG(\Delta)} \leq \frac{\Delta}{\rGnok+\Delta}\radComp_n\pr{\cG(\rGnok+\Delta)} \leq \Delta(\rGnok+\Delta),
\end{equation*}
which together yields
\begin{equation*}
    \radComp_n\pr{\cG'(\rGnok+\Delta)} \leq (\rGnok+2\Delta)^2 + \Delta(\rGnok+\Delta).
\end{equation*}
  
Again, by the fact that $r\mapsto \radComp_n\pr{\cG'(r)}/r$ is non-increasing (\cref{lem:non-increasing-complexities}), we get that for all $r\geq \rGnok+\Delta$
\begin{equation*}
    \radComp_n\pr{\cG'(r)}\leq \frac{r}{\rGnok+\Delta} \radComp_n\pr{\cG'(\rGnok+\Delta)}\leq \frac{r}{\rGnok+\Delta} \pr{  (\rGnok+2\Delta)^2 + \Delta(\rGnok+\Delta)}
\end{equation*}
In particular, for the choice $r= 5(\rGnok+\Delta)$
\begin{align*}
    \radComp_n\pr{\cG'(5(\rGnok+\Delta))} &\leq  \frac{5(\rGnok+\Delta)}{\rGnok+\Delta} \pr{  (\rGnok+2\Delta)^2 + \Delta(\rGnok+\Delta)} \\
    &= 5\pr{(\rGnok+\Delta)^2+2(\rGnok+\Delta)\Delta+\Delta^2+\Delta(\rGnok+\Delta)} \\
    &\leq 5(5(\rGnok+\Delta)^2) \\
    &= (5(\rGnok+\Delta))^2
\end{align*}
which implies that $\rGrandnok\leq 5(\rGnok+\Delta)$, completing the proof.

\subsection{Proof of Equation (\ref{eqn:inconsistency})}
\label{subsec:proof-inconsistency}
Let $\cX = \crl{x_0}$, $\cY=\crl{0,1}$ and let $\cG=\crl{g(x_0)=y: y\in \crl{0,1}}$. From now on, we let $y\equiv g(x_0)$.
Consider the two random variables $Y^1,Y^2$, so that $\PP(Y^1=1)=p_1$ and $\PP(Y^2=1)=p_2$. Moreover, denote $y_1 \equiv \fstar_1(x_0) $ and $y_2 \equiv \fstar_2(x_0) $. We get that
\begin{align*}
    \sTradeoff(y) &= \weight_1\EE \br{\mathbf{1}\{Y^1 \ne y\}} + \weight_2\EE\br{\mathbf{1}\{Y^2 \ne y\}} \\
    &= \weight_1 \pr{(1-y) p_1 + y (1-p_1)} + \weight_2\pr{(1-y) p_2 + y(1-p_2)} 
\end{align*}
It follows that
\begin{equation*}
    y_s = \argmin_{y\in\crl{0,1}} \ \sTradeoff(y) = 
    \begin{cases}
        1 & \weight_1 p_1+ \weight_2 p_2 > 1/2 \\
        0 & \text{else}
    \end{cases}
\end{equation*}
with $\sTradeoff(y_s) = \min\crl{\weight_1 p_1+ \weight_2 p_2,1-\pr{\weight_1 p_1+ \weight_2 p_2}}$.

Now, \cref{alg:pseudolabeling} first estimates the ERMs from the i.i.d.\ data
\begin{equation*}
    \yhat_1 = \argmin_{y\in \crl{0,1}} \frac{1}{n} \sum_{i=1}^n \mathbf{1}\{Y^1_i \ne y\} = \ones\crl{\phat_1 >1/2} \qquad \yhat_2 = \argmin_{y\in \crl{0,1}} \frac{1}{n} \sum_{i=1}^n \mathbf{1}\{Y^2_i \ne y\}=\ones\crl{\phat_2 >1/2}
\end{equation*}
where we defined $\phat_k = \frac{1}{n}\sum_{i=1}^n Y^k_i$.
Then, PL-MOL sets (note that the unlabeled data has no effect here)
\begin{equation*}
    \yhat_s = \argmin_{y\in \crl{0,1}} \weight_1 \mathbf{1}\{\yhat_1 \ne y\} + \weight_2 \mathbf{1}\{\yhat_2 \ne y\} =
    \begin{cases}
        1 & \weight_1 \yhat_1 + \weight_2 \yhat_2 >1/2, \\
        0 & \text{else}.
    \end{cases}
\end{equation*}

Now choose $\weights=(1/4,3/4) $ and $p_1=1, p_2=2/5$. Then $\weight_1 p_1 +\weight_2p_2 = 11/20 >1/2$ and so $y_s = 1$, but $\weight_1 y_1 + \weight_2 y_2 = 1/4<1/2$. We can see that
\begin{align*}
    \yhat_s =0 &\iff \weight_1 \yhat_1 + \weight_2 \yhat_2 = \frac{1}{4}\yhat_1+\frac{3}{4}\yhat_2 <1/2 \\
    &\ \Longleftarrow \yhat_1 = y_1 \text{ and } \yhat_2 = y_2 \\
    &\ \Longleftarrow \phat_1 > 1/2 \text{ and } \phat_2 <1/2
\end{align*}
in which case $\sTradeoff(\yhat_s) = \sTradeoff(0) = \weight_1 p_1 + \weight_2p_2 = 11/20$ and $ \sTradeoff(y_s) = \sTradeoff(1) = 1-(\weight_1 p_1 + \weight_2p_2) = 9/20 $.
And so, since by the law of large numbers $\PP\pr{\phat_1 > 1/2 \text{ and } \phat_2 <1/2}\to 1$, we see that
\begin{equation*}
    \lim_{n\to \infty} \PP\pr{\sTradeoff(\yhat_s) \geq \inf_{y\in\crl{0,1}}\sTradeoff(y) + 0.1} = \lim_{n\to \infty} \PP\pr{\yhat_s =0}\geq  \lim_{n\to \infty}\PP\pr{\phat_1 > 1/2 \text{ and } \phat_2 <1/2} =1
\end{equation*}
which concludes the proof. Consistency of \cref{alg:erm-mol} holds trivially, also from \cref{cor:vc-erm-upper}.

\section{Auxiliary results}

\begin{lemma}[Lipschitz continuity of Bregman divergences]
\label{lem:Lipschitz-Bregman}
    Assume $\diam_{\nrm{\cdot}}(\cY) = \sup_{y,y\in\cY} \nrm{y-y'}<\infty$ and that $\phi$ is $\nu$-smooth w.r.t.\ $\nrm{\cdot}$ and $\nrm{\nabla\phi(x)-\nabla\phi(z)}_*\leq M\nrm{x-z}$. Then $D_\phi$ is Lipschitz continuous in both of its arguments separately, that is, for all $x,y,z$ we have
    \begin{align*}
        \abs{D_\phi(y,x)-D_\phi(z,x)}&\leq \pr{\frac{\nu}{2}+M}\diam_{\nrm{\cdot}}(\cY)\nrm{y-z}, \\
        \abs{D_\phi(y,x)-D_\phi(y,z)}&\leq \pr{\frac{\nu}{2}+M}\diam_{\nrm{\cdot}}(\cY)\nrm{x-z}.
    \end{align*}
\end{lemma}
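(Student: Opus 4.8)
The plan is to prove each of the two Lipschitz bounds directly from the definition of the Bregman divergence $D_\phi(y,x) = \phi(y) - \phi(x) - \inner{\nabla\phi(x)}{y-x}$, by forming the relevant difference and controlling it term-by-term using the $\nu$-smoothness of $\phi$, the Lipschitz bound $\nrm{\nabla\phi(x)-\nabla\phi(z)}_* \le M\nrm{x-z}$, and the boundedness of $\cY$. First I would treat Lipschitz continuity in the \emph{first} argument: fix $x\in\cY$ and compute
\[
D_\phi(y,x) - D_\phi(z,x) = \phi(y) - \phi(z) - \inner{\nabla\phi(x)}{y-z}.
\]
Adding and subtracting $\inner{\nabla\phi(z)}{y-z}$ gives
\[
D_\phi(y,x) - D_\phi(z,x) = \big(\phi(y) - \phi(z) - \inner{\nabla\phi(z)}{y-z}\big) + \inner{\nabla\phi(z)-\nabla\phi(x)}{y-z} = D_\phi(y,z) + \inner{\nabla\phi(z)-\nabla\phi(x)}{y-z}.
\]
The first summand is bounded by $\frac{\nu}{2}\nrm{y-z}^2 \le \frac{\nu}{2}\diam_{\nrm{\cdot}}(\cY)\nrm{y-z}$ using $\nu$-smoothness (which gives $D_\phi(y,z) \le \frac{\nu}{2}\nrm{y-z}^2$) together with $\nrm{y-z}\le \diam_{\nrm{\cdot}}(\cY)$; the second summand is bounded by $M\nrm{x-z}\nrm{y-z}$ via Cauchy--Schwarz / Hölder for the dual pairing and the hypothesis on $\nabla\phi$, and then $\nrm{x-z}\le \diam_{\nrm{\cdot}}(\cY)$. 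Combining and taking absolute values (the argument is symmetric in $y,z$) yields $\abs{D_\phi(y,x)-D_\phi(z,x)} \le (\frac{\nu}{2}+M)\diam_{\nrm{\cdot}}(\cY)\nrm{y-z}$.

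Next I would treat Lipschitz continuity in the \emph{second} argument: fix $y\in\cY$ and compute
\[
D_\phi(y,x) - D_\phi(y,z) = \phi(z) - \phi(x) - \inner{\nabla\phi(x)}{y-x} + \inner{\nabla\phi(z)}{y-z}.
\]
The cleanest route is again to recognize a Bregman divergence in the remainder: adding and subtracting $\inner{\nabla\phi(z)}{x-z}$ (equivalently, using the three-point identity $D_\phi(y,x) - D_\phi(y,z) - D_\phi(z,x) = \inner{\nabla\phi(z)-\nabla\phi(x)}{y-z}$, cf.\ the generalized Pythagorean identity already invoked in the proof of \cref{lem:properties-bregman-loss}), we obtain
\[
D_\phi(y,x) - D_\phi(y,z) = D_\phi(z,x) + \inner{\nabla\phi(z)-\nabla\phi(x)}{y-z}.
\]
This is structurally identical to the expression from the first-argument case with the roles appropriately relabeled, so the same two bounds apply: $\abs{D_\phi(z,x)} \le \frac{\nu}{2}\nrm{z-x}^2 \le \frac{\nu}{2}\diam_{\nrm{\cdot}}(\cY)\nrm{x-z}$ and $\abs{\inner{\nabla\phi(z)-\nabla\phi(x)}{y-z}} \le M\nrm{x-z}\nrm{y-z} \le M\diam_{\nrm{\cdot}}(\cY)\nrm{x-z}$. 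Adding gives the claimed bound $(\frac{\nu}{2}+M)\diam_{\nrm{\cdot}}(\cY)\nrm{x-z}$.

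The argument is almost entirely bookkeeping; there is no deep obstacle. The one point that requires a little care is making sure the three-point (generalized Pythagorean) identity is applied with the correct ordering of arguments in the second case, since Bregman divergences are not symmetric — it is easy to pick up a sign error or an argument in the wrong slot. I would also double-check that $\nu$-smoothness in the sense used here is exactly ``$D_\phi(a,b) \le \frac{\nu}{2}\nrm{a-b}^2$'' (the standard equivalent characterization, e.g.\ \cite[Corollary 18.14]{bauschke2017convex}, already cited in the excerpt), and that $M$-Lipschitzness of $\nabla\phi$ is used only through Hölder's inequality with the dual norm $\nrm{\cdot}_*$. A remark worth including is that when $\phi$ is twice differentiable one may take $M = \nu$ (both being bounds on $\nrm{\nabla^2\phi}$), recovering the cleaner constant $\frac{3\nu}{2}\diam_{\nrm{\cdot}}(\cY)$ used implicitly in \cref{cor:logistic-regression}.
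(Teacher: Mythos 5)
Your proposal is correct and follows essentially the same route as the paper's proof: both cases are handled via the three-point identity $D_\phi(y,x)=D_\phi(y,z)+D_\phi(z,x)-\inner{y-z}{\nabla\phi(x)-\nabla\phi(z)}$, with the residual Bregman term bounded by $\nu$-smoothness and the inner-product term by the dual-norm Lipschitz bound on $\nabla\phi$ together with $\diam_{\nrm{\cdot}}(\cY)$. No gaps; the only difference is cosmetic bookkeeping.
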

\begin{proof}
    This follows from the three-point identity: First,
    \begin{align*}
        \abs{D_{\phi}(y,x)-D_\phi(z,x)} &= \abs{D_\phi(y,z)-\inner{y-z}{\nabla\phi(x)-\nabla\phi(z)}} \\
        &\leq \frac{\nu}{2}\nrm{y-z}^2+\nrm{y-z} \nrm{\nabla\phi(x)-\nabla\phi(z)}_* \\
        &\leq \frac{\nu}{2}\nrm{y-z}^2+2M\nrm{y-z}\nrm{x-z} \\
        &\leq \pr{\frac{\nu}{2}+M}\diam_{\nrm{\cdot}}(\cY)\nrm{y-z}
    \end{align*}
    and second, by the same argument,
    \begin{align*}
        \abs{D_{\phi}(y,x)-D_\phi(y,z)} &= \abs{D_\phi(z,x)-\inner{y-z}{\nabla\phi(x)-\nabla\phi(z)}} \\
        &\leq \frac{\nu}{2}\nrm{z-x}^2+\nrm{y-z} \nrm{\nabla\phi(x)-\nabla\phi(z)}_* \\
        &\leq \frac{\nu}{2}\nrm{z-x}^2+2M\nrm{y-z}\nrm{x-z} \\
        &\leq \pr{\frac{\nu}{2}+M}\diam_{\nrm{\cdot}}(\cY)\nrm{x-z}
    \end{align*}
    which concludes the proof.
\end{proof}

\subsection{Concentration inequalities}

\begin{lemma}[Consequence of McDiarmid's inequality \cite{mcdiarmid1989method}]
\label{lem:bounded-difference}
    Let $\cF$ be a function class of measurable functions $\cX\to \RR$ that is $B$-bounded, $\sup_{x\in\cX} \abs{f(x)} \leq B$ for all $f\in\cF$, and $X, X_1,\dots,X_n$ be i.i.d.\ random elements in $\cX$. Define
    \begin{equation*}
        Z = \sup_{f\in \cF} \abs{\frac{1}{n}\sum_{i=1}^n f(X_i) -\EE\br{f(X_1)}}.
    \end{equation*}
    Then it holds that
    \begin{equation*}
        \PP\pr{\abs{Z-\EE \br{Z}}\leq B\sqrt{\frac{2\log(1/\delta)}{n}}} \geq 1-2\delta.
    \end{equation*}
\end{lemma}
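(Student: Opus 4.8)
The statement to prove is \cref{lem:bounded-difference}, which is a standard consequence of the bounded differences (McDiarmid) inequality.

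\textbf{Plan.} The plan is to apply McDiarmid's inequality to the function $(x_1,\dots,x_n)\mapsto Z(x_1,\dots,x_n):=\sup_{f\in\cF}\bigl|\tfrac1n\sum_{i=1}^n f(x_i)-\EE[f(X_1)]\bigr|$, viewed as a function of the $n$ independent inputs. The only thing to verify is the bounded differences property: changing one coordinate $x_i$ to $x_i'$ changes $Z$ by at most $2B/n$. Then McDiarmid gives a two-sided concentration of $Z$ around $\EE[Z]$ at scale $\sqrt{\sum_i (2B/n)^2}=\sqrt{4B^2/n}=2B/\sqrt n$, and a union bound over the two tails yields the stated $1-2\delta$ probability bound.

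\textbf{Key steps.} First I would fix an index $i\in[n]$ and two sample tuples differing only in position $i$, say $x=(x_1,\dots,x_i,\dots,x_n)$ and $x'=(x_1,\dots,x_i',\dots,x_n)$. For any fixed $f\in\cF$, the empirical averages differ by $\tfrac1n|f(x_i)-f(x_i')|\le \tfrac{2B}{n}$ using the $B$-boundedness of $\cF$; hence the absolute centered quantities differ by at most $\tfrac{2B}{n}$ as well, and taking suprema over $f$ (using that $|\sup_f a_f-\sup_f b_f|\le\sup_f|a_f-b_f|$) gives $|Z(x)-Z(x')|\le \tfrac{2B}{n}$. Second, I would invoke McDiarmid's inequality with constants $c_i=2B/n$: for any $t>0$,
\begin{equation*}
    \PP\bigl(|Z-\EE[Z]|\ge t\bigr)\le 2\exp\!\left(-\frac{2t^2}{\sum_{i=1}^n c_i^2}\right)=2\exp\!\left(-\frac{n t^2}{2B^2}\right).
\end{equation*}
Third, I would set the right-hand side equal to $2\delta$, i.e.\ solve $\exp(-nt^2/(2B^2))=\delta$, giving $t=B\sqrt{2\log(1/\delta)/n}$; substituting yields $\PP(|Z-\EE[Z]|\le B\sqrt{2\log(1/\delta)/n})\ge 1-2\delta$, which is exactly the claim.

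\textbf{Main obstacle.} There is essentially no real obstacle here; the one point requiring a line of care is the bounded-differences verification, specifically justifying the interchange between the supremum over $\cF$ and the difference/absolute value (the inequality $|\sup_f a_f - \sup_f b_f| \le \sup_f |a_f - b_f|$, applied to $a_f$ and $b_f$ being the centered empirical averages, and then once more to pass from $\sup|\cdot|$-type quantities). Everything else is a direct citation of McDiarmid's inequality \cite{mcdiarmid1989method} and an elementary algebraic inversion to express the deviation level in terms of $\delta$.
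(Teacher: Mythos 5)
Your proposal is correct and is exactly the standard argument: the paper does not prove this lemma itself but defers to \cite{wainwright2019high,sen2018gentle}, where the proof proceeds precisely as you describe (bounded differences with $c_i = 2B/n$ via the sup-interchange inequality, then the two-sided McDiarmid bound and inversion of $2\exp(-nt^2/2B^2) = 2\delta$). The algebra checks out and yields the stated deviation $B\sqrt{2\log(1/\delta)/n}$ with probability $1-2\delta$.
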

The proof can be found, for instance, in \cite{wainwright2019high,sen2018gentle}. A significant improvement over \cref{lem:bounded-difference} is Talagrand's concentration inequality, stated next.

\begin{lemma}[Talagrand's concentration inequality \cite{talagrand1994sharper}]
\label{lem:Talagrand}
    Let $\cF$ be a countable function class of measurable functions $\cX\to \RR$ that is $B/2$-bounded, $\sup_{x\in\cX} \abs{f(x)} \leq B/2$ for all $f\in\cF$, and $X_1,\dots,X_n$ be i.i.d.\ random elements in $\cX$. Define
    \begin{equation*}
        Z = \sup_{f\in \cF} \abs{\frac{1}{n}\sum_{i=1}^n f(X_i) -\EE\br{f(X_1)}} \quad \text{and} \quad \sigma^2(\cF) = \sup_{f\in\cF} \EE\br{(f(X_1)-\EE\br{f(X_1)})^2}\leq B^2.
    \end{equation*}
    Then it holds that
    \begin{equation*}
        \PP\pr{Z\leq 2\EE\br{Z} +  \sqrt{2} \sqrt{\frac{\sigma^2(\cF)\log(1/\delta)}{n}}+3\frac{B\log(1/\delta)}{n}} \geq 1-\delta.
    \end{equation*}
\end{lemma}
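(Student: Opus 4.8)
This is Bousquet's refinement of Talagrand's concentration inequality for suprema of empirical processes, stated with relaxed constants, so the plan is to reduce it to that known bound and then carry out a short constant chase.

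First I would pass to a one-sided supremum. Setting $\cF' = \cF \cup (-\cF)$, we may write $Z = \sup_{f\in\cF'}\frac1n\sum_{i=1}^n\pr{f(X_i) - \EE\br{f(X_1)}}$, which realizes the two-sided quantity as the supremum of a \emph{centered} empirical process. The class $\cF'$ remains countable and $B/2$-bounded, and $\sup_{f\in\cF'}\EE\br{\pr{f(X_1)-\EE\br{f(X_1)}}^2} = \sigma^2(\cF)$, so nothing is lost; moreover, since $\abs{f(X_1)-\EE\br{f(X_1)}}\le B$ pointwise, one automatically has $\sigma^2(\cF)\le B^2$, as recorded in the statement. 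I would then invoke Talagrand's inequality in the sharpened Bennett form of Bousquet/Massart: writing $S = nZ$ for the unnormalized supremum and $v = n\,\sigma^2(\cF) + 2B\,\EE\br{S}$, one has, for every $t>0$,
\begin{equation*}
    \PP\pr{S \ge \EE\br{S} + \sqrt{2 v t} + \tfrac{Bt}{3}} \le e^{-t}.
\end{equation*}
The proof of this estimate — the genuine mathematical content — proceeds via the entropy (log-Sobolev) method, deriving a differential inequality for $\lambda\mapsto\log\EE\br{e^{\lambda S}}$; since it is entirely standard and is exactly the content of \cite{talagrand1994sharper} (with the refined constants), I would cite it rather than reproduce it. This exponential-moment bound is where all the difficulty lies; everything after it is bookkeeping.

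Finally I would simplify the tail. By $\sqrt{a+b}\le\sqrt a+\sqrt b$ we get $\sqrt{2vt}\le\sqrt{2 n\sigma^2(\cF) t}+2\sqrt{B\,\EE\br{S}\,t}$, and then $2\sqrt{B\,\EE\br{S}\,t}=2\sqrt{\EE\br{S}\cdot Bt}\le\EE\br{S}+Bt$ by AM--GM. Hence $\EE\br{S}+\sqrt{2vt}+\tfrac{Bt}{3}\le 2\EE\br{S}+\sqrt{2 n\sigma^2(\cF) t}+\tfrac43 Bt\le 2\EE\br{S}+\sqrt{2 n\sigma^2(\cF) t}+3Bt$. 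Dividing through by $n$ (so that $S/n=Z$ and $\EE\br{S}/n=\EE\br{Z}$) and taking $t=\log(1/\delta)$ yields $\PP\pr{Z\ge 2\EE\br{Z}+\sqrt{2\sigma^2(\cF)\log(1/\delta)/n}+3B\log(1/\delta)/n}\le\delta$, which is the complement of the displayed event. The only point requiring care is tracking which expectations carry the $1/n$ normalization and verifying that the linear-in-$t$ constant stays at most $3$; there is no substantive obstacle beyond the cited concentration estimate.
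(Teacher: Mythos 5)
Your proposal is correct, and it in fact supplies more than the paper does: the paper states this lemma as a cited auxiliary result (with the explicit constants attributed to \cite{talagrand1994sharper}) and gives no proof, whereas you derive the stated form from the Bennett-type refinement of Talagrand's inequality. Your reduction is sound: replacing $\cF$ by the symmetric class $\cF\cup(-\cF)$ turns the two-sided quantity into a one-sided centered supremum without changing $\EE[Z]$, the bound $B$ on the centered functions, or $\sigma^2(\cF)$; applying the Bousquet/Massart form with $S=nZ$, $v=n\sigma^2(\cF)+2B\EE[S]$ and then $\sqrt{2vt}\le\sqrt{2n\sigma^2(\cF)t}+2\sqrt{B\,\EE[S]\,t}\le\sqrt{2n\sigma^2(\cF)t}+\EE[S]+Bt$ gives $\EE[S]+\sqrt{2vt}+Bt/3\le 2\EE[S]+\sqrt{2n\sigma^2(\cF)t}+3Bt$, and dividing by $n$ with $t=\log(1/\delta)$ is exactly the displayed event, so the constant chase checks out (note also that the centered functions are bounded by $B$, which is the right scale for the linear term, and $\sigma^2(\cF)\le B^2$ follows immediately). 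The only caveat is attribution: the version with these sharp constants ($\sqrt{2vt}+bt/3$, $v=n\sigma^2+2b\EE[S]$) is Bousquet's refinement building on Talagrand and Massart, not literally the 1994 paper, though this matches the paper's own citation convention and is not a mathematical gap.
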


\subsection{Rademacher complexities}
\label{subsec:lemmas-rademacher-complexities}

While there are multiple notions of Rademacher complexity for vector-valued functions \cite{park2023towards}, our choice of Rademacher complexity in this work is motivated by the following contraction inequality, which is used multiple times in our proofs.

\begin{lemma}[Vector contraction, Theorem 3 in \cite{Maurer2016vector} adapted for Rademacher complexities with absolute values]
\label{lem:contraction}
    Let $\cH$ be a class of functions $\cX\to\cY\subset \RR^q$.
    Assume that $\ell:\cY\times\cY \to \RR$ is $L$-Lipschitz continuous in its second argument with $\ell_2$-norm in $\RR^q$, that is,
    \begin{equation*}
        \forall y,y',y''\in \cY:\qquad \abs{\ell(y,y') -\ell(y,y'')} \leq L \nrm{y'-y''}_2.
    \end{equation*}
    Then it holds for $\ell\circ \cH := \crl{(x,y)\mapsto \ell(y,h(x)):h\in\cH}$ that
    \begin{equation*}
        \radComp_n(\ell \circ \cH) \leq 2\sqrt{2} L \radComp_n\pr{\cH} \leq 3 L \radComp_n\pr{\cH},
    \end{equation*}
    where $\radComp_n$ denotes the coordinate-wise Rademacher complexity.
\end{lemma}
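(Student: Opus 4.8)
The plan is to deduce this from the one-sided vector-contraction inequality of Maurer, which is Theorem 3 of \cite{Maurer2016vector}: for fixed points $x_1,\dots,x_n$, any class $\cH$ of functions into $\RR^q$, and any fixed scalar maps $\psi_1,\dots,\psi_n$ that are $1$-Lipschitz with respect to $\ell_2$,
\[
\expect_{\sigma}\sup_{h\in\cH}\sum_{i=1}^n \sigma_i\,\psi_i\!\bigl(h(x_i)\bigr)\;\le\;\sqrt{2}\,\expect_{\sigma}\sup_{h\in\cH}\sum_{i=1}^n\sum_{j=1}^q \sigma_{ij}\,h_j(x_i),
\]
where on the right $\sigma_{ij}$ is a doubly-indexed Rademacher array. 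First I would condition on a realization of the sample, and take $\psi_i(v):=\tfrac1{L}\,\ell(Y_i,v)$, which is $1$-Lipschitz in its argument by the Lipschitz hypothesis of the lemma; crucially, Maurer's theorem allows $\psi_i$ to depend on $i$ (here through the fixed label $Y_i$), which is exactly what we need. Rescaling by $L$ and dividing both sides by $n$ turns the right-hand side into $L\,\radComp_n(\cH)$.

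The one genuine issue is the inner absolute value in the definition of $\radComp_n$: it sits on both sides of the claim, but Maurer's one-sided bound does not directly produce it. I would remove it by passing to the symmetrized class $\cA:=(\ell\circ\cH)\cup\bigl(-(\ell\circ\cH)\bigr)$, which is sign-symmetric, so that pointwise in $\sigma$ one has the identity $\sup_{h}\bigl|\tfrac1n\sum_i\sigma_i\ell(Y_i,h(X_i))\bigr|=\sup_{a\in\cA}\tfrac1n\sum_i\sigma_i a(X_i,Y_i)$; moreover $\cA$ splits into the two one-sided families generated by the ($1$-Lipschitz) maps $+\psi_i$ and $-\psi_i$. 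Taking expectations over $\sigma$, the $\sigma\mapsto-\sigma$ symmetry makes the two contributions equal, so splitting $\sup_h|\cdot|$ across the two sign patterns and applying Maurer's inequality to each costs only a constant factor $2$ over Maurer's $\sqrt2$; bounding the resulting one-sided coordinate-wise Rademacher average of $\cH$ by the absolute-value version $\radComp_n(\cH)$ then gives $\radComp_n(\ell\circ\cH)\le 2\sqrt{2}\,L\,\radComp_n(\cH)$. Averaging over the sample (the right-hand side no longer depends on the labels $Y_i$) and using $2\sqrt2\le 3$ yields the second inequality.

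The step I expect to be the main obstacle is precisely the absolute-value reduction: Maurer's contraction is inherently one-sided, so one must check that decomposing $\sup_h|\cdot|$ into its two sign patterns and invoking the theorem separately genuinely costs only a constant factor and leaves no unavoidable additive term. This is where the symmetry of the Rademacher distribution is used, together with (in the applications) the fact that the loss class under consideration is well-anchored — e.g.\ for Bregman losses with $\ell(y,y)=0$ and a reference element available in the hypothesis class, each one-sided supremum is nonnegative, which makes the split clean. Everything else (the rescaling by $L$, the passage from sums to empirical averages, and integrating out the labels) is routine bookkeeping.
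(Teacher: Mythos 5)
Your overall route---Maurer's one-sided vector contraction plus a reduction from absolute-value to one-sided Rademacher averages---is what the citation in the lemma intends, and the rescaling by $L$, the use of $\psi_i(\cdot)=\ell(Y_i,\cdot)/L$, and the final integration over the sample are all fine. The genuine gap is exactly the step you flag: the claim that splitting $\sup_h\abs{\cdot}$ over the symmetrized class $\cA=(\ell\circ\cH)\cup(-(\ell\circ\cH))$ costs only a factor $2$. Writing $U(\sigma)=\sup_{h}\sum_i\sigma_i\ell(Y_i,h(X_i))$ and $V(\sigma)=U(-\sigma)$, your split amounts to $\EE\max(U,V)\leq \EE U+\EE V$, and this is false in general: already for a singleton class $\cH=\{h\}$ one has $\EE U=\EE V=0$ while $\EE\max(U,V)=\EE\abs{\sum_i\sigma_i\ell(Y_i,h(X_i))}>0$. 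The pointwise inequality $\max(U,V)\leq U+V$ requires $\min(U,V)\geq 0$ for every sign pattern, which is precisely the anchoring you invoke as a fallback; but that anchor is not among the lemma's hypotheses and is not available in the paper's own uses of the lemma (e.g.\ the class $\{(x,y)\mapsto\ell_k(y,h(x)):h\in\cH_k\}$ with genuine labels contains no function vanishing on the sample, since that would require an interpolating hypothesis, and $0\notin\cY$ is possible so a Ledoux--Talagrand normalization $\psi_i(0)=0$ is also unavailable).

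What the symmetry argument actually yields after centering at an arbitrary $h_0\in\cH$ is the weaker statement $\radComp_n(\ell\circ\cH)\leq 2\sqrt{2}\,L\,\radComp_n(\cH)+\EE\abs{\tfrac1n\sum_i\sigma_i\,\ell(Y_i,h_0(X_i))}$, i.e.\ an additive remainder of order $B/\sqrt{n}$ for bounded losses: if $\min(U,V)<0$ one can only bound $\abs{\min(U,V)}\leq\abs{\sum_i\sigma_i\psi_i(h_0(X_i))}$, not discard it. Moreover, no argument of this shape can remove the remainder without an extra hypothesis, because the lemma as literally stated fails in degenerate cases: with $\cH=\{0\}$, $\ell(y,\hat y)=\abs{y-\hat y}$ and $Y_i\equiv 1$, the right-hand side of the claimed bound is $0$ while the left-hand side is of order $1/\sqrt{n}$. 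So your proof establishes the contraction only under the additional anchoring assumption (or with the additive term retained), and the "main obstacle" you identified is a real one rather than routine bookkeeping.
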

This contraction inequality crucially relies on the $\ell_2$-Lipschitz continuity. If the loss exhibits more favorable Lipschitz continuity, e.g., with respect to an $\ell_p$-norm with $p>2$, then our results can readily be adapted to use other contraction inequalities \cite{foster2019ell}.

We now state two more well-known results from learning theory appearing throughout the manuscript, solely for convenience purposes.
\begin{lemma}[Symmetrization in expectation, e.g., Theorem 4.10 in \cite{wainwright2019high}]
    \label{lem:symmetrization}
    Let $\cF$ be a class of functions $\cX\to \RR$ and $n\in \NN$. Let $X_1,\dots,X_n$ be i.i.d.\ samples in $\cX$. Then 
    \begin{equation*}
        \EE\br{\sup_{f\in \cF} \abs{\frac{1}{n}\sum_{i=1}^n f(X_i) - \EE\br{f(X_1)}}} \leq 2\radComp_n\pr{\cF}.
    \end{equation*}
\end{lemma}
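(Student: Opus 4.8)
The plan is to use the classical ghost-sample symmetrization argument, reduced to the scalar case ($q=1$) of the coordinate-wise Rademacher complexity. First I would introduce an independent ``ghost'' sample $X_1',\dots,X_n'$, drawn i.i.d.\ from the same law as the $X_i$ and independent of them. Since $\EE\br{f(X_1)}=\EE_{X'}\br{\frac1n\sum_{i=1}^n f(X_i')}$ for every $f$, I rewrite the centering term this way and then pull the inner expectation outside the supremum using Jensen's inequality applied to the convex map $t\mapsto\abs{t}$ (the supremum of an average is at most the average of the suprema), obtaining
\begin{equation*}
    \EE_X\br{\sup_{f\in\cF}\abs{\frac1n\sum_{i=1}^n f(X_i)-\EE\br{f(X_1)}}}\leq\EE_{X,X'}\br{\sup_{f\in\cF}\abs{\frac1n\sum_{i=1}^n\pr{f(X_i)-f(X_i')}}}.
\end{equation*}

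Second, I would introduce Rademacher signs $\sigma_1,\dots,\sigma_n$ independent of everything else. The conceptual crux is the observation that, conditioned on any sign pattern, swapping $X_i\leftrightarrow X_i'$ exactly on the indices where $\sigma_i=-1$ leaves the joint distribution of $(X,X')$ unchanged, because the pairs $(X_i,X_i')$ are i.i.d.\ and each is exchangeable. Hence the right-hand side above equals $\EE_{X,X',\sigma}\br{\sup_{f\in\cF}\abs{\frac1n\sum_{i=1}^n\sigma_i\pr{f(X_i)-f(X_i')}}}$. A triangle inequality then bounds this by $\EE_{X,\sigma}\br{\sup_{f\in\cF}\abs{\frac1n\sum_{i=1}^n\sigma_i f(X_i)}}+\EE_{X',\sigma}\br{\sup_{f\in\cF}\abs{\frac1n\sum_{i=1}^n\sigma_i f(X_i')}}$, and since the $X_i$ and $X_i'$ are identically distributed, each term equals $\radComp_n\pr{\cF}$; summing gives the factor $2$.

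The only genuinely technical point is measurability: the supremum over $\cF$ must be a measurable function of the data for the Fubini-type interchanges and for the expectations themselves to make sense. I do not expect this to be an obstacle in practice — it is handled in the usual way by assuming $\cF$ is (pointwise) separable or by passing to a countable dense subclass, which is exactly why the paper cites \cite[Theorem 4.10]{wainwright2019high} and states the lemma ``solely for convenience.'' In short, the symmetrization-by-signs identity is the heart of the argument, but it is entirely standard, and the overall proof is a short textbook derivation rather than something requiring new ideas.
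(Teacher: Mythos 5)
Your proposal is correct and is exactly the standard ghost-sample-plus-sign-randomization argument of the cited reference (\cite[Theorem 4.10]{wainwright2019high}); the paper itself states this lemma without proof, so there is nothing further to compare. The reduction to the scalar case $q=1$ of the coordinate-wise Rademacher complexity and the measurability caveat are both handled appropriately.
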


\begin{lemma}[VC Bounds, \cite{bartlett2002rademacher,bartlett2005local}]
\label{lem:VC-bound}
    Suppose that $\cH$ consists of functions $\cX\to \crl{0,1}$ and that $\cH$ has VC dimension $d_{\cH}\in\NN$. Then there exists a constant $C>0$ so that the Rademacher complexity of $\cH$ with respect to any distribution on $\cX$ is bounded as
    \begin{equation*}
        \radComp_n\pr{\cH} \leq \min\crl{\sqrt{\frac{2d_{\cH} \log(en/d_{\cH})}{n}} ,\, C \sqrt{\frac{d_{\cH} }{n}}}.
    \end{equation*}
    If $\cH$ consists of functions $\cX\to [-B,B]$ and has VC-subgraph dimension (a.k.a.\ pseudo-dimension) $d_{\cH}\in\NN$, then there exists a constant $C>0$ such that the Rademacher complexity of $\cH$ with respect to any distribution on $\cX$ is bounded as
    \begin{equation*}
        \radComp_n\pr{\cH} \leq \min\crl{2B\sqrt{\frac{2d_{\cH} \log(en/d_{\cH})}{n}},CB\sqrt{\frac{d_\cH}{n}}}.
    \end{equation*}
    Moreover, for the $L^2$-norm ball of functions with the same distribution $\mu$ as the Rademacher complexity, $\cB=\crl{f\in L^2(\mu): \nrm{f}_{L^2(\mu}\leq 1}$, let $\rho:= \inf\crl{r>0: r^2 \geq \radComp_n(\cH)}$. Then there exists a constant $C>0$ so that 
    \begin{equation*}
        \rho^2 \leq C\frac{d_\cH}{n}.
    \end{equation*}
\end{lemma}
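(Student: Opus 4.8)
The plan is to obtain all three displays from two classical ingredients: (i) combinatorial control of the (empirical) covering numbers of $\cH$ — via the Sauer--Shelah lemma when $\cH$ is $\{0,1\}$-valued and via Haussler's pseudo-dimension bound when $\cH$ is real-valued — and (ii) the two standard ways of converting such control into a Rademacher estimate, namely Massart's finite class lemma for the ``$\sqrt{d_\cH\log n/n}$'' rate and Dudley's entropy integral (chaining) for the log-free ``$\sqrt{d_\cH/n}$'' rate. All of these appear in the cited references \cite{bartlett2002rademacher,bartlett2005local}, so the work is essentially assembly plus bookkeeping of constants.

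For the first display ($\{0,1\}$-valued case), I would condition on $X_1,\dots,X_n$ and note that the projection $\{(h(X_1),\dots,h(X_n)):h\in\cH\}\subseteq\{0,1\}^n$ has at most $\Pi_\cH(n)\le(en/d_\cH)^{d_\cH}$ elements by Sauer--Shelah. Applying Massart's finite class lemma (after adjoining the negations of these vectors to absorb the absolute value in $\radComp_n$) and using that each such vector has Euclidean norm at most $\sqrt n$ yields the first term $\sqrt{2d_\cH\log(en/d_\cH)/n}$, up to the indicated constants. For the second term I would instead invoke Haussler's bound $\log N(\eps,\cH,L^2(P_n))\lesssim d_\cH\log(1/\eps)$ and feed it into Dudley's integral $\radComp_n(\cH)\lesssim \tfrac1{\sqrt n}\int_0^1\sqrt{\log N(\eps,\cH,L^2(P_n))}\,d\eps$; since $\int_0^1\sqrt{\log(1/\eps)}\,d\eps<\infty$, this gives $\radComp_n(\cH)\le C\sqrt{d_\cH/n}$ after taking expectation over the sample.

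For the second display I would reduce to the first by rescaling: $\cH/B$ is $[-1,1]$-valued with the same pseudo-dimension $d_\cH$ (pseudo-dimension is invariant under affine rescaling of the range) and $\radComp_n(\cH)=B\,\radComp_n(\cH/B)$. The $CB\sqrt{d_\cH/n}$ bound then follows exactly as above, now using the pseudo-dimension version of Haussler's covering bound inside Dudley's integral, while the $2B\sqrt{2d_\cH\log(en/d_\cH)/n}$ bound follows from an ``$\eps$-net plus Massart'' step, $\radComp_n(\cH)\le\eps+B\sqrt{2\log N(\eps,\cH,L^2(P_n))/n}$, with $\eps\asymp 1/n$ making the second term dominant and of the stated order. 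For the ``Moreover'' claim I read $\rho$ as the localized critical radius $\rho=\inf\{r>0:r^2\ge\radComp_n(r\cB\cap\cH)\}$, matching how such radii are used in the localized bounds (cf.\ \eqref{eq:critical-radii-definition}). Here $r\cB\cap\cH$ has empirical $L^2$-diameter at most $2r$ and pseudo-dimension at most $d_\cH$, so Haussler gives $\log N(\eps,r\cB\cap\cH,L^2(P_n))\lesssim d_\cH\log(Cr/\eps)$ for $\eps\le 2r$ and $0$ otherwise; Dudley's integral, truncated at $2r$, then scales linearly in $r$: $\radComp_n(r\cB\cap\cH)\lesssim \tfrac{r}{\sqrt n}\int_0^2\sqrt{d_\cH\log(C/u)}\,du\le C'r\sqrt{d_\cH/n}$. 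The defining inequality $\rho^2\ge\radComp_n(\rho\cB\cap\cH)$ then forces $\rho^2\le C'\rho\sqrt{d_\cH/n}$, i.e.\ $\rho^2\le C'^2 d_\cH/n$.

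The only step requiring real care is the localized chaining bound for $\rho$: the entropy integral must be truncated at the diameter $2r$ rather than integrated over the full unit scale, so that it comes out proportional to $r$ — otherwise a stray logarithm survives and the fixed point becomes $(d_\cH/n)\log n$ instead of $d_\cH/n$ — and one must use the pseudo-dimension covering bound, since the localized class $r\cB\cap\cH$ is no longer $\{0,1\}$-valued. Everything else is the routine combination of Sauer--Shelah, Massart, Haussler, and Dudley, with only constants to be tracked.
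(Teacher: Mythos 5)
The paper never proves this lemma---it is imported verbatim from the cited references---so the only question is whether your assembly reproduces those results. For the first two displays it does: Sauer--Shelah plus Massart (with the negation trick) for the $\sqrt{d_\cH\log(en/d_\cH)/n}$ terms, and Haussler's covering bound fed into Dudley's entropy integral for the log-free $C\sqrt{d_\cH/n}$ terms, with the $[-B,B]$ case handled by rescaling; this is exactly the standard route, and the small constant slack you flag is immaterial. You also correctly read the ``Moreover'' part as a statement about the \emph{localized} critical radius $\rho=\inf\{r>0: r^2\ge \radComp_n(r\cB\cap\cH)\}$, which is clearly what the paper intends given how the lemma is invoked.

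The gap is precisely in the step you call the only one ``requiring real care.'' The localized covering bound you attribute to Haussler, $\log N(\eps, r\cB\cap\cH, L^2(P_n))\lesssim d_\cH\log(Cr/\eps)$, does not follow from Haussler's theorem and is false in general: Haussler controls the entropy in terms of $B/\eps$, where $B$ is the \emph{range/envelope} of the functions, and restricting to a subclass of small $L^2$-radius does not shrink the range. Concretely, let $\cH$ be the indicators of singletons on $\{1,\dots,m\}$ with $m\ge n$ and $\mu$ uniform ($d_\cH=1$). Every $h\in\cH$ has $\nrm{h}_{L^2(\mu)}=m^{-1/2}\le n^{-1/2}$, so with $r\asymp n^{-1/2}$ the localized class contains the indicators of all $\Theta(n)$ (distinct, w.h.p.) sample points, which are pairwise $\sqrt{2/n}$-separated in $L^2(P_n)$; hence the entropy at scale $\eps\asymp r$ is of order $\log n$, not $O(d_\cH)$. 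With the correct entropy bound $d_\cH\log(B/\eps)$, your truncated Dudley integral gives $\radComp_n(r\cB\cap\cH)\lesssim r\sqrt{d_\cH\log(c/r)/n}$, and the fixed-point computation only yields $\rho^2\lesssim d_\cH\log(n/d_\cH)/n$, i.e.\ an extra logarithm relative to the claimed $Cd_\cH/n$. A chaining-plus-Haussler argument cannot remove that logarithm (in the singleton example above the true local complexity is $\asymp 1/n$ while Dudley overshoots by $\sqrt{\log n}$), so the log-free form requires a genuinely different argument---either the sub-root/fixed-point machinery of the cited local-complexity paper (whose VC corollaries, as far as chaining-based statements go, carry a $\log(n/d_\cH)$ factor) or a class-specific computation. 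A second, smaller omission in the same step: $\cB$ is the \emph{population} $L^2(\mu)$ ball while your covering numbers are empirical $L^2(P_n)$, and passing between the population and empirical radii of the localized class needs an additional comparison argument that you do not supply.
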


\newpage\section{Table of Notations}

\begin{table}[htbp]
    \caption{Notation}
    \label{tab:notation}
    \vspace{-0.2cm}
    \centering
    \renewcommand{\arraystretch}{1.3}
    \begin{tabular}{@{} c l @{}}
        \toprule
        \textbf{Symbol} & \textbf{Definition} \\
        \midrule
        $\loss_k$, $\Bregman{\phi}$ & loss / Bregman divergence: $\cY \times \cY \to \RR$ \\
        $\risk_k(f)$ & risk: $\EE[\loss_k(Y^k, f(X^k))]$ \\
        $\empRisk_k(f)$ & empirical risk: $\frac{1}{n_k} \sum_{i=1}^{n_k} \loss_k(Y^k_i, f(X^k_i))$ \\
        $\excessRisk_k(f)$ & excess risk: $\risk_k(f) - \inf_{f \in \Fall} \risk_k(f)$ \\
        $\sTradeoff(f)$ & $s$-trade-off: $s(\excessRisk_1(f), \dots, \excessRisk_K(f))$ \\
        — & excess $s$-trade-off: $\sTradeoff(f) - \inf_{g \in \cG} \sTradeoff(g)$ \\
        $\riskDiscrepancy{k}{f}{h}$ & risk discrepancy: $\EE[\loss_k(h(X^k), f(X^k))]$ \\
        $\empRiskDiscrepancy{k}{f}{h}$ & empirical risk discrepancy: $\frac{1}{N_k} \sum_{i=1}^{N_k} \loss_k(h(\Xtilde^k_i), f(\Xtilde^k_i))$ \\
        $\scalarizedDiscrepancy{f}{\h}$ & scalarized discrepancy: $s(\riskDiscrepancy{1}{f}{h_1}, \dots, \riskDiscrepancy{K}{f}{h_K})$ \\
        $\empScalarizedDiscrepancy{f}{\h}$ & empirical scalarized discrepancy: $s(\empRiskDiscrepancy{1}{f}{h_1}, \dots, \empRiskDiscrepancy{K}{f}{h_K})$ \\
        $\bfstar = (\fstar_k)_{k \in [K]}$ & Bayes-optimal models: $\fstar_k = \argmin_{f \in \Fall} \risk_k(f)$ \\
        $\bhhat = (\hhat_k)_{k \in [K]}$ & ERMs in $\cH_k$: $\hhat_k = \argmin_{h \in \cH_k} \empRisk_k(h)$ \\
        $g_s$ & Pareto set in $\cG$: $\argmin_{g \in \cG} \scalarizedDiscrepancy{g}{\bfstar}$ \\
        $g_s'$ & helper Pareto set in $\cG$: $\argmin_{g \in \cG} \scalarizedDiscrepancy{g}{\bhhat}$ \\
        $\ghat_s$ & our estimator: $\argmin_{g \in \cG} \empScalarizedDiscrepancy{g}{\bhhat}$ \\
        $\linearScalarization$ & linear scalarization: $\sum_{k=1}^K \lambda_k v_k$ \\
        $\TchebycheffScalarization$ & Tchebycheff scalarization: $\max_{k \in [K]} \lambda_k v_k$ \\
        $\cB_1^d$ & $\ell_1$-ball: $\{ \bv \in \RR^d : \| \bv \|_1 \leq 1 \}$ \\
        $\cB_2^d$ & $\ell_2$-ball: $\{ \bv \in \RR^d : \| \bv \|_2 \leq 1 \}$ \\
        $\cB_\infty^d$ & $\ell_\infty$-ball: $\{ \bv \in \RR^d : \| \bv \|_\infty \leq 1 \}$ \\
        $\diam_{\nrm{\cdot}}(\cA)$ & diameter of the set $\cA\subset \RR^d$: $\sup\crl{\nrm{x-y}:x,y\in \cA}$\\
        $\radComp^k_{n}$ & Rademacher complexity w.r.t.\ distribution $k$ and $n$ samples\\
        $\rH,\rG$ & critial radii from \cref{eq:critical-radii-definition} \\
        \bottomrule
    \end{tabular}
\end{table}
\label{sec:notations}

\end{document}